\theoremstyle{plain}
\newtheorem{theorem}{Theorem}[section]
\newtheorem{proposition}[theorem]{Proposition}
\newtheorem{lemma}[theorem]{Lemma}
\newtheorem{corollary}[theorem]{Corollary}
\theoremstyle{definition}
\newtheorem{assumption}[theorem]{Assumption}
\theoremstyle{remark}
\newtheorem{remark}[theorem]{Remark}
\newcommand{\mv}[1]{\textcolor{blue}{#1}}
\newcommand{\ga}{{\mathcal{G}}}
\newcommand{\ba}{{\mathcal{B}}}
\newcommand{\rot}{\rho}
\title{An Adaptive Approach for Infinitely Many-armed Bandits under Generalized Rotting Constraints
%Rotting Infinitely Many-armed Bandits beyond the Worst-case Rotting: An Adaptive Approach
}
\author{%
  Jung-hun Kim\\
  Seoul National University\\
  Seoul, South Korea\\
  \texttt{junghunkim@snu.ac.kr} \\
  \And  Milan Vojnovi\'c\\
  London School of Economics\\
  London, United Kingdom\\
  \texttt{m.vojnovic@lse.ac.uk} \\
  \And
    Se-Young Yun\\
  KAIST AI\\
  Seoul, South Korea\\
  \texttt{yunseyoung@kaist.ac.kr} \\
  % examples of more authors
  % \And
  % Coauthor \\
  % Affiliation \\
  % Address \\
  % \texttt{email} \\
  % \AND
  % Coauthor \\
  % Affiliation \\
  % Address \\
  % \texttt{email} \\
  % \And
  % Coauthor \\
  % Affiliation \\
  % Address \\
  % \texttt{email} \\
  % \And
  % Coauthor \\
  % Affiliation \\
  % Address \\
  % \texttt{email} \\
}
\begin{document}

\maketitle

\begin{abstract}
In this study, we consider the infinitely many-armed bandit problems in a rested rotting setting, where the mean reward of an arm may decrease with each pull, while otherwise, it remains unchanged. We explore two scenarios regarding the rotting of rewards: one in which the cumulative amount of rotting is bounded by $V_T$, referred to as the slow-rotting case, and the other in which the cumulative number of rotting instances is bounded by $S_T$, referred to as the abrupt-rotting case.
To address the challenge posed by rotting rewards, we introduce an algorithm that utilizes UCB with an adaptive sliding window, designed to manage the bias and variance trade-off arising due to rotting rewards. Our proposed algorithm achieves tight regret bounds for both slow and abrupt rotting scenarios. Lastly, we demonstrate the performance of our algorithm using numerical experiments. 
\end{abstract}
% \vspace{-3mm}
\section{Introduction}
We consider multi-armed bandit problems \citep{lattimore}, which are fundamental sequential learning problems where an agent plays an arm at each time and receives a corresponding reward. The core challenge lies in balancing the exploration-exploitation trade-off. Bandit problems have significant implications across diverse real-world applications, such as recommendation systems \citep{li2010contextual} and clinical trials \citep{villar2015multi}. In a recommendation system, each arm could represent an item, and the objective is to maximize the click-through rate by making effective recommendations. 

In practice, the mean rewards associated with arms may decrease over repeated plays. For instance, in content recommendation systems, the click rates for each item (arm) may diminish due to user boredom with repeated exposure to the same content. Another example is evident in clinical trials, where the efficacy of a medication can decline over time due to drug tolerance induced by repeated administration. The decline in mean rewards resulting from playing arms, referred to as \textit{(rested) rotting bandits}, has been studied by \citet{levine2017rotting,seznec2019rotting,Seznec2}. The previous work focuses on finite $K$ arms, in which \citet{seznec2019rotting} proposed algorithms achieving $\tilde{O}(\sqrt{KT})$ regret.
This suggests that rotting bandits with a finite number of arms are no harder than the stationary case.

However, in real-world scenarios like recommendation systems, where the content items such as movies or articles are numerous, prior methods encounter limitations as the parameter $K$ becomes large, resulting in trivial regret. This emphasizes the necessity of studying rotting scenarios with \textit{infinitely} many arms, particularly when there is a lack of information about the features of each item. The consideration of infinitely many arms for rested rotting bandits fundamentally distinguishes these problems from those with a finite number of arms, as we will explain later.

The study of multi-armed bandit problems with an infinite number of arms has been extensively conducted in the context of \textit{stationary} rewards %distributions 
\citep{Berry, wang, Bonald, Carpentier,Bayati}, where the agent has no chance to play all the arms at least once until horizon time $T$. Initially, the distribution of the mean rewards for the arms was assumed to be uniform over the interval $[0,1]$ \citep{Berry, Bonald}. This assumption was expanded to include a much wider range of distributions satisfying $\mathbb{P}(\mu(a) > \mu^* - x) = \Theta(x^\beta)$, for a parameter $\beta > 0$, where $\mu(a)$ represents the mean reward of arm $a$ and $\mu^*$ is the mean reward of the best-performing arm \citep{wang, Carpentier, Bayati}. Additionally, feature information for each arm is not required for multi-armed bandit problems with infinitely many arms, which differs from linear bandits \citep{abbasi2011improved} or continuum-armed bandits \citep{auer2007improved,kleinberg2004nearly}, where feature information for each arm, either for the Lipschitz or linear structure, is involved.  While \citet{kim2022rotting}, as the closest work, explores the concept of diminishing rewards in the context of bandits with infinitely many arms, their focus is  restricted to the case of the maximum rotting rate constraint, where the amount of rotting at each time step is bounded by $\rho\ (=o(1))$.  This naturally directs focus towards regret regarding the maximum rotting rate rather than the total rotting rate over the time horizon. Furthermore, their focus is limited to the case where the initial mean rewards are uniformly distributed ($\beta=1$).

% Crucially, their focus is on regret for the \textit{worst-case} rather than problem-dependent cases regarding the decay rates. 

In our study, we explore rotting bandits with infinitely many arms, subject to generalized initial mean reward distribution with $\beta>0$ and, importantly, generalized constraints on the rate at which the mean reward of an arm declines. 
% We operate under the assumption that the initial mean reward distribution satisfies the aforementioned condition with a parameter $\beta > 0$.
%is distributed with a parameter $\beta > 0$ , as described in \citet{Berry, Bonald}. 
Our investigation into diminishing, or `rotting,' rewards encompasses two scenarios: one with the total amount of rotting bounded by $V_T$, and the other with the total number of rotting instances bounded by $S_T$. This allows us to capture characteristics of entire rotting rates over the time horizon. Similar constraints of $V_T$ or $S_T$ regarding nonstationarity have been explored in the context of nonstationary finite $K$-armed bandit problems \citep{besbes2014stochastic, auer2019adaptively,russac2019weighted}, where the reward distribution changes over time independently of the agent.
%Our examination of the diminishing, or `rotting,' reward includes analysis of both the cumulative amount of rotting ($V_T$) and the number of abrupt rotting instances ($S_T$) over a time horizon. %\jk{Those quantifications capture the \textit{problem-dependency} regarding rotting rates}. Such quantifications have been extensively investigated in the context of nonstationary finite $K$-armed bandit problems \citet{besbes2014stochastic, auer2019adaptively,russac2019weighted}. 
Following established terminology for nonstationary bandits, we denote the environment with a bounded total amount of rotting as the \textit{slow rotting} ($V_T$) case and the one with a bounded total number of rotting instances as the \textit{abrupt rotting} ($S_T$) case. 

 Here we discuss why (rested) rotting bandits for infinitely many arms are fundamentally different from those for finite arms. In the case of finite arms, rested rotting is known to be no harder than stationary case \citep{seznec2019rotting,Seznec2}. This result arises from the confinement of mean rewards of optimal arms and played arms within confidence bounds, even under rested rotting (as demonstrated in Lemma~1 of \citet{seznec2019rotting,Seznec2}). However, in the case of infinite arms under distribution for initial mean reward that allows for an infinite number of near-optimal arms, there always exist near-optimal arms outside of explored arms. Therefore, the mean reward gap may not be confined within confidence bounds. This fundamental difference from finite-armed rotting bandits introduces additional challenges. In our setting of infinite arms, there exists an additional cost for exploring new (unexplored) arms to find near-optimal arms while eliminating explored suboptimal arms. If the total rotting effect on explored arms is significant, then the frequency at which new near-optimal arms must be sought increases substantially, resulting in a large regret. This is why the rested rotting significantly affects the exploration cost regarding $V_T$ or $S_T$ in our setting, which differs from the case of finite~arms.

To solve our problem, we introduce algorithms that employ an adaptive sliding window mechanism, effectively managing the tradeoff between bias and variance stemming from rotting rewards. Notably, to the best of our knowledge, this is the first work to consider slow and abrupt rotting scenarios, in the context of infinitely many-armed bandits. Furthermore, it is the first work to consider the generalized initial mean reward distribution for rotting bandits with infinitely many arms.
% specifically for initial mean reward distributions that satisfy the condition parameterized with $\beta$,
% Additionally, we introduce a generalized threshold parameter designed to accommodate a broad spectrum of initial mean reward distributions, aiding in the identification of bad arms.  (our algorithm deal with beta..)
\begin{table*}[t]
\caption{Summary of our regret bounds.}
  \centering
  \resizebox{\textwidth}{!}{
\begin{tabular}{|c|c|c|c|c|}
\hline
Type & \makecell{Regret upper bounds \\ for $\beta\ge 1$}  & \makecell{Regret upper bounds  \\ for $0< \beta<1$} &  \makecell{Regret lower bounds \\ for $\beta>0$ } \\
\hline
{Slow rotting ($V_T$) }   & \makecell{Theorem~\ref{thm:R_upper_bd_V}:\\$\tilde{O}\Big(\max\Big\{V_T^{\frac{1}{\beta+2}}T^{\frac{\beta+1}{\beta+2}},T^\frac{\beta}{\beta+1}\Big\}\Big)$ } & \makecell{Theorem~\ref{thm:R_upper_bd_V}:\\$\makecell{\tilde{O}\Big(\max\Big\{V_T^{\frac{1}{3}}T^{\frac{2}{3}},\sqrt{T}\Big\}\Big)}$ }& \makecell{Theorem~\ref{thm:lower_bd_rotting}:\\$\Omega\Big(\max\Big\{V_T^{\frac{1}{\beta+2}}T^{\frac{\beta+1}{\beta+2}},T^{\frac{\beta}{\beta+1}}\Big\}\Big)$}\\
\hline
 {Abrupt rotting ($S_T$)}&\makecell{Theorem~\ref{thm:abrupt_upper_bd}:\\$\tilde{O}\Big(\max\Big\{S_T^{\frac{1}{\beta+1}}T^{\frac{\beta}{\beta+1}},\bar{V}_T\Big\}\Big)$} &\makecell{Theorem~\ref{thm:abrupt_upper_bd}:\\$\tilde{O}\Big(\max\Big\{\sqrt{S_TT},\bar{V}_T\Big\}\Big)$ }& \makecell{Theorem~\ref{thm:lower_bd_abrupt}:\\$\Omega\Big(\max\Big\{S_T^{\frac{1}{\beta+1}}T^{\frac{\beta}{\beta+1}},\bar{V}_T\Big\}\Big)$}  \\
\hline
\end{tabular}} \vspace{-2mm}
 \label{tab:com}
\end{table*}

\paragraph{Summary of our Contributions.}
The key contributions of this study are summarized in the following points. 
Please refer to Table~\ref{tab:com} for a summary of our regret bounds.
% \vspace{-2mm}

%\begin{itemize}
% \item 
$\bullet$ To address the slow and abrupt rotting scenarios, we propose a UCB-based algorithm using an adaptive sliding window and a threshold parameter. This algorithm allows for effectively managing the bias and variance trade-off arising from rotting rewards. 

%\item 
$\bullet$ In the context of slow rotting ($V_T$) or abrupt rotting ($S_T$), for any $\beta > 0$, we present regret upper bounds achieved by our algorithm with an appropriately tuned threshold parameter.  It is noteworthy that $V_T$, $S_T$, and $\beta$ are being considered for the first time in the context of rotting bandits with infinitely many arms.
       
%\item
$\bullet$ We establish regret lower bounds for both slow rotting and abrupt rotting scenarios. These regret lower bounds imply the tightness of our upper bounds  when $\beta \geq 1$. In the other case, when $0<\beta < 1$, there is a gap between our upper bounds and the corresponding lower bounds, similar to what can be found in related literature, which is discussed in the paper. 

%\item 
$\bullet$ Lastly, we demonstrate the performance of our algorithm through numerical experiments on synthetic datasets, validating our theoretical results.
\section{Problem Statement %statements
}\label{sec:rotting}
We consider rotting bandits with infinitely many arms where the mean reward of an arm may decrease when the agent pulls the arm.  Let $\mathcal{A}$ be the set of infinitely many arms and let $\mu_t(a)$ denote the unknown mean reward of arm $a\in\mathcal{A}$ at time $t$. At each time $t$, an agent pulls arm $a_t^\pi\in\mathcal{A}$ according to policy $\pi$ and observes  stochastic reward $r_t$ given by $r_t = \mu_t(a_t^\pi) + \eta_t$, where $\eta_t$ is a noise term following a $1$-sub-Gaussian distribution. To simplify, we use $a_t$ for $a_t^\pi$ when there is no confusion about the policy. We assume that initial mean rewards $\{\mu_1(a)\}_{a\in \mathcal{A}}$ are i.i.d. random variables on $[0,1]$, a widely accepted assumption in the context of infinitely many-armed bandits \citep{Bonald, Berry,wang,Carpentier,Bayati,kim2022rotting}. 
% \mv{*** The next sentence is unclear. ***} We note that assuming a distribution for mean rewards cannot be unavoidable in unstructured infinite armed bandits for guaranteeing some proportion of good arms (having large mean rewards). 

As in \citet{wang,Carpentier,Bayati}, we consider, to our best knowledge, the most general condition on the distribution of the initial mean reward of an arm, satisfying the following condition: there exists a constant $\beta> 0$ such that for every $a\in \mathcal{A}$ and all $x\in [0,1]$,
\begin{align}
    \mathbb{P}(\mu_1(a)> 1- x)=\mathbb{P}(\Delta_1(a)<x)=\Theta(x^\beta),\label{eq:dis}
\end{align}
where $\Delta_1(a)=1-\mu_1(a)$ is the initial sub-optimality gap.  As noted in \cite{wang,Carpentier,Bayati}, Eq.\eqref{eq:dis} is a non-trivial condition only when $x$ approaches $0$, as for any constant $x \in (0,1]$, it becomes $\mathbb{P}(\Delta_1(a) < x) = \Theta(1)$, which may accommodate a wide range of distributions. %that decrease as $\Theta(x^\beta)$ as $x$ approaches $0$.  %as for any fixed $x \in (0,1]$, (\ref{eq:dis}) means $\mathbb{P}(\Delta_1(a) < x) = \Theta(1)$ which obviously holds.
%it obviously holds $\mathbb{P}(\Delta_1(a) < x) = \Theta(1)$ \jk{*wondering the meaning of `obviously holds' here.*}.  
 It is noteworthy that the larger the value of $\beta$, the smaller the probability of sampling a good arm. Furthermore, the uniform distribution is a special case when $\beta=1$. Importantly, our work allows for a wider range of distributions satisfying (\ref{eq:dis}) for any constant $\beta>0$ than the uniform distribution $(\beta=1)$ considered in \citet{kim2022rotting}. Additional discussion is deferred to Appendix~\ref{app:dis}.

%Furthermore, for $\beta>1$, there is a diminished likelihood of sampling a good arm (characterized by small $\Delta_1(a)$) compared to the uniform distribution. Conversely, for $\beta<1$, the opposite trend is observed.
 
The rotting of arms is defined as follows. At each time $t\geq 1$, the mean rewards of arms are updated~as 
    \[\mu_{t+1}(a)=\mu_t(a)-\rot_t(a),\] 
    where $\rho_t(a_t)\geq 0$ for the pulled arm $a_t$ and $\rho_t(a) = 0$ for every $a\in\mathcal{A}/\{a_t\}$, which implies that the rotting may occur only for the pulled arm at each time.  Note that, for every $a\in \mathcal{A}$ and $t\geq 2$, it holds $\mu_t(a)=\mu_1(a)-\sum_{s=1}^{t-1}\rot_s(a)$, allowing $\mu_t(a)$ to take negative values. For notation simplicity, in what follows, we write $\rho_t$ for $\rho_t(a_t)$ when there is no confusion. We refer to $\rho_1,\rho_2,\dots$ as rotting rates. We also use the notation $[m]:=\{1,\ldots, m\}$, for any integer $m\geq 1$.
    
    We consider two cases for rotting rates: (a) \emph{slow rotting case} where, for given $V_T\geq 0$, the cumulative amount of rotting is required to satisfy the slow rotting constraint $\sum_{t=1}^{T-1} \rho_t \leq V_T$, and (b) \emph{abrupt rotting case} where, for given $S_T\in [T]$, the cumulative number of rotting instances (plus one) is required to satisfy the abrupt rotting constraint $1+\sum_{t=1}^{T-1} \mathbbm{1}(\rho_t \neq 0) \le S_T$. The values of rotting rates of pulled arms, $\{\rho_t\}_{t\in[T-1]}$, are assumed to be determined by an adversary, described as follows.
\begin{assumption}[Adaptive Adversary]
    At each time $t\in[T]$, the value of the rotting rate $\rot_t\ge 0$ is arbitrarily determined immediately after the agent pulls $a_t$, \textit{subject to} the constraint of either slow rotting for a given $V_T$ or abrupt rotting for a given $S_T$.\label{ass:adaptive}
\end{assumption}
\begin{remark} The adaptive adversary under the slow rotting constraint ($V_T$) is more general than that in \citet{kim2022rotting}, in which the adversary is under \emph{a maximum rotting rate constraint}; that is, for given $\rho=o(1)$,  $\rho_t\le \rho$ for all $t\in [T-1]$. This is because our adversary is under a weaker constraint bounding the total sum of the rotting rates rather than each individual rotting rate. Additionally, the abrupt rotting constraint ($S_T$) is fundamentally different from the maximum rotting constraint \citep{kim2022rotting} because the adversary for abrupt rotting is under a constraint on the total number of rotting instances rather than the magnitude of rotting rates. 
%$S_T$ is a constraint for the number of rotting instances rather than the amount of rotting rates.
% Clearly, the adaptive adversary is stronger than the oblivious or even non-oblivious adversary which depends on the history before the selection of $a_t$ at each time $t$ \citep{lattimore}.
\label{rm:justification_adaptive_adversary}
\end{remark}

Our problem's objective is to find a policy that minimizes the expected cumulative regret over a time horizon of $T$ time steps. For a given policy $\pi$, the regret is defined as
$\mathbb{E}[R^\pi(T)]=\mathbb{E}[\sum_{t=1}^T(1-\mu_t(a_t^\pi))].$ The use of $1$ in the regret definition for the optimal mean reward is justified because among the infinite arms with initial mean rewards following the distribution specified in \eqref{eq:dis}, there always exists an arm whose mean reward is sufficiently close to $1$.\footnote{This assertion follows from the fact that for any $\epsilon > 0$, there exists an arm $a$ in $\mathcal{A}$ excluding rotted arms such that $\Delta_1(a) < \epsilon$ with probability $1$, as $\lim_{n \to \infty} (1 - \mathbb{P}(\Delta_1(a) \geq \epsilon)^n) = 1$.} 
%\footnote{For any $\epsilon>0$, there exists $a\in\mathcal{A}$ s.t. $\Delta_1(a)< \epsilon$ with probability $1$ because $\displaystyle \lim_{n\rightarrow \infty}1-\mathbb{P}( \Delta_1(a)\ge \epsilon)^n = 1$.}  
% \jk{\begin{assumption}\label{ass:V_bd}
%     The amount of cumulative rotting rates is bounded as $V_T=O(T)$.
% \end{assumption}}
% \jk{In this work, mean rewards are allowed to be negative so that $V_T$ can be larger than $T$, which is different from the ordinary nonstationary bandits \citep{besbes2014stochastic,zhao2020simple,russac2019weighted} where $V_T=O(T)$. Here, we additionally consider rotting scenarios under Assumption~\ref{ass:V_bd}. This is because otherwise where $V_T=\omega(T)$, the problem becomes trivial as a simple random policy is optimal with respect to worst-case rotting scenarios. This is shown in the following proposition.

We note that while we have $S_T\le T$ because the number of rotting instances is at most $T-1$,  the upper bound for $V_T$ may not exist due to the lack of a constraint on the values of $\rho_t$'s. Here we discuss an assumption for the cumulative amount of rotting. In the case of $\sum_{t=1}^{T-1}\rho_t>T$, the problem becomes trivial as shown in the following proposition.
\begin{proposition}\label{pro:simple}
   In the case of \ $\sum_{t=1}^{T-1}\rho_t>T$, there always exists a rotting adversary that incurs regret of $\Omega(T)$ and a simple policy that samples a new arm every round achieves the optimal regret of $\Theta(T)$. 
\end{proposition}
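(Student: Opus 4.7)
The plan is to establish both halves of the $\Theta(T)$ claim: the upper bound achieved by the simple fresh-arm policy, and a matching $\Omega(T)$ lower bound holding for every policy under some admissible rotting schedule with $\sum_t \rho_t \le V_T$.

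For the upper bound, I would observe that under a policy that picks a previously unseen arm at every round, each arm is pulled only once, so the rotting experienced by past arms never affects future rewards. Hence $\mu_t(a_t)=\mu_1(a_t)$ for every $t$, giving $\mathbb{E}[R(T)] = T \cdot \mathbb{E}[\Delta_1(a)] \le T$, since $\Delta_1(a) \in [0,1]$ almost surely by the problem setup.

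For the lower bound, I would fix the oblivious adversarial rotting schedule that assigns $\rho_t = V_T/T$ at every round. This schedule is admissible because $\sum_{t=1}^{T-1}\rho_t \le V_T$, and the per-step rotting satisfies $V_T/T > 1$ since $V_T > T$. Under this schedule, any arm pulled for a second or later time has current mean reward at most $\mu_1(a) - V_T/T < 0$, so each such round contributes instantaneous regret strictly greater than $1$. Let $N_T$ denote the (random) number of distinct arms played over $T$ rounds by an arbitrary policy. Splitting rounds into first plays and replays, I would write
\[
\mathbb{E}[R(T)] \ge \mathbb{E}\Big[\sum_{t:\ \text{first play}} \Delta_1(a_t)\Big] + \mathbb{E}[T - N_T].
\]
Conditional on the event that $a_t$ is a first play, the newly sampled arm is independent of the history and is distributed according to \eqref{eq:dis}, so the first sum equals $\mathbb{E}[\Delta_1(a)] \cdot \mathbb{E}[N_T]$. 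Finally, \eqref{eq:dis} implies $\mathbb{E}[\Delta_1(a)] \ge c$ for some constant $c>0$: pick $x^\ast > 0$ small enough that $\mathbb{P}(\Delta_1(a) < x^\ast) \le 1/2$, so $\mathbb{E}[\Delta_1(a)] \ge x^\ast/2$. Combining the two terms yields $\mathbb{E}[R(T)] \ge \min(c,1) \cdot T = \Omega(T)$.

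The only subtle point is the conditional-expectation step for first plays: although the indicator of being a first play depends on the past trajectory, the initial mean reward of a freshly sampled arm is independent of that history by the i.i.d. assumption on $\{\mu_1(a)\}_{a\in\mathcal{A}}$, which lets the sum factorize as above. All remaining steps are elementary algebra and a single tail bound derived from \eqref{eq:dis}, so I do not anticipate a significant technical obstacle.
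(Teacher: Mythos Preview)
Your proof is correct and the upper-bound half is essentially the paper's, but your lower-bound construction is genuinely different. The paper argues in two steps: first it shows directly that the fresh-arm policy has regret $\Theta(T)$ (using $\mathbb{E}[\Delta_1(a)]=\Theta(1)$), then for any \emph{other} policy it uses an \emph{adaptive} adversary that waits until the first time $t'$ an arm is repeated and sets $\rho_{t'}=V_T$, so the very next pull of that arm incurs regret $\Omega(V_T)>\Omega(T)$. You instead fix an \emph{oblivious} uniform schedule $\rho_t=V_T/T>1$ and decompose the regret into first-play and replay contributions, using the tower property to factor the first-play term as $\mathbb{E}[\Delta_1(a)]\cdot\mathbb{E}[N_T]$. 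Your route has the advantage of covering all policies in one stroke (including the fresh-arm policy) and sidesteps the slight informality in the paper's phrase ``there always exists an arm the policy must pull at least twice,'' which is delicate for randomized policies; the paper's route is shorter and avoids the conditional-expectation argument you flag as the subtle step, but leans on the adversary being allowed to react to the realized action, which the model permits.
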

\begin{proof}
    The proof is provided in Appendix~\ref{app:trivial_con}
\end{proof}
From the above proposition, when $\sum_{t=1}^{T-1}\rho_t>T$, the regret lower bound of this problem is $\Omega(T)$, which can be achieved by a simple policy. Therefore, we consider the following assumption for the region of non-trivial problems.
\begin{assumption}
     $\sum_{t=1}^{T-1}\rho_t\le T$.\label{ass:V_T}
\end{assumption}
Notably, from the above assumption, we consider $V_T\le T$ for the slow rotting case. We also note that the assumption is not strong, as it frequently arises in real-world scenarios and is more general than the assumption made in prior work, as described in the following remarks.
\begin{remark}\label{rm_VT_bd}
    The assumption of $\sum_{t=1}^{T-1}\rho_t\le T$ is satisfied if mean rewards are under the constraint of $0 \leq \mu_t(a_t) \leq 1$ for all $t \in [T]$, because this condition implies $\rho_t \leq 1$ for all $t\in[T]$. Such a scenario is frequently encountered in real-world applications, where reward is represented by metrics like click rates or (normalized) ratings in content recommendation systems.\label{rm:ass} 
\end{remark}
\begin{remark}  \label{rm:comparison}
Our rotting scenario with $\sum_{t=1}^{T-1}\rho_t\le T$ is more general in scope than the one with the maximum rotting rate constraint where  $\rho_t\le \rho=o(1)$ for all $t\in[T-1],$ which was explored in \citet{kim2022rotting}. This is because for our setting, $\rho_t$ is not necessarily bounded by $o(1)$, and under the maximum rotting constraint, the condition $\sum_{t=1}^{T-1}\rho_t\le T$ is always satisfied.
% Moreover, in the maximum rotting rate constraint context, the condition $V_T\le \rho T$ implies $V_T=O(T)$. Consequently, our rotting scenario with $V_T=O(T)$ encompasses the maximum rotting rate constraint setting; however, the reverse is not true. 
\end{remark}

\section{Algorithms and Regret Analysis}
% In this section, we first study the case where $\beta$ and $V_T$ or $S_T$ are given to an algorithm a priori.
% \footnote{If we are not given the values of $V_T$ and $S_T$, but given $V_T'=o(T)$ and $S_T'=o(T)$ satisfying $V_T\le V_T'$ and $S_T\le S_T'$, then we can obtain the same results as presented subsequently by replacing $V_T$ and  $S_T$ with $V_T'$ and $S_T'$, respectively.}. 
% Then we study the case where the parameters are not given to an algorithm.
% \subsection{Case of given  $\beta$ and $V_T$ or $S_T$}\label{sec:upper_given}
% In the following, 
%  we first propose an algorithm (Algorithm~\ref{alg:alg1}) for rotting rewards with infinitely many arms when the algorithm knows the parameters $\beta$ and $V_T$ or $S_T$ a priori.
% % \footnote{We can use any $V_T$ satisfying $V_T\ge \sum_{t=1}^{T-1}\rho_t $ for obtaining regret bounds of our algorithm.}. 
% We adopt the framework of the \texttt{UCB-TP} algorithm \citep{kim2022rotting}, which is designed to determine whether a sampled arm is bad or not.
We propose an algorithm (Algorithm~\ref{alg:alg1}) utilizing an \textit{adaptive sliding window} 
for delicately controlling bias and variance tradeoff of the mean reward estimator from rotting rewards, drawing on insights from \cite{ auer2019adaptively,Seznec2}. 
% If the window is large, then the variance of the estimator is small but the bias from rotting can be large, and vice versa when the window is small.
This is why our algorithm can adapt to varying rotting rates $\rho_t$ and achieve tight regret bounds with respect to $V_T$ or even $S_T$. 
Furthermore, our algorithm accommodates the general mean reward distribution with $\beta>0$ by employing a carefully optimized threshold parameter.

\begin{algorithm}[h]
\LinesNotNumbered
  \caption{UCB-Threshold with Adaptive Sliding Window}\label{alg:alg1}
  \KwIn{ $T, \delta, \mathcal{A}$; \textbf{Initialize:} $ \mathcal{A}^\prime\leftarrow\mathcal{A}$}
  
Select a new arm $a\in\mathcal{A}^\prime$;  Pull arm $a$ and get reward $r_1$

$t(a)\leftarrow 1$

 \For{$t=2,\dots,T$}{
 \If{$\min_{s\in\mathcal{T}_t(a)}WUCB(a,s,t-1,T)< 1-\delta$}{
 $\mathcal{A}^\prime\leftarrow \mathcal{A}^\prime/\{a\}$
 
Select a new arm $a\in\mathcal{A}^\prime$; Pull arm $a$ and get reward $r_t$

$t(a)\leftarrow t$
 }
 \Else{Pull arm $a$ and get reward $r_t$}
 }
\end{algorithm}
% The optimal sliding window may help the algorithm consider historical data from the nearest abrupt change. By using an adaptive sliding window, we can utilize the near-optimal sliding window without knowing the time when abrupt changes occur. 
% Let $n_{i,t}(a)$ be the number of times arm $a\in\mathcal{A}$ is pulled by policy $\pi$ from time step $(i-1)H+1$ before time step $t$. Let $\hat{\mu}_t^h(a)$ be the sliding-window estimator of the mean reward of arm $a$, with window of size $h > 0$, defined as
% \begin{align*}
%     \hat{\mu}^h_t(a)\coloneqq\frac{1}{h}\sum_{s=t_1}^{t_2}\mathbbm{1}((a_s=a)\wedge (n_s(a)>n_{t-1}(a)-h))r_s.
% \end{align*}
% Then, the adaptive window UCB index is defined as
% $$
% \UCB_{i,t}(a) \coloneqq \min_{h\in [n_{i,t}(a)]}\left\{\hat{\mu}_t^h(a)+\sqrt{10\log(H)/h}\right\}.
% $$

% \begin{figure*}[t]%[h!]
% \centering
% \includegraphics[width=0.65\linewidth]{}
% % \begin{subfigure}[b]{.19\textwidth}\includegraphics[width=\linewidth]{}\end{subfigure}
% % \begin{subfigure}[b]{.3\textwidth}
% % \includegraphics[width=\linewidth]{image/T1000000ratio0.3333333333333333repeat10.png}\caption{}\end{subfigure}%\rulesep
% \caption{(left) Bias and variance tradeoff for adaptive window and (right) window size constraint in Algorithm~\ref{alg:alg1}.   
% }
% \label{fig:alg}
% \vspace{-3mm}
% \end{figure*}

Here we describe our proposed algorithm in detail. We define $\widehat{\mu}_{[t_1,t_2]}(a)=\sum_{t=t_1}^{t_2}r_t\mathbbm{1}(a_t=a)/n_{[t_1,t_2]}(a)$ where $n_{[t_1,t_2]}(a)=\sum_{t=t_1}^{t_2}\mathbbm{1}(a_t=a)$ for $t_1\le t_2$. Then for window-UCB index of the algorithm, we define $WUCB(a,t_1,t_2,T)=\widehat{\mu}_{[t_1,t_2]}(a)+\sqrt{12\log(T)/n_{[t_1,t_2]}(a)}.$ In what follows, `selecting an arm' means that a policy chooses an arm before pulling it.
In Algorithm~\ref{alg:alg1}, we first select an arbitrary new arm $a\in\mathcal{A}'$ without prior knowledge regarding the arms in $\mathcal{A}'$, denoting the corresponding time as $t(a)$. We define $\mathcal{T}_t(a)$ as the set of starting times for sliding windows of doubling lengths, defined as $\mathcal{T}_t(a)=\{s\in[T]:  t(a)\le s\le t-1 \text{ and } s=t-2^{i-1} \text{ for some } i\in\mathbb{N}\}$. Then the algorithm pulls the arm consecutively until the following threshold condition is satisfied: $\min_{s\in\mathcal{T}_t(a)}WUCB(a,s,t-1,T)< 1-\delta,$ in which the sliding window having minimized window-UCB is utilized for adapting nonstationarity. If the threshold condition holds, then the algorithm considers the arm to be a sub-optimal (bad)  arm and withdraws the arm. Then it selects a new arm and repeats this procedure.

% \vspace{-1mm}
% \begin{algorithm}[h]
% \caption{UCB-Threshold with Adaptive Sliding Window}
% \begin{algorithmic}\label{alg:alg1}
% \STATE \textbf{Given:} $T, \delta, \mathcal{A}$; \textbf{Initialize:} $ \mathcal{A}^\prime\leftarrow\mathcal{A}$
% \STATE Select a new arm $a\in\mathcal{A}^\prime$;  Pull arm $a$ and get reward $r_1$
% \STATE $t(a)\leftarrow 1$
% \FOR{$t=2,\dots,T$}
% \IF{ $\min_{s\in\mathcal{T}_t(a)}WUCB(a,s,t-1,T)< 1-\delta$}
% \STATE$\mathcal{A}^\prime\leftarrow \mathcal{A}^\prime/\{a\}$
% \STATE Select a new arm $a\in\mathcal{A}^\prime$; Pull arm $a$ and get reward $r_t$
% \STATE $t(a)\leftarrow t$
% \ELSE
% \STATE Pull arm $a$ and get reward $r_t$
% \ENDIF 
% \ENDFOR
% \end{algorithmic}
% \end{algorithm}

Utilizing the adaptive sliding window having minimized window UCB index enhances the algorithm's ability to dynamically identify poorly-performing arms across varying rotting rates. This adaptability is achieved by managing the tradeoff between bias and variance. The concept is depicted in Figure~\ref{fig:alg} (left), where an arm $a$ undergoes multiple rotting events. WUCB with a smaller window exhibits minimal bias with the arm's most recent mean reward but introduces higher variance. Conversely, WUCB with a larger window displays increased bias but reduced variance. In this visual representation, the value of WUCB with a small window reaches a minimum, enabling the algorithm to compare this value with $1-\delta$ to identify the suboptimal arm.
% By using the adaptive window from the minimized window UCB, the algorithm can detect a bad arm adaptively to varying rotting rates by controlling bias and variance tradeoff. This is illustrated in Figure~\ref{fig:alg} (left), in which WUCB with a small window has a small bias with the most current mean reward of an arm but has a large variance. On the other hand, that with a large window has a large bias but a small variance. In this figure, we can observe that WUCB with a small window has a minimum WUCB so that the algorithm compares this WUCB with $1-\delta$ to determine the arm bad.
% We note that comparing UCBs with various sliding windows to the threshold in our algorithm is similar to comparing minimum value of UCB to the threshold, but the former has an advantage in computational cost. 
Moreover, as illustrated in Figure~\ref{fig:alg} (right), by taking into account the constraint of $s=t-2^{i-1}$ for the size of the adaptive windows, we can reduce the computation time for determining the appropriate window and reduce the required memory from $O(t)$ to $O(\log t)$, respectively, for each time $t$.

Having introduced our algorithm, we compare it with the previously proposed algorithm \texttt{UCB-TP}~\citep{kim2022rotting}, which is tailored for the maximum rotting rate constraint $\rho_t\le \rho\ (=o(1))$ for all $t>0$ and the uniform initial mean reward distribution ($\beta=1$). The mean reward estimator in  \texttt{UCB-TP} considers the worst-case scenario with the maximum rotting rate $\rho$ as $\tilde{\mu}_t^o(a)-\rho n_t(a)$ where $\tilde{\mu}_t^o$ is an estimator for the initial mean reward, $n_t(a)$ is the number of times arm $a$ is pulled until time $t-1$, and $\rho n_t(a)$ is for reducing the bias from the worst-case rotting, which leads to achieving a regret bound of $\tilde{O}(\max\{\rho^{1/3}T,\sqrt{T}\})$. This estimator is not appropriate for dealing with our generalized rotting constraints because it aims to attain the regret bound regarding the maximum rotting rate $\rho$ without adequately addressing individual $\rho_t$ values. Our algorithm resolves this by using an adaptive sliding window estimator, which can handle rotting rates carefully. Furthermore, it can accommodate any constant $\beta>0$ by using a carefully optimized $\delta$, as shown below.

% \footnote{It is required to use a simple trick for updating mean estimators in online computation setting \citet{welford1962note} and additionally memorizing the mean estimator for the entire time of pulling each arm to cope with the window becoming 2 times larger than before.}
 
% We note that the adaptive window is the main difference from the previously suggested UCB-Threshold policy \citep{kim2022rotting} which utilizes all historical data for UCB.

 \begin{figure}
\centering
\includegraphics[width=7cm]{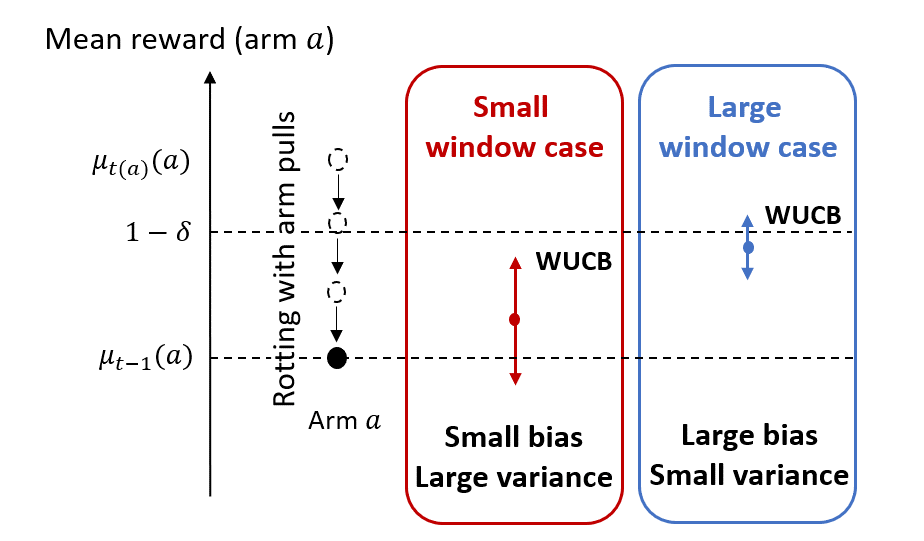}\hspace*{0.5cm}
\includegraphics[width=7cm]{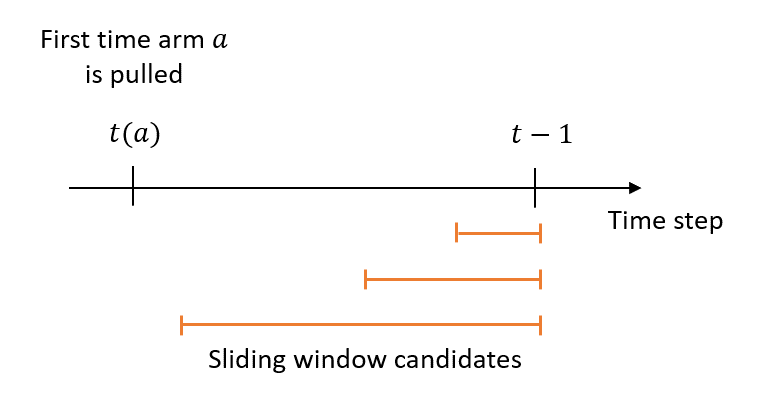}
%%% not \center
%\subfigure[WUCB with a small window having large variance and small bias.]{\label{fig:a}\includegraphics[width=50mm]{}}\hspace{2mm}
%\subfigure[WUCB with a large window having small variance and large bias.]{\label{fig:b}\includegraphics[width=50mm]{}}\hspace{1mm}
%\subfigure[Window size doubling constraint.]{\label{fig:c}\includegraphics[width=50mm]{}}
\caption{Illustrations for the adaptive sliding window: (left) the effect of the sliding window length on the mean reward estimation, (right) sliding window candidates with doubling lengths.}\label{fig:alg}
\end{figure}
 
 % In what follows, we analyze the regret of the algorithm for slow rotting and abrupt rotting cases. 

 % The regret analysis pertains to the worst-case regret with respect to rotting scenarios.
\paragraph{Slow Rotting ($V_T$).}
 Here we consider the case of slow rotting, where, recall, the adaptive adversary is constrained such that the total amount of rotting is bounded by $V_T$.
 % \footnote{With a slight abuse of notation, the following results can apply for any given $V_T=O(T)$ satisfying $\sum_{t=1}^{T-1}\rho_t\le V_T$.}
 We analyze the regret of 
Algorithm~\ref{alg:alg1} with tuned $\delta$ using $\beta$ and $V_T$. We define $\delta_{V}(\beta)=c_1\max\{(V_T/T)^{1/(\beta+2)}, 1/T^{1/(\beta+1)}\}$ when $\beta\ge 1$ and $\delta_V(\beta)=c_1\max\{ (V_T/T)^{1/3}, 1/\sqrt{T}\}$ when $0<\beta<1$ for some constant $0<c_1<1$. The algorithm with $\delta_V(\beta)$  achieves a regret bound in the following theorem.
\begin{theorem}\label{thm:R_upper_bd_V} 
 The policy $\pi$ of Algorithm~\ref{alg:alg1} with $\delta=\delta_V(\beta)$ achieves:
\[\mathbb{E}[R^\pi(T)]= 
\begin{cases}
% \tilde{O}(\min\{R_V, R_S\})
\tilde{O}(\max\{V_T^{\frac{1}{\beta+2}}T^{\frac{\beta+1}{\beta+2}},T^{\frac{\beta}{\beta+1}}\})\hspace{-2mm}&for\hspace{2mm} \beta\ge 1, \\
\tilde{O}(\max\{V_T^{\frac{1}{3}}T^{\frac{2}{3}},\sqrt{T}\}) &\hspace{-6mm} for\hspace{2mm} 0<\beta<1.
\end{cases}
 \]
\end{theorem}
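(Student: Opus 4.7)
The plan is to bound the regret through four stages: (i) condition on a high-probability concentration event for the sliding-window empirical means, (ii) derive a per-arm regret bound from the fact that the threshold was not yet triggered, (iii) classify the sampled arms as good or bad and count them, and (iv) sum the per-arm bounds, using Cauchy--Schwarz for good arms and a dyadic decomposition for bad arms.

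First, define the concentration event $E$ on which $|\widehat\mu_{[t_1,t_2]}(a)-\bar\mu_{[t_1,t_2]}(a)|\le\sqrt{12\log(T)/n_{[t_1,t_2]}(a)}$ for every played arm $a$ and every $t_1\le t_2$, where $\bar\mu_{[t_1,t_2]}(a)$ averages the true means over the plays of $a$ in the window. Azuma--Hoeffding for $1$-sub-Gaussian noise and a union bound over $O(T^2)$ window endpoints give $\mathbb{P}(E^c)=O(T^{-2})$, so $\mathbb{E}[R(T)\mathbb{1}(E^c)]=O(T\cdot T^{-2})=O(1/T)$; I would work on $E$ thereafter.

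Next, for any arm $a$ with play interval $[t(a),t'(a)]$ of length $N(a)$, the algorithm did not withdraw $a$ at time $t'(a)$, so on $E$ the min over dyadic windows of $\mathrm{WUCB}$ was still $\ge 1-\delta$. Iterating through the doubling windows to cover the whole history yields
\[
\sum_{t=t(a)}^{t'(a)-1}(1-\mu_t(a))\;\le\;(N(a)-1)\delta+O(\sqrt{N(a)\log T}),
\]
and adding the last pull's gap, which is at most $\Delta_1(a)+R(a)\le 1+R(a)$ with $R(a)=\sum_{s:a_s=a}\rho_s$, gives the per-arm bound $N(a)\delta+O(\sqrt{N(a)\log T})+1+R(a)$; the same bound covers the final arm by the same argument at time $T$. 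Now call $a$ good when $\Delta_1(a)\le\delta/2$ and bad otherwise. If a good arm is withdrawn, the witness window has $\bar\mu\le\mathrm{WUCB}<1-\delta$, so its average gap exceeds $\delta$ and, since rotting is cumulative, $R(a)>\delta/2$; hence at most $2V_T/\delta$ good arms are withdrawn, giving $|G|\le 2V_T/\delta+1$. Because $\mathbb{P}(\text{good})=\Theta(\delta^\beta)$ and arms are sampled i.i.d., a negative-binomial tail bound yields, with probability $1-T^{-2}$, that $|A|=\tilde O(|G|/\delta^\beta)=\tilde O(V_T/\delta^{\beta+1}+\delta^{-\beta})$. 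On $E$, a bad arm with $\Delta_1(a)=x>\delta$ is withdrawn within $N(a)=O(\log T/(x-\delta)^2)$ plays.

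Finally, summing the per-arm bounds gives
\[
R(T)\;\le\;T\delta+O\Bigl(\textstyle\sum_a\sqrt{N(a)\log T}\Bigr)+|A|+V_T.
\]
I would split the middle sum. For good arms, Cauchy--Schwarz gives $\sum_G\sqrt{N(a)}\le\sqrt{|G|\,T}\le\sqrt{(V_T/\delta)T}$, which is $\tilde O(T\delta)$ under $V_T\le T$ for both choices of $\delta_V(\beta)$. For bad arms, I would partition by $\Delta_1(a)\in[2^k\delta,2^{k+1}\delta]$ for $k=0,\dots,\lceil\log(1/\delta)\rceil$: the expected number per band is $|A|\Theta((2^k\delta)^\beta)$ and each contributes $\sqrt{N(a)\log T}=O(\log(T)/(2^k\delta))$. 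The resulting geometric sum evaluates to $\tilde O(|A|)$ when $\beta\ge 1$ and $\tilde O(|A|\delta^{\beta-1})$ when $0<\beta<1$. Plugging in $|A|=\tilde O(V_T/\delta^{\beta+1}+\delta^{-\beta})$ and the prescribed $\delta=\delta_V(\beta)$ collapses every term to $\tilde O(T\delta)$, which matches $\tilde O(\max\{V_T^{1/(\beta+2)}T^{(\beta+1)/(\beta+2)},T^{\beta/(\beta+1)}\})$ for $\beta\ge 1$ and $\tilde O(\max\{V_T^{1/3}T^{2/3},\sqrt T\})$ for $0<\beta<1$.

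The main obstacle is the bad-arm $\sqrt{N\log T}$ sum: a naive Cauchy--Schwarz over all $|A|$ arms yields $\sqrt{|A|T\log T}$, which exceeds $T\delta$ whenever $|A|\gtrsim T\delta$. The dyadic partition in $\Delta_1(a)$, which uses the distributional assumption $\mathbb{P}(\Delta_1<x)=\Theta(x^\beta)$ in an essential way, is what contracts this sum to $\tilde O(|A|)$ (or $\tilde O(|A|\delta^{\beta-1})$ for $\beta<1$). A secondary delicate bookkeeping step is verifying, for each of the two regimes inside $\delta_V(\beta)$, that $V_T/\delta^{\beta+1}$, $\delta^{-\beta}$, $V_T/\delta$, and $V_T$ are all $\le T\delta$; these inequalities all follow from $V_T\le T$ and pin down the exponents stated in the theorem.
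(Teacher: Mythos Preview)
Your route is genuinely different from the paper's and mostly works. The paper treats good and near-good arms ($\Delta_1\le 2\delta$) together through an explicit optimal-window analysis---choosing $w(a)\approx(\log T)^{1/3}/\bar\rho_{[t_1,t_2-2]}(a)^{2/3}$ per arm and applying H\"older to reach $\tilde O(T\delta+T^{2/3}V_T^{1/3})$---and then runs an episodic argument for bad arms. You instead collapse everything into the single per-arm inequality $\sum_{t<t'(a)}(1-\mu_t(a))\le(N(a)-1)\delta+O(\sqrt{N(a)\log T})$, obtained by tiling the arm's history with the algorithm's own dyadic windows, and then count arms directly. This is more elementary, and both that per-arm inequality and the bound $|G|\le 2V_T/\delta+1$ are correct.

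The gap is at the boundary between your good and bad categories. You take bad to mean $\Delta_1>\delta/2$ and claim that each arm in the band $\Delta_1\in[2^k\delta,2^{k+1}\delta]$ contributes $\sqrt{N(a)\log T}=O(\log T/(2^k\delta))$. But the bound you actually have from $N(a)=O(\log T/(\Delta_1-\delta)^2)$ is $O(\log T/(\Delta_1-\delta))$, not $O(\log T/\Delta_1)$: for $k=0$ this blows up as $\Delta_1\downarrow\delta$, and arms with $\Delta_1\in(\delta/2,\delta]$ are not covered by any band at all---indeed they need never be withdrawn absent rotting, since $\mu_1\ge 1-\delta$, so no $N(a)$ bound applies to them. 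This is precisely why the paper carves out a third ``near-good'' range $(\delta/2,2\delta]$ and groups it with the good arms. In your framework the fix is the same: restrict bad to $\Delta_1>2\delta$ so the dyadic sum starts at $k=1$ (where $\Delta_1-\delta\ge(2^k-1)\delta\ge 2^{k-1}\delta$ makes your band estimate valid), and handle $\Delta_1\in(\delta/2,2\delta]$ via Cauchy--Schwarz exactly as for good arms. This works because the near-arm count concentrates around $\Theta(|G|)$ (same $\Theta(\delta^\beta)$ sampling probability as good arms), giving $\sqrt{\Theta(|G|)\,T\log T}=\tilde O(T\delta)$ again. One minor unrelated point: on $E^c$ the per-step gap can be as large as $1+V_T$ since means may go negative, so $\mathbb{E}[R\,\mathbb{1}(E^c)]=O((1+V_T)/T)=O(1)$ rather than $O(1/T)$; this does not affect the final bound.
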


% \begin{proof}[sketch]
    
% \end{proof}
% \begin{proof}
% From this regret analysis, we can avoid the problem of the dependency of the sampled arms incurred from the online policy in regret analysis given the number of sampled arms over $T$. 
% \vspace{-1mm}
 We observe that when $\beta$ increases above $1$, the regret bound becomes worse because the likelihood of sampling a good arm decreases. However, when $\beta$ decreases below $1$, the regret bound remains the same due to the inability to avoid a certain level of regret arising from estimating the mean reward. Further discussion will be provided later. Also, we observe that when $V_T=O(\max\{1/T^{1/(\beta+1)},1/\sqrt{T}\})$ where the problem becomes near-stationary, the regret bound in Theorem~\ref{thm:R_upper_bd_V} matches the previously known regret bound for stationary infinitely many-armed bandits, $\tilde{O}(\max\{T^{\beta/(\beta+1)},\sqrt{T}\})$, as shown in \citet{wang, Bayati}.

\begin{proof}[Proof sketch]
 The full proof is provided in Appendix~\ref{app:rotting_upper_V}. 
 Here we outline the main ideas of the proof.
 % Here we outline main difficulties of the proof.
{There are several technical challenges involved in regret analysis}, such as dealing with varying $\rho_t$ individually with respect to the total rotting budget of $V_T$, adaptive estimation in our algorithm, and the generalized distributions of initial mean rewards of arms with parameter $\beta>0$, none of which appear in  \citet{kim2022rotting}. 
% \jk{This is because the previous work dealt with regret for worst-case maximum rotting rate $\rho(=o(1))$, non-adaptive estimation with respect to rotting rates, and uniform distribution of initial mean rewards of arms ($\beta=1$).}
 
 % This is because \citet{kim2022rotting} dealt with regret for worst-case maximum rotting rate $\rho(=o(1))$, non-adaptive estimation with respect to rotting rates, and uniform distribution of initial mean rewards of arms ($\beta=1$). 

 % The main difficulties in the proof come from analyzing regret delicately for varying $\rho_t$ over time horizon $T$ with an adaptive sliding window and for the generalized initial mean reward distribution with parameter $\beta$, which do not appear in \citet{kim2022rotting}.

We separate the regret into two components: one associated with pulling initially good arms and another with pulling initially bad arms. An arm $a$ is said to be good if $\mu_1(a)\ge 1-2\delta$ and, otherwise, it is said to be bad. The reason why the separation is required is that our adaptive algorithm has different behaviors depending on the category of arms.
Good arms may be pulled repeatedly when rotting rates are sufficiently small but bad arms are not.
% which is the main difference from the worst-case regret analysis under the maximum rotting rate constraint in \citet{kim2022rotting}. 
% This is because WUCB value can be continuously higher than the threshold value $1-\delta$. On the other hand, bad arms are stopped from being pulled at some point because WUCB  the arms are recognized as bad arms. 
We write $R^\pi(T)=R^\ga(T)+R^\ba(T),$ where $R^\ga(T)$ is the regret from good arms and $R^\ba(T)$ is the regret from bad arms.  
% This separation was not required in \citet{kim2022rotting} since they focused on the worst-case analysis utilizing the worst-case targeted mean estimator.

% We note that this separation was not required in the regret analysis in \citet{kim2022rotting} because good arms even with small rotting are not allowed to be pulled continuously by their algorithm utilizing worst-case targeted mean estimator. This is why their algorithm does not achieve a tight bound for individual $\rho_t$'s.

We first provide a bound for $\mathbb{E}[R^\ga(T)]$. For analyzing regret from good arms, we analyze the cumulative amount of rotting while pulling a selected good arm before withdrawing the arm by the algorithm. 
% Then we analyze regret from each sampled good arm using the bound for cumulative rotting of the arm.
Let $\mathcal{A}_T^\ga$ be a set of distinct good arms selected until $T$, $t_1(a)$ be the initial time step at which arm $a$ is pulled, and $t_2(a)$ be the final time step at which the arm is pulled by the algorithm so that the threshold condition holds when $t=t_2(a)+1$. For simplicity, we use $t_1$ and $t_2$ for $t_1(a)$ and $t_2(a)$, when there is no confusion. For any time steps $n\le m$, we define $V_{[n,m]}(a)=\sum_{t=n}^{m}\rho_t(a)$ and $\overline{\rho}_{[n,m]}(a)=V_{[n,m]}(a)/n_{[n,m]}(a)$. We show that the regret is decomposed  as
 \begin{align}
     R^\ga(T)=  \sum_{a\in\mathcal{A}_T^\ga}\Big(\Delta_1(a)n_{[t_1,t_2]}(a)+\sum_{t=t_1+1}^{t_2}V_{[t_1,t-1]}(a)\Big),\label{eq:R_g_decom}
 \end{align}
 which consists of regret from the initial mean reward and the cumulative amount of rotting for each arm. For the first term of $\sum_{a\in\mathcal{A}_T^\ga}\Delta_1(a)n_{[t_1,t_2]}(a)$ in \eqref{eq:R_g_decom}, since $\Delta_1(a)=O(\delta)$ from the definition of good arms $a\in\mathcal{A}_T^\ga$, we have
 $
\mathbb{E}[\sum_{a\in\mathcal{A}_T^\ga}\Delta_1(a)n_{[t_1,t_2]}(a)]=O(\delta T).$

The main difficulty in (\ref{eq:R_g_decom}) lies in dealing with the second term,  $\sum_{a\in\mathcal{A}_T^\ga}\sum_{t=t_1+1}^{t_2}V_{[t_1,t-1]}(a)$ , where we need to analyze the amount of cumulative rotting until the arm is eliminated by using the adaptive threshold condition.  A careful analysis of the adaptive threshold policy is required to limit the total variation of rotting.  By examining the estimation errors arising from variance and bias due to the adaptive threshold condition, we can establish an upper bound for the cumulative amount of rotting as
   \begin{align}
\sum_{a\in{\mathcal{A}}_T^\ga}\sum_{t=t_1+1}^{t_2}V_{[t_1,t-1]}(a)=\tilde{O}\Big(T\delta+V_T+\sum_{a\in{\mathcal{A}}_T^\ga} V_{[t_1,t_2-2]}(a)^{\frac{1}{3}}n_{[t_1,t_2-2]}(a)^{\frac{2}{3}}\Big). \label{eq:V_T_bd} 
\end{align}
 Therefore, from $\delta=\delta_V(\beta)$, $V_T\le  T$, and Eqs.~\eqref{eq:R_g_decom} and \eqref{eq:V_T_bd}, using Hölder's inequality, we have \begin{align}
    \mathbb{E}[R^\ga(T)]=
% &=\tilde{O}\left(T\delta_V(\beta)+V_T+\sum_{a\in\mathcal{A}_T^\ga}V_{[t_1,t_2-2]}(a)^{\frac{1}{3}}n_{[t_1,t_2-2]}(a)^{\frac{2}{3}}\right)\cr &=\tilde{O}\left(T\delta_V(\beta)+V_T^{\frac{1}{3}}T^{\frac{2}{3}}\right)\cr &= 
\begin{cases}
% \tilde{O}(\min\{R_V, R_S\})
\tilde{O}(\max\{V_T^{\frac{1}{\beta+2}}T^{\frac{\beta+1}{\beta+2}},T^{\frac{\beta}{\beta+1}}\})\hspace{-2mm}&for\hspace{2mm} \beta\ge 1, \\
\tilde{O}(\max\{V_T^{\frac{1}{3}}T^{\frac{2}{3}},\sqrt{T}\}) &\hspace{-6mm} for\hspace{2mm} 0<\beta<1.
\end{cases}\label{eq:R_G_bd_V}
\end{align}
 Next, we provide a bound for $\mathbb{E}[R^\ba(T)]$. We employ episodic regret analysis, defining an episode as the time steps between consecutively selected distinct good arms by the algorithm.  By analyzing bad arms within each episode, we can derive an upper bound for the overall regret arising from bad arms. We define the regret from bad arms over $m^\ga$ episodes as $R^\ba_{m^\ga}$. We first consider the case of  $V_T>\max\{1/\sqrt{T},1/T^{1/(\beta+1)}\}$. In this case, by setting $m^\ga=\lceil 2V_T/\delta\rceil$, we can show that $R^\ba(T)\le R_{m^\ga}^\ba$ with a high probability.  
By analyzing $R_{m^\ga}^\ba$ with the episodic analysis, we can show that
     $\mathbb{E}[R^\ba(T)]\le\mathbb{E}[R^\ba_{m^\ga}]=\tilde{O}(\max\{T^{\frac{\beta+1}{\beta+2}}V_T^{\frac{1}{\beta+2}},T^{\frac{2}{3}}V_T^{\frac{1}{3}}\}).$ As in the similar manner, when $V_T\le\max\{1/\sqrt{T},1/T^{1/(\beta+1)}\}$, by setting $m^\ga=C_3$ for some constant $C_3>0$, we can show that 
 $\mathbb{E}[R^\ba(T)]\le \mathbb{E}[R^\ba_{m^\ga}]=\tilde{O}(\max\{T^{\frac{\beta}{\beta+1}},\sqrt{T}\}).$ From the above two inequalities, we have 
\begin{align}
\mathbb{E}[R^\ba(T)]= 
\begin{cases}
% \tilde{O}(\min\{R_V, R_S\})
\tilde{O}(\max\{V_T^{\frac{1}{\beta+2}}T^{\frac{\beta+1}{\beta+2}},T^{\frac{\beta}{\beta+1}}\})\hspace{-2mm}&for\hspace{2mm} \beta\ge 1, \\
\tilde{O}(\max\{V_T^{\frac{1}{3}}T^{\frac{2}{3}},\sqrt{T}\}) &\hspace{-6mm} for\hspace{2mm} 0<\beta<1.
\end{cases}\label{eq:R_B_bd_V}
\end{align}
 Finally, from \eqref{eq:R_G_bd_V} and \eqref{eq:R_B_bd_V}, we can conclude the proof from $\mathbb{E}[R^\pi(T)]=\mathbb{E}[R^\ga(T)]+\mathbb{E}[R^\ba(T)]$.
\end{proof}
\begin{remark}\label{rm:comparision_regret}
 We compare our result in Theorem~\ref{thm:R_upper_bd_V} with that in \citet{kim2022rotting}, which, recall, is under the maximum rotting rate constraint $\rho_t\le \rho=o(1)$ for all $t$ and uniform distribution of initial mean rewards ($\beta=1$). For a fair comparison, we consider an oblivious adversary for rotting rates where the values of $\rho_t$'s are determined before an algorithm is run, which may imply  $V_T=\sum_{t=1}^{T-1}\rho_t$ and $\rho=\max_{t\in[T-1]}\rho_t$. Then with  $\beta=1$, from $V_T\le T\rho$, we can observe that the regret bound of Algorithm~\ref{alg:alg1} is tighter than that of \texttt{UCB-TP} \citep{kim2022rotting} as $\tilde{O}(\max\{V_T^{\frac{1}{3}}T^{\frac{2}{3}},\sqrt{T}\})\le \tilde{O}(\max\{\rho^{\frac{1}{3}}T,\sqrt{T}\})$,
where the latter is the regret bound of \texttt{UCB-TP}. 
% Therefore, even in scenarios close to stationarity, such as when $\rho_1 = O(1/\log(T))$ and $\rho_t = 0$ for all $t \geq 2$, the regret bound can approach near linearity, specifically $\tilde{O}(T/\log(T))$. 
% Here we provide an example to clarify the above inequality. We consider a case of $\rho_t=1/(t\log(T) )$ for all $t$ so that $\rho=\rho_1=1/\log(T)$ and $V_T=\sum_{t=1}^{T-1}\rho_t=O(1)$.  In this case, the regret bound of Algorithm~\ref{alg:alg1}, $\tilde{O}(\max\{V_T^{1/3}T^{2/3},\sqrt{T}\})=\tilde{O}(T^{2/3})$, is sublinear while that of \texttt{UCB-TP}, $\tilde{O}(\max\{\rho^{1/3}T,\sqrt{T}\})=\tilde{O}(T/\log(T))$,  is nearly linear. This is because $\texttt{UCB-TP}$ is highly contingent on the maximum rotting rate rather than the cumulative rotting rates.
We will demonstrate this in our numerical results.
\end{remark}

\paragraph{Abrupt Rotting ($S_T$).} Here we consider abruptly rotting reward distribution under the constraint of $S_T$. 
% \jk{With a slight abuse of notation, for abrupt rotting scenario, we define $V_T$ as a resulting bound of the cumulative amount of rotting such that $1+\sum_{t=1}^{T-1}\rho_t\le V_T$ which holds for any $\rho_t's$, rather than the constraint for the adversary.}
% \footnote{With a slight abuse of notation, the following results can apply for any given $S_T$  satisfying $1+\sum_{t=1}^{T-1}\mathbbm{1}(\rho_t\neq 0)\le S_T$}
% In abrupt rotting scenarios where abrupt changes occasionally occur with $S_T=o(T)$,  it may be beneficial for algorithms to consider $S_T$ rather than $V_T$ to obtain better regret. We will discuss this in more detail later. 
We consider Algorithm~\ref{alg:alg1} with $\delta$ newly tuned by $S_T$ and $\beta$. We define $\delta_S(\beta)=c_1(S_T/T)^{1/(\beta+1)}$ when $\beta\ge 1$ and $\delta_S(\beta)=c_1(S_T/T)^{1/2}$ when $0<\beta\le  1$ for some constant $0<c_1<1$. We also define $ \bar{V}_T=\sum_{t=1}^{T-1}\mathbb{E}[\rho_t]$. In the following theorem, we present a regret upper bound for Algorithm~\ref{alg:alg1} with $\delta_S(\beta)$.

\begin{theorem}\label{thm:abrupt_upper_bd} The policy $\pi$ of Algorithm~\ref{alg:alg1} with $\delta=\delta_S(\beta)$ achieves: 
\[\mathbb{E}[R^\pi(T)]= 
\begin{cases}
% \tilde{O}(\min\{R_V, R_S\})
\tilde{O}(\max\{S_T^{\frac{1}{\beta+1}}T^{\frac{\beta}{\beta+1}},\bar{V}_T\})\hspace{-2mm}&for\hspace{2mm} \beta\ge 1, \\
\tilde{O}(\max\{\sqrt{S_T T},\bar{V}_T\}) &\hspace{-6mm} for\hspace{2mm} 0<\beta<1.
\end{cases}
 \]
% \begin{align}\label{eq:abrupt_upper_bd}
%     \mathbb{E}[R^\pi (T)]=\tilde{O}\left(\max\left\{S_T^{\frac{1}{\beta+1}}T^{\frac{\beta}{\beta+1}},\sqrt{S_TT},V_T\right\}\right).
% \end{align}
\end{theorem}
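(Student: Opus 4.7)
The plan is to follow the same two-part decomposition used in the proof of Theorem~\ref{thm:R_upper_bd_V}, writing $R^\pi(T) = R^\ga(T) + R^\ba(T)$ where $\mathcal{A}_T^\ga$ collects the arms sampled by the algorithm with $\Delta_1(a) \leq 2\delta$, and then re-expressing the bounds on each part in terms of $S_T$ rather than $V_T$. Applying the same decomposition
\[
R^\ga(T) = \sum_{a \in \mathcal{A}_T^\ga}\left(\Delta_1(a)\, n_{[t_1,t_2]}(a) + \sum_{t=t_1+1}^{t_2} V_{[t_1,t-1]}(a)\right),
\]
the initial-gap contribution is at most $2\delta T$, which with $\delta = \delta_S(\beta)$ exactly reproduces the leading $S_T^{1/(\beta+1)}T^{\beta/(\beta+1)}$ or $\sqrt{S_T T}$ target in the two regimes.

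For the cumulative rotting contribution from good arms I will re-index the inner double sum over the at most $S_T$ rotting instances rather than over pulls: each rotting event at time $s$ on a sampled good arm contributes $\rho_s \cdot n_{[s+1,t_2(a)]}(a)$. The key claim is that the adaptive sliding window triggers the threshold condition within $\tilde{O}(1/\delta^2)$ additional pulls after any rotting event that drags $\mu_t(a)$ below $1-\delta$, because one of the dyadic windows in $\mathcal{T}_t(a)$ spans roughly $1/\delta^2$ pulls contained in the post-rotting interval, giving a bias-free estimate with variance $\tilde{O}(\delta)$. Combined with $\sum_a V_{[t_1,t_2]}(a) \leq V_T$, the cumulative rotting contribution for good arms is then $\tilde{O}(V_T)$, matching the $V_T$ term in the stated bound.

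For the regret from bad arms $R^\ba(T)$ I will reuse the episodic framework from the slow-rotting proof: an episode ends whenever a new good arm is sampled, the expected number of bad arms seen in each episode is $\Theta(\delta^{-\beta})$, and each bad arm is withdrawn within $\tilde{O}(1/\delta^2)$ pulls by the smallest dyadic window (since bad arms start with gap $\Delta_1(a) > 2\delta$, so no rotting is needed to drive elimination). The number of episodes is now tuned by how many good-arm samples are needed before either the horizon $T$ or the $S_T$ rotting events are exhausted. Setting the episode count analogously to $m^\ga = \tilde{O}(\max\{S_T,\delta T\})$ (replacing $\lceil 2V_T/\delta\rceil$ from the slow case) and applying the same H\"older-type accounting as in Theorem~\ref{thm:R_upper_bd_V} yields $\mathbb{E}[R^\ba(T)] = \tilde{O}(\max\{S_T^{1/(\beta+1)}T^{\beta/(\beta+1)},V_T\})$ for $\beta \geq 1$ and the analogous bound for $0 < \beta < 1$.

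The main obstacle I expect is making rigorous the $\tilde{O}(1/\delta^2)$ post-rotting elimination claim under the dyadic window schedule. Since the windows in $\mathcal{T}_t(a)$ are anchored at the current time $t$, not at the rotting event, I must argue that once $t$ exceeds the most recent rotting time by a suitable dyadic amount, the corresponding window lies entirely within a constant-mean segment and delivers a $WUCB$ below $1-\delta$. A pigeonhole argument relating the at most $S_T$ abrupt events to the $O(\log T)$ dyadic scales should cover edge cases where no window aligns cleanly, and the bias-variance tradeoff becomes cleaner than in the slow-rotting case because the bias vanishes between consecutive rotting instances rather than accumulating continuously.
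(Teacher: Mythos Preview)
Your high-level decomposition is sensible, but there is a concrete error in the good-arm analysis and you are missing the device that makes the paper's proof clean.

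\textbf{The $\tilde{O}(V_T)$ claim is wrong.} You assert that ``each rotting event at time $s$ contributes $\rho_s \cdot n_{[s+1,t_2(a)]}(a)$,'' that the arm is eliminated within $\tilde{O}(1/\delta^2)$ pulls once it drops below $1-\delta$, and that combining this with $\sum_a V_{[t_1,t_2]}(a) \le V_T$ yields $\tilde{O}(V_T)$. These pieces do not combine that way. Take a good arm with $\Delta_1(a)=0$ that is rotted once by $\rho_s = 3\delta$; the gap becomes $3\delta$, elimination takes $\tilde{O}(1/\delta^2)$ pulls, and the rotting contribution is $3\delta \cdot \tilde{O}(1/\delta^2) = \tilde{O}(1/\delta)$, not $\tilde{O}(\rho_s)$. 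If this happens on $S_T$ distinct arms you get $\tilde{O}(S_T/\delta)$ while $V_T = 3\delta S_T \ll S_T/\delta$. The correct per-event cost is $\tilde{O}(\max\{1/\delta,\rho_s\})$, summing to $\tilde{O}(\max\{S_T/\delta, V_T\})$; the $S_T/\delta$ piece is then absorbed into the $S_T^{1/(\beta+1)}T^{\beta/(\beta+1)}$ (or $\sqrt{S_T T}$) term, not into $V_T$. Your final bound survives, but the intermediate accounting does not.

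\textbf{What the paper does instead.} The paper sidesteps this by \emph{re-labeling}: whenever an arm is rotted it is treated as a freshly sampled arm, and good/bad is assessed by the \emph{current} gap $\Delta_{t_1}(a)$, not the initial $\Delta_1(a)$. With this convention every (virtual) arm has constant mean over its lifetime, so $R^\ga(T)\le 2\delta T$ is immediate with no cumulative-rotting term at all. Bad arms then split into (i) originally bad, handled by the episodic argument with $m^\ga = S_T$ (each rotting event can terminate at most one good-arm run, and an unrotted good arm is never withdrawn), and (ii) rotted-into-bad, each of which is eliminated in $\tilde{O}(\max\{1/(\Delta_{t_1}-\delta)^2,1\})$ pulls, giving the $\tilde{O}(\max\{S_T/\delta,V_T\})$ you need. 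Your argument about a post-rotting dyadic window being bias-free is exactly what drives part (ii), but the re-labeling isolates it cleanly.

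\textbf{Episode count.} Your proposed $m^\ga = \tilde{O}(\max\{S_T,\delta T\})$ is too large: $\delta T = S_T^{1/(\beta+1)}T^{\beta/(\beta+1)}$ episodes times $\tilde{O}(1/\delta^\beta)$ bad-arm cost per episode blows up. The right count is $m^\ga = S_T$, for the reason just stated.
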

 As in the slow rotting case, for the abrupt rotting case ($S_T$), we observe that when $\beta$ increases above $1$, the regret bound in the above theorem worsens as the likelihood of sampling a good arm decreases. When $\beta$ decreases below $1$, the regret bound remains the same because we cannot avoid a certain level of regret arising from estimating the mean reward of an arm. Additionally, we observe that the regret bound is linearly bounded by $\bar{V}_T$, which is attributed to the algorithm's necessity to pull a rotted arm at least once to determine its status as bad.  Later, in the analysis of regret lower bounds, we will establish the impossibility of avoiding $\bar{V}_T$ regret in the worst-case. Notably, in the typical cases where $0\le\rho_t\le 1$ for all $t>0$, as discussed in Remark~\ref{rm_VT_bd}, $\bar{V}_T$ is negligible in the regret bound from $\bar{V}_T\le S_T\le T$. Furthermore, we observe that for the case of $S_T=1$, where the problem becomes stationary (implying $\bar{V}_T=0$), the regret bound matches the previously known regret bound of $\tilde{O}(\max\{T^{\beta/(\beta+1)},\sqrt{T}\})$ for the stationary infinitely many-armed bandits \citep{wang,Bayati}.  
 %\jk{In the abrupt rotting scenario with $S_T$, where there is no constraint of $V_T$ for the cumulative rotting amount, $\bar{V}_T$ is a resulting quantity related to the cumulative rotting.} 
 
\begin{proof}[Proof sketch]
The full proof is provided in Appendix~\ref{app:abrupt_upper}. Here we provide a proof outline. We follow the proof framework of Theorem~\ref{thm:R_upper_bd_V} but the main difference lies in carefully dealing with substantially rotted arms. For the ease of presentation, we consider each arm that experiences abrupt rotting as if it were newly selected by the algorithm, treating the arm before and after abrupt rotting as distinct arms. The definition of a good arm and a bad arm is based on the mean reward at the time when it is newly selected. Then we divide the regret into regret from good and bad arms as $R^\pi(T)=R^\ga(T)+R^\ba(T)$. From the definition of good arms, we can easily show that 
\begin{align*}
\mathbb{E}[R^\ga(T)] =O(\delta_S(\beta) T)= 
\begin{cases}
% \tilde{O}(\min\{R_V, R_S\})
\tilde{O}(S_T^{\frac{1}{\beta+1}}T^{\frac{\beta}{\beta+1}})\hspace{-2mm}&for\hspace{2mm} \beta\ge 1, \\
\tilde{O}(\sqrt{S_T T}) &\hspace{-6mm} for\hspace{2mm} 0<\beta<1.
\end{cases}
\end{align*}
 For dealing with $R^\ba(T)$, we partition the regret into two scenarios: one where the bad arm is initially bad sampled from the distribution of \eqref{eq:dis} and another where it becomes bad after rotting. This can be expressed as $R^\ba(T)=R^{\ba,1}(T)+R^{\ba,2}(T).$ Then for the former regret, $R^{\ba,1}(T)$, as in the proof of Theorem~\ref{thm:R_upper_bd_V}, by using the episodic analysis with $m^\ga=S_T$, we can show that 
 \begin{align*}
\mathbb{E}[R^{\ba,1}(T)] \le\mathbb{E}[R_{m^\ga}^\ba]= 
\begin{cases}
% \tilde{O}(\min\{R_V, R_S\})
\tilde{O}(S_T^{\frac{1}{\beta+1}}T^{\frac{\beta}{\beta+1}})\hspace{-2mm}&for\hspace{2mm} \beta\ge 1, \\
\tilde{O}(\sqrt{S_T T}) &\hspace{-6mm} for\hspace{2mm} 0<\beta<1.
\end{cases}
\end{align*} 
 % $\mathbb{E}[R^{\ba,1}(T)]\le \mathbb{E}[R_{m^\ga}^\ba]=O(S_T^{1/(\beta+1)}T^{\beta/(\beta+1)})$ when $\beta\ge 1$ and $O(\sqrt{S_T T})$ when $0<\beta<1$.
For the regret from rotted bad arms, $R^{\ba,2}(T)$, {it is critical to analyze significant rotting instances to obtain a tight bound with respect to $S_T$}, a factor not addressed in the regret analysis of slow rotting ($V_T$) in Theorem~\ref{thm:R_upper_bd_V}. We analyze that when there exists significant rotting, then the algorithm can efficiently detect it as a bad arm and eliminate it by pulling it at once. From this analysis, we have 
 \begin{align*}
\mathbb{E}[R^{\ba,2}(T)] &=
% \tilde{O}(\max\{S_T/\delta_S(\beta),V_T\})\cr &= 
\begin{cases}
% \tilde{O}(\min\{R_V, R_S\})
\tilde{O}(\max\{S_T^{\frac{\beta}{\beta+1}}T^{\frac{1}{\beta+1}},\bar{V}_T\})\hspace{-2mm}&for\hspace{2mm} \beta\ge 1, \\
\tilde{O}(\max\{\sqrt{S_T T},\bar{V}_T\}) &\hspace{-6mm} for\hspace{2mm} 0<\beta<1.
\end{cases}
\end{align*} 
 % We note that when the arm is largely rotted, then it causes to have the regret of $V_T$ even though the arm is rapidly eliminated from the algorithm.
 Putting all the results together with $\mathbb{E}[R^\pi(T)]=\mathbb{E}[R^\ga(T)]+\mathbb{E}[R^{\ba,1}(T)]+\mathbb{E}[R^{\ba,2}(T)]$ and $S_T\le T$, we can conclude the proof.
\end{proof}
Remarkably, our proposed method, utilizing an adaptive sliding window, yields a tight bound (lower bounds will be presented later) not only for slow rotting but also for abrupt rotting ($S_T$) scenarios characterized by a limited number of rotting instances. The rationale behind the effectiveness of the adaptive sliding window in controlling the bias and variance tradeoff with respect to abrupt rotting is as follows. It can be observed that the adaptive threshold condition of $\min_{s\in\mathcal{T}_t(a)}WUCB(a,s,t-1,T)< 1-\delta$ is equivalent to the condition of $WUCB(a,s,t-1,T)< 1-\delta$ for some $s$ such that $t_1(a)\le s\le t-1$ (ignoring the computational reduction trick). The latter expression represents the threshold condition tested for every time step before $t$, encompassing the time step immediately following an abrupt rotting event.  Consequently, as illustrated in Figure~\ref{fig:win}, this adaptive threshold condition can identify substantially rotted arms by mitigating bias and variance using the window starting from the time step following the occurrence of rotting.

% \begin{remark}\label{rem:S_V_comparison}
%   Under the constraints of $S_T$ and $V_T$, in cases where abrupt rotting occasionally occurs with $S_T=O(\min\{V_T^{(\beta+1)/(\beta+2)}T^{1/(\beta+2)},V_T^{2/3}T^{1/3}\})$, we can observe that Algorithm~\ref{alg:alg1} with $\delta_S(\beta)$ achieves a tighter regret bound compared to the one with $\delta_V(\beta)$ from Theorems~\ref{thm:R_upper_bd_V} and~\ref{thm:abrupt_upper_bd} as follows. For simplicity, when $\beta=1$, we have 
% $\tilde{O}(\max\{\sqrt{S_TT}, V_T\})\le  \tilde{O}(\max\{V_T^{1/3}T^{2/3},\sqrt{T}\})$ with the fact that  $\bar{V}_T\le V_T(\le T)$.
% We will demonstrate this later by our numerical results.    
%   \end{remark}
\begin{figure}[t]%[h!]
\centering
\includegraphics[width=0.6\linewidth]{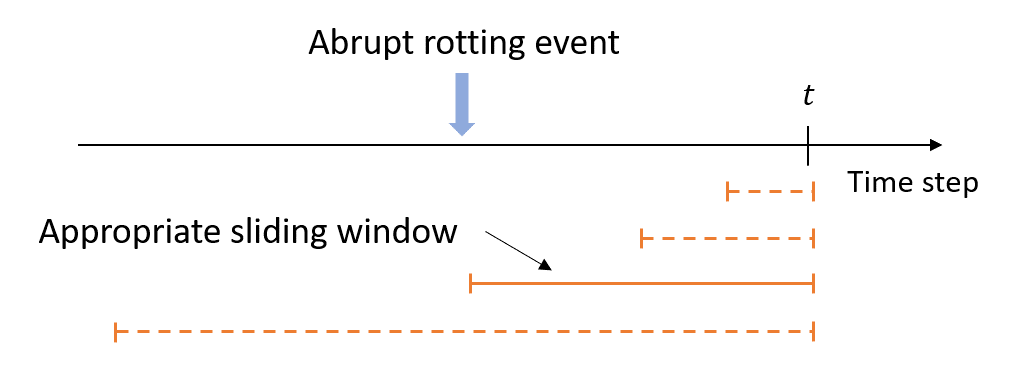}
\caption{Adaptive sliding window for abrupt rotting.
}
\label{fig:win}
\end{figure}

 \paragraph{Slow rotting ($V_T$) and abrupt rotting ($S_T$).} In what follows, we study the case of rotting under both slow rotting and abrupt rotting constraints. In this case, Algorithm~\ref{alg:alg1}, with $\delta=\min\{\delta_V(\beta),\delta_S(\beta)\}$, can achieve a tighter regret bound as noted in the following corollary, which can be obtained from Theorems~\ref{thm:R_upper_bd_V} and~\ref{thm:abrupt_upper_bd}.  
\begin{corollary}
\label{cor:R_upper_bd} 
 Let $R_V$ and $R_S$ be defined as
\begin{align*}
    R_V := 
\begin{cases}
% \tilde{O}(\min\{R_V, R_S\})
\max\{V_T^{\frac{1}{\beta+2}}T^{\frac{\beta+1}{\beta+2}},T^{\frac{\beta}{\beta+1}}\}\hspace{-2mm}&for\hspace{2mm} \beta\ge 1, \\
\max\{V_T^{1/3}T^{2/3},\sqrt{T}\} &\hspace{-6mm} for\hspace{2mm} 0<\beta<1
\end{cases} \text{ and }
 R_S := 
\begin{cases}
% \tilde{O}(\min\{R_V, R_S\})
\max\{S_T^{\frac{1}{\beta+1}}T^{\frac{\beta}{\beta+1}},V_T\}\hspace{-2mm}&for\hspace{2mm} \beta\ge 1, \\
\max\{\sqrt{S_T T},V_T\} &\hspace{-6mm} for\hspace{2mm} 0<\beta<1.
\end{cases}
\end{align*}
 The policy $\pi$ of Algorithm~\ref{alg:alg1} with $\delta=\min\{\delta_V(\beta),\delta_S(\beta)\}$  achieves the 
 regret bound of $\mathbb{E}[R^\pi(T)]=\tilde{O}\left(\min\left\{R_V, R_S\right\}\right).$
 \end{corollary}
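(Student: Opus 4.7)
The plan is to prove the corollary by a short case analysis on which of the two threshold parameters is smaller. Since the choice $\delta = \min\{\delta_V(\beta),\delta_S(\beta)\}$ coincides with either $\delta_V(\beta)$ or $\delta_S(\beta)$, exactly one of Theorem~\ref{thm:R_upper_bd_V} and Theorem~\ref{thm:abrupt_upper_bd} applies to the executed algorithm; all that remains is to verify that the regret bound it produces is in fact the minimum of $R_V$ and $R_S$ up to polylogarithmic factors.

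First I would record a preliminary observation: under Assumption~\ref{ass:V_T} ($V_T \le T$), we always have $V_T \le R_V$. This is because $R_V$ dominates either $V_T^{1/(\beta+2)}T^{(\beta+1)/(\beta+2)}$ (for $\beta \ge 1$) or $V_T^{1/3}T^{2/3}$ (for $0<\beta<1$), and both quantities are at least $V_T$ precisely when $V_T \le T$. From the definitions we also read off $R_V = \tilde{\Theta}(T\,\delta_V(\beta))$ and $R_S = \max\{T\,\delta_S(\beta),\,V_T\}$.

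I then split into two cases. If $\delta_V(\beta) \le \delta_S(\beta)$, the algorithm runs with $\delta = \delta_V(\beta)$, so Theorem~\ref{thm:R_upper_bd_V} yields $\mathbb{E}[R^\pi(T)] = \tilde{O}(R_V)$; simultaneously $R_V = \tilde{\Theta}(T\,\delta_V(\beta)) \le \tilde{O}(T\,\delta_S(\beta)) \le \tilde{O}(R_S)$, so $R_V = \tilde{O}(\min\{R_V,R_S\})$. If instead $\delta_S(\beta) < \delta_V(\beta)$, the algorithm runs with $\delta = \delta_S(\beta)$, so Theorem~\ref{thm:abrupt_upper_bd} yields $\mathbb{E}[R^\pi(T)] = \tilde{O}(R_S)$; I then check $R_S \le \tilde{O}(R_V)$ by combining $T\,\delta_S(\beta) < T\,\delta_V(\beta) = \Theta(R_V)$ with the preliminary observation $V_T \le R_V$, so that $\max\{T\,\delta_S(\beta),V_T\} \le O(R_V)$.

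The only nontrivial step, and the one I expect to need the most care, is the preliminary observation $V_T \le R_V$: it is what prevents the $V_T$ summand inside $R_S$ from ruining the comparison in the second case. This is precisely where Assumption~\ref{ass:V_T} is used; once that inequality is in hand, the two theorems combine immediately to give $\mathbb{E}[R^\pi(T)] = \tilde{O}(\min\{R_V,R_S\})$, completing the proof.
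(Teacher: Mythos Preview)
Your argument is correct and is exactly the approach the paper has in mind: the paper states only that the corollary ``can be obtained from Theorems~\ref{thm:R_upper_bd_V} and~\ref{thm:abrupt_upper_bd}'', and the detailed case split you carry out (on whether $\delta_V(\beta)\le\delta_S(\beta)$ or not) is the same one the authors use explicitly in the analogous analysis for Algorithm~\ref{alg:alg2}. Your identification $R_V = T\,\delta_V(\beta)$ and $R_S=\max\{T\,\delta_S(\beta),V_T\}$, together with the observation $V_T\le R_V$ under Assumption~\ref{ass:V_T}, is exactly what is needed to close the argument.
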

% We note that the result of Corollary~\ref{cor:R_upper_bd} is not in contraction with the lower bounds of Propositions \ref{prop:lower_bd_rotting} and \ref{prop:lower_bd_abrupt}, as these propositions address the worst-case regret under either $V_T=o(T)$ or $S_T=o(T)$.

\paragraph{Case without Prior Knowledge of $V_T$, $S_T$, and $\beta$.}
% \paragraph{Case of not knowing $V_T$ and $\beta$.}
Here we study the case when the algorithm does not have prior information about the values of $V_T$, $S_T$, and $\beta$ under the constraints of $V_T$ and $S_T$. 
% \jk{Let $\Pi$ be a set of all feasible policies. We newly consider the nonstationary bounds of $V_T^w$ and $S_T^w$ with respect to the worst-case policy, defined as $\max_{\pi \in \Pi} \sum_{t=1}^{T-1} \rho_t(a_t^\pi) \leq V_T^w$ and $1 + \max_{\pi \in \Pi} \sum_{t=1}^{T-1} (\rho_t(a_t^\pi) \neq 0) \leq S_T^w$.}
% This implies that the algorithm is unaware of whether the rotting is slow with $V_T=o(T)$ and $S_T=\Theta(T)$ or abrupt rotting with $\max\{S_T,V_T\}=o(T)$.
These parameters play a crucial role in determining the optimal threshold parameter $\delta$ in Algorithm~\ref{alg:alg1}. %Instead of estimating each parameter $V_T$, $S_T$, and $\beta$ separately, we propose to estimate the threshold parameter $\delta$ directly, which is more efficient with respect to the learning cost. 
We propose an algorithm based on estimating the optimal threshold parameter $\delta$ directly (Algorithm~\ref{alg:alg2}), rather than estimating each unknown parameter separately, employing the Bandit-over-Bandit (\texttt{BoB}) approach \citep{cheung2019learning}.
% To address this, we employ the Bandit-over-Bandit (\texttt{BoB}) approach \citet{cheung2019learning}. This approach involves a master and several bases with a time block size of $H$, in which each base algorithm represents Algorithm~\ref{alg:alg1} with a candidate value of $\delta$, and the role of the master is to find the best base having near optimal $\delta$.
Under assumptions concerning the bounds for the cumulative amount of rotting and a constrained version of the adaptive adversary for rotting rates, which are less general than Assumptions~\ref{ass:adaptive} and \ref{ass:V_T} but
still more general than those in \citet{kim2022rotting}, the algorithm achieves a regret bound of $\mathbb{E}[R^\pi(T)]=\tilde{O}(\min\left\{R_V, R_S\right\}+\max\{T^{(2\beta+1)/(2\beta+2)},T^{3/4}\})$. The additional cost arises from learning $\delta$ compared to the regret bound of Corollary~\ref{cor:R_upper_bd}. Further details of the algorithm and regret analysis are provided in Appendix~\ref{app:alg2}.

\section{Regret Lower Bounds}\label{sec:lower_bd_rotting}
In this section, we present regret lower bounds for our problem under Assumptions~\ref{ass:adaptive} and~\ref{ass:V_T} to provide guidance on the tightness of our regret upper bounds. For the regret lower bounds, we consider worst-case instances of rotting rates. In the following theorems, we provide regret lower bounds for slow rotting ($V_T$) and abrupt rotting ($S_T$), respectively.

\begin{theorem}\label{thm:lower_bd_rotting}  For the slow rotting case with the constraint $V_T$ and $\beta>0$, for any policy $\pi$, there always exists a rotting rate adversary such that the regret of $\pi$ satisfies
\begin{align*}
    \mathbb{E}[R^\pi(T)]=\Omega\Big(\max\Big\{V_T^{\frac{1}{\beta+2}}T^{\frac{\beta+1}{\beta+2}},T^{\frac{\beta}{\beta+1}}\Big\}\Big).
\end{align*}\end{theorem}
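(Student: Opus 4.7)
The plan is to lower bound the two terms inside the maximum separately, since bounding the maximum below by either term suffices. For the stationary term $T^{\beta/(\beta+1)}$, I set $\rho_t = 0$ for every $t \in [T-1]$, which trivially satisfies $\sum_t \rho_t \le V_T$. The problem then reduces to the stationary infinite-armed bandit with the distributional condition in \eqref{eq:dis}, and the bound follows from the classical stationary lower bound of \citet{wang,Bayati}.

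For the rotting term $V_T^{1/(\beta+2)} T^{(\beta+1)/(\beta+2)}$, I use an epoch-based adversarial construction in the spirit of \citet{besbes2014stochastic}. Fix $N = \lfloor V_T^{(\beta+1)/(\beta+2)} T^{1/(\beta+2)} \rfloor$ and partition $[T]$ into $N$ epochs of length $L = \lfloor T/N \rfloor \asymp (T/V_T)^{(\beta+1)/(\beta+2)}$, and set the per-epoch rotting magnitude $\delta_L = c L^{-1/(\beta+1)}$ for a small constant $c > 0$, matching the order of the typical sub-optimality gap achievable in a length-$L$ stationary problem. At the end of each epoch $i < N$, the adversary identifies the arm $a_i^*$ that $\pi$ has pulled most often in epoch $i$ and schedules a single rotting event of magnitude $\delta_L$ to fire at the next time $a_i^*$ is played. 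Since at most $N$ such events occur, the total rotting used is at most $N\delta_L \le V_T$ by the choice of $N$; one can also check that $N \le T$ and $L \ge 1$ under Assumption~\ref{ass:V_T}.

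Within each epoch, the arm that was exploited in the previous epoch is now at least $\delta_L$-suboptimal, so the agent cannot amortize exploration across epochs and must search the (still infinite) reservoir of unused arms for a new near-optimal candidate. Reducing each epoch to a fresh stationary infinite-armed instance on the unused arms and invoking the stationary lower bound of \citet{wang,Bayati} gives per-epoch regret $\Omega(L^{\beta/(\beta+1)})$. Summing across $N$ epochs yields total regret $\Omega(N L^{\beta/(\beta+1)}) = \Omega(N^{1/(\beta+1)} T^{\beta/(\beta+1)}) = \Omega(V_T^{1/(\beta+2)} T^{(\beta+1)/(\beta+2)})$, as claimed.

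The main obstacle is making the per-epoch reduction rigorous in the presence of adaptive dependence: the adversary's rotting depends on $\pi$'s past actions and $\pi$ carries information about the unused reservoir from one epoch to the next. The hard part will be a coupling or conditioning argument showing that, once $a_i^*$ has been rotted below $1 - \delta_L$, no information acquired during prior epochs helps the agent locate a $\delta_L$-optimal arm in the fresh reservoir, so the standard exploration cost of $\Omega(L^{\beta/(\beta+1)})$ to find such an arm among arms whose initial means satisfy \eqref{eq:dis} is unavoidable. Minor issues include handling the corner case where the scheduled rotting on $a_i^*$ never fires because $\pi$ subsequently avoids pulling it (in which case $\pi$ has already abandoned the informative arm and the per-epoch lower bound still holds), and invoking Yao's minimax principle over the random initial mean rewards to convert the distributional argument into the existence of a deterministic bad instance consistent with the ``there exist $\rho_t$'' quantifier in the statement.
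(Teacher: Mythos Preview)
Your stationary term is fine. The gap is in the rotting term, and it is exactly the obstacle you flag but do not resolve: the per-epoch reduction to the stationary lower bound is invalid, because your adversary rots only the single most-pulled arm while the agent may have already identified several $\delta_L$-optimal arms in earlier epochs. Concretely, consider an agent that in its first block of time pulls each of $\sim L$ fresh arms once; by \eqref{eq:dis} it discovers $\Theta(L\delta_L^\beta)=\Theta(L^{1/(\beta+1)})$ arms with gap at most $\delta_L$, at a one-time cost $\Theta(L)$. It can then exploit these arms one per epoch for $\Theta(L^{1/(\beta+1)})$ epochs while your adversary rots them one at a time. During those epochs its regret is $O(L\delta_L)=O(L^{\beta/(\beta+1)})$ per epoch, with zero new exploration. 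Thus the sentence ``no information acquired during prior epochs helps the agent locate a $\delta_L$-optimal arm in the fresh reservoir'' is false as stated, and the stationary lower bound of \citet{wang} cannot be invoked epoch by epoch. What survives is only an \emph{amortized} bound of $\Omega(L^{\beta/(\beta+1)})$ per epoch, but proving that amortized statement requires a global counting argument over good-arm discoveries versus bad-arm pulls, not an epoch-wise reduction; the coupling you sketch does not supply one.

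The paper sidesteps this entirely by using the oblivious, uniform rotting $\rho_t=V_T/(T-1)$ for every pull, so that every arm the agent touches decays at the same rate and no arm can be ``stored'' for later. With $\gamma=(V_T/T)^{1/(\beta+2)}$ and $m\asymp T^{1/(\beta+2)}V_T^{(\beta+1)/(\beta+2)}$, the proof splits on the number $N_T$ of good arms (initial gap $\le\gamma$) the policy ever samples. If $N_T<m$, those $m-1$ good arms together furnish at most $m\gamma/\rho\le T/2$ pulls before each has rotted past $1-\gamma$, so the remaining $T/2$ pulls each incur gap $\ge\gamma$, giving $\Omega(T\gamma)=\Omega(V_T^{1/(\beta+2)}T^{(\beta+1)/(\beta+2)})$. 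If $N_T\ge m$, then $K_m$, the number of bad arms (gap $\ge c$) encountered before the $m$-th good arm, is negative-binomial with success probability $\Theta(\gamma^\beta)$; a Chernoff bound gives $K_m=\Omega(m/\gamma^\beta)=\Omega(V_T^{1/(\beta+2)}T^{(\beta+1)/(\beta+2)})$ with constant probability, and each such bad arm contributes $\Omega(1)$ regret. This two-case counting argument replaces your epoch machinery and needs no coupling across time; if you want to rescue your construction, you will end up reproducing essentially this same dichotomy globally rather than per epoch.
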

\begin{proof}
% For the lower bounds, we adopt the proof methodology of Theorem 1 in \citet{kim2022rotting}, in which the regret lower bound concerning the maximum rotting rate $\rho$ is provided. By making necessary adjustments to accommodate $V_T$ and $\beta$, we can obtain the regret lower bound.
 The proof is provided in Appendix~\ref{app:lower_bound_rotting}.
\end{proof}

\begin{theorem}\label{thm:lower_bd_abrupt} For the abrupt rotting case with the constraint $S_T$ and $\beta>0$, for any policy $\pi$, there always exists a rotting rate adversary such that the regret of $\pi$ satisfies
\[\mathbb{E}[R^\pi(T)]=\Omega\Big(\max\Big\{S_T^{\frac{1}{\beta+1}}T^{\frac{\beta}{\beta+1}},\bar{V}_T\Big\}\Big).\]
\end{theorem}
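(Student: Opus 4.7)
The plan is to prove the two claimed lower bounds, $\Omega(S_T^{1/(\beta+1)} T^{\beta/(\beta+1)})$ and $\Omega(V_T)$, separately, since the stated bound is their maximum.

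For the $\Omega(S_T^{1/(\beta+1)} T^{\beta/(\beta+1)})$ term, I would reduce the abrupt-rotting problem to $S_T$ copies of a stationary infinitely-many-armed bandit. Concretely, partition $[T]$ into $S_T$ epochs of length $\tau = \lfloor T/S_T \rfloor$, set $\rho_t = 0$ inside every epoch, and at each of the $S_T-1$ epoch boundaries apply a single rotting event of magnitude $\rho = \Theta(1)$ that targets the arm the policy has been most exploiting during the just-completed epoch, driving its mean reward from near $1$ to near $0$. This schedule uses $S_T-1$ non-zero events and cumulative rotting at most $S_T - 1 \le T$, respecting both constraints and Assumption~\ref{ass:V_T}. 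Within each epoch the environment is stationary with initial-gap distribution~\eqref{eq:dis}, so the classical stationary infinite-armed lower bound $\Omega(\tau^{\beta/(\beta+1)})$ from \citet{wang} applies to that epoch. Summing over all $S_T$ epochs yields $\Omega(S_T \cdot \tau^{\beta/(\beta+1)}) = \Omega(S_T^{1/(\beta+1)} T^{\beta/(\beta+1)})$.

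The main obstacle is justifying that each epoch contributes the stationary lower bound despite the algorithm retaining information about arms from earlier epochs. I would close this gap via a hybrid argument: any algorithm whose epoch-$i$ regret falls significantly below $\tau^{\beta/(\beta+1)}$ must already have paid at least that much exploration cost in earlier epochs, since the stationary bound is essentially an exploration lower bound and cheaply obtaining an arm with gap below $\tau^{-1/(\beta+1)}$ requires sampling $\Omega(\tau^{1/(\beta+1)})$ fresh arms whose average gap is of constant order. A cleaner but more elaborate alternative is to modify the instance so that each epoch's arms are drawn from an independent copy of the pool $\mathcal{A}$ (a product-instance construction), which is statistically indistinguishable from the single-pool version from the agent's viewpoint but formally decouples the epochs and makes the epoch-wise summation immediate.

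For the $\Omega(V_T)$ term, I would batch the rotting into $S_T - 1$ equal bursts of size $V_T / (S_T - 1)$, each triggered by the adversary at a time when the policy is committed to a currently near-optimal arm. After each burst, the policy either continues to pull the just-rotted arm for at least one more step and incurs $\Omega(V_T/(S_T-1))$ extra instantaneous regret, or abandons it without further evidence and suffers an equivalent exploration cost, via a standard change-of-measure comparison with the no-rotting instance in which the rotted arm would still have been optimal. Summing over the $S_T - 1$ bursts yields $\Omega(V_T)$. Taking the maximum of the two bounds completes the proof.
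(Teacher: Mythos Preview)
Your epoch-based reduction for the $\Omega(S_T^{1/(\beta+1)}T^{\beta/(\beta+1)})$ term has a genuine gap that you yourself identify but do not close. The obstacle you name---that the algorithm can carry arms across epochs---is real and is not handled by either of your proposed fixes. The hybrid accounting argument fails because exploration investment amortizes: if the algorithm happens to discover several arms with gap $o(\tau^{-1/(\beta+1)})$ during epoch~1, only one of them is rotted at each boundary under your ``rot the most-exploited arm'' rule, so the remaining ones can serve later epochs at essentially zero incremental cost, and the per-epoch stationary lower bound simply does not apply to those epochs. Your product-instance alternative, where each epoch draws from an independent copy of~$\mathcal{A}$, is not a legal instance of the problem as stated---arms persist in~$\mathcal{A}$ unless rotted, and the agent can always return to an earlier arm---so this does not yield a lower bound for the actual model.

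The paper avoids epochs entirely and instead runs a single good-arm/bad-arm counting argument in the style of the $V_T$ lower bound. Fix $\gamma=(S_T/T)^{1/(\beta+1)}$ and $m=S_T$, call an arm \emph{good} if its initial gap is below $\gamma$ and \emph{bad} if above a constant~$c$, and let $N_T$ be the number of good arms the policy ever samples. The adversary's construction is to abruptly rot each good arm to level $1-\gamma$ on its very first pull (using at most $m-1=S_T-1$ rotting events, with $V_T=O(\gamma S_T)=o(T)$). On the event $\{N_T<m\}$, every good arm is rotted after one pull and all remaining $T-(m-1)$ steps incur instantaneous regret at least $\gamma$, giving $\Omega(T\gamma)=\Omega(S_T^{1/(\beta+1)}T^{\beta/(\beta+1)})$. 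On the event $\{N_T\ge m\}$, a negative-binomial concentration bound shows that the number $K_m$ of bad arms sampled before the $m$-th good arm is $\Theta(m/\gamma^\beta)=\Theta(S_T^{1/(\beta+1)}T^{\beta/(\beta+1)})$ with constant probability, and each bad arm contributes constant regret. Combining the two events gives the bound with no need to reason about information leakage between phases.

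For the $\Omega(V_T)$ term your multi-burst construction is more elaborate than necessary. The paper dispatches it in two lines: any policy that is not ``sample a fresh arm every step'' must pull some arm twice, say at times $t'<t''$; set $\rho_{t'}=V_T$ and $\rho_t=0$ otherwise, which uses a single rotting event and forces regret $\Omega(V_T)$ at step~$t''$. No change-of-measure is needed.
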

\begin{proof}
% We first show that  $\Omega(V_T)$ regret bound cannot be avoided in the worst case. We can easily show that any nontrivial policy must pull an arm $a$ at least twice. Let $t'$ and $t''$ be the rounds when a nontrivial policy pulls arm $a$. If we consider the case when $\rho_{t'}=V_T$, then the policy has $\Omega(V_T)$ regret bound.  Next, we show a regret lower bound with respect to $S_T$ when $S_T=o(T)$ by using a proof method similar to Theorem~\ref{thm:lower_bd_rotting}.
The proof is provided in Appendix~\ref{app:abrupt_lower}. 
\iffalse
Next, we show a regret lower bound with respect to $S_T$ when $S_T=o(T)$. Let $m=S_T$ and $\gamma=(S_T/T)^{1/(\beta+1)}$. Similarly to the proof of Theorem~\ref{thm:lower_bd_rotting}, we divide the case when $N_T<m$ and $N_T\ge m$. When  $N_T<m$, we can think of the case where there are $m-1$ sampled optimal arms until $T$, and for the first $m-1$ rounds, each arm is rotted below $1-\gamma$ after pulling it once. Then for the rest of $T-(m-1)$ rounds, pulling each arm causes $\gamma$ regret at least which implies having $\Omega((T-(m-1))\delta)=\Omega(S_T^{1/(\beta+1)}T^{\beta/(\beta+1)})$. For the case of $N_T\ge m$, as similarly to the proof of Theorem~\ref{thm:lower_bd_rotting}, with $\kappa=\Theta(S^{1/(\beta+1)}T^{\beta/(\beta+1)})$, we can show that $\mathbb{P}(K_m\ge \kappa)\ge C$ for some constant $C>0$. Then by considering that sampling $\kappa$ bad arms causes at least $c\kappa$ regret, we have a regret lower bound of $\Omega(c\kappa\mathbb{P}(K_m\ge \kappa))$. Overall, we have
\begin{align*}
    &\mathbb{E}[R^\pi(T)]\cr &=\Omega(S_T^{1/(\beta+1)}T^{\beta/(\beta+1)}\mathbb{P}(N_T<m))\cr &\quad+\Omega(S_T^{1/(\beta+1)}T^{\beta/(\beta+1)}\mathbb{P}(N_T\ge m,K_m\ge \kappa))\cr &=\Omega(S_T^{1/(\beta+1)}T^{\beta/(\beta+1)}\mathbb{P}(K_m\ge \kappa))\cr &=\Omega(S_T^{1/(\beta+1)}T^{\beta/(\beta+1)}).
\end{align*}
Combining with the unavoidable regret lower bound of $\Omega(V_T)$, we can conclude the proof.
\fi
\end{proof}
% \jk{From the above proposition, we observe that any policy cannot avoid $V_T$ regret for the worst case. Details are deferred to the Appendix~\ref{app:abrupt_lower}.}
For the abrupt rotting ($S_T$) case, it is unavoidable to incur a
$\Omega(\bar{V}_T)$ regret because an arm may only be rotted once and any algorithm pulls this rotted arm at least once in the worst case. 
From Table~\ref{tab:com}, we can observe that Algorithm~\ref{alg:alg1} achieves near-optimal regret when $\beta\ge 1$. The optimality proven only for $\beta\ge 1$ has also been observed for stationary infinitely many-armed bandits \citep{Bayati,wang}. We believe that our regret upper bounds are near-optimal across the entire range of $\beta$. Achieving tighter regret lower bounds when $\beta<1$ is left for future research; see Appendix~\ref{sec:lim} for further discussion.
% Now we compare our results in the above theorems with a regret lower bound of stationary case, $\Omega(T^{\beta/(\beta+1)})$, \citet{wang}. We can observe that when $V_T=O(1/V^{1/(\beta+1)})$, the regret lower bound in Theorem~\ref{thm:lower_bd_rotting} matches with the stationary case. Also, when $S_T=O(1)$ and $V_T=O(T^{\beta/(\beta+1)})$, the regret lower bound in Theorem~\ref{thm:lower_bd_abrupt} matches with it. These results give us insight into when the rotting scenarios become near-stationary cases. 

% it is necessary to additionally consider a fundamental limit for estimating reward distribution to obtain tighter regret lower bounds when $\beta<1$, which is left for future research.
\section{Experiments}
\label{sec:exp}
% \begin{figure}[h]%[h!]
% \centering
% \includegraphics[width=0.32\linewidth]{}\hspace*{1cm}%\hspace{-2mm}
% \includegraphics[width=0.32\linewidth]{}
% % \begin{subfigure}[b]{.3\textwidth}
% % \includegraphics[width=\linewidth]{image/T1000000ratio0.3333333333333333repeat10.png}\caption{}\end{subfigure}%\rulesep
% \caption{Performance comparison of regrets achieved by different algorithms: (left) our Algorithms~\ref{alg:alg1} and \ref{alg:alg2}, \texttt{UCB-TP} and \texttt{SSUCB}, and (right) our Algorithm~\ref{alg:alg1} for different settings of parameter $\delta$.
% %Performance of algorithms demonstrating (left) Remark~\ref{rm:comparision_regret}
% %and (right) Remark~\ref{rem:S_V_comparison}.\vspace{-3mm}
% }
% \end{figure}

\begin{figure}[h]
\centering     %%% not \center
% \subfigure[$\beta=1$]
{\includegraphics[width=50mm]{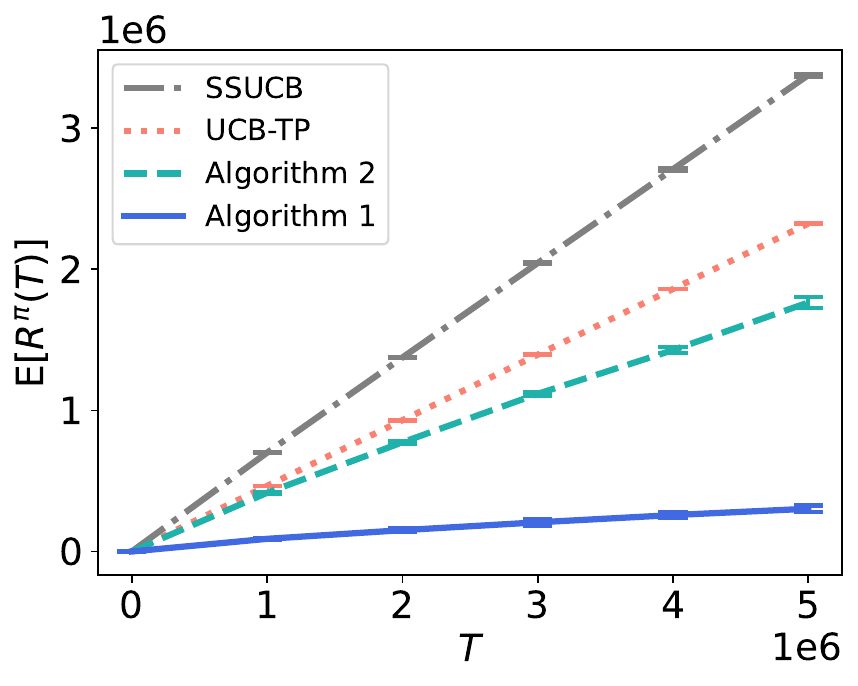}}
\hspace{0.1cm}
\caption{Regret Performance comparison between our algorithms and benchmarks.}\label{fig:1}
\end{figure}

In this section, we present numerical results validating some claims of our theoretical analysis.\footnote{The source code is available at \url{https://github.com/junghunkim7786/An-Adaptive-Approach-for-Infinitely-Many-armed-Bandits-under-Generalized-Rotting-Constraints}} We use randomly generated datasets under a uniform distribution for initial mean rewards ($\beta=1$). 

We first compare the performance of our Algorithms \ref{alg:alg1} and \ref{alg:alg2} with \texttt{UCB-TP} \citep{kim2022rotting}, the state-of-the-art algorithm for the rotting setting, and \texttt{SSUCB} \citep{Bayati}, a near-optimal algorithm for stationary infinitely many-armed bandits. For comparison with \texttt{UCB-TP}, recall our discussion in Remark~\ref{rm:comparision_regret}. We set the rotting rates such that $\rho_t=1/(t\log(T) )$ for all $t$, for which   $\rho=\rho_1=1/\log(T)=o(1)$, %$V_T=\sum_{t=1}^{T-1}\rho_t=O(1)$. 
$V_T=O(1)$, and $S_T=T$.
In Figure~\ref{fig:1}, we can observe that Algorithms~\ref{alg:alg1} and \ref{alg:alg2}  perform better than \texttt{UCB-TP} and \texttt{SSUCB} (and Algorithm~\ref{alg:alg1} outperforms Algorithm~\ref{alg:alg2}), which is in agreement with our theoretical analysis for the case $\beta=1$. In this case, the regret bounds for Algorithms~\ref{alg:alg1} and \ref{alg:alg2} are $\tilde{O}(T^{2/3})$ and $\tilde{O}(T^{3/4})$ from Corollary~\ref{cor:R_upper_bd} and Theorem~\ref{thm:R_upper_bd_no_V}, respectively, which are tighter than the regret bound of 
$\tilde{O}(T/\log(T)^{1/3})$ for \texttt{UCB-TP}. Additional experiments can be found in Appendix~\ref{app:futher_exp}.

% the threshold parameter $\delta$ to support the claims made in Remark~\ref{rem:S_V_comparison}. Let $t(s)$ be the time when the $s$-th abrupt rotting occurs. We set $S_T=T^{1/4}$ and $\rho_{t(s)}=T^{1/2}$ for all $s\in[S_T]$, which implies $V_T=\sum_{s=1}^{S_T} \rho_{t(s)}=T^{3/4}$. We assume that $S_T$ rotting instances are evenly distributed over the time horizon. In Figure~\ref{fig:1} (right), we can observe that Algorithm~\ref{alg:alg1} with $\delta = \delta_S(\beta)$ has better performance than with $\delta = \delta_V(\beta)$, which is consistent with the insight from our regret analysis, in which the regret bound of Algorithm~\ref{alg:alg1} with $\delta_S(\beta)$, $\tilde{O}(T^{3/4})$, is tighter than that of one with $\delta_V(\beta)$, $\tilde{O}(T^{11/12})$.
% The experiment for the opposite case, where using $\delta_V(\beta)$ is more beneficial than $\delta_S(\beta)$, is provided in Appendix~\ref{app:futher_exp}.

\section{Conclusion}

We explore the challenges of infinitely many-armed bandit problems with rotting rewards, focusing on slow rotting ($V_T$) and abrupt rotting ($S_T$) scenarios. To address these challenges, we propose an algorithm incorporating an adaptive sliding window, which achieves tight regret bounds for both cases. We also provide regret lower bounds for both slow rotting and abrupt rotting cases. Lastly, we demonstrate our algorithm using synthetic datasets.

\section{Acknowledgements}
The authors thank Joe Suk and the anonymous reviewers for helpful discussions. JK was supported by the Global-LAMP Program of the National Research Foundation of Korea (NRF) grant funded by the Ministry of Education (No. RS-2023-00301976). SY was supported by Institute of Information \& communications Technology Planning \& Evaluation (IITP) grant funded by the Korea government(MSIT) (No. RS-2022-II220311, Development of Goal-Oriented Reinforcement Learning Techniques for Contact-Rich Robotic Manipulation of Everyday Objects)
\bibliography{mybib}

\newpage

\appendix

\section{Appendix}

\subsection{Limitations \& Discussion}\label{sec:lim}

% Establishing tighter regret lower bounds for the case when $0 < \beta < 1$ remains an open problem for our setting, which has also remained unresolved in the context of the stationary infinitely many-armed bandit problem \citep{wang, Bayati}.

As we summarize our results in Table~\ref{tab:com}, Algorithm~\ref{alg:alg1} achieves near-optimal regret only when $\beta\ge 1$.  
Here, we discuss the discrepancies between lower and upper bounds when $0<\beta<1$. From \eqref{eq:dis}, we can observe that as $\beta$ decreases below $1$, the probability to sample  good arms may increase, which appears to be beneficial with respect to regret. However, the regret upper bounds for $0<\beta<1$ in Theorems~\ref{thm:R_upper_bd_V} and \ref{thm:abrupt_upper_bd}  remain the same as the case when $\beta=1$ while the regret lower bounds in Theorems~\ref{thm:lower_bd_rotting} and ~\ref{thm:lower_bd_abrupt} decrease as $\beta$ decreases, resulting in a gap between the regret upper and lower bounds.  
The phenomenon that the regret upper bound remains the same when $\beta$ decreases has also been observed in previous literature on infinitely many-armed bandits \citep{Bayati,wang,Carpentier}. As mentioned in \citet{Carpentier}, although there are likely to be many good arms when $\beta$ is small, it is not possible to avoid a certain amount of regret from estimating mean rewards to distinguish arms under sub-Gaussian reward noise. Therefore, we believe that our regret upper bounds are near-optimal across the entire range of $\beta$, and achieving tighter regret lower bounds when $\beta<1$ is left for future research.
Notably, the optimality proven only for $\beta\ge 1$ has also been observed for stationary infinitely many-armed bandits \citep{Bayati,wang}.

\subsection{Additional Explanations for Eq.~(\ref{eq:dis})}
\label{app:dis}
\begin{figure}[h]%[h!]
\centering
\includegraphics[width=0.6\linewidth]{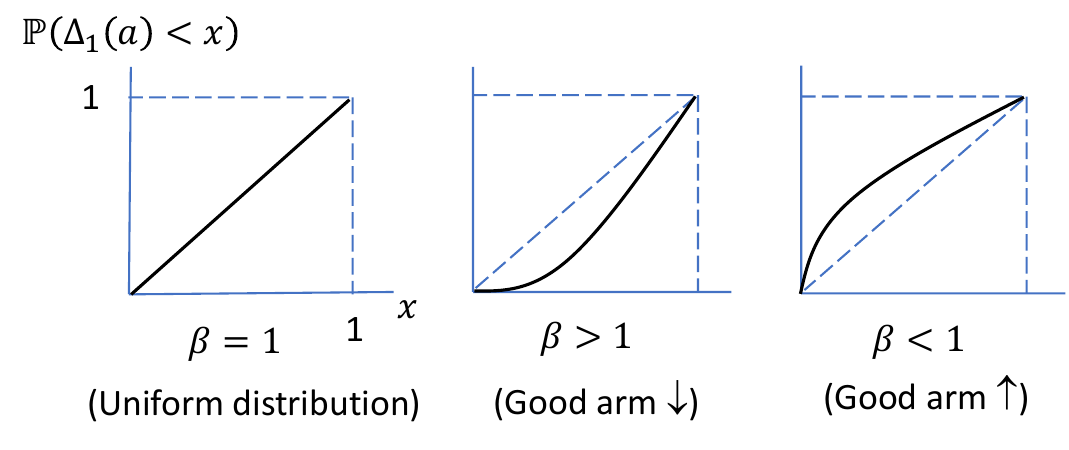}
\caption{$\mathbb{P}(\Delta_1(a)<x) = x^\beta$ for different values of $\beta$.}
\label{fig:dis}
\end{figure} 

To discuss the effect of $\beta$ on the distribution of $\Delta_1(a)$ and the probability of sampling a good arm (having small $\Delta_1(a))$, we consider the case when $\mathbb{P}(\Delta_1(a)<x)=x^\beta$, which is shown  in Figure~\ref{fig:dis} for some values of $\beta$. It is noteworthy that the uniform distribution is a special case when $\beta=1$. Importantly, the larger the value of $\beta$, the smaller the probability of sampling a good arm.
\subsection{Proof of Proposition~\ref{pro:simple}}\label{app:trivial_con}

 Recall $\Delta_1(a)=1-\mu_1(a)$. We first show that  $\mathbb{E}[\mu_1(a)]=\Theta(1)$. For any randomly sampled $a\in\mathcal{A}$, we have $\mathbb{E}[\mu_1(a)]\ge y\mathbb{P}(\mu_1(a)\ge y)=y \mathbb{P}(\Delta_1(a)<1-y)$ for $y\in [0,1]$. With $y=1/2$, we have $\mathbb{E}[\mu_1(a)]\ge (1/2)\mathbb{P}(\Delta_1(a)<(1/2))=\Theta(1)$ from constant $\beta>0$ and \eqref{eq:dis}. Then with $\mathbb{E}[\mu_1(a)]\le 1$, we can conclude $\mathbb{E}[\mu_1(a)]=\Theta(1)$  (Especially when $\mathbb{P}(\Delta(a)<x)=x^\beta$, we have
$\mathbb{E}[\Delta_1(a)]=\int_0^1 \mathbb{P}(\Delta_1(a)\ge x)dx=1-\int_0^1 \mathbb{P}(\Delta_1(a)< x)dx=1-\int_0^1x^\beta dx=1-\frac{1}{\beta+1},$
which implies $\mathbb{E}[\mu_1(a)]=\Theta(1)$ with constant $\beta>0$). We then think of a policy $\pi'$ that randomly samples a new arm and pulls it only once every round. Since $\mathbb{E}[\mu_1(a)]=\Theta(1)$ for any randomly sampled $a$, we have $\mathbb{E}[R^{\pi'}(T)]=\Theta(T).$ 

% Next, we think of any policy $\pi''$ except $\pi'$. Then any policy $\pi''$  must pull an arm $a$ at least twice. Let $t'$ and $t''$ be the rounds when the policy pulls arm $a$. If we consider $\rho_{t'}=V_T$ then such policy has $\Omega(V_T)$ regret bound. Since $V_T=\Omega(T)$, any algorithm has $\Omega(T)$ in the worst case. Therefore we can conclude that $\pi'$ achieves the optimal regret of $\Theta(T)$.

% \textbf{Case of $S_T=\Theta(T)$:}
Next we show that the policy $\pi'$ is optimal for the worst case of $\sum_{t=1}^{T-1}\rho_t>T$. 
 We think of any policy $\pi''$ except $\pi'$. For any policy $\pi''$, there always exists an arm $a$ such that the policy must pull arm $a$ at least twice. Let $t'$ and $t''$ be the rounds when the policy pulls arm $a$. If we consider $\rho_{t'}>0$ and $\rho_t=0$ for $t\in[T-1]/\{t'\}$ such that $\rho_{t'}=\sum_{t=1}^{T-1}\rho_t$ then such policy has $\Omega(\sum_{t=1}^{T-1}\rho_t)$ regret bound. Since $\sum_{t=1}^{T-1}\rho_t>T$,  for any algorithm $\pi''$ except $\pi'$, there always exist a rotting rate adversary such that $\mathbb{E}[R^{\pi''}(T)]=\Omega(\sum_{t=1}^{T-1}\mathbb{E}[\rho_t])=\Omega(T)$. Therefore we can conclude that $\pi'$ is the optimal algorithm for achieving the optimal regret of $\Theta(T)$.

\subsection{Proof of Theorem~\ref{thm:R_upper_bd_V}: Regret Upper Bound of Algorithm~\ref{alg:alg1} for Slow Rotting with $V_T$}\label{app:rotting_upper_V}

% We set $\delta=\max\{\rot^{1/(\beta+2)},1/T^{1/(\beta+1)},1/\sqrt{T}\}$.
Let $\Delta_t(a)=1-\mu_t(a)$. Using a threshold parameter $\delta$,  we classify an arm $a$ as \emph{good} if $\Delta_1(a)\le \delta/2$, \emph{near-good} if $ \delta/2< \Delta_1(a)\le 2\delta$, and otherwise, we classify $a$ as a \emph{bad} arm. In $\mathcal{A}$, let $\bar{a}_1,\bar{a}_2,\dots,$ be a sequence of arms, which have i.i.d. mean rewards with uniform distribution on $[0,1]$.  Without loss of generality, we assume that the policy samples arms, which are pulled at least once, according to the sequence of $\bar{a}_1,\bar{a}_2,\dots,.$ Let $\mathcal{A}_T$ be the set of sampled arms over the horizon of $T$ time steps, which satisfies $|\mathcal{A}_T|\le T$. Let $\mathcal{A}_T^\ga$ be a set of good or near good arms in $\mathcal{A}_T$. WLOG, the following proofs proceed under the given $\mathcal{A}_T$, since the proofs hold for any $\mathcal{A}_T$.
% Let 
% \begin{align*}
%   \hat{\mu}_t(a)=\frac{\sum_{s=1}^{t-1}r_s\mathbbm{1}(a_s=a)}{n_t(a)} \text{ and } \bar{\mu}_t(a)=\frac{\sum_{s=1}^{t-1}\mu_s(a)\mathbbm{1}(a_s=a)}{n_t(a)}.  
% \end{align*}

Let $\overline{\mu}_{[s_1,s_2]}(a)=\sum_{t=s_1}^{s_2}\mu_t(a)/n_{[s_1,s_2]}(a)$ for the time steps $0<s_1\le s_2$. We define event $E_1=\{|\widehat{\mu}_{[s_1,s_2]}(a)-\overline{\mu}_{[s_1,s_2]}(a)|\le \sqrt{12\log(T)/n_{[s_1,s_2]}(a)} \hbox{ for all } 1\le s_1\le s_2\le T, a\in\mathcal{A}_T\}$. By following the proof of Lemma 35 in \citet{foster}, from Lemma~\ref{lem:chernoff_sub-gau} we have
\begin{align}
    &P\left(\left|\widehat{\mu}_{[s_1,s_2]}(a)-\overline{\mu}_{[s_1,s_2]}(a)\right|\le \sqrt{\frac{12\log T}{n_{[s_1,s_2]}(a)}}\right)\cr &\le \sum_{n=1}^TP\left(\left|\frac{1}{n}\sum_{i=1}^nX_i\right|\le \sqrt{12\log(T)/n}\right)\cr &\le \frac{2}{T^5},\label{eq:union_con_rho}
\end{align}
where $X_i=r_{\tau_i}-\mu_{\tau_i}(a)$ and $\tau_i$ is the $i$-th time that the policy pulls arm $a$ starting from $s_1$. We note that even though $X_i$'s seem to depend on each other from $\tau_i$'s, each value of $X_i$ is independent of each other.  Then using union bound for $s_1$, $s_2$, and $a\in\mathcal{A}_T$, we have $\mathbb{P}(E_1^c) \le2/T^2.$
% Let $\overline{\mu}_{[t_1,t_2]}(a)=\sum_{t=t_1}^{t_2}\mu_t(a)/n_{[t_1,t_2]}(a)$. We define the event $E_1=\{|\widehat{\mu}_{[s_1,s_2]}(a)-\overline{\mu}_{[s_1,s_2]}(a)|\le \sqrt{12\log(T)/n_{[s_1,s_2]}(a)} \hbox{ for all } 1\le s_1\le s_2\le T, a\in\mathcal{A}_T\}$. From \eqref{eq:union_con_rho},  using union bound for $s_1$, $s_2$, and $a\in\mathcal{A}_T$, we have \[\mathbb{P}(E_1^c) \le2/T^2.\]
From the cumulative amount of  rotting $V_T$, we note that $\Delta_t(a)=O(V_T+1)$ for any $a$ and $t$, which implies $\mathbb{E}[R^\pi(T)|E_1^c]=O(T^2)$ from $V_T\le T$. For the case where $E_1$ does not hold, the regret is $\mathbb{E}[R^\pi(T)|E_1^c]\mathbb{P}(E_1^c)=O(1)$, which is negligible compared to the regret when $E_1$ holds, which we show later. Therefore, for the rest of the proof, we assume that $E_1$ holds.

For regret analysis, we divide $R^\pi(T)$ into two parts, $R^\ga(T)$ and $R^\ba(T)$ corresponding to regret of good or near-good arms, and bad arms over time $T$, respectively, such that $R^\pi(T)=R^\ga(T)+R^\ba(T)$. We first provide a bound of $R^\ga(T)$ in the following lemma. 
%   For getting a bound for $\mathbb{E}[R^{\pi}_{m^\ga}]$, we provide bounds for $\mathbb{E}[R^\ga_i]$ and $\mathbb{E}[R^\ba_{i,j}]$ in the following lemma.
%   \begin{lemma}
%   Under $E_1$ and policy $\pi$, when $V_T=\omega(\max\{1/\sqrt{T},1/T^{1/(\beta+1)}\})$, with $\delta=\max\{(V_T/T)^{1/(\beta+1)},(V_T/T)^{1/3}\}$ we have 
% \begin{align*}
%     \mathbb{E}[R^\ga(T)]=
%   \tilde{O}\left(\max\{T^{(\beta+1)/(\beta+2)}V_T^{1/(\beta+2)},T^{2/3}V_T^{1/3}\}\right),\end{align*}
%   and when $V_T=O(\max\{1/\sqrt{T},1/T^{1/(\beta+1)}\})$, with $\delta=\max\{1/T^{1/(\beta+1)},1/T^{1/2}\}$ we have
%   \begin{align*}
%     \mathbb{E}[R^\ga(T)]=
%   \tilde{O}\left(\max\{T^{\beta/(\beta+1)},T^{1/2}\}\right).\end{align*}
%     % \label{eq:R_bad_upper}
% \label{lem:R_good_bad_bd_V}
%   \end{lemma}
  \begin{lemma}\label{lem:R_good_bd_V}
   Under $E_1$ and policy $\pi$, we have 
\begin{align*}
    \mathbb{E}[R^\ga(T)]=
   \tilde{O}\left(T\delta+T^{2/3}V_T^{1/3}\right).\end{align*}
  \end{lemma}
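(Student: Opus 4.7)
The plan is to follow the decomposition already sketched in the main text and to bound each piece using the adaptive threshold condition together with Hölder's inequality. First, for each good/near-good arm $a \in \mathcal{A}_T^\ga$, let $t_1 = t_1(a)$ be the time when $a$ was first sampled and $t_2 = t_2(a)$ the last time it was pulled. Writing $\mu_t(a) = \mu_1(a) - V_{[t_1,t-1]}(a)$ for $t \geq t_1$ gives
\[
R^\ga(T) = \sum_{a \in \mathcal{A}_T^\ga}\Bigl(\Delta_1(a)\, n_{[t_1,t_2]}(a) + \sum_{t=t_1+1}^{t_2} V_{[t_1,t-1]}(a)\Bigr).
\]
Since $\Delta_1(a) \le 2\delta$ for $a \in \mathcal{A}_T^\ga$ and $\sum_{a} n_{[t_1,t_2]}(a) \le T$, the first sum is $O(\delta T)$.

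For the second (rotting) sum, I would partition $\mathcal{A}_T^\ga$ by the optimized window size $w(a) = \lceil \log(T)^{1/3}/\overline{\rho}_{[t_1,t_2-2]}(a)^{2/3}\rceil$, setting $\overline{\mathcal{A}}_T^\ga = \{a : n_{[t_1,t_2-1]}(a) > w(a)\}$. For arms in $\mathcal{A}_T^\ga \setminus \overline{\mathcal{A}}_T^\ga$ (pulled at most $w(a)$ times after the initial sample), each contributes at most $w(a) \cdot V_{[t_1,t_2-1]}(a)$, which a direct calculation shows is $\tilde O(V_{[t_1,t_2-2]}(a)^{1/3} n_{[t_1,t_2-2]}(a)^{2/3})$ up to an additive $O(V_T)$ coming from the last single rotting step on each arm. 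For arms in $\overline{\mathcal{A}}_T^\ga$, which were sampled enough times for the window of length $\Theta(w(a))$ to lie in $\mathcal{T}_{t_2}(a)$, the key step is to use the fact that the algorithm did \emph{not} stop at time $t_2$, i.e.\ the window-UCB evaluated on this window exceeded $1-\delta$ at time $t_2-1$. Combined with the sub-Gaussian concentration from $E_1$ and the rotting decomposition, this forces a bound of the form $V_{[t_1,t_2-2]}(a) \le C(\delta + \log(T)^{1/3}\overline{\rho}_{[t_1,t_2-2]}(a)^{1/3})$, which after multiplication by $n_{[t_1,t_2-1]}(a)$ yields the claimed $\tilde O(T\delta + V_T + V_{[t_1,t_2-2]}(a)^{1/3}n_{[t_1,t_2-2]}(a)^{2/3})$ contribution.

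Summing the per-arm bounds over $a \in \mathcal{A}_T^\ga$ and applying Hölder's inequality with the constraints $\sum_a V_{[t_1,t_2-2]}(a) \le V_T$ and $\sum_a n_{[t_1,t_2-2]}(a) \le T$ gives
\[
\sum_{a \in \mathcal{A}_T^\ga} V_{[t_1,t_2-2]}(a)^{1/3} n_{[t_1,t_2-2]}(a)^{2/3} \le V_T^{1/3} T^{2/3},
\]
so the rotting contribution is $\tilde O(T\delta + V_T + V_T^{1/3}T^{2/3}) = \tilde O(T\delta + V_T^{1/3}T^{2/3})$ under Assumption~\ref{ass:V_T}. Combining with the $O(\delta T)$ bound on the initial-gap term yields the stated bound.

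The main obstacle will be the careful argument showing that the negation of the threshold condition at time $t_2$ on the \emph{optimal} window of length $\Theta(w(a))$ really does produce the bound $V_{[t_1,t_2-2]}(a) = \tilde O(\delta + \overline{\rho}^{1/3})$: this requires picking $s \in \mathcal{T}_{t_2}(a)$ whose window length matches $w(a)$ up to a doubling factor, controlling the bias coming from averaging a rotting mean over that window (which is roughly $\tfrac12 \overline{\rho} \cdot n_{[s,t_2-1]}(a)$), and handling boundary effects when $w(a)$ is comparable to $n_{[t_1,t_2-1]}(a)$. Once this localization step is established, the rest of the argument is routine bookkeeping and Hölder's inequality.
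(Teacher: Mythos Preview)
Your decomposition, the $O(\delta T)$ bound on the initial-gap term, the definition of $w(a)$ and $\overline{\mathcal{A}}_T^\ga$, and the closing Hölder step all match the paper. The gap is precisely the step you flag as ``the main obstacle'': the claim that using the threshold only at time $t_2$ (on a window of length $\Theta(w(a))$) forces $V_{[t_1,t_2-2]}(a)\le C\bigl(\delta+(\log T)^{1/3}\overline\rho_{[t_1,t_2-2]}(a)^{1/3}\bigr)$. This inequality is false in general. The bias term you describe as ``roughly $\tfrac12\overline\rho\cdot n_{[s,t_2-1]}(a)$'' is actually $V_{[s',t_2-2]}(a)=\overline\rho_{[s',t_2-2]}(a)\cdot n_{[s',t_2-2]}(a)$, and the local average $\overline\rho_{[s',t_2-2]}(a)$ need not be comparable to the global average $\overline\rho_{[t_1,t_2-2]}(a)$. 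Concretely, take a good arm with $\mu_1=1$, $\rho_t=0$ for $t_1\le t< t^\ast$ with $t^\ast-t_1\approx T/2$, and a single drop $\rho_{t^\ast}=1/2$. Under $E_1$ the algorithm needs $O(\log T)$ further pulls to trigger the threshold, so $n_{[t_1,t_2-2]}\approx T/2$, $V_{[t_1,t_2-2]}=1/2$, $\overline\rho_{[t_1,t_2-2]}\approx 1/T$, and your bound would assert $1/2\le C(\delta+(\log T)^{1/3}T^{-1/3})$, which fails. The subsequent ``multiply by $n_{[t_1,t_2-1]}(a)$'' step compounds the problem, since $n\cdot V_{[t_1,t_2-2]}(a)$ can vastly overestimate $\sum_t V_{[t_1,t-1]}(a)$.

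The paper avoids both issues by applying the non-triggering condition at \emph{every} time $t\in[t_1+w(a)+1,\,t_2]$, not just at $t_2$. For each such $t$ one gets
\[
V_{[t_1,t-2]}(a)\;\le\;3\delta+V_{[s(t),t-2]}(a)+2\sqrt{12\log T/n_{[s(t),t-1]}(a)}
\]
with $n_{[s(t),t-1]}(a)\in[w(a)/2,\,w(a)]$, and then sums this over $t$ directly. The key sliding-window counting argument is that $\sum_{t}\overline\rho_{[s(t),t-2]}(a)\le 4\,V_{[t_1,t_2-2]}(a)$, because each $\rho_\ell$ falls in at most $w(a)$ of the windows $[s(t),t-2]$ while each window has length at least $w(a)/4$. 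Dividing by $\overline\rho_{[t_1,t_2-2]}(a)^{2/3}$ then gives the $V^{1/3}n^{2/3}$ contribution without ever asserting a pointwise bound on $V_{[t_1,t_2-2]}(a)$. Once you replace the single-time-$t_2$ argument by this per-$t$ summation, the rest of your plan (the split into $\overline{\mathcal{A}}_T^\ga$ and its complement, and the final Hölder step) goes through exactly as you describe.
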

  \begin{proof} Here we consider arms $a\in\mathcal{A}_T^\ga$. Let $V_{[n,m]}(a)=\sum_{l=n}^{m}\rho_{l}(a)$ and $\overline{\rho}_{[n,m]}(a)=\sum_{l=n}^{m}\rho_l(a)/n_{[n,m]}(a)$ for time steps $n\le m$. For ease of presentation, for time steps $r>q$, we define $V_{[r,q]}(a)=n_{[r,q]}(a)=\overline{\rho}_{[r,q]}(a)=\sum_{t=r}^q x(t)=0$ for $x(t)\in\mathbb{R}$ and $1/0=\infty$.
%   Then let $n(a)$ be the first time when the arm is sampled, and $m(a)$ be the last time when the arm is pulled from the policy. 
  Then, for any $s$ such that $n\le s\le m$, under $E_1$ we have
  \begin{eqnarray*}
\widehat{\mu}_{[s,m]}(a) &\le & \bar{\mu}_{[s,m]}(a)+ \sqrt{12\log(T)/n_{[s,m]}(a)}\\
&\le& \mu_{m}(a)+\sum_{l=s}^{m-1} \rho_l \mathbbm{1}(a_l=a) +\sqrt{12\log(T)/n_{[s,m]}(a)}\\
&=& \mu_{n}(a)-\sum_{l=n}^{m-1}\rho_l\mathbbm{1}(a_l=a)+\sum_{l=s}^{m-1} \rho_l\mathbbm{1}(a_l=a) +\sqrt{12\log(T)/n_{[s,m]}(a)}\\
&\le & \mu_{n}(a)-V_{[n,m-1]}(a)+\overline{\rho}_{[s,m-1]}(a) n_{[s,m]}(a)+\sqrt{12\log(T)/n_{[s,m]}(a)}.
% \\
% &=& \mu_1(a(i))-\sum_{l=t^\prime}^{s-1}\rho_l+\sqrt{2\log(H)/n_{[s,t]}(a(i))},
\end{eqnarray*}

Therefore, from $\mu_n(a)\le 1$ we obtain
\begin{align}
    &\widehat{\mu}_{[s,m]}(a)+\sqrt{12\log(T)/n_{[s,m]}(a)}\cr &\le 1-V_{[n,m-1]}(a)+\overline{\rho}_{[s,m-1]}(a) n_{[s,m]}(a)+2\sqrt{12\log(T)/n_{[s,m]}(a)}.\label{eq:ucb_bd}
\end{align}
 Let $t_1(a)$ be the initial time when the arm $a$ is sampled and pulled and $t_2(a)$ be the final time when the policy pulls the arm. For simplicity, we use $t_1$ and $t_2$ instead of $t_1(a)$ and $t_2(a)$, respectively, when there is no confusion. 
We define $\mathcal{A}^0$ as a set of arms $a\in\mathcal{A}_T^\ga$ such that $t_2(a)=t_1(a)$ and define ${\mathcal{A}}^1$ as a set of arms $a\in\mathcal{A}_T^\ga$ such that $t_2(a)=t_1(a)+1$.  
We also define a set of arms $\overline{\mathcal{A}}_T^\ga=\{a\in\mathcal{A}_T^\ga/\{\mathcal{A}^0 \cup \mathcal{A}^1\}: n_{[t_1,t_2-1]}(a)> \lceil(\log T)^{1/3}/\overline{\rho}_{[t_1,t_2-2]}(a) ^{2/3}\rceil \}$.
% \footnote{If $\lceil\overline{\rho}_{[t_1,t_2-2]}(a) ^{2/3}\rceil=0$ for $a\in\mathcal{A}_T^\ga$, then this set does not contain the arm $a$}.  
% We note that for $a\in\mathcal{A}_T^\ga$, $t_2(a)\ge t_1(a)+2$, which implies $n_{[t_1,t_2]}(a)\ge 3$.
% Now we consider arms $a\in\overline{\mathcal{A}}_T^\ga$ in the following. 
% From the definition of $t_2$, for any $t_1\le s\le t_2-1$ with $s=t_2-2^{l-1}$ for $l\in\mathbb{Z}^+$, we have 
%   \begin{align}
%  1-V_{[t_1,t_2-2]}(a)+C_2n_{[s,t_2-2]}(a)\overline{\rho}_{[t_1,t_2-2]}(a)+2\sqrt{12\log(T)/n_{[s,t_2-1]}(a)}\ge 1-\delta.\label{eq:n_good_con_up_v_inv}
%   \end{align}
%  Let $s'$ be the time that satisfies $s'=t_2-2^{l-1}$ for $l\in\mathbb{Z}^+$ and 
% \begin{align}
    % (1/2)\lceil(\log T)^{1/3}/\overline{\rho}_{[t_1,t_2-2]}(a)^{2/3}\rceil\le n_{[s',t_2-1]}(a)\le\lceil(\log T)^{1/3}/\overline{\rho}_{[t_1,t_2-2]}(a)^{2/3}\rceil,\label{eq:n_ineq_V}
% \end{align} which always exists from the definition of $\overline{\mathcal{A}}_T^\ga$. 
Let $w(a)=\lceil(\log T)^{1/3}/\overline{\rho}_{[t_1,t_2-2]}(a)^{2/3}\rceil$. For simplicity, we use $w$ for $w(a)$ when there is no confusion. 
% We note that $t_1(a)+w(a)+1\le t_2(a)$ for $a\in\overline{\mathcal{A}}_T^\ga$.
Then with the fact that  $\mu_t(a)=\mu_{t_1}(a)-\sum_{t=t_1(a)}^{t-1}\rho_t(a)=\mu_{t_1}(a)-V_{[t_1,t-1]}(a)$ for $t_1(a)\le t\le t_2(a)$, we have
\begin{align}
    \mathbb{E}[R^\ga(T)] & =\mathbb{E}\left[\sum_{a\in\mathcal{A}_T^\ga}\sum_{t=t_1(a)}^{t_2(a)}\left(1-\mu_t(a)\right)\right]\cr  &=  \mathbb{E}\left[\sum_{a\in \mathcal{A}^\ga_T}\left( \Delta_1(a)n_{[t_1,t_2]}(a)+\sum_{t=t_1(a)+1}^{t_2(a)}V_{[t_1,t-1]}(a)\right)\right]\cr &\le\mathbb{E}\left[2T\delta+\sum_{a\in\mathcal{A}^1}\rho_{t_1(a)}+\sum_{a\in \mathcal{A}^\ga_T/\{\overline{\mathcal{A}}^\ga_T\cup\mathcal{A}^0\cup\mathcal{A}^1\}}\sum_{t=t_1(a)+1}^{t_2(a)}V_{[t_1,t-1]}(a)\right.\cr &\left.\qquad+\sum_{a\in \overline{\mathcal{A}}^\ga_T}\left( \sum_{t=t_1(a)+1}^{t_1(a)+w(a)}V_{[t_1,t-1]}(a)+\sum_{t=t_1(a)+w(a)+1}^{t_2(a)}V_{[t_1,t-1]}(a)\right)\right],\cr\label{eq:r_g_decom}
\end{align}
where the first inequality comes from $\Delta_1(a)\le 2\delta$ for any $a\in\mathcal{A}_T^\ga$.
For the second term in the right hand side of the last inequality \eqref{eq:r_g_decom},
\begin{align}
    \sum_{a\in {\mathcal{A}}^1}\rho_{t_1(a)}\le V_T.\label{eq:A_hat_bd}
\end{align} 
For the third term in \eqref{eq:r_g_decom}, from the fact that $n_{[t_1+1,t_2]}(a)=n_{[t_1,t_2-1]}(a)<w(a)$ for any $a \in \mathcal{A}_T^\ga/\overline{\mathcal{A}}_T^\ga$ from the definition of $\overline{\mathcal{A}}_T^\ga$, we have 
\begin{align}
&\sum_{a\in \mathcal{A}^\ga_T/\{\overline{\mathcal{A}}^\ga_T\cup\mathcal{A}^0\cup\mathcal{A}^1\}}\sum_{t=t_1(a)+1}^{t_2(a)}V_{[t_1,t-1]}(a)\cr &\le \sum_{a\in \mathcal{A}^\ga_T/\{\overline{\mathcal{A}}^\ga_T\cup\mathcal{A}^0\cup\mathcal{A}^1\}}n_{[t_1+1,t_2]}(a)V_{[t_1,t_2-2]}(a)+\rho_{t_2(a)-1} \cr 
&=O\left( V_T+\sum_{a\in \mathcal{A}^\ga_T/\{\overline{\mathcal{A}}^\ga_T\cup\mathcal{A}^0\cup\mathcal{A}^1\}}w(a)V_{[t_1,t_2-2]}(a)\right)\cr 
&=\tilde{O}\left( V_T+\sum_{a\in \mathcal{A}^\ga_T/\{\overline{\mathcal{A}}^\ga_T\cup\mathcal{A}^0\cup\mathcal{A}^1\}}n_{[t_1,t_2-2]}(a)^{2/3}V_{[t_1,t_2-2]}(a)^{1/3}\right).\cr\label{eq:A_bd}    
\end{align}
Now, we focus on the fourth term in \eqref{eq:r_g_decom}. From $t_1(a)+w(a)+1\le t_2(a)$ for $a\in\overline{\mathcal{A}}_T^\ga$ from the definition of $\overline{\mathcal{A}}_T^\ga$ and  \eqref{eq:A_bd},
we first have 
\begin{align}
    \sum_{a\in\overline{\mathcal{A}}_T^\ga}\sum_{t=t_1(a)+1}^{t_1(a)+w(a)}V_{[t_1,t-1]}(a)&= \sum_{a\in\overline{\mathcal{A}}_T^\ga}\sum_{t=t_1(a)+1}^{t_1(a)+w(a)}\sum_{s=t_1}^{t-1}\rho_s\cr &\le \sum_{a\in\overline{\mathcal{A}}_T^\ga} \sum_{t=t_1(a)+1}^{t_1(a)+w(a)}\sum_{s=t_1(a)}^{t_2(a)-2}\rho_s \cr &\le  \sum_{a\in\overline{\mathcal{A}}_T^\ga}w(a)V_{[t_1,t_2-2]}(a)\cr &=\tilde{O}\left( \sum_{a\in\overline{\mathcal{A}}_T^\ga}n_{[t_1,t_2-2]}(a)^{2/3}V_{[t_1,t_2-2]}(a)^{1/3}\right). \label{eq:V_bd_w_before}
\end{align}

Now we focus on $\sum_{a\in\overline{A}_T^\ga}\sum_{t=t_1(a)+w(a)+1}^{t_2(a)}V_{[t_1,t-1]}(a)$ in \eqref{eq:r_g_decom}. From the definition of $t_2$ and the threshold condition in the algorithm with \eqref{eq:ucb_bd}, for any $t_1\le t\le t_2$ and any $t_1\le s\le t-1$ s.t. $ s=t-2^{l-1}$ for $l\in\mathbb{Z}^+$, we have 
   \begin{align}
 1-V_{[t_1,t-2]}(a)+n_{[s,t-1]}(a)\overline{\rho}_{[s,t-2]}(a)+2\sqrt{12\log(T)/n_{[s,t-1]}(a)}\ge 1-\delta.\label{eq:n_good_con_up_v_inv_t}
   \end{align}
For $t\ge t_1+w(a)+1$, 
there always exists $t_1\le s(t)\le t-1$ such that  $w(a)/2\le n_{[s(t),t-1]}(a)\le w(a)$ and $s(t)=t-2^{l-1}$ for  $l\in\mathbb{Z}^+$. Then from \eqref{eq:n_good_con_up_v_inv_t} with $s=s(t)$, we have
\begin{align}
V_{[t_1,t-2]}(a)=\tilde{O}\left(\delta+ \overline{\rho}_{[s(t),t-2]}(a)/\overline{\rho}_{[t_1,t_2-2]}(a)^{2/3}+\overline{\rho}_{[t_1,t_2-2]}(a)^{1/3}\right).\label{eq:V_bd_t}
\end{align}
Using the facts that $n_{[s(t),t-2]}(a)\ge n_{[s(t),t-1]}(a)/2\ge w(a)/4$ and $t-s(t)\le w(a)$ from $n_{[s(t),t-1]}(a)\le w(a)$, we can obtain that 
\begin{align}
    \sum_{t=t_1(a)+1+w(a)}^{t_2(a)}\overline{\rho}_{[s(t),t-2]}(a)&\le\sum_{t=t_1(a)+1+w(a)}^{t_2(a)}\frac{\sum_{k=t-w(a)}^{t-2}\rho_k}{n_{[s(t),t-2]}(a)}\cr &\le \sum_{t=t_1(a)+1}^{t_2(a)-2}\frac{w(a)\rho_t}{n_{[s(t),t-2]}(a)}\cr &\le 4\sum_{t=t_1(a)}^{t_2(a)-2}\rho_t,\label{eq:sum_rho_bar_bd} 
\end{align}
where the second inequality is obtained from the fact that the number of times that $\rho_t$ is duplicated for each $t\in[t_1(a)+1,t_2(a)-2]$ in the expression $\sum_{t=t_1(a)+1+w(a)}^{t_2(a)}\sum_{k=t-w(a)}^{t-2}\rho_k$ is at most $w(a)$. 
Then with \eqref{eq:V_bd_t} and \eqref{eq:sum_rho_bar_bd}, using the fact that
\[\sum_{t_1(a)+1+w(a)}^{t_2(a)}\overline{\rho}_{[t_1,t_2-2]}(a)^{1/3}\le n_{[t_1,t_2-2]}(a)\overline{\rho}_{[t_1,t_2-2]}(a)^{1/3}=O(n_{[t_1,t_2-2]}(a)^{2/3}V_{[t_1,t_2-2]}(a)^{1/3}),\]
we have 
\begin{align}
    &\sum_{a\in\overline{\mathcal{A}}_T^\ga}\sum_{t=t_1(a)+1+w(a)}^{t_2(a)}V_{[t_1,t-1]}(a)\cr &\le \sum_{a\in\overline{\mathcal{A}}_T^\ga}\sum_{t=t_1(a)+1+w(a)}^{t_2(a)}V_{[t_1,t-2]}(a)+\rho_{t_2(a)-1}\cr &= \tilde{O}\left(\delta T+V_T+ \sum_{a\in\overline{\mathcal{A}}_T^\ga}\sum_{t=t_1(a)+1+w(a)}^{t_2(a)}\overline{\rho}_{[s(t),t-2]}(a)/\overline{\rho}_{[t_1,t_2-2]}(a)^{2/3}+\sum_{a\in\overline{\mathcal{A}}_T^\ga}\sum_{t=t_1(a)+1+w(a)}^{t_2(a)}\overline{\rho}_{[t_1,t_2-2]}(a)^{1/3}\right)\cr &=\tilde{O}\left(\delta T+V_T+  \sum_{a\in\overline{\mathcal{A}}_T^\ga}\sum_{t=t_1(a)+1+w(a)}^{t_2(a)}\overline{\rho}_{[s(t),t-2]}(a)/\overline{\rho}_{[t_1,t_2-2]}(a)^{2/3}+\sum_{a\in\overline{\mathcal{A}}_T^\ga}n_{[t_1,t_2-2]}(a)^{2/3}V_{[t_1,t_2-2]}(a)^{1/3} \right)\cr &=\tilde{O}\left(\delta T+ V_T+ \sum_{a\in\overline{\mathcal{A}}_T^\ga}\sum_{t=t_1(a)}^{t_2(a)-2}\rho_t/\overline{\rho}_{[t_1,t_2-2]}(a)^{2/3}+\sum_{a\in\overline{\mathcal{A}}_T^\ga}n_{[t_1,t_2-2]}(a)^{2/3}V_{[t_1,t_2-2]}(a)^{1/3} \right)\cr &=\tilde{O}\left(\delta T+ V_T+ \sum_{a\in\overline{\mathcal{A}}_T^\ga}n_{[t_1,t_2-2]}(a)^{2/3}V_{[t_1,t_2-2]}(a)^{1/3} \right).\label{eq:V_bd_w_after}  
\end{align}

% Then from the policy at time $t_2(a)+1$, the following condition is met:
%   \begin{align}
%  1-V_{[t_1,t_2(a)-1]}(a)+n_{[s,t_2(a)]}(a)\overline{\rho}_{[s,t_2(a)-1]}(a)+2\sqrt{2\log(T)/n_{[s,t_2(a)]}(a)}\le 1-\delta.\label{eq:n_good_con_up_no_e}
%   \end{align}
  
%   and $\delta+\overline{\rho}_{[t_1,t_2(a)-1]}^{1/3}(12\log T)^{1/3}\le V_{[t_1,t_2(a)-1]}(a)$. 
Then putting the results from \eqref{eq:r_g_decom},\eqref{eq:A_bd},\eqref{eq:V_bd_w_before}, and \eqref{eq:V_bd_w_after} altogether, we have
  \begin{align}
      &\mathbb{E}[R^\ga(T)]\cr
      &\le  \mathbb{E}\left[\sum_{a\in \mathcal{A}^\ga_T}\left( \Delta_1(a)n_{[t_1,t_2]}(a)+\sum_{t=t_1(a)+1}^{t_2(a)}V_{[t_1,t-1]}(a)\right)\right]\cr &
      =\tilde{O}\left(T\delta+V_T+\mathbb{E}\left[\sum_{a\in \mathcal{A}^\ga_T/\{\mathcal{A}^0\cup\mathcal{A}^1\}}V_{[t_1,t_2-2]}(a)^{1/3}n_{[t_1,t_2-2]}(a)^{2/3}\right]\right) \cr &
      =\tilde{O}\left(T\delta+V_T^{1/3}T^{2/3}\right), \label{eq:R_g_bd_V_1}
      \end{align}
      where the last equality comes from Hölder's inequality and $V_T\le T$. This concludes the proof.
  \end{proof}
Now, we provide a bound for $R^\ba(T)$. We note that the initially bad arms can be defined only when $2\delta<1$. Otherwise when $2\delta\ge 1$, we have $R(T)=R^\ga(T)$, which completes the proof. Therefore, for the regret from bad arms, we consider the case of $2\delta<1$.
% Here we consider the case where $\delta=\delta_V< 1/2$ such that $V_T< \min\{(1/2)^{\beta+2},(1/2)^3\}T$. Otherwise, from the definition of bad arms, there are no existing bad arms so $R^\ba(T)=0$ which implies $\mathbb{E}[R^\pi(T)]=\mathbb{E}[R^\ga(T)]=\Tilde{O}(\max\{V_T^{1/(\beta+1)}T^{(\beta+1)/(\beta+2)},V_T^{1/3}T^{2/3},T^{\beta/(\beta+1)},\sqrt{T}\})$ from Lemma~\ref{lem:R_good_bd_V} and concludes the proof.
We adopt the episodic approach in \citet{kim2022rotting} for the remaining regret analysis. The episodic approach is reformulated using the cumulative amount of rotting instead of the maximum rotting rate.
% We observe that the initial mean rewards of sampled arms are i.i.d. with a uniform distribution which should simplify analysis of the expected regret. However, by fixing the number of sampled arms by a policy over the time horizon $T$, the mean rewards of arms become dependent. To deal with this dependence, we analyze the regret by controlling the number of distinct sampled arms instead of fixing the time horizon. We explain this in more details in the following proofs. 
In the following, we define some notation. 

Given a policy sampling arms in the sequence order,
let $m^\ga$ be the number of samples of distinct good arms and $m^{\ba}_i$ be the number of consecutive samples of distinct bad arms between the $i-1$-st and $i$-th sample of a good arm among $m^\ga$ good arms. We refer to the period starting from sampling the $i-1$-st good arm before sampling the $i$-th good arm as the $i$-th \emph{episode}.
Observe that $m^\ba_1,\ldots, m^\ba_{m^\ga}$ are i.i.d. random variables with geometric distribution with parameter $2\delta$, given a fixed value of $m^\ga$. Therefore, with some constant $C>0$, for non-negative integer $k$ we have $\mathbb{P}(m^\ba_i=k)=(1-C(2\delta)^\beta)^kC(2\delta)^{\beta}$, for $i = 1, \ldots, m^\ga$. Define $\tilde{m}_T$ to be the number of episodes from the policy $\pi$ over the horizon $T$, $\tilde{m}_T^\ga$ to be the total number of samples of a good arm by the policy $\pi$ over the horizon $T$ such that $\tilde{m}_T^\ga=\tilde{m}_T$ or $\tilde{m}_T^\ga=\tilde{m}-1$, and $\tilde{m}_{i,T}^\ba$ to be the number of samples of a bad arm in the $i$-th episode by the policy $\pi$ over the horizon $T$.

Under a policy $\pi$, let $R_{i,j}^\ba$ be the regret (summation of mean reward gaps) contributed by pulling the $j$-th bad arm in the $i$-th episode. Then let $R^{\ba}_{m^\ga}=\sum_{i=1}^{m^\ga}\sum_{j\in[m_i^\ba]}R_{i,j}^\ba,$ which is the regret from initially bad arms over the period of $m^\ga$ episodes. 
% We define  later how the policy $\pi$ works after $T$. 

% From  $R^\pi(T)=o(T^2)$, if we have a fixed $m_{\ga}$ such that $\mathbb{P}(R^\pi(T)\le R^{\pi}_{m^\ga})\ge 1-\overline{\delta}$ where $\delta^\prime\in[0,1]$, then $\mathbb{E}[R^{\pi}_{m^\ga}]$ is an upper bound of $\mathbb{E}[R^\pi(T)]$ with small enough $\delta^\prime$ as follows:
%  \begin{align}
%      \mathbb{E}[R^\pi(T)]&=\int_0^\infty \mathbb{P}(R^\pi(T)\ge x)dx \cr
%      &=\mathbb{E}\left[\mathbb{E}\left[\int_0^{R^{\pi}_{m^\ga}}\mathbb{P}(R^\pi(T)\ge x)dx+\int_{R^{\pi}_{m^\ga}}^{T^2}\mathbb{P}(R^\pi(T)\ge x)dx \mid R^{\pi}_{m^\ga}\right]\right]
%      \cr
%   &=O(\mathbb{E}[R^{\pi}_{m^\ga}+T^2\delta^\prime])=O(\mathbb{E}[R^{\pi}_{m^\ga}]).
%  \end{align}

% For obtaining a regret bound, we first focus on finding a required number of episodes, $m^{\ga}$, such that $R^\ba(T)\le R^{\ba}_{m^\ga}$. 

% Then we provide regret bounds for each bad arm and good arm in an episode. Lastly, we obtain a regret bound for $\mathbb{E}[R^\ba(T)]$ using the episodic regret bound. 

%  For $i\in [\tilde{m}_T^\ga]$, $j\in [\tilde{m}_{i,T}^\ba]$, let $\tilde{V}_i^\ga$ be the number of pulls of the good arm in the $i$-th episode and $\tilde{n}_{i,j}^\ba$ be the number of pulls of the $j$-th bad arm in the $i$-th episode by the policy $\pi$ over the horizon $T$. Let $\tilde{a}$ be the last sampled arm over time horizon $T$ by $\pi$. 
 
Let $a(i)$ be a good arm in the $i$-th episode and $a(i,j)$ be a $j$-th bad arm in the $i$-th episode. We define $V_T(a)=\sum_{t=1}^T\rho_t\mathbbm{1}(a_t=a)$. Then excluding the last episode $\tilde{m}_T$ over $T$, we provide lower bounds of the total rotting variation over $T$ for  $a(i)$, denoted by $V_T(a(i))$,  in the following lemma. 

\begin{lemma}  \label{lem:n_low_bd_V}
  Under $E_1$, given $\tilde{m}_T$, for any $i\in[\tilde{m}_T^\ga]/\{\tilde{m}_T\}$ we have 
  \[
  V_T(a(i))\ge \delta/2.
  \]
 \end{lemma}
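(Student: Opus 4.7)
The plan is to argue by contradiction: suppose that for some $i \in [\tilde{m}_T^\ga]/\{\tilde{m}_T\}$ we have $V_T(a(i)) < \delta/2$, and show that this prevents the threshold condition from ever firing during the $i$-th episode, contradicting the fact that this episode terminates (with the algorithm sampling a new arm) before the horizon $T$.

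First, I would use that $a(i)$ is good, so $\mu_1(a(i)) \ge 1 - \delta/2$, and combine this with the assumed bound on total rotting of this arm. For every $t$ in the $i$-th episode, $\mu_t(a(i)) = \mu_1(a(i)) - V_{[t_1(a(i)),t-1]}(a(i)) \ge 1 - \delta/2 - V_T(a(i)) > 1 - \delta$. Averaging over any sub-window $[s,t-1]$ gives $\overline{\mu}_{[s,t-1]}(a(i)) > 1 - \delta$.

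Next, I would invoke the event $E_1$. Under $E_1$, for every $1 \le s \le t-1 \le T$,
\[
WUCB(a(i),s,t-1,T) = \widehat{\mu}_{[s,t-1]}(a(i)) + \sqrt{\tfrac{12\log T}{n_{[s,t-1]}(a(i))}} \ge \overline{\mu}_{[s,t-1]}(a(i)) > 1-\delta,
\]
so in particular $\min_{s \in \mathcal{T}_t(a(i))} WUCB(a(i),s,t-1,T) > 1 - \delta$ for every $t$. Thus the threshold condition in Algorithm~\ref{alg:alg1} is never triggered on $a(i)$, and the algorithm keeps pulling $a(i)$ until time $T$. This would force the $i$-th episode to be the last one, i.e.\ $i = \tilde{m}_T$, contradicting $i \in [\tilde{m}_T^\ga]/\{\tilde{m}_T\}$. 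Therefore $V_T(a(i)) \ge \delta/2$.

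The proof is essentially a one-step contrapositive: the only obstacle is making sure the direction of the UCB inequality under $E_1$ is applied correctly (lower-bounding $WUCB$ by the true mean minus nothing, since the bonus is added rather than subtracted) and that the exclusion of $i = \tilde{m}_T$ correctly captures the case in which the episode has not yet ended by time $T$. No case analysis on the adaptive window size is needed, since the bound on $WUCB$ holds uniformly over $s \in \mathcal{T}_t(a(i))$.
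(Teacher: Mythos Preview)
Your proposal is correct and follows essentially the same approach as the paper's proof: both argue that if the total rotting of the good arm $a(i)$ is at most $\delta/2$, then under $E_1$ the window-UCB is bounded below by $\overline{\mu}_{[s,t-1]}(a(i)) \ge \mu_1(a(i)) - V_T(a(i)) \ge 1-\delta$, so the threshold condition never fires, contradicting that the $i$-th episode terminated before the horizon. The paper phrases this as ``the policy must pull $a(i)$ until its rotting exceeds $\delta/2$'' rather than an explicit contradiction on $i=\tilde m_T$, but the content is identical.
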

 \begin{proof}
    Suppose that $V_T(a(i))<\delta/2$, then we have \begin{align*}
    &\min_{t_1(a(i))\le s \le t_2(a(i))}\left\{\widehat{\mu}_{[s,t_2(a(i))]}(a(i))+\sqrt{12\log(T)/n_{[s,t_2(a(i))]}(a(i))}\right\}\cr &\ge\min_{t_1(a(i))\le s\le t_2(a(i))}\{\overline{\mu}_{[s,t_2(a(i))]}(a(i))\}\cr
    &\ge\mu_{t_2(a(i))}(a(i))\cr
    &\ge\mu_1(a(i))-V_T(a(i)) \cr
    &> 1-\delta,
\end{align*}
where the first inequality is obtained from $E_1$, and the last inequality is from  $V_T(a(i))<\delta/2$ and $\mu_1(a(i))\ge 1- \delta/2$. Therefore, from the threshold condition, policy $\pi$ must pull arm $a(i)$ until its total rotting amount is greater than (or equal to) $\delta/2$, which implies $V_T(a(i))\ge\delta/2$. 
 \end{proof}

In the following, we consider two different cases with respect to $V_T$; large and small $V_T$.

%\textbf{We first consider the case where $V_T=\omega(\max\{1/\sqrt{T},1/T^{1/(\beta+1)}\})$.} 

\textbf{Case 1:} We consider $V_T>\max\{1/\sqrt{T},1/T^{1/(\beta+1)}\}$ in the following. 

In this case, we have $\delta=\delta_V(\beta)=c_1\max\{(V_T/T)^{1/(\beta+2)},(V_T/T)^{1/3}\}.$ Here, we define the policy $\pi$ after time $T$ such that it pulls a good arm until its total rotting variation is equal to or greater than $\delta/2$ and does not pull a sampled bad arm. We note that defining how $\pi$ works after $T$ is only for the proof to get a regret bound over time horizon $T$. For the last arm $\tilde{a}$ over the horizon $T$, it pulls the arm until its total variation becomes $\max\{\delta/2,V_T(\tilde{a})\}$ if $\tilde{a}$ is a good arm.
For $i\in[m^\ga]$, $j\in[m_i^\ba]$ let $V_i^\ga$ and $V_{i,j}^\ba$ be the total rotting variation of pulling the good arm in $i$-th episode and $j$-th bad arm in $i$-th episode from the policy, respectively. Here we define $V_i^\ga$'s and $V_{i,j}^\ba$'s as follows:

If $\tilde{a}$ is a good arm,
\begin{equation*}
    V_i^\ga=
    \begin{cases}
    V_T(a(i)) &\text{for } i\in[\tilde{m}_T^\ga-1]  \\
     \max\{\delta/2,V_T(a(i))\}& \text{for } i\in[m^\ga]/[\tilde{m}_T^\ga-1]
    \end{cases}, 
    V_{i,j}^\ba=
    \begin{cases}
    V_T(a(i,j)) &\text{for } i\in[\tilde{m}_T^\ga],j\in[\tilde{m}_{i,T}^\ba]\\
    0 &\text{for } i\in[m^\ga]/[\tilde{m}_T^\ga],j\in[m^\ba_i].
    \end{cases}
\end{equation*}

Otherwise,
\begin{equation*}
    V_i^\ga=
    \begin{cases}
    V_T(a(i)) &\text{for } i\in[\tilde{m}_T^\ga]  \\
     \delta/2& \text{for } i\in[m^\ga]/[\tilde{m}_T^\ga]
    \end{cases}, 
    V_{i,j}^\ba=
    \begin{cases}
    V_T(a(i,j)) &\text{for } i\in[\tilde{m}_T^\ga],j\in[\tilde{m}_{i,T}^\ba]\\
    0 &\text{for } i\in[m^\ga]/[\tilde{m}_T^\ga-1],j\in[m^\ba_i]/[\tilde{m}_{i,T}^\ba].
    \end{cases}
\end{equation*}

For $i\in[m^\ga]$, $j\in[m_i^\ba]$ let $n_{i,j}^\ba$ be the number of pulling the $j$-th bad arm in $i$-th episode from the policy. We define $n_T(a)$ be the total amount of pulling arm $a$ over $T$. Here we define $n_{i,j}^\ba$'s as follows:

\begin{equation*}
    n_{i,j}^\ba=
    \begin{cases}
    n_T(a(i,j)) &\text{for } i\in[\tilde{m}_T^\ga],j\in[\tilde{m}_{i,T}^\ba]\\
    0 &\text{for } i\in[m^\ga]/[\tilde{m}_T^\ga],j\in[m^\ba_i].
    \end{cases}
\end{equation*}

Then we provide $m^{\ga}$ such that $R^\ba(T)\le R^{\ba}_{m^\ga}$ in the following lemma.
\begin{lemma}\label{lem:regret_bd_prob_V}
Under $E_1$, when $m^\ga=\lceil 2V_T/\delta\rceil$ we have 
\[R^\ba(T)\le R^{\ba}_{m^\ga}.\]
\end{lemma}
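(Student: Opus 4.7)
The plan is to show that the extended policy exhausts the slow-rotting budget $V_T$ within its first $m^\ga=\lceil 2V_T/\delta\rceil$ episodes, so the horizon $T$ must be reached by then, i.e. $\tilde{m}_T\le m^\ga$. Once this is in hand, $R^\ba(T)\le R^\ba_{m^\ga}$ follows mechanically from the construction of $n_{i,j}^\ba$: for $i\le\tilde{m}_T^\ga$ the count $n_T(a(i,j))$ faithfully reproduces the bad-arm pulls within $T$, while indices $i>\tilde{m}_T^\ga$ contribute zero, so the right-hand side dominates the true bad-arm regret.

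The key step will be a counting argument driven by Lemma~\ref{lem:n_low_bd_V}. That lemma says $V_T(a(i))\ge \delta/2$ for every non-terminal good arm within the horizon, and the construction of the extended policy forces $V_i^\ga\ge\delta/2$ for all indices beyond $\tilde{m}_T^\ga$ as well. I would then assume for contradiction that $\tilde{m}_T>m^\ga$; this places every episode in $[m^\ga]$ in the non-terminal regime and makes all its good-arm rotting values bounded below by $\delta/2$. Using the total-budget constraint $\sum_{a\in\mathcal{A}_T}V_T(a)=\sum_{t=1}^{T-1}\rho_t\le V_T$, I would chain
\[
V_T \;\ge\; \sum_{i=1}^{m^\ga} V_T(a(i)) \;\ge\; m^\ga\cdot\frac{\delta}{2} \;\ge\; V_T,
\]
producing the desired contradiction and hence $\tilde{m}_T\le m^\ga$.

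The subtle point is that the chain above is only weakly tight when $2V_T/\delta$ happens to be an integer; the written form of Lemma~\ref{lem:n_low_bd_V} would then not suffice. I would resolve this by appealing to the strict version of Lemma~\ref{lem:n_low_bd_V} that its proof actually delivers, namely $V_T(a(i))>\delta/2$ for $i\in[\tilde{m}_T^\ga]/\{\tilde{m}_T\}$: equality would keep the adaptive-window UCB estimate above $1-\delta$, so the threshold condition in Algorithm~\ref{alg:alg1} would never fire and $a(i)$ could not have been withdrawn. With that strict inequality in force, the second inequality in the display becomes strict, the chain collapses to $V_T>V_T$, and the proof is complete.
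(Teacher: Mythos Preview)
Your proposal is correct and follows the same approach as the paper. The paper's proof is a two-line sketch: it writes $\sum_{i\in[m^\ga]}V_i^\ga\ge m^\ga\,\delta/2\ge V_T$ and declares the conclusion; your version unpacks this into the contradiction argument $\tilde{m}_T>m^\ga\Rightarrow V_T\ge\sum_{i\le m^\ga}V_T(a(i))\ge m^\ga\,\delta/2\ge V_T$ and then carefully observes that the integer edge case is resolved by the strict inequality $V_T(a(i))>\delta/2$ that the proof of Lemma~\ref{lem:n_low_bd_V} actually delivers (the threshold test is a strict $<1-\delta$, so equality never triggers withdrawal). This extra care is warranted—the paper glosses over it—but the underlying idea is identical.
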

\begin{proof}
From Lemma~\ref{lem:n_low_bd_V}, we have 
\[\sum_{i\in[m^\ga]}V_i^\ga\ge m^\ga\frac{\delta}{2}\ge V_T,\]
which implies that $R^\ba(T)\le R^{\ba}_{m^\ga}.$ 
\end{proof}

From the result of Lemma~\ref{lem:regret_bd_prob_V}, we set $m^\ga=\lceil 2V_T/\delta\rceil$. We analyze $R^\ba_{m^\ga}$ for obtaining a bound for $R^\ba(T)$ in the following.

% For $0\le x \le 2\delta$, let $g(x)=\mathbb{P}(\Delta(a)=x|a \text{ is a good arm})$. Then we have $\mathbb{P}(\Delta(a)=x|a \text{ is a good arm})=\mathbb{P}(\Delta(a)=x|\Delta(a)\le2\delta)= \mathbb{P}(\Delta(a)=x)/\mathbb{P}(\Delta(a)\le2\delta)=  \mathbb{P}(\Delta(a)=x)/(C(2\delta)^\beta)$.

  \begin{lemma}\label{lem:R_bad_bd_V}
   Under $E_1$ and policy $\pi$,  we have 
\begin{align*}
    \mathbb{E}[R_{m^\ga}^\ba]=\tilde{O}\left(\max\{T^{(\beta+1)/(\beta+2)}V_T^{1/(\beta+2)},T^{2/3}V_T^{1/3}\}\right).
    % \label{eq:R_bad_upper}
\end{align*}
  \end{lemma}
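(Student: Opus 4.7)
The plan mirrors the structure of Lemma~\ref{lem:R_bad_bd_rho}, replacing the per-pull bound based on $\rho$ by a bound based on the cumulative-rotting budget $V_T$. First, I would bound $n_{i,j}^\ba$ for an arbitrary bad arm $a(i,j)$. Since the arm is pulled continuously between $t_1(a(i,j))$ and $t_2(a(i,j))$, we have $n_{[s,t-1]}(a) = t-s$ whenever $s \ge t_1(a(i,j))$, and the doubling construction of $\mathcal{T}_t(a)$ guarantees that some $s=t-2^{l-1} \in \mathcal{T}_t(a)$ satisfies $n_{[s,t-1]}(a)\in [n_{[t_1,t-1]}(a)/2, n_{[t_1,t-1]}(a)]$. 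Using $E_1$ and the fact that rotting only lowers $\mu_t(a)$ below $\mu_1(a)=1-\Delta_1(a)$, this window satisfies
\[
WUCB(a,s,t-1,T)\ \le\ 1-\Delta_1(a)\ +\ 2\sqrt{\tfrac{12\log T}{n_{[s,t-1]}(a)}}.
\]
The threshold forces withdrawal once $n_{[t_1,t-1]}(a)=\tilde{\Omega}(1/(\Delta_1(a)-\delta)^2)$, giving $n_{i,j}^\ba=\tilde O(1/(\Delta_1(a(i,j))-\delta)^2)$.

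Second, I would decompose the single-arm regret as
\[
R_{i,j}^\ba\ \le\ \Delta_1(a(i,j))\,n_{i,j}^\ba\ +\ \sum_{t=t_1+1}^{t_2} V_{[t_1,t-1]}(a(i,j))\ \le\ \Delta_1(a(i,j))\,n_{i,j}^\ba\ +\ n_{i,j}^\ba\,V_T(a(i,j)).
\]
For the first component, conditioning on $a(i,j)$ being bad yields density $b(x)=\Theta(x^{\beta-1}/(1-(2\delta)^\beta))$ on $(2\delta,1]$, and the integral computation in \eqref{eq:R_bad_ij_bd_rho}--\eqref{eq:sum_beta_4_rho}, with the $\rho$-term set to zero, gives $\mathbb{E}[\Delta_1(a(i,j))\,n_{i,j}^\ba]=\tilde O(\max\{1,\delta^{\beta-1}\})$. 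For the second component, the deterministic bounds $\sum_{i,j}V_T(a(i,j))\le V_T$ and $n_{i,j}^\ba=\tilde O(1/\delta^2)$ combine to give $\sum_{i,j}n_{i,j}^\ba V_T(a(i,j))=\tilde O(V_T/\delta^2)$ without needing any distributional argument.

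Third, since $m^\ba_i$ is geometric with success probability $\Theta(\delta^\beta)$ and the $a(i,j)$ are i.i.d.\ given the bad event, Wald's identity yields $\mathbb{E}\bigl[\sum_j \Delta_1(a(i,j))\,n_{i,j}^\ba\bigr]=\tilde O(\delta^{-\beta}\max\{1,\delta^{\beta-1}\})$. Summing over the $m^\ga=\lceil 2V_T/\delta\rceil$ episodes, I obtain
\[
\mathbb{E}[R_{m^\ga}^\ba]\ =\ \tilde O\!\left(\max\!\left\{\tfrac{V_T}{\delta^{\beta+1}},\ \tfrac{V_T}{\delta^2}\right\}\right).
\]
Plugging in $\delta=\delta_V(\beta)$: for $\beta\ge 1$, $\delta=(V_T/T)^{1/(\beta+2)}$ makes $V_T/\delta^{\beta+1}=V_T^{1/(\beta+2)}T^{(\beta+1)/(\beta+2)}$ dominant (the $V_T/\delta^2$ term is smaller because $\delta^{\beta-1}\le 1$); for $0<\beta<1$, $\delta=(V_T/T)^{1/3}$ makes $V_T/\delta^2=V_T^{1/3}T^{2/3}$ dominant (since $\delta^{\beta-1}\ge 1$). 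Both match the claim.

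The main obstacle is step one: handling the adaptive sliding window correctly so that the standard ``$\log T/\Delta^2$'' pull bound still applies despite the minimum being taken over multiple windows. The doubling structure of $\mathcal{T}_t(a)$ loses only a multiplicative factor of $2$, and rotting only helps (further depressing $\overline{\mu}_{[s,t-1]}(a)$), so the worst case is governed entirely by $\Delta_1(a(i,j))$. The integral machinery in step two is essentially verbatim from Lemma~\ref{lem:R_bad_bd_rho}.
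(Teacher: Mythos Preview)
The proposal is correct and follows essentially the same approach as the paper: both bound $n_{i,j}^\ba=\tilde O(1/(\Delta_1(a(i,j))-\delta)^2)$ via the adaptive-window threshold (the paper uses the full window $s=t_1$ rather than your half-window, but this is equivalent up to a factor of~2), split $R_{i,j}^\ba$ into the initial-gap piece and the rotting piece, evaluate $\mathbb{E}[\Delta_1(a(i,j))n_{i,j}^\ba]=\tilde O(\max\{1,\delta^{\beta-1}\})$ through the same $\beta$-dependent integral discretization (these are equations~\eqref{eq:int_ap}--\eqref{eq:bad_int_bd3} in the paper, not the labels you cite), bound the rotting contribution by $V_T/\delta^2$ via $\sum_{i,j}V_{i,j}^\ba\le V_T$ and $n_{i,j}^\ba=\tilde O(1/\delta^2)$, and then sum over $m^\ga=\lceil 2V_T/\delta\rceil$ episodes each with $\mathbb{E}[m_i^\ba]=\Theta(\delta^{-\beta})$ bad arms. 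Your observation that the $V_T/\delta^2$ term is deterministic and needs no distributional argument is a slightly cleaner presentation of the same step.
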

  \begin{proof}
    Let $a(i,j)$ be a sampled arm for $j$-th bad arm in the $i$-th episode and $\tilde{m}_T$ be the number of episodes from the policy $\pi$ over the horizon $T$. Suppose that the algorithm samples arm $a(i,j)$ at time $t_1(a(i,j))$. Then the algorithm stops pulling arm $a(i,j)$ at time $t_2(a(i,j))+1$ if $\widehat{\mu}_{[s,t_2(a(i,j))]}(a)+\sqrt{12\log(T)/n_{[s,t_2(a(i,j))]}(a)}< 1-\delta$ for some $s$ such that $t_1(a(i,j))\le s\le t_2(a(i,j))$ and $s=t_2(a(i,j))+1-2^{l-1}$ for $l\in\mathbb{Z}^+$. 
%  where $t^\prime$ is $t_{l_a}$ when there is no abrupt change between $t_{l_a}$ and $t$; otherwise $t^\prime$ is the time when abrupt change happens after $t_{l_a}$. 
 For simplicity, we use $t_1$ and $t_2$ instead of $t_1(a(i,j))$ and $t_2(a(i,j))$ when there is no confusion. We first consider the case where the algorithm stops pulling arm $a(i,j)$ because the threshold condition is satisfied. For the regret analysis, we consider that for $t>t_2$, arm $a$ is virtually pulled. We note that under $E_1$, we have
 \begin{align*}
     \widehat{\mu}_{[s,t_2]}(a(i,j))+\sqrt{12\log(T)/n_{[s,t_2]}(a(i,j))}& \le \overline{\mu}_{[s,t_2]}(a(i,j))+2\sqrt{12\log(T)/n_{[s,t_2]}(a(i,j))} 
     \cr &\le \mu_1(a(i,j))+2\sqrt{12\log(T)/n_{[s,t_2]}(a(i,j))}.
 \end{align*} Then we assume that $\tilde{t}_2(\ge t_2)$ is the smallest time that there exists $t_1\le s\le \tilde{t}_2$ with $s=\tilde{t}_2+1-2^{l-1}$ for $l\in\mathbb{Z}^+$ such that the following threshold condition is met:
   \begin{align}
\mu_1(a(i,j))+2\sqrt{12\log(T)/n_{[s,\tilde{t}_2]}(a(i,j))}< 1-\delta .\label{eq:bad_stop_con_V}
\end{align}
From the definition of $\tilde{t}_2$, we observe that for given $\tilde{t}_2$, the time step $s=s'$ which satisfying \eqref{eq:bad_stop_con_V} equals to $t_1$ (i.e. $s'=t_1$).  Then, we can observe that  $n_{[s',\tilde{t}_2]}(a(i,j))=n_{[t_1,\tilde{t}_2]}(a(i,j))= \lceil C_2\log(T)/(\Delta_{t_1}(a(i,j))-\delta)^2\rceil$ for some constant $C_2>0$, which satisfies \eqref{eq:bad_stop_con_V}.
%   We observe that $n_{[s,\tilde{t}_2]}(a(i,j))= C_2\log(T)/(\Delta_1(a(i,j))-\delta)^2$ satisfies  condition \eqref{eq:bad_stop_con_V}  for some large constant $C_2>0$. With the definition of $\tilde{t}_2$, this implies that \[\frac{C_2\log(T)}{(\Delta_1(a(i,j))-\delta)^2}\le n_{[t_1,\tilde{t}_2]}(a(i,j))\le \frac{2C_2\log(T)}{(\Delta_1(a(i,j))-\delta)^2}.\]
 Then from $n_{[t_1,t_2]}(a(i,j))\le n_{[t_1,\tilde{t}_2]}(a(i,j))$, 
%   For bad arms sampled before $T$ except $\tilde{a}$, under $E_1$, from  $-\sum_{l=t_1(a(i,j))}^{t_2(a(i,j))-1}\rho_l+\sum_{l=s}^{t_2(a(i,j))-1} \rho_l\le 0$ and the policy at time $t_2(a(i,j))+1$, there exists $t_1(a(i,j))\le s\le t_2(a(i,j))$ s.t. the following condition is met:
% \begin{align}
%  \mu_1(a(i,j))+2\sqrt{2\log(T)/n_{[s,t_2(a(i,j))]}(a(i,j))}\le 1-\delta .\label{eq:bad_stop_con_V}
% \end{align}
%  We observe that $n_{[s,t_2(a(i,j))]}(a(i,j))= C_2\log(T)/(\Delta_1(a(i,j))-\delta)^2$ satisfies  condition \eqref{eq:bad_stop_con_V}  for some large constant $C_2>0$,
for all $i\in[\tilde{m}_T],j\in[\tilde{m}_{i,T}^\ba]$ we have $n_{i,j}^\ba=\tilde{O}(1/(\Delta_1(a(i,j))-\delta)^2)$.
Then with the facts that $n_{i,j}^\ba=0$ for $i\in[m^\ga]/[\tilde{m}_T^\ga]$, $j\in[m_i^\ba]/[\tilde{m}_{i,T}^\ba]$, we have, for any $i\in[m^\ga]$ and $j\in[m^\ba_i]$, 
\[n_{i,j}^\ba=\tilde{O}(1/(\Delta_{t_1}(a(i,j))-\delta)^2).\]

 For $2\delta< x\le 1$, let  $b(x)=\mathbb{P}(\Delta_1(a)=x|a \text{ is a bad arm})$. Then we have
\begin{align*}
b(x)&=\mathbb{P}(\Delta_1(a)=x|\Delta_1(a)>2\delta)\cr &= \mathbb{P}(\Delta_1(a)=x)/\mathbb{P}(\Delta_1(a)>2\delta)\cr &=  \mathbb{P}(\Delta_1(a)=x)/(1-C(2\delta)^\beta),    
\end{align*}
where $C(2\delta)^\beta<1$ with small enough positive constant $c_1<1$ for $\delta$.
We note that $2\delta<\Delta_{t_1}(a(i,j))=\Delta_{1}(a(i,j))\le1$. Since $n_{i,j}^\ba=\tilde{O}(1/(\Delta_{t_1}(a(i,j))-\delta)^2)= \tilde{O}(1/\delta^2)$,
we have
\begin{align}
\mathbb{E}[R_{i,j}^\ba]&=\mathbb{E}\left[ \sum_{t=t_1(a(i,j))}^{t_2(a(i,j))}\Delta_{t_1}(a(i,j))+\sum_{t=t_1(a(i,j))}^{t_2(a(i,j))-1}\sum_{s=t_1(a(i,j))}^{t}\rho_s\right] \cr &\le\mathbb{E}[\Delta_1(a(i,j))n_{i,j}^\ba+V_{i,j}^\ba n_{i,j}^\ba]\cr
&\le\mathbb{E}[\Delta_1(a(i,j))n_{i,j}^\ba+V_{i,j}^\ba (1/\delta^2)]\cr
&=\Tilde{O}\left(\int_{2\delta}^1\frac{1}{(x-\delta)^2} xb(x)dx+\mathbb{E}[V_{i,j}^\ba(1/\delta^2)]\right). \label{eq:R_bad_ij_int_V}   
\end{align}

Recall that we consider $2\delta<1$ for regret from bad arms. We adopt some techniques introduced in Appendix D of \citet{Bayati} to deal with the generalized mean reward distribution with $\beta$.
Let $K=(1-2\delta)/\delta$,  $a_j=\frac{2}{j\delta}$, and $p_j=\int_{j\delta}^{(j+1)\delta}b(t+\delta)dt$.  Then for obtaining a bound of the last equality in \eqref{eq:R_bad_ij_int_V} we have 

\begin{align}
\int_{2\delta}^1\left(\frac{1}{(x-\delta)^2} x\right)b(x)dx &=\int_{\delta}^{1-\delta}\left(\frac{1}{t}+\frac{\delta}{t^2}\right)b(t+\delta)dt    \cr &=\sum_{j=1}^K\int_{j\delta}^{(j+1)\delta}\left(\frac{1}{t}+\frac{\delta}{t^2}\right)b(t+\delta)dt \cr &\le \sum_{j=1}^K\frac{2}{j\delta} \int_{j\delta}^{(j+1)\delta}b(t+\delta)dt\cr 
&= \sum_{j=1}^K a_j p_j .\label{eq:int_ap} 
\end{align}
We note that $\sum_{i=1}^jp_i\le C_0(j\delta)^\beta$ for all $j\in[K]$ for some constant $C_0>0$. Then for getting a bound of the last equality in \eqref{eq:int_ap}, we have
\begin{align}
\sum_{j=1}^K a_j p_j&= \sum_{j=1}^{K-1} (a_j-a_{j+1})\left(\sum_{i=1}^jp_i\right)+a_K\sum_{i=1}^Kp_i\cr 
&\le \sum_{j=1}^{K-1}(a_j-a_{j+1})C_0(j\delta)^\beta+a_KC_0(K\delta)^\beta\cr 
&=C_0\delta^\beta a_1+\sum_{j=2}^K C_0(j^\beta-(j-1)^\beta)\delta^\beta a_j\cr& 
= O\left(\left(\frac{1}{\delta}\right)\delta^\beta+\sum_{j=2}^K\left(\frac{1}{j\delta}\right)\left((j\delta)^\beta-((j-1)\delta)^\beta\right)\right)\cr & =O\left(\delta^{\beta-1}+\sum_{j=2}^K\left(\frac{1}{j}\delta^{\beta-1}\right)\left(j^\beta-(j-1)^\beta\right)\right).\label{eq:ap_sum}
\end{align}

% \begin{align}
% &\int_{2\delta}^1\left(\frac{1}{(x-\delta)^2} x\right)b(x)dx=\int_{\delta}^{1-\delta}(\frac{1}{t}+\frac{\delta}{t^2})b(t+\delta)dt    =\sum_{j=1}^K\int_{j\delta}^{(j+1)\delta}(\frac{1}{t}+\frac{\delta}{t^2})b(t+\delta)dt \cr &
% = O\left(\left(\frac{1}{\delta}\right)3^\beta\delta^\beta+\sum_{j=2}^K\left(\frac{2}{j\delta}\right)\left(((j+2)\delta)^\beta-((j+1)\delta)^\beta\right)\right)\cr & =O\left(\delta^{\beta-1}+\sum_{j=1}^K\left(\frac{2}{j}\delta^{\beta-1}\right)\left((j+2)^\beta-(j+1)^\beta\right)\right).
% \end{align}
Now we analyze the term in the last equality in \eqref{eq:ap_sum} according to the criteria for $\beta$.
For $\beta=1$, we can obtain 
\begin{align}
    O\left(\delta^{\beta-1}+\sum_{j=2}^K\left(\frac{1}{j}\delta^{\beta-1}\right)\left(j^\beta-(j-1)^\beta\right)\right)=\tilde{O}(1).
    \label{eq:bad_int_bd1}
\end{align}

For $\beta>1$, we have $j^\beta-(j-1)^\beta\le \beta j^{\beta-1}$ using the mean value theorem. Therefore, we obtain the following.
\begin{align}
    &O\left(\delta^{\beta-1}+\sum_{j=2}^K\left(\frac{1}{j}\delta^{\beta-1}\right)\left(j^\beta-(j-1)^\beta\right)\right)=O\left(\sum_{j=1}^K\left(\frac{1}{j}\delta^{\beta-1}\right)j^{\beta-1}\right)\cr &=O\left(\sum_{j=2}^K\delta^{\beta-1}j^{\beta-2}\right) \cr &=O\left(\delta^{\beta-1}\frac{1}{\beta-1}\left((K+1)^{\beta-1}-1\right)\right)\cr &=O(1).\label{eq:bad_int_bd2}
\end{align}

For $\beta<1$, when $j>1$ we have $j^\beta-(j-1)^\beta\le \beta (j-1)^{\beta-1}$ using the mean value theorem. Therefore, we obtain
\begin{align}
    &O\left(\delta^{\beta-1}+\sum_{j=1}^K\left(\frac{1}{j}\delta^{\beta-1}\right)\left(j^\beta-(j-1)^\beta\right)\right)=O\left(\delta^{\beta-1}+\sum_{j=2}^K\left(\frac{1}{j}\delta^{\beta-1}\right)(j-1)^{\beta-1}\right)\cr &=O\left(\delta^{\beta-1}+\sum_{j=2}^K\delta^{\beta-1}(j-1)^{\beta-2}\right) \cr &=O\left(\delta^{\beta-1}+\delta^{\beta-1}\frac{1}{\beta-1}\left((K+1)^{\beta-1}-1\right)\right)\cr &=O\left(\delta^{\beta-1}+\delta^{\beta-1}\frac{1-((1-\delta)/\delta)^{\beta-1}}{1-\beta}\right)=O(\delta^{\beta-1}).\label{eq:bad_int_bd3}
\end{align}

From \eqref{eq:int_ap},\eqref{eq:ap_sum},\eqref{eq:bad_int_bd1},\eqref{eq:bad_int_bd2}, and \eqref{eq:bad_int_bd3},  we have

\begin{align*}
\int_{2\delta}^1\left(\frac{1}{(x-\delta)^2} x \right)b(x)dx=\tilde{O}(\max\{1,\delta^{\beta-1}\}).
\end{align*}

Then for any $i\in[m^\ga]$, $j\in[m^\ba_i]$, we have 
\begin{align}
    \mathbb{E}[R_{i,j}^\ba]&\le\mathbb{E}
    \left[\Delta(a(i,j))n_{i,j}^\ba+V_{i,j}^\ba n_{i,j}^\ba\right]\cr
    &= \tilde{O}\left(\max\{1,\delta^{\beta-1}\}+\mathbb{E}[V_{i,j}^\ba]/\delta^2\right). \label{eq:r_bad_bd}
\end{align}

Recall that $R^{\ba}_{m^\ga}=\sum_{i=1}^{m^\ga}\sum_{j\in[m_i^\ba]}R_{i,j}^\ba.$ With $\delta=c_1\max\{(V_T/T)^{1/(\beta+2)},(V_T/T)^{1/3}\}$ and $m^\ga=\lceil 2V_T/\delta\rceil$, from the fact that $m_i^\ba$'s are i.i.d. random variables with geometric distribution with $\mathbb{E}[m_i^\ba]= (1/C(2\delta)^\beta)-1$ for some constant $C>0$, we have
\begin{align}
\mathbb{E}[R^{\ba}_{m^\ga}] &=O\left(\mathbb{E}\left[\sum_{i=1}^{m^\ga}\sum_{j\in[m^\ba_i]}R^\ba_{i,j}\right]\right)\cr
  &= \tilde{O}\left((V_T/\delta)\frac{1}{\delta^\beta}\max\{1,\delta^{\beta-1}\}+V_T/\delta^2\right)\cr
  &=\tilde{O}\left(\max\{T^{(\beta+1)/(\beta+2)}V_T^{1/(\beta+2)},T^{2/3}V_T^{1/3}\}\right).
\end{align}
  \end{proof}
 
From $R^{\pi}(T)=R^\ga(T)+R^\ba(T)$ and Lemmas~~\ref{lem:R_good_bd_V}, \ref{lem:regret_bd_prob_V}, \ref{lem:R_bad_bd_V}, with $\delta=\max\{(V_T/T)^{1/(\beta+2)},(V_T/T)^{1/3}\}$  we have
\begin{align}
\mathbb{E}[R^\pi(T)]  =\tilde{O}\left(\max\{T^{(\beta+1)/(\beta+2)}V_T^{1/(\beta+2)},T^{2/3}V_T^{1/3}\}\right).\label{eq:R_large_V}
\end{align}

%\textbf{Now, we consider the case where $V_T=O(\max\{1/\sqrt{T},1/T^{1/(\beta+1)}\})$.} 

\textbf{Case 2:} Now we consider $V_T\le\max\{1/\sqrt{T},1/T^{1/(\beta+1)}\}$ in the following. In this case, we have 
$\delta=c_1\max\{1/T^{\frac{1}{\beta+1}},1/\sqrt{T}\}.$
For getting $R^{\ba}_{m^\ga}$, here  we define the policy $\pi$ after time $T$ such that it pulls $V_T$ amount of rotting variation for a good arm and $0$ for a bad arm. We note that defining how $\pi$ works after $T$ is only for the proof to get a regret bound over time horizon $T$. For the last arm $\tilde{a}$ over the horizon $T$, it pulls the arm  up to $V_T$ amount of rotting variation if $\tilde{a}$ is a good arm.
For $i\in[m^\ga]$, $j\in[m_i^\ba]$ let $V_i^\ga$ and $V_{i,j}^\ba$ be the amount of rotting variation from pulling the good arm in $i$-th episode and $j$-th bad arm in $i$-th episode from the policy, respectively. Here we define $V_i^\ga$'s and $V_{i,j}^\ba$'s as follows:

If $\tilde{a}$ is a good arm,
\begin{equation*}
    V_i^\ga=
    \begin{cases}
    V_T(a(i)) &\text{for } i\in[\tilde{m}_T^\ga-1]  \\
     V_T& \text{for } i\in[m^\ga]/[\tilde{m}_T^\ga-1]
    \end{cases}, 
    V_{i,j}^\ba=
    \begin{cases}
    V_T(a(i,j)) &\text{for } i\in[\tilde{m}_T^\ga],j\in[\tilde{m}_{i,T}^\ba]\\
    0 &\text{for } i\in[m^\ga]/[\tilde{m}_T^\ga],j\in[m^\ba_i].
    \end{cases}
\end{equation*}

% such that if $\tilde{a}$ is a good arm and $\tilde{n}^\ga_{\tilde{m}_T^\ga}<\delta/(2\rot)$, then $n_i^\ga=\tilde{n}_i^\ga$ for $i\in[\tilde{m}_T^\ga-1]$, $n_{i,j}^\ba=\tilde{n}_{i,j}^\ba$ for $i\in[\tilde{m}_T^\ga], j\in[\tilde{m}_{i,T}^\ba]$, and $n_i^\ga=\lceil\delta/(2\rot)\rceil$ for $i\in[m^\ga]/[\tilde{m}_T^\ga-1]$, $n_{i,j}^\ba=1$ for $i\in[m^\ga]/[\tilde{m}_T^\ga-1], j\in[m^\ba]/[\tilde{m}_{i,T}^\ba]$. 
Otherwise,
\begin{equation*}
    V_i^\ga=
    \begin{cases}
    V_T(a(i)) &\text{for } i\in[\tilde{m}_T^\ga]  \\
     V_T& \text{for } i\in[m^\ga]/[\tilde{m}_T^\ga]
    \end{cases}, 
    V_{i,j}^\ba=
    \begin{cases}
    V_T(a(i,j)) &\text{for } i\in[\tilde{m}_T^\ga],j\in[\tilde{m}_{i,T}^\ba]\\
    0 &\text{for } i\in[m^\ga]/[\tilde{m}_T^\ga-1],j\in[m^\ba_i]/[\tilde{m}_{i,T}^\ba].
    \end{cases}
\end{equation*}
% $n_i^\ga=\tilde{n}_i^\ga, n_{i,j}^\ba=\tilde{n}_{i,j}^\ba$ for $i\in[\tilde{m}_T^\ga], j\in[\tilde{m}^\ba],$ and $n_i^\ga=\lceil\delta/(2\rot)\rceil$ for $i\in[m^\ga]/[\tilde{m}_T^\ga]$, $n_{i,j}^\ba=1$ for $i\in[m^\ga]/[\tilde{m}_T^\ga-1], j \in[m^\ba_i]/[\tilde{m}^\ba_i].$
For $i\in[m^\ga]$, $j\in[m_i^\ba]$ let $n_{i,j}^\ba$ be the number of pulling the $j$-th bad arm in $i$-th episode from the policy. We define $n_T(a)$ be the total amount of pulling arm $a$ over $T$. Here we define $n_{i,j}^\ba$'s as follows:

\begin{equation*}
    n_{i,j}^\ba=
    \begin{cases}
    n_T(a(i,j)) &\text{for } i\in[\tilde{m}_T^\ga],j\in[\tilde{m}_{i,T}^\ba]\\
    0 &\text{for } i\in[m^\ga]/[\tilde{m}_T^\ga],j\in[m^\ba_i].
    \end{cases}
\end{equation*}

Then we provide $m^{\ga}$ such that $R^\ba(T)\le R^{\ba}_{m^\ga}$ in the following lemma.
\begin{lemma}\label{lem:regret_bd_prob_small_V}
Under $E_1$, when $m^\ga=C_3$ for some constant $C_3>0$, we have 
$$R^\ba(T)\le R^{\ba}_{m^\ga}.$$
\end{lemma}
\begin{proof}

From Lemma~\ref{lem:n_low_bd_V}, under $E_1$ we can find that $V_i^\ga\ge \min\{\delta/2,V_T\}$ for $i\in[m^\ga]$.
Then if $m^\ga=C_3$ for large enough $C_3>0$, then with $\delta=c_1\max\{1/T^{1/(\beta+1)},1/\sqrt{T}\}$ and $V_T\le\max\{1/T^{1/(\beta+1)},1/\sqrt{T}\}$, we have \[\sum_{i\in[m^\ga]}V_i^\ga\ge C_3\min\{\delta/2,V_T\}>V_T,\] which implies $R^\ba(T)\le R^{\ba}_{m^\ga}$.

\end{proof}

%  From \ref{lem:r} We provide a bound for $R^\ga(T)$.
% %   For getting a bound for $\mathbb{E}[R^{\pi}_{m^\ga}]$, we provide bounds for $\mathbb{E}[R^\ga_i]$ and $\mathbb{E}[R^\ba_{i,j}]$ in the following lemma.
%   \begin{lemma}
%    Under $E_1$ and policy $\pi$, we have 
% \begin{align*}
%     \mathbb{E}[R^\ga(T)]
%     =\tilde{O}\left(\max\{T^{\beta/(\beta+1)},\sqrt{T}\}\right).
%     % \label{eq:R_bad_upper}
% \end{align*}\label{lem:R_good_bd_small_V}
%   \end{lemma}
%   \begin{proof}  With the definition of $V_i^\ga=V_T$ for $i\in[m^\ga]/[\tilde{m}_T^\ga]$, for any $i\in[m^\ga]$ we have $$V_i^\ga\le V_T.$$ Then we have 
% \begin{align*}
%     \mathbb{E}[R_i^\ga]&\le\mathbb{E}
%     \left[\Delta(a(i))n_i^\ga+n_i^\ga V_i^\ga\right]\cr
%     &=O\left(\mathbb{E}[n_i^\ga](\delta+V_T) \right),
% \end{align*}

% Then, we have
% \begin{align*}
%     \mathbb{E}[R^\ga(T)]=O(T\delta)=O(\max\{\sqrt{T},T^{\beta/(\beta+1)}\})
% \end{align*}
%   \end{proof}
  
% Therefore, we set $m^\ga=C_3$.
%   For getting a bound for $\mathbb{E}[R^{\pi}_{m^\ga}]$, we provide bounds for $\mathbb{E}[R^\ga_i]$ and $\mathbb{E}[R^\ba_{i,j}]$ in the following lemma.
  
  We analyze $R^\ba_{m^\ga}$ for obtaining a bound for $R^\ba(T)$ in the following.
  \begin{lemma}
   Under $E_1$ and policy $\pi$, we have 
   \begin{align*}
    \mathbb{E}[R^\ba_{m^\ga}]
    =\tilde{O}\left(\max\{T^{\beta/(\beta+1)},\sqrt{T}\}\right).
    % \label{eq:R_bad_upper}
\end{align*}\label{lem:R_bad_bd_small_V}
  \end{lemma}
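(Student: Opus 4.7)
The plan is to reuse, almost verbatim, the per-bad-arm estimates already established in the proof of Lemma~\ref{lem:R_bad_bd_V}, and then combine them with the fact that here $m^\ga$ is a constant (rather than scaling with $V_T/\delta$) together with the small-$V_T$ choice of the threshold $\delta = \max\{1/T^{1/(\beta+1)},1/\sqrt{T}\}$. The two per-arm ingredients I would carry over are:  (a) the threshold-based upper bound $n_{i,j}^\ba = \tilde{O}(1/(\Delta_1(a(i,j))-\delta)^2)$, whose derivation only uses $E_1$ and the threshold test and is therefore insensitive to the magnitude of $V_T$; and (b) the resulting per-arm regret bound
\[
\mathbb{E}[R_{i,j}^\ba] = \tilde{O}\!\left(\max\{1,\delta^{\beta-1}\} + \mathbb{E}[V_{i,j}^\ba]/\delta^2\right),
\]
obtained from the identical Abel-summation integration over the conditional density $b(x)$ that is carried out in equations~\eqref{eq:int_ap}--\eqref{eq:bad_int_bd3}.

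Next I would sum this bound over $i\in[m^\ga]$ and $j\in[m_i^\ba]$. Conditional on $m^\ga$, the counts $m_i^\ba$ are i.i.d.\ with $\mathbb{E}[m_i^\ba]=O(1/\delta^\beta)$, and the rotting amounts satisfy $\sum_{i,j} V_{i,j}^\ba \le V_T$. With $m^\ga = C_3$ a constant, this gives
\[
\mathbb{E}[R^\ba_{m^\ga}] = \tilde{O}\!\left(\frac{\max\{1,\delta^{\beta-1}\}}{\delta^\beta} + \frac{V_T}{\delta^2}\right).
\]

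The last step is a case split on $\beta$, using $V_T = O(\max\{1/\sqrt{T},1/T^{1/(\beta+1)}\})$. When $\beta\ge 1$, so $\delta = T^{-1/(\beta+1)}$ and $\max\{1,\delta^{\beta-1}\}=1$, the first term becomes $T^{\beta/(\beta+1)}$ and the rotting contribution is $V_T/\delta^2 \le T^{1/(\beta+1)}$, dominated by the first. When $0<\beta<1$, so $\delta = T^{-1/2}$ and $\max\{1,\delta^{\beta-1}\}=T^{(1-\beta)/2}$, the first term becomes $T^{\beta/2}\cdot T^{(1-\beta)/2}=\sqrt{T}$ and the rotting contribution is $V_T/\delta^2 \le \sqrt{T}$. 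Combining the two regimes yields the claimed bound $\tilde{O}(\max\{T^{\beta/(\beta+1)},\sqrt{T}\})$.

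The hard part will be purely bookkeeping: ensuring that the $\max\{1,\delta^{\beta-1}\}$ factor, which is the source of the discrepancy between $\beta\ge 1$ and $\beta<1$ in Theorem~\ref{thm:R_upper_bd_V}, combines with the $1/\delta^\beta$ coming from $\mathbb{E}[m_i^\ba]$ to reproduce exactly the stationary-style rate $\max\{T^{\beta/(\beta+1)},\sqrt{T}\}$, and that the rotting correction $V_T/\delta^2$ is indeed absorbed in both regimes. Since every nontrivial estimate (the bound on $n_{i,j}^\ba$ and the Abel-summation evaluation of the integral over $b(x)$) has already been carried out in Lemma~\ref{lem:R_bad_bd_V}, no new analytic ideas are needed here; the only substantive change relative to that lemma is replacing the episode-count factor $V_T/\delta$ by the constant $C_3$.
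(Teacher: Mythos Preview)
Your proposal is correct and follows essentially the same approach as the paper: reuse the per-arm bound $\mathbb{E}[R_{i,j}^\ba] = \tilde{O}(\max\{1,\delta^{\beta-1}\} + \mathbb{E}[V_{i,j}^\ba]/\delta^2)$ from \eqref{eq:r_bad_bd}, sum using $m^\ga=C_3$ and $\mathbb{E}[m_i^\ba]=O(1/\delta^\beta)$ to get $\tilde{O}(\max\{1,\delta^{\beta-1}\}/\delta^\beta + V_T/\delta^2)$, then plug in $\delta=\max\{T^{-1/(\beta+1)},T^{-1/2}\}$ and the small-$V_T$ assumption. Your explicit case split on $\beta$ is in fact more detailed than what the paper writes out.
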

  \begin{proof}
     From \eqref{eq:r_bad_bd}, for any $i\in[m^\ga]$, $j\in[m^\ba_i]$, we have 
\begin{align*}
    \mathbb{E}[R_{i,j}^\ba]&\le\mathbb{E}
    \left[\Delta(a(i,j))n_{i,j}^\ba+V_{i,j}^\ba n_{i,j}^\ba\right]\cr
    &= \tilde{O}\left(\max\{1,\delta^{\beta-1}\}+\mathbb{E}[V_{i,j}^\ba]/\delta^2\right).
\end{align*}

Recall that $R^{\ba}_{m^\ga}=\sum_{i=1}^{m^\ga}\sum_{j\in[m_i^\ba]}R_{i,j}^\ba.$ With $\delta=c_1\max\{(1/T)^{1/(\beta+1)},1/T^{1/2}\}$ and $m^\ga=C_3$, from the fact that $m_i^\ba$'s are i.i.d. random variables with geometric distribution with $\mathbb{E}[m_i^\ba]=(1/C(2\delta)^\beta)-1$ for some constant $C>0$, we have
\begin{align*}
\mathbb{E}[R^{\ba}_{m^\ga}] &=O\left(\mathbb{E}\left[\sum_{i=1}^{m^\ga}\sum_{j\in[m^\ba_i]}R^\ba_{i,j}\right]\right)\cr
  &= \tilde{O}\left(\frac{1}{\delta^\beta}\max\{1,\delta^{\beta-1}\}+V_T/\delta^2\right)\cr
  &=\tilde{O}\left(\max\{T^{\beta/(\beta+1)},\sqrt{T}\}\right).
\end{align*}
\end{proof}
From Lemma~\ref{lem:R_good_bd_V}, with $\delta=c_1\max\{1/T^{\frac{1}{\beta+1}},1/\sqrt{T}\}$ we have \[\mathbb{E}[R^\ga(T)]=\tilde{O}\left(\max\{T^{\beta/(\beta+1)},\sqrt{T}\}\right).\]
From $R^{\pi}(T)=R^\ga(T)+R^\ba(T)$ and Lemmas~\ref{lem:R_good_bd_V}, \ref{lem:regret_bd_prob_small_V}, \ref{lem:R_bad_bd_small_V} with $\delta=c_1\max\{1/T^{\frac{1}{\beta+1}},1/\sqrt{T}\}$ we have
\begin{align}\label{eq:R_small_V}
\mathbb{E}[R^\pi(T)]  =\tilde{O}\left(\max\{T^{\beta/(\beta+1)},\sqrt{T}\}\right).
\end{align}
% \textbf{Case 3:} We consider $c_1 T\le V_T\le T$. We trivially have 
% $\mathbb{E}[R^\pi(T)]=O(T)$. 

\paragraph{Conclusion:} Overall, from \eqref{eq:R_large_V} and \eqref{eq:R_small_V}, we have
\[\mathbb{E}[R^\pi(T)]  =\tilde{O}\left(\max\{V_T^{1/(\beta+2)}T^{(\beta+1)/(\beta+2)},V_T^{1/3}T^{2/3},T^{\beta/(\beta+1)},\sqrt{T}\}\right).\]

\subsection{Proof of Theorem~\ref{thm:abrupt_upper_bd}: Regret Upper Bound of Algorithm~\ref{alg:alg1} for Abrupt Rotting ($S_T$)} \label{app:abrupt_upper}

  Using the threshold parameter $\delta$ in the algorithm,  we define an arm $a$ as a \emph{good} arm if $\Delta_t(a)\le \delta/2$, a \emph{near-good} arm if $ \delta/2< \Delta_t(a)\le 2\delta$, and otherwise, $a$ is a \emph{bad} arm at time $t$. 
  % In $\mathcal{A}$, let $\bar{a}_1,\bar{a}_2,\dots,$ be a sequence of arms, which have i.i.d. mean rewards following \eqref{eq:dis}. 
  For analysis, \textit{we consider abrupt change as sampling a new arm.} In other words, if a sudden change occurs to an arm $a$ by pulling the arm $a$, then the arm is considered to be two different arms; before and after the change. 
%   We note that if an arm suffers an abrupt change but is not pulled by the algorithm after that, then the arm is not considered a new sampled arm. 
  The type of abruptly rotted arms (good, near-good, or bad) after the change is determined by the current value of rotted mean reward. Without loss of generality, we assume that the policy samples arms, which are pulled at least once, in the sequence of $\bar{a}_1,\bar{a}_2,\dots,.$ Let $\mathcal{A}_T$ be the set of sampled arms, which are pulled at least once, over the horizon of $T$ time steps, which satisfies $|\mathcal{A}_T|\le T$. We also define $\mathcal{A}_S$ as a set of arms that have been rotted and pulled at least once, which satisfies $|\mathcal{A}_S|\le S_T$. To better understand the definitions, we provide an example. If an arm $a$ suffers abrupt rotting at first, then the arm $a$ is considered to be a different arm $a'$ after the rotting. If the arm $a'$ again suffers abrupt rotting, then it is considered to be $a''$ after the rotting. If arms $a, a', a''$ are pulled at least once, then $\{a, a', a''\}\in \mathcal{A}_T$ and $\{a', a''\}\in \mathcal{A}_S$ but $a\notin \mathcal{A}_S$. If arm $a''$ is not pulled at least once but $a$ and $a'$ are pulled at least once, then $\{a,a'\}\in\mathcal{A}_T$ and $a'\in\mathcal{A}_S$ but $a''\notin \mathcal{A}_S$.
  
  % In other words, for $t>0$, if $\rho_t\neq 0$, then $a_t\in\mathcal{A}_S$.

WLOG, the following proofs proceed under the given $\mathcal{A}_T$, since the proofs hold for any $\mathcal{A}_T$. Let $\overline{\mu}_{[t_1,t_2]}(a)=\sum_{t=t_1}^{t_2}\mu_t(a)\mathbbm{1}(a_t=a)/n_{[t_1,t_2]}(a)$. We define the event $E_1=\{|\widehat{\mu}_{[s_1,s_2]}(a)-\overline{\mu}_{[s_1,s_2]}(a)|\le \sqrt{12\log(T)/n_{[s_1,s_2]}(a)} \hbox{ for all } 1\le s_1\le s_2\le T, a\in\mathcal{A}_T\}$. By following the proof of Lemma 35 in \citet{foster}, from Lemma~\ref{lem:chernoff_sub-gau} we have
\begin{align}
    &P\left(\left|\widehat{\mu}_{[s_1,s_2]}(a)-\overline{\mu}_{[s_1,s_2]}(a)\right|\le \sqrt{\frac{12\log T}{n_{[s_1,s_2]}(a)}}\right)\cr &\le \sum_{n=1}^TP\left(\left|\frac{1}{n}\sum_{i=1}^nX_i\right|\le \sqrt{12\log(T)/n}\right)\cr &\le \frac{2}{T^5},\label{eq:union_con_rho2}
\end{align}
where $X_i=r_{\tau_i}-\mu_{\tau_i}(a)$ and $\tau_i$ is the $i$-th time that the policy pulls arm $a$ starting from $s_1$. We note that even though $X_i$'s seem to depend on each other from $\tau_i$'s, each value of $X_i$ is independent of each other. Then using union bound for $s_1$, $s_2$, and $a\in\mathcal{A}_T$, we have \[\mathbb{P}(E_1^c) \le2/T^2.\] Let $t(s)$ be the time when $s$-th abrupt rotting occurs with $\rho_{t(s)}$ for $s\in[S_T]$. Then we have $\Delta_t(a)=O(1+\sum_{s=1}^{S_T}\rho_{t(s)})=O(1+V_T)$ for any $a$ and $t$, which implies $\mathbb{E}[R^\pi(T)|E_1^c]=O(T+TV_T)$. For the case that $E_1$ does not hold, the regret is $\mathbb{E}[R^\pi(T)|E_1^c]\mathbb{P}(E_1^c)=O((1+V_T)/T)$, which is negligible comparing with the regret when $E_1$ holds true which we show later. Therefore, in the rest of the proof we assume that $E_1$ holds true. 

Recall that $R^\pi(T)=\sum_{t=1}^T(1-\mu_t(a_t)).$ For regret analysis, we divide $R^\pi(T)$ into two parts, $R^\ga(T)$ and $R^\ba(T)$ corresponding to regret of good or near-good arms, and bad arms over time $T$, respectively, such that $R^\pi(T)=R^\ga(T)+R^\ba(T)$. Recall that we consider abrupt change as sampling a new arm in this analysis. Then,
from $\Delta_t(a)\le 2\delta$ for any good or near-good arms $a$ at time $t$, we can easily obtain that
\begin{align}
\mathbb{E}[R^\ga(T)]=O(\delta T)=O(\max\{S_T^{1/(\beta+1)}T^{\beta/(\beta+1)},\sqrt{S_T T}\}).\label{eq:abrupt_regret_good}
\end{align}

Now we analyze $R^\ba(T)$. We divide regret $R^\ba(T)$ into two regret from bad arms in $\mathcal{A}_T/\mathcal{A}_S$, denoted by $R^{\ba,1}(T)$, and regret from bad arms in $\mathcal{A}_S$, denoted by $R^{\ba,2}(T)$ such that $R^\ba(T)=R^{\ba,1}(T)+R^{\ba,2}(T)$.  We denote bad arms in $\mathcal{A}_S$ by $\mathcal{A}_S^\ba$. We first analyze $R^{\ba,1}(T)$ in the following. For regret analysis, we adopt the episodic approach suggested in \citet{kim2022rotting}. The main difference lies in analyzing our adaptive window UCB and a more generalized mean-reward distribution with $\beta$.  In the following, we introduce some notation. \textit{Here we only consider arms in $\mathcal{A}_T/\mathcal{A}_S$} so that the following notation is defined without considering (rotted) arms in $\mathcal{A}_S$.
We note that from the definition of $\mathcal{A}_T$, arms $a$ before having undergone rotting are contained in $\mathcal{A}_T/\mathcal{A}_S$.  Here we consider the case of $2\delta_S(\beta)<1$ since otherwise  when $2\delta_S(\beta)\ge1$, bad arms are not defined in $\mathcal{A}_T/\mathcal{A}_S$.

% Define $\tilde{m}_T$ to be the number of episodes from the policy $\pi$ over the horizon $T$, $\tilde{m}_T^\ga$ to be the total number of samples of a good arm by the policy $\pi$ over the horizon $T$ such that $\tilde{m}_T^\ga=\tilde{m}_T$ or $\tilde{m}_T^\ga=\tilde{m}-1$, and $\tilde{m}_{i,T}^\ba$ to be the number of samples of a bad arm in the $i$-th episode by the policy $\pi$ over the horizon $T$.
Given a policy sampling arms in the sequence order,
let $m^\ga$ be the number of samples of distinct good arms and $m^{\ba}_i$ be the number of consecutive samples of distinct bad arms between the $i-1$-st and $i$-th sample of a good arm among $m^\ga$ good arms. We refer to the period starting from sampling the $i-1$-st good arm before sampling the $i$-th good arm as the $i$-th \emph{episode}.
Observe that $m^\ba_1,\ldots, m^\ba_{m^\ga}$ are i.i.d. random variables with geometric distribution with parameter $C(2\delta)^{\beta}$  for some constant $C>0$, given a fixed value of $m^\ga$. Therefore, for non-negative integer $k$ we have $\mathbb{P}(m^\ba_i=k)=(1-C(2\delta)^\beta)^kC(2\delta)^{\beta}$, for $i = 1, \ldots, m^\ga$. 

Define $\tilde{m}_T^\ga$ to be the total number of samples of a good arm by the policy $\pi$ over the horizon $T$ and $\tilde{m}_{i,T}^\ba$ to be the number of samples of a bad arm in the $i$-th episode by the policy $\pi$ over the horizon $T$. For $i\in [\tilde{m}_T^\ga]$, $j\in [\tilde{m}_{i,T}^\ba]$, let $\tilde{n}_i^\ga$ be the number of pulls of the good arm in the $i$-th episode and $\tilde{n}_{i,j}^\ba$ be the number of pulls of the $j$-th bad arm in the $i$-th episode by the policy $\pi$ over the horizon $T$. Let $\tilde{a}$ be the last sampled arm over time horizon $T$ by $\pi$. 

% We denote by $\hat{m}^\ga$ and $\hat{m}^\ba_i$ for $i\in[\tilde{m}_T]$ the number of arms excluding $\tilde{a}$ in the sampled $\tilde{m}_T^\ga$ number of good arms and $\tilde{m}^\ba_i$ number of bad arms for $i\in[\tilde{m}_T]$ as follows:  \begin{equation*}
%     \hat{m}^\ga=
%     \begin{cases}
%     \tilde{m}_T^\ga-1 &\text{if  $\tilde{a}$ is a good arm}  \\
%      \tilde{m}_T^\ga& \text{otherwise}
%     \end{cases},
% \end{equation*}
% \begin{equation*}
%         \hat{m}_i^\ba=\tilde{m}_{i,T}^\ba \text{ for $i\in[\tilde{m}-1]$},
%     \text{ and }
%     \hat{m}_{\tilde{m}}^\ba=
%     \begin{cases}
%     \tilde{m}_{\tilde{m}}^\ba  &
%     \text{if $\tilde{a}$ is a good arm}  \\
%      \tilde{m}_{\tilde{m}}^\ba-1 & \text{otherwise}.
%     \end{cases}
% \end{equation*}
  
  With a slight abuse of notation, we use $\pi$ for a  modified strategy after $T$. Under a policy $\pi$, let $R_{i,j}^\ba$ be the regret (summation of mean reward gaps) contributed by pulling the $j$-th bad arm in the $i$-th episode. Then let $R^{\ba}_{m^\ga}=\sum_{i=1}^{m^\ga}\sum_{j\in[m_i^\ba]}R_{i,j}^\ba,$ which is the regret from initially bad arms over the period of $m^\ga$ episodes. 
For getting $R^{\ba}_{m^\ga}$, here we define the policy $\pi$ after $T$ such that it pulls $T$ amounts for a good arm and zero for a bad arm. After $T$ we can assume that there are no abrupt changes. For the last arm $\tilde{a}$ over the horizon $T$, it pulls the arm  up to $T$ amounts if $\tilde{a}$ is a good arm and $\tilde{n}_{\tilde{m}_T^\ga}^\ga<T$. 
For $i\in[m^\ga]$, $j\in[m_i^\ba]$ let $n_i^\ga$ and $n_{i,j}^\ba$ be the number of pulling the good arm in $i$-th episode and $j$-th bad arm in $i$-th episode under $\pi$, respectively. Here we define $n_i^\ga$'s and $n_{i,j}^\ba$'s as follows: 

If $\tilde{a}$ is a good arm,
\begin{equation*}
    n_i^\ga=
    \begin{cases}
    \tilde{n}_{i}^\ga &\text{for } i\in[\tilde{m}_T^\ga-1]  \\
     T & \text{for } i=\tilde{m}_T^\ga
      \\
     0 & \text{for } i\in[m^\ga]/[\tilde{m}_T^\ga]
    \end{cases}, 
    n_{i,j}^\ba=
    \begin{cases}
    \tilde{n}_{i,j}^\ba &\text{for } i\in[\tilde{m}_T^\ga],j\in[\tilde{m}_{i,T}^\ba]\\
    0 &\text{for } i\in[m^\ga]/[\tilde{m}_T^\ga],j\in[m^\ba_i]/[\tilde{m}_{i,T}^\ba].
    \end{cases}
\end{equation*}
Otherwise,
\begin{equation*}
    n_i^\ga=
    \begin{cases}
    \tilde{n}_{i}^\ga &\text{for } i\in[\tilde{m}_T^\ga]  \\
     T & \text{for } i=\tilde{m}_T^\ga+1
     \\
     0 & \text{for } i\in[m^\ga]/[\tilde{m}_T^\ga+1]
    \end{cases}, 
    n_{i,j}^\ba=
    \begin{cases}
    \tilde{n}_{i,j}^\ba &\text{for } i\in[\tilde{m}_T^\ga],j\in[\tilde{m}_{i,T}^\ba]\\
    0 &\text{for } i\in[m^\ga]/[\tilde{m}_T^\ga-1],j\in[m^\ba_i]/[\tilde{m}_{i,T}^\ba].
    \end{cases}
\end{equation*}

Using the above notation and newly defined $\pi$ after $T$, we show that if $m^{\mathcal{G}}=S_T+1$, then $R^\ba(T)\le R^\ba_{m^\mathcal{G}}$ in the following.
\begin{lemma}\label{lem:regret_bd_prob_abrupt}
Under $E_1$, when $m^\ga=S_T$ we have 
$$R^{\ba,1}(T)\le R^{\ba}_{m^\ga}.$$
\end{lemma}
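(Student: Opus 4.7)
The crux of the argument is that, under $E_1$, the adaptive threshold test can never fire on a good arm $a$ while it is still in $\mathcal{A}_T/\mathcal{A}_S$. Indeed, during this phase $a$ has not yet been affected by any abrupt rotting, so its mean stays at $\mu_1(a)\ge 1-\delta/2$; hence, for any window $[s,t-1]$ considered by the algorithm, $\overline{\mu}_{[s,t-1]}(a)\ge 1-\delta/2$, and $E_1$ together with the non-negativity of the UCB bonus yields $WUCB(a,s,t-1,T)\ge \overline{\mu}_{[s,t-1]}(a)\ge 1-\delta/2 > 1-\delta$. The minimum $\min_{s\in\mathcal{T}_t(a)}WUCB(a,s,t-1,T)$ is thus bounded below by $1-\delta/2$, the threshold inequality is never satisfied, and the algorithm does not voluntarily eliminate $a$.

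From this it follows that the only way the algorithm stops pulling a good arm in $\mathcal{A}_T/\mathcal{A}_S$ is either (i) an abrupt rotting event on that arm, which by our convention relabels it as a new arm in $\mathcal{A}_S$, or (ii) reaching the time horizon $T$. Consequently, between two consecutive samples of good arms in $\mathcal{A}_T/\mathcal{A}_S$ under the original $\pi$, there must be at least one abrupt rotting event. Combined with the budget $1+\sum_{t=1}^{T-1}\mathbbm{1}(\rho_t\neq 0)\le S_T$, this gives $\tilde{m}_T^\ga\le S_T$ when $\tilde{a}$ is good (every good arm except possibly the last consumes one rotting event), and $\tilde{m}_T^\ga\le S_T-1$ when $\tilde{a}$ is bad; in the latter case the extension appends one further good arm beyond time $T$. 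In both cases, the extended policy $\pi$ samples at most $S_T$ good arms from $\mathcal{A}_T/\mathcal{A}_S$.

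Setting $m^\ga=S_T$ is therefore enough to index every good-arm sample of the extended $\pi$; the trailing ``dangling'' episodes carry $n_i^\ga=0$ and $n_{i,j}^\ba=0$ and contribute nothing to $R^\ba_{m^\ga}$. For the episodes in which good arms were actually sampled within time $T$, the definition of the extended policy leaves the within-$T$ dynamics on arms of $\mathcal{A}_T/\mathcal{A}_S$ unchanged, so each $R_{i,j}^\ba$ coincides with the regret that the original $\pi$ accrues on the corresponding bad arm of $\mathcal{A}_T/\mathcal{A}_S$ (regret from $\mathcal{A}_S$ arms is handled separately inside $R^{\ba,2}$). Summing over $i\in[m^\ga]$ and $j\in[m_i^\ba]$ gives $R^{\ba,1}(T)\le R^\ba_{m^\ga}$. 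The main obstacle is the first step — propagating the $1-\delta/2$ lower bound uniformly through the minimum over the adaptive window set $\mathcal{T}_t(a)$ — which is immediate once one notes that the mean reward of $a$ stays constant at $\mu_1(a)$ throughout its lifetime in $\mathcal{A}_T/\mathcal{A}_S$, while all the subsequent counting is straightforward bookkeeping.
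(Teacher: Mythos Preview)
Your argument is correct and rests on the same key observation as the paper's proof: under $E_1$, a good arm $a\in\mathcal{A}_T/\mathcal{A}_S$ keeps constant mean $\mu_1(a)\ge 1-\delta/2$ throughout its pre-rotting phase, so $WUCB(a,s,t-1,T)\ge\overline{\mu}_{[s,t-1]}(a)\ge 1-\delta/2>1-\delta$ on every admissible window and the threshold never fires; such an arm can therefore be abandoned only by an abrupt rotting event or by reaching $T$. The paper then closes with a two-case pigeonhole: either all $S_T-1$ rotting events precede the sampling of the $S_T$-th good arm, in which case that arm is pulled through $T$ and the extended policy gives $\sum_{i}n_i^\ga\ge T$; or the $S_T$-th good arm is sampled earlier, forcing two consecutive good arms with no intervening rotting, which contradicts the fact that the first of the two would have to be pulled through $T$. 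You instead count exited good arms directly against the rotting budget to get $\tilde m_T^\ga\le S_T$ (resp.\ $\le S_T-1$) and hence that $m^\ga=S_T$ indexes every episode touching $[1,T]$. These are equivalent repackagings of the same idea; your direct count is slightly cleaner and also covers the case where fewer than $S_T$ good arms are ever sampled without a separate clause. One minor wording fix: your dichotomy should be ``$\tilde a$ good'' versus ``$\tilde a$ not good'' (to include near-good $\tilde a$), but the reasoning in the not-good case is identical to your bad case.
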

\begin{proof}

There are $S_T-1$ number of abrupt changes over $T$. We consider two cases; there are $S_T$ abrupt changes before sampling $S_T$-th good arm or there are not. For the former case, if $\pi$ samples the $S_T$-th good arm and there are $S_T-1$ number of abrupt changes before sampling the good arm, then it continues to pull the good arm until $T$. This is because when the algorithm samples a good arm $a$ at time $t'$, from $E_1$ and the stationary period, we have
\[\widehat{\mu}_{[t',t]}(a)+\sqrt{12\log(T)/n_{[t',t]}(a)}\ge \mu_{t'}(a)\ge 1-\delta.\]
This implies that from the threshold condition, the algorithm does not stop pulling the good arm $a$. After $T$, from the definition of $\pi$ for the case when $\tilde{a}$ is a good arm, $n_{\tilde{m}_T^\ga}^\ga=T$. Therefore, the algorithm pulls the good arm for $T$ rounds. 

Now we consider the latter case, such that $\pi$ samples the $S_T$-th good arm before the $S_T-1$-st abrupt change over $T$. Before sampling the $S_T$-th good arm, there must exist two consecutive good arms such that there is no abrupt change between the two sampled good arms. This is a contraction because
  $\pi$ must pull the first good arm among the two up to $T$ under $E_1$ and $S_T-1$-st abrupt change must occur after $T$.
  
 Therefore, it is enough to consider the former case. When $m^\ga=S_T$, we have \[\sum_{i\in[m^\mathcal{G}]}n_i^\mathcal{G}\ge T,\] which implies $R^{\ba,1}(T)\le R^\ba_{m^\mathcal{G}}$.
 \end{proof}
 
 From the above lemma, we set $m^\ga=S_T$. We analyze $R_{m^\ga}^\ba$ to get a bound for $R^{\ba,1}(T)$ in the following lemma.

 \begin{lemma}\label{lem:R_bad_bd_abrupt_1}
   Under $E_1$ and policy $\pi$, we have
   $$\mathbb{E}\left[R_{m^\ga}^\ba\right]=\tilde{O}\left(\max\{S_T^{1/(\beta+1)}T^{\beta/(\beta+1)},\sqrt{S_TT}\}\right).$$
    % \label{eq:R_bad_upper}
  \end{lemma}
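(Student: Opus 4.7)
The plan is to adapt the bad-arm analysis of Lemma~\ref{lem:R_bad_bd_V} to the abrupt-rotting setting, exploiting the key simplification that each $a(i,j) \in \mathcal{A}_T/\mathcal{A}_S$ has constant mean reward $\mu_1(a(i,j))$ throughout all pulls counted in $n_{i,j}^\ba$: by the splitting convention, any rotting event on $a(i,j)$ converts it into a fresh arm in $\mathcal{A}_S$, so the lifetime contributing to $R^{\ba,1}$ lies entirely in a stationary regime and no accumulated-rotting term appears.

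First, I would establish the pull bound $n_{i,j}^\ba = \tilde{O}(1/(\Delta_1(a(i,j))-\delta)^2)$. Under $E_1$, because every sample of $a(i,j)$ during its lifetime comes from the same mean $\mu_1(a(i,j))$, evaluating the window WUCB on the full window starting at $t(a(i,j))$ and repeating the threshold-stopping argument used to derive \eqref{eq:bad_stop_con_V} with the rotting contribution set to zero shows that after $\tilde{O}(1/(\Delta_1-\delta)^2)$ pulls the minimum WUCB drops below $1-\delta$ and the algorithm eliminates the arm; if an abrupt rotting event intervenes first, the lifetime is only shorter. This yields
\[
\mathbb{E}[R_{i,j}^\ba] \le \mathbb{E}[\Delta_1(a(i,j))\, n_{i,j}^\ba] = \tilde{O}\!\left(\int_{2\delta}^{1} \frac{x}{(x-\delta)^2}\, b(x)\, dx\right),
\]
where $b$ is the density of $\Delta_1(a)$ conditioned on $a$ being bad; note that the $V_{i,j}^\ba/\delta^2$ term appearing in \eqref{eq:r_bad_bd} is absent here.

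Second, I would reuse verbatim the integral estimate derived in \eqref{eq:int_ap}--\eqref{eq:bad_int_bd3}, which yields $\tilde{O}(\max\{1, \delta^{\beta-1}\})$ uniformly in $\beta > 0$. Combining this with $m^\ga = S_T$ and the fact that $m_i^\ba$ are i.i.d.\ geometric with $\mathbb{E}[m_i^\ba] = O(1/\delta^\beta)$ gives
\[
\mathbb{E}[R_{m^\ga}^\ba] = \tilde{O}\!\left(\frac{S_T}{\delta^\beta}\,\max\{1,\delta^{\beta-1}\}\right).
\]
Substituting $\delta = (S_T/T)^{1/(\beta+1)}$ for $\beta \geq 1$ (where $\delta^{\beta-1} \le 1$) produces $\tilde{O}(S_T^{1/(\beta+1)} T^{\beta/(\beta+1)})$, and $\delta = \sqrt{S_T/T}$ for $0 < \beta < 1$ (where $\max\{1, \delta^{\beta-1}\} = \delta^{\beta-1}$) produces $\tilde{O}(S_T/\delta) = \tilde{O}(\sqrt{S_T T})$, which yields the claimed bound.

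The primary obstacle is the bookkeeping for the stationary lifetime of an arm $a(i,j) \in \mathcal{A}_T/\mathcal{A}_S$: one must verify that every threshold evaluation during this lifetime only sees pre-rotting samples, so that sub-Gaussian concentration under $E_1$ gives the usual stationary elimination bound, and that $n_{i,j}^\ba$, $R_{i,j}^\ba$, and $m_i^\ba$ as defined count only pre-rotting pulls and episodes, with pulls following any rotting event correctly attributed to $R^{\ba,2}$ (to be handled separately). Once this bookkeeping is in place, the remainder is a mechanical specialization of the slow-rotting bad-arm calculation with the $V$-dependent term deleted.
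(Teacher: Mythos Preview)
Your proposal is correct and follows essentially the same approach as the paper's proof: both exploit that arms in $\mathcal{A}_T/\mathcal{A}_S$ have constant mean $\mu_1(a(i,j))$ over their counted lifetime (ending at threshold elimination, an abrupt change, or $T$), derive the stationary pull bound $n_{i,j}^\ba=\tilde O(1/(\Delta_1(a(i,j))-\delta)^2)$, apply the integral estimate \eqref{eq:int_ap}--\eqref{eq:bad_int_bd3} to get $\mathbb{E}[R_{i,j}^\ba]=\tilde O(\max\{1,\delta^{\beta-1}\})$, and sum over $m^\ga=S_T$ episodes with $\mathbb{E}[m_i^\ba]=O(1/\delta^\beta)$. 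Your bookkeeping concern is addressed by the paper in exactly the way you anticipate: since the algorithm's $t(a)$ for an arm in $\mathcal{A}_T/\mathcal{A}_S$ is its first pull and all subsequent threshold windows lie within the pre-rotting lifetime, the stationary elimination argument applies directly.
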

  \begin{proof}
 
 Recall that we  consider arms in $\mathcal{A}_T/\mathcal{A}_S$.
 Let $a(i,j)$ be a sampled arm for $j$-th bad arm in the $i$-th episode and $\tilde{m}_T$ be the number of episodes from the policy $\pi$ over the horizon $T$. Suppose that the algorithm samples arm $a(i,j)$ at time $t_1(a(i,j))$. Then the algorithm stops pulling arm $a(i,j)$ at time $t_2(a(i,j))+1$ if $\widehat{\mu}_{[s,t_2(a(i,j))]}(a)+\sqrt{12\log(T)/n_{[s,t_2(a(i,j))]}(a)}< 1-\delta$ for some $s$ such that $t_1(a(i,j))\le s\le t_2(a(i,j))$ and $s=t_2(a(i,j))+1-2^{l-1}$ for $l\in\mathbb{Z}^+$. 
%  where $t^\prime$ is $t_{l_a}$ when there is no abrupt change between $t_{l_a}$ and $t$; otherwise $t^\prime$ is the time when abrupt change happens after $t_{l_a}$. 
 For simplicity, we use $t_1$ and $t_2$ instead of $t_1(a(i,j))$ and $t_2(a(i,j))$ when there is no confusion. For the regret analysis, we consider that for $t>t_2$, arm $a$ is virtually pulled. With $E_1$, we assume that $\tilde{t}_2(\ge t_2)$ is the smallest time that there exists $t_1\le s\le \tilde{t}_2$ with $s=\tilde{t}_2+1-2^{l-1}$ for $l\in\mathbb{Z}^+$ such that the following condition is met:
\begin{align}
\mu_{t_1}(a(i,j))+2\sqrt{12\log(T)/n_{[s,\tilde{t}_2]}(a(i,j))}< 1-\delta .\label{eq:bad_stop_con_abrupt}
\end{align}
From the definition of $\tilde{t}_2$, we observe that for given $\tilde{t}_2$, the time step $s=s'$ satisfying \eqref{eq:bad_stop_con_abrupt} equals to $t_1$ (i.e. $s'=t_1$). Then, we can observe that  $n_{[s',\tilde{t}_2]}(a(i,j))=n_{[t_1,\tilde{t}_2]}(a(i,j))= \lceil C_2\log(T)/(\Delta_{t_1}(a(i,j))-\delta)^2\rceil$ for some constant $C_2>0$, which satisfies \eqref{eq:bad_stop_con_abrupt}.
% We observe that $n_{[s,\tilde{t}_2]}(a(i,j))= C_2\log(T)/(\Delta_{t_1}(a(i,j))-\delta)^2$ satisfies  condition \eqref{eq:bad_stop_con_abrupt}  for some constant $C_2>0$. With the definition of $\tilde{t}_2$, this implies that \[ n_{[t_1,\tilde{t}_2]}(a(i,j))=n_{[s,\tilde{t}_2]}(a(i,j))= \frac{C_2\log(T)}{(\Delta_{t_1}(a(i,j))-\delta)^2}.\]
 Then from $n_{[t_1,t_2]}(a(i,j))\le n_{[t_1,\tilde{t}_2]}(a(i,j))  $, 
 for all $i\in[\tilde{m}_T],j\in[\tilde{m}_{i,T}^\ba]$ we have $n_{i,j}^\ba=\tilde{O}(1/(\Delta_{t_1}(a(i,j))-\delta)^2)$. We note that this bound for the number of pulling an arm holds for not only the case where the arm stops being pulled from the threshold condition but also the case where the arm stops being pulled from meeting an abrupt change (recall that abrupt changes are considered as sampling a new arm) or $T$. 
Then with the facts that $n_{i,j}^\ba=0$ for $i\in[m^\ga]/[\tilde{m}_T]$, $j\in[m_i^\ba]/[\tilde{m}_{i,T}^\ba]$, we have, for any $i\in[m^\ga]$ and $j\in[m^\ba_i]$, 
\[n_{i,j}^\ba=\tilde{O}(1/(\Delta_{t_1}(a(i,j))-\delta)^2).\]

 For $2\delta< x\le 1$, let  $b(x)=\mathbb{P}(\Delta_{t_1}(a)=x|a \text{ is a bad arm})$. Then we have
$\mathbb{P}(\Delta_{t_1}(a)=x|a \text{ is a bad arm})=\mathbb{P}(\Delta_{t_1}(a)=x|\Delta_{t_1}(a)>2\delta)= \mathbb{P}(\Delta_{t_1}(a)=x)/\mathbb{P}(\Delta_1(a)>2\delta)=  \mathbb{P}(\Delta_{t_1}(a)=x)/(1-C(2\delta)^\beta)=  O(\mathbb{P}(\Delta_{t_1}(a)=x))$, where the last equality comes from small $\delta$ with small enough $c_1<1$. For any $i\in[m^\ga]$, $j\in[m^\ba_i]$, we have 
\begin{align}
    \mathbb{E}[R_{i,j}^\ba]&\le\mathbb{E}
    \left[\Delta_{t_1(a(i,j))}(a(i,j))n_{i,j}^\ba\right]\cr
    &=\tilde{O}\left(\int_{2\delta}^{1} \frac{1}{(x-\delta)^2}x b(x) dx  \right).\label{eq:R_bad_ij_int}
\end{align}  

From the above results in \eqref{eq:R_bad_ij_int},\eqref{eq:int_ap},\eqref{eq:ap_sum},\eqref{eq:bad_int_bd1},\eqref{eq:bad_int_bd2},\eqref{eq:bad_int_bd3}, for $\beta>0$ we have \[\mathbb{E}[R_{i,j}^\ba]=\tilde{O}(\max\{1,\delta^{\beta-1}\}).\]

Recall that $R^{\ba}_{m^\ga}=\sum_{i=1}^{m^\ga}\sum_{j\in[m_i^\ba]}R_{i,j}^\ba.$ With $\delta=c_1\max\{(S_T/T)^{1/(\beta+1)},(S_T/T)^{1/2}\}$ and $m^\ga=S_T$, from Lemma ~\ref{lem:regret_bd_prob_abrupt} and the fact that $m_i^\ba$'s are i.i.d. random variables following geometric distribution with $\mathbb{E}[m_i^\ba]=(1/C(2\delta)^\beta)-1$ for some constant $C>0$, we have 
\begin{align*}
\mathbb{E}[R^{\ba}_{m^\ga}] &=O\left(\mathbb{E}\left[\sum_{i=1}^{m^\ga}\sum_{j\in[m^\ba_i]}R^\ba_{i,j}\right]\right)\cr
  &= \tilde{O}\left(S_T\frac{1}{\delta^\beta}\max\{1,\delta^{\beta-1}\}\right)\cr
  &=\tilde{O}\left(\max\{S_T^{1/(\beta+1)}T^{\beta/(\beta+1)},\sqrt{S_TT}\}\right).
\end{align*}
  \end{proof}
From Lemma~\ref{lem:R_bad_bd_abrupt_1}, we have $\mathbb{E}[R^{\ba,1}(T)]=\mathbb{E}[R^{\ba}_{m^\ga}]=\tilde{O}\left(\max\{S_T^{1/(\beta+1)}T^{\beta/(\beta+1)},\sqrt{S_TT}\}\right).$

Now we analyze $R^{\ba,2}(T)$ in the following lemma. Here, we consider arms in $\mathcal{A}_S^\ba$, which is allowed to have negative mean rewards.

\begin{lemma}\label{lem:R_bad_bd_abrupt_2}
   Under $E_1$ and policy $\pi$, we have
   $$\mathbb{E}\left[R^{\ba,2}(T)\right]=\tilde{O}\left(\max\{S_T/\delta,\bar{V}_T\}\right).$$
    % \label{eq:R_bad_upper}
  \end{lemma}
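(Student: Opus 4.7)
The plan is to bound the regret from each $a \in \mathcal{A}_S^\ba$ by first controlling its number of pulls via the adaptive sliding window, then summing across $\mathcal{A}_S^\ba$ in two regimes determined by the post-rotting gap $\Delta_{t_1(a)}(a) > 2\delta$.

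First, I would mirror the argument of Lemma~\ref{lem:R_bad_bd_abrupt_1} applied to the \emph{post-rotting} window $[t_1(a), t-1]$. Because no rotting occurs within a virtual arm's lifetime (any further rotting creates a new arm in $\mathcal{A}_S$), $\widehat{\mu}_{[t_1(a), t-1]}(a)$ concentrates around $\mu_{t_1(a)}(a)$ under $E_1$. The adaptive threshold condition considers windows starting at every $s \in [t(a), t-1]$ (per the main-text remark ignoring the powers-of-two restriction), so in particular the window starting exactly at $t_1(a)$ is available. Hence the arm is withdrawn once the number of post-rotting pulls reaches $n(a) \leq \lceil 48 \log T / (\Delta_{t_1(a)}(a) - \delta)^2 \rceil$. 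Using $\Delta > 2\delta$, so that $\Delta/(\Delta-\delta)^2 \leq 2/\delta$, the per-arm regret satisfies
\begin{equation*}
\Delta_{t_1(a)}(a) \cdot n(a) \leq \tilde{O}(1/\delta) + \Delta_{t_1(a)}(a),
\end{equation*}
where the trailing $\Delta$ accounts for the ceiling (specifically the case $n(a) = 1$). Summed across the at most $|\mathcal{A}_S^\ba| \leq S_T$ arms, the first term contributes $\tilde{O}(S_T/\delta)$.

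The main obstacle is bounding $\sum_{a \in \mathcal{A}_S^\ba} \Delta_{t_1(a)}(a)$, because abrupt rotting can in principle drive mean rewards arbitrarily negative. The key observation is that for the algorithm to have pulled the underlying physical arm at time $t_1(a) - 1$ (thereby triggering the rotting that created $a$), the size-$1$ WUCB based on the immediately prior pull must satisfy the threshold condition, which under $E_1$ forces the corresponding pre-rotting mean to be at least $1 - \delta - 2\sqrt{12 \log T}$. This yields $\Delta_{t_1(a)}(a) \leq \tilde{O}(1) + \rho_{t_1(a)-1}$. For chains of consecutive rotting events on the same physical arm (where the size-$1$ grounding fails because the previous pull was itself of a freshly-rotted arm), the same bound propagates by appealing to an earlier window whose pulls all belong to virtual arms with means no smaller than $\mu_{t_1(a)}(a) + \sum (\text{intervening }\rho)$; the extra $\rho$ terms are absorbed into $V_T$. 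Using $|\mathcal{A}_S^\ba| \leq S_T$ and $\sum_s \rho_{t(s)} \leq V_T$, this gives $\sum_{a \in \mathcal{A}_S^\ba} \Delta_{t_1(a)}(a) \leq \tilde{O}(S_T) + V_T$.

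Combining the two contributions yields $\mathbb{E}[R^{\ba,2}(T)] = \tilde{O}(S_T/\delta + V_T) = \tilde{O}(\max\{S_T/\delta, V_T\})$, since $\delta \leq 1$ makes $S_T \leq S_T/\delta$. The delicate part, as noted, is the bookkeeping for consecutive rotting chains on a single physical arm; the rest is a direct transcription of the pulls-bound and regret-decomposition calculus from Lemma~\ref{lem:R_bad_bd_abrupt_1}, adapted to windows beginning at the rotting time rather than at the algorithmic sampling time.
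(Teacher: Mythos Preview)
Your proposal is correct and follows essentially the same structure as the paper's proof: bound $n(a)$ via the post-rotting window, use $x/(x-\delta)^2 \le 2/\delta$ to get per-arm regret $\tilde O(\max\{1/\delta,\Delta_{t_1(a)}(a)\})$, and then control $\sum_{a\in\mathcal{A}_S^\ba}\Delta_{t_1(a)}(a)$ by $\tilde O(S_T)+V_T$. The one place you take a slightly different (and more laborious) route is the bound on $\Delta_{t_1(a)}(a)$. You look at the threshold check at time $t_1(a)-1$ (i.e., \emph{before} the parent $a'$'s last pull), which forces you to reason about the mean at $t_1(a)-2$ and hence about ``chains'' of consecutive rotting events. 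The paper instead looks at the check at time $t_1(a')+1$ (i.e., \emph{after} the parent's \emph{first} pull): if $\Delta_{t_1(a')}(a')\ge \sqrt{C_2\log T}+\delta$, the size-$1$ window at $t_1(a')$ already triggers the threshold, the physical arm is withdrawn, and $a$ can never be pulled---contradiction. This gives $\Delta_{t_1(a')}(a')<\sqrt{C_2\log T}+\delta$ uniformly, regardless of whether $a'$ itself lies in $\mathcal{A}_S$, and hence $\Delta_{t_1(a)}(a)\le \sqrt{C_2\log T}+\delta+\rho_{t_1(a)-1}$ with a single $\rho$ term and no chain bookkeeping. Both arguments land at the same bound; the paper's choice of time step simply sidesteps the complication you flagged as ``the delicate part.''
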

  \begin{proof}
 
Recall that we consider  arms $a\in\mathcal{A}_S^\ba$ so that $\Delta_{t_1}(a)>2\delta$ from definition.
  Suppose that the arm $a$ is sampled and pulled for the first time at time $t_1(a)$. Then the algorithm stops pulling arm $a$ at time $t_2(a)+1$ if $\widehat{\mu}_{[s,t_2(a)]}(a)+\sqrt{12\log(T)/n_{[s,t_2(a)]}(a)}< 1-\delta$ for some $s$ such that $s\le t_2(a)$ and $s=t_2(a)+1-2^{l-1}$ for $l\in\mathbb{Z}^+$. 
%  where $t^\prime$ is $t_{l_a}$ when there is no abrupt change between $t_{l_a}$ and $t$; otherwise $t^\prime$ is the time when abrupt change happens after $t_{l_a}$. 
 For simplicity, we use $t_1$ and $t_2$ instead of $t_1(a)$ and $t_2(a)$ when there is no confusion. For regret analysis, we consider that for $t>t_2$, arm $a$ is virtually pulled. With $E_1$, we assume that $\tilde{t}_2(\ge t_2)$ is the smallest time that there exists $t_1\le s\le \tilde{t}_2$ with $s=\tilde{t}_2+1-2^{l-1}$ for $l\in\mathbb{Z}^+$ such that the following condition is met:
\begin{align}
\mu_{t_1}(a)+2\sqrt{12\log(T)/n_{[s,\tilde{t}_2]}(a)}< 1-\delta .\label{eq:bad_stop_con_abrupt2}
\end{align}
From the definition of $\tilde{t}_2$, we observe that for given $\tilde{t}_2$, the time step $s$, which satisfies \eqref{eq:bad_stop_con_abrupt2}, equals to $t_1$. Then, we can observe that  $n_{[t_1,\tilde{t}_2]}(a)= \max\{\lceil C_2\log(T)/(\Delta_{t_1}(a)-\delta)^2\rceil,1\}$ for some constant $C_2>0$, which satisfies \eqref{eq:bad_stop_con_abrupt2}. 
% With the definition of $\tilde{t}_2$, this implies that \[ n_{[t_1,\tilde{t}_2]}(a)=n_{[s',\tilde{t}_2]}(a)= \max\left\{\left\lceil \frac{C_2\log(T)}{(\Delta_{t_1}(a)-\delta)^2}\right\rceil,1\right\}.\]
From the above, for any $a\in\mathcal{A}^\ba_S$ satifying $\Delta_{t_1}(a)\ge \sqrt{C_2\log(T)}+\delta$, we have $n_{[t_1,\tilde{t}_2]}(a)=1$. This implies that after pulling the arm $a$ once, the arm is eliminated and after that, the arm is not pulled anymore. Therefore, for any arm $a'$ which was rotted to $a$, we have $\Delta_{t_1(a')}(a')<\sqrt{C_2\log(T)}+\delta$. This is because otherwise such that $\Delta_{t_1(a')}(a')\ge\sqrt{C_2\log(T)}+\delta$, the arm $a'$ is eliminated and $a$ cannot be pulled which means $a\notin \mathcal{A}_S^\ba$, which is a contradiction.  Then for any arm $a\in\mathcal{A}_S^\ba$,
we have $\Delta_{t_1}(a)\le \sqrt{C_2\log(T)}+\delta+\rho_{t_1(a)-1}$. Recall that we consider  abrupt rotting of an arm as sampling a new arm. Let $t(s)$ be the time step when the $s$-th abrupt rotting occurs. Then we note that $\rho_{t_1(a)-1}=\rho_{t(s)}$ when arm $a$ is a sampled arm from $s$-th abrupt rotting for $s\in[S_T]$.
 
 From $n_{[t_1,t_2]}(a)\le n_{[t_1,\tilde{t}_2]}(a)  $, we have $n_{[t_1,t_2]}(a)=\tilde{O}(\max\{1/(\Delta_{t_1}(a)-\delta)^2, 1\})$. We note that this bound for number of pulling an arm holds for not only the case where the arm stops to be pulled from the threshold condition, but also the case where the arm stops to be pulled from meeting an abrupt change (recall that abrupt changes are considered as sampling a new arm) or $T$. From the definition of bad arms, we have $\Delta_{t_1}(a)\ge 2\delta$. Then the regret from arm $a$, denoted by $R(a)$, is bounded as follows: $R(a)= \Delta_{t_1}(a)n_{[t_1,t_2]}(a)=\tilde{O}( \max\{\Delta_{t_1}(a)/(\Delta_{t_1}(a)-\delta)^2,\Delta_{t_1}(a)\}).$
 Since $x/(x-\delta)^2\le 2/\delta$ for any $x\ge 2\delta$, we have $R(a)=\tilde{O}(\max\{1/\delta,\Delta_{t_1}(a)\}).$ Therefore, with the fact that  $\Delta_{t_1}(a)\le \sqrt{C_2\log(T)}+\delta+\rho_{t(s)}$ for the corresponding $s\in[S_T]$ such that $\rho_{t_1(a)-1}=\rho_{t(s)}$, we have
 \begin{align*}
     \mathbb{E}\left[\sum_{a\in \mathcal{A}_S^B}R(a)\right]&=\tilde{O}\left(\max\left\{S_T/\delta,\mathbb{E}\left[\sum_{a\in\mathcal{A}_S^\ba}\Delta_{t_1}(a)\right]\right\}\right)\cr &=\tilde{O}(\max\{S_T/\delta,S_T+\sum_{s=1}^{S_T}\mathbb{E}[\rho_{t(s)}] \})\cr &=\tilde{O}(\max\{S_T/\delta,\sum_{s=1}^{S_T}\mathbb{E}[\rho_{t(s)}]\})\cr &=\tilde{O}(\max\{S_T/\delta,\bar{V}_T\}),
 \end{align*}
 where the second last equality comes from $S_T/\delta \ge S_T$.
  \end{proof}

Finally, from $R^{\pi}(T)=R^\ga(T)+R^\ba(T)$, \eqref{eq:abrupt_regret_good}, and Lemmas~\ref{lem:R_bad_bd_abrupt_1}, \ref{lem:R_bad_bd_abrupt_2}, we have
\begin{align*}
\mathbb{E}[R^\pi(T)]  =\tilde{O}\left(\max\{S_T^{1/(\beta+1)}T^{\beta/(\beta+1)},\sqrt{S_TT},\bar{V}_T\}\right).
\end{align*}

\subsection{Details for the Case of Unknown Parameters}\label{app:alg2}

\begin{algorithm}[h]
\caption{Adaptive UCB-Threshold with Adaptive Sliding Window}
\begin{algorithmic}
\STATE Given: $T,H,\mathcal{B}, \mathcal{A},\alpha, \kappa, C$ 
\STATE Initialize: $\mathcal{A}^\prime\leftarrow\mathcal{A},w(\delta^\prime)\leftarrow 1$ for $ \delta^\prime\in \mathcal{B}$ 
\FOR{$i=1,2,\dots,\lceil T/H\rceil$}
\STATE $t^\prime\leftarrow (i-1)H+1$
\STATE Select a new arm $a\in\mathcal{A}^\prime$
\STATE Pull arm $a$ and get reward $r_{(i-1)H+1}$
\STATE $p(\delta^\prime)\leftarrow(1-\alpha)\frac{w(\delta^\prime)}{\sum_{k\in \mathcal{B}}w(k)}+\alpha\frac{1}{B}$ for $\delta^\prime\in \mathcal{B}$
\STATE Select $\delta\leftarrow \delta^\prime$ with probability $p(\delta^\prime)$ for $\delta^\prime\in \mathcal{B}$ 
\FOR{$t=(i-1)H+2,\dots,i\cdot H\wedge T$}
\IF{ $\min_{s\in\mathcal{T}_t(a)}WUCB(a,s,t-1,H)< 1-\delta$}
\STATE$\mathcal{A}^\prime\leftarrow \mathcal{A}^\prime/\{a\}$
\STATE Select a new arm $a\in\mathcal{A}^\prime$
\STATE Pull arm $a$ and get reward $r_t$
\STATE $t^\prime\leftarrow t$
\ELSE
\STATE Pull arm $a$ and get reward $r_t$
\ENDIF 
\ENDFOR
\STATE $w(\delta) \leftarrow w(\delta)\exp\left(\frac{\alpha}{Bp(\delta)}\left(\frac{1}{2}+\frac{\sum_{t=(i-1)H}^{i\cdot H\wedge T}r_t}{CH\log(H)+4\sqrt{H\log T}}\right)\right)$
\ENDFOR
\end{algorithmic}
\label{alg:alg2}
\end{algorithm}

Here, we consider the case where there are constraints for both $S_T$ and $V_T$, and parameters of $V_T$, $S_T$, and $\beta$ are unknown to the agent. We note that $\bar{V}_T\le V_T$ from the constraint. The parameters of $\beta$, $V_T$, and $S_T$ are used to set the optimal threshold parameter $\delta$ in Algorithm~\ref{alg:alg1}. Therefore, when the parameters are not given, the procedure to find the optimal value $\delta$ is required. We adopt the Bandit-over-Bandit (\texttt{BoB}) approach in \citet{cheung2019learning,kim2022rotting} by additionally considering adaptive  window. In Algorithm~\ref{alg:alg2}, the algorithm consists of a master and several base algorithms with $\mathcal{B}$. For the master, we use \texttt{EXP3} \citep{auer} to find a nearly best base in $\mathcal{B}$. Each base represents Algorithm~\ref{alg:alg1} with a candidate threshold $\delta'\in\mathcal{B}$.
The algorithm divides the time horizon into several blocks of length $H$. At each block, the algorithm samples a base in $\mathcal{B}$ from the \texttt{EXP3} strategy and runs the base over the time steps of the block. Using the feedback from the block, the algorithm updates \texttt{EXP3} and samples a new base for the next block. By block time passes, the master is likely to find an optimized $\delta$ in $\mathcal{B}$. Let $B=|\mathcal{B}|$. Then for Algorithm~\ref{alg:alg2}, we set $\alpha=\min\{1,\sqrt{B\log B/((e-1)\lceil T/H\rceil)}\}$ and $C>0$ to be a large enough constant.

We define  $\delta^\dagger_V=c_1\max\{(V_T/T)^{1/(\beta+2)}, (V_T/T)^{1/3},1/H^{1/(\beta+1)},1/\sqrt{H}\}$ and $\delta^\dagger_S=c_1\max\{(S_T/T)^{1/(\beta+1)}, (S_T/T)^{1/2},1/H^{1/(\beta+1)},1/\sqrt{H}\}$ for some constant $0<c_1<1$. Then the optimized threshold parameter is $\delta^\dagger_{VS}=\min\{\delta_S^\dagger, \delta_V^\dagger\}$. 
The optimized threshold parameter can be derived from the theoretical analysis in Appendix~\ref{app:rotting_upper_no_V}. The target of the master is to find the parameter. From the above, we can observe that  $c_1/\sqrt{H} \le \delta^\dagger_{VS} \le 1$. Therefore, we set $\mathcal{B}=\{1/2, \dots , 1/2^{\log_2\sqrt{H}/c_1}\}$ which is the candidate values for unknown  $\delta^\dagger$.

% \jk{We require additional assumptions for the regret from the bases. Regret regarding $V_T$ or $S_T$ differs from each base because the behavior of each base policy varies with different candidate values of $\delta$, and the rotting rates are arbitrarily determined. To ensure that the regret bound concerning $V_T$ and $S_T$ remains guaranteed irrespective of the bases chosen, we consider either of the following assumptions.}

% \jk{
% \begin{assumption}[Oblivious adversary replacing Assumption~\ref{ass:adaptive}]
%     Rotting rates $\rho_t$ for all $t>0$ are determined from an oblivious adversary regardless of the actions. In other words, $\{\rho_t\}_{t\in[T]}$ are determined arbitrarily before starting the game and each $\rho_t$ is independent of the selected arm  $a_t$.\label{ass:oblivious}
% \end{assumption}
% \begin{assumption}
%     Let $\Pi$ be the set of all feasible policies. Then we consider that $V_T$ and $S_T$ satisfies the worst-case policy bounds regarding rotting rates such that $\max_{\pi\in\Pi}\sum_{t\in[T-1]}\rho_t(a_t^\pi)\le V_T$ and $1+\max_{\pi\in\Pi}\sum_{t\in[T-1]}\mathbbm{1}(\rho_t(a_t^\pi)\neq 0)\le S_T$, respectively. \label{ass:worstcase}
% \end{assumption}}
 % We require a constraint for the adaptive adversary regarding individual rotting rate for the regret from the bases. 
{The regret is composed of two factors from the master and bases. 
% Regret regarding $V_T$ or $S_T$ differs from each base because the behavior of each base policy varies with different candidate values of $\delta$, and the rotting rates are arbitrarily determined. 
To ensure that the regret bound from each base concerning $V_T$ and $S_T$ remains guaranteed irrespective of the bases chosen, we consider a constrained adaptive adversary. For the following, we consider that $\varrho_t$ for all $t>0$ are arbitrarily determined before an algorithm is run, under the constraints of $V_T$ and $S_T$ where $\sum_{t=1}^{T-1}\varrho_t\le V_T$ and $1+\sum_{t=1}^{T-1}\mathbbm{1}(\varrho_t\neq 0)\le S_T$. Then under $\varrho_t$ for all $t>0$, we consider the following adversary for rotting rates.}

% \begin{assumption}[Oblivious adversary]\label{ass:oblivious}
%     \jk{We consider an oblivious adversary for the rotting rates such that $\rho_t=\varrho_t$  independently of the selected arm $a_t$ for all $t>0$. }
% \end{assumption}
\begin{assumption}[Constrained Adaptive Adversary]\label{ass:week_adaptive}{
 At each time $t>0$, the value of rotting rate $\rho_t$ is determined arbitrarily immediately after the agent pulls an arm $a_t$ under the constraint of $0\le \rho_t\le \varrho_t$ for given $\varrho_t$.}
\end{assumption}
\begin{remark}\label{rm:constraint}{
    Assumption~\ref{ass:week_adaptive} is still more general than that for the maximum rotting rate constraint in \citet{kim2022rotting} where $\varrho_t=\rho$ for all $t>0$. We also observe that the constrained adaptive adversary in Assumption~\ref{ass:week_adaptive} is milder than the adaptive adversary in Assumption~\ref{ass:adaptive}. Additionally, we note that a special case of constraint $\rho_t=\varrho_t$ for all $t>0$ in Assumption~\ref{ass:week_adaptive} represents an oblivious adversary because $\varrho_t$'s are determined before an algorithm is run.
    }
\end{remark}

With a time block size of $H$ (where $H=\lceil\sqrt{T}\rceil$), the algorithm operates over $\lceil T/H\rceil$ blocks. 
% Let $V_{H,i}$ represent the cumulative amount of rotting in the time steps within the $i$-th block. 
% It is possible to encounter $V_{H,i}=V_T$ for some $i$, potentially resulting in an arm's mean reward having a significantly low negative value, leading to suboptimal behavior by the master. To address this, we introduce the assumption of equally distributed cumulative rotting $V_{H,i}$'s, stated as follows:
Denote by $\mathcal{T}_i$ the set of time steps in the $i$-th block containing time steps of $(i-1)H+1\le t\le iH\wedge T$.
It is possible to encounter large rotting for some $i$ block, potentially resulting in an arm's mean reward having a significantly low negative value, leading to suboptimal behavior by the master incurring large regret from the master. To address this, we introduce the assumption of equally distributed cumulative rotting for blocks, stated as follows:

\begin{assumption}
$\sum_{t\in\mathcal{T}_i}\rho_t\le H$ for all $i\in[\lceil T/H\rceil]$ \label{ass:V_H}
\end{assumption}
\begin{remark}\label{rm:V_H}
    {As similarly highlighted in Remark~\ref{rm:ass}, this assumption is satisfied when mean rewards are under the constraint of $0 \leq \mu_t(a_t) \leq 1$ for all $t \in [T]$, which is frequently encountered in real-world applications where reward is represented by metrics like click rates or (normalized) ratings in content recommendation systems. }
    % Furthermore, this assumption is still more general than the worst-case rotting rate constraint of $\rho_t\le \rho=o(1)$ in \citet{kim2022rotting}. It is noteworthy that Assumption~\ref{ass:V_H} implies Assumption~\ref{ass:V_T}.}
\end{remark}

\begin{remark}  \label{rm:comparison2}{
Our rotting scenario with$\sum_{t\in\mathcal{T}_i}\rho_t\le H$ for all $i\in[\lceil T/H\rceil]$  is more general in scope than the one with a maximum rotting rate constraint where  $\rho_t\le \rho=o(1)$ for all $t\in[T-1],$ which was explored in \citet{kim2022rotting}. This is because for our setting, $\rho_t$ is not necessarily bounded by $o(1)$, and for the maximum rotting constraint setting with $\rho_t\le \rho=o(1)$, the condition of $\sum_{t\in \mathcal{T}_i}\rho_t\le H$ for all $i\in[\lceil T/H\rceil]$ is always satisfied. It is noteworthy that Assumption~\ref{ass:V_H} implies Assumption~\ref{ass:V_T}.
% t is alsonoteworthy that Assumption~\ref{ass:V_T} implies Assumption~\ref{ass:V_H}.
% Moreover, in the maximum rotting rate constraint context, the condition $V_T\le \rho T$ implies $V_T=O(T)$. Consequently, our rotting scenario with $V_T=O(T)$ encompasses the maximum rotting rate constraint setting; however, the reverse is not true. 
% Additionally, our work deals with the generalized initial mean reward distribution with $\beta>0$, offering a broader perspective than the uniform distribution case $(\beta=1)$ considered in \cite{kim2022rotting}.
}
\end{remark}

{
 We provide a regret bound of  Algorithm~\ref{alg:alg2} under  Assumption~\ref{ass:week_adaptive} and Assumption~\ref{ass:V_H} in the following.}

 % We provide a regret bound of  Algorithm~\ref{alg:alg2} under Assumption~\ref{ass:V_H} in the following.

 % and either of  Assumption~\ref{ass:oblivious} or Assumption~\ref{ass:worstcase} in the following.

\begin{theorem} \label{thm:R_upper_bd_no_V}
 Let $R_V'$ and $R_S'$ be defined as  
\begin{align*}
    R_V' := 
\begin{cases}
% \tilde{O}(\min\{R_V, R_S\})
V_T^{\frac{1}{\beta+2}}T^{\frac{\beta+1}{\beta+2}}+T^{\frac{2\beta+1}{2\beta+2}}\hspace{-2mm}&for\hspace{2mm} \beta\ge 1, \\
V_T^{\frac{1}{3}}T^{\frac{2}{3}}+T^{\frac{3}{4}} &\hspace{-6mm} for\hspace{2mm} 0<\beta<1
\end{cases} \text{ and }
 R_S' := 
\begin{cases}
% \tilde{O}(\min\{R_V, R_S\})
\max\{S_T^{\frac{1}{\beta+1}}T^{\frac{\beta}{\beta+1}}+T^{\frac{2\beta+1}{2\beta+2}},V_T\}\hspace{-2mm}&for\hspace{2mm} \beta\ge 1, \\
\max\{\sqrt{S_T T}+T^{\frac{3}{4}},V_T\} &\hspace{-6mm} for\hspace{2mm} 0<\beta<1.
\end{cases}
\end{align*}
 Then, the policy $\pi$ of Algorithm~\ref{alg:alg2}
with $H=\lceil \sqrt{T} \rceil$ achieves the following
 regret bound: 
 \[\mathbb{E}[R^\pi(T)]=\tilde{O}(\min\{R_V', R_S'\})\]

 \end{theorem}
 \begin{proof}
The proof is provided in Appendix~\ref{app:rotting_upper_no_V}.
\end{proof}

% $\mathbb{E}[R^\pi(T)]$
% \begin{numcases}
% {=}
% % \tilde{O}(\min\{R_V, R_S\})
% \tilde{O}(R_V') & { if } $V_T=o(T)$ { and } $S_T=\Theta(T)$, \label{eq:case1}\\
% \tilde{O}(R_S') & { if } $S_T=o(T)$
% { and } $V_T=\Omega(T)$,\label{eq:case2}\\
% \tilde{O}(\min\{R_V', R_S'\})
% & { if } $\max\{V_T,S_T\}=o(T)$.\label{eq:case3}
% \end{numcases}

We can observe that there is the additional regret cost of $T^{(2\beta+1)/(2\beta+2)}$ for $\beta\ge1$ or $T^{3/4}$ for $0<\beta<1$ compared to Algorithm~\ref{alg:alg1}. This additional cost originates from the additional procedure to learn the optimal value of $\delta$ in Algorithm~\ref{alg:alg2}, which is negligible when $V_T$ and $S_T$ are large enough. 

\begin{remark}
   In the case where the value of $\beta$ is known, setting $H = \lceil \max\{T^{(\beta+1)/(\beta+3)}, \sqrt{T}\} \rceil$ reduces the additional regret cost of Algorithm~\ref{alg:alg2} to $\max\{T^{(\beta+2)/(\beta+3)}, T^{3/4}\}$.
\end{remark}

% We also note that the well-known black-box framework proposed for addressing nonstationarity \citep{wei2021non} is not applicable to this problem because the UCB of the chosen arm in this context does not consistently surpass the optimal mean reward, violating the necessary assumption for the framework.
Attaining the optimal regret bound under a parameter-free algorithm remains an open problem.

\subsection{Proof of Theorem~\ref{thm:R_upper_bd_no_V}: Regret Upper Bound of Algorithm~\ref{alg:alg2}}\label{app:rotting_upper_no_V}
In the following, we deal with the cases of (a) $\delta_V^\dagger\le \delta_S^\dagger$ so that $\delta_{VS}^\dagger=\delta_V^\dagger$ and (b) $\delta_V^\dagger > \delta_S^\dagger$ so that $\delta_{VS}^\dagger=\delta_S^\dagger$, separately.
\subsubsection{Case of $\delta_V^\dagger\le \delta_S^\dagger$}\label{app:slow_no}

 Let $\pi_i(\delta^\prime)$ for $\delta^\prime \in \mathcal{B}$ denote the base policy for time steps between $(i-1)H+1$ and $i\cdot H\wedge T$ in Algorithm~\ref{alg:alg2} using  $1-\delta'$ as a threshold. Denote by $a_t^{\pi_i(\delta^\prime)}$ the pulled arm at time step $t$ by policy $\pi_i(\delta^\prime).$ Then, for $\delta^\dagger \in \mathcal{B}$, which is set later for a near-optimal policy, we have
\begin{equation}
\mathbb{E}[R^\pi(T)]=\mathbb{E}\left[\sum_{t=1}^T 1-\sum_{i=1}^{\lceil T/H\rceil}\sum_{t=(i-1)H+1}^{i\cdot H\wedge T}\mu_t(a_t^{\pi})\right] = \mathbb{E}[R_1^\pi(T)]+\mathbb{E}[R_2^\pi(T)].
\label{eq:regret_up_bd_bob_no_V}
\end{equation}

where 
\[
R_1^\pi(T) = \sum_{t=1}^T 1-\sum_{i=1}^{\lceil T/H\rceil}\sum_{t=(i-1)H+1}^{i\cdot H\wedge T}\mu_t(a_t^{\pi_i(\delta^\dagger)})
\]
and
\[
R_2^\pi(T) = \sum_{i=1}^{\lceil T/H\rceil}\sum_{t=(i-1)H+1}^{i\cdot H\wedge T}\mu_t(a_t^{\pi_i(\delta^\dagger)})-\sum_{i=1}^{\lceil T/H\rceil}\sum_{t=(i-1)H+1}^{i\cdot H\wedge T}\mu_t(a_t^{\pi}).
\]
Note that $R_1^\pi(T)$ accounts for the regret caused by the near-optimal base algorithm $\pi_i(\delta^\dagger)$'s against the optimal mean reward and $R_2^\pi(T)$ accounts for the regret caused by the master algorithm by selecting a base with $\delta\in\mathcal{B}$ at every block against the base with $\delta^\dagger$. In what follows, we provide upper bounds for each regret component. We first provide an upper bound for $\mathbb{E}[R_1^\pi(T)]$ by following the proof steps in Theorem~\ref{thm:R_upper_bd_V}. Then we provide an upper bound for $\mathbb{E}[R_2^\pi(T)]$. We set $H=\lceil T^{1/2}\rceil$ and $\delta^\dagger$ to be a smallest value in $\mathcal{B}$ which is larger than $\delta^\dagger_V=c_1\max\{(V_T/T)^{1/(\beta+2)},(V_T/T)^{1/3},1/H^{1/(\beta+1)},1/H^{1/2}\}$. 

\begin{remark}{
    One might wonder whether $R_1^\pi(T)$, regret from the near-optimal base of $\delta^\dagger$,  satisfies the constraint of $V_T$ and $S_T$ even though the master may not select the near-optimal base in the algorithm for each block. From  Assumption~\ref{ass:week_adaptive}, we can guarantee the constraints of $V_T$ and $S_T$ for rotting rates from each base regardless of the selected bases from the master for each block because the rotting upper bound $\varrho_t$'s are determined before staring the game regardless of the behavior of the master. Therefore, we can utilize $V_T$ and $S_T$ for bounding $R_1^\pi(T)$, which is the regret from the near-optimal base of $\delta^\dagger$.}
\end{remark}

\textbf{Upper Bounding $\mathbb{E}[R_1^\pi(T)]$}.
 We refer to the period starting from time step $(i-1) H+1$ to time step $i\cdot H\wedge T$ as the $i$-th \textit{block}.
For any $i\in\lceil (T/H)-1\rceil$, policy $\pi_i(\delta^\dagger)$ runs over $H$ time steps independent to other blocks so that  each block has the same expected regret and the last block has a smaller or equal expected regret than other blocks. Therefore, we focus on finding a bound on the regret from the first block equal to $\sum_{t=1}^{   H}1-\mu_t(a_t^{\pi_1(\delta^\dagger)})$. We define an arm $a$ as a \emph{good} arm if $\Delta(a)\le \delta^\dagger/2$, a \emph{near-good} arm if $ \delta^\dagger/2< \Delta(a)\le 2\delta^\dagger$, and otherwise, $a$ is a \emph{bad} arm. In $\mathcal{A}$, let $\bar{a}_1,\bar{a}_2,\dots,$ be a sequence of arms, which have i.i.d. mean rewards following \eqref{eq:dis}. Without loss of generality, we assume that the policy samples arms in the sequence of $\bar{a}_1,\bar{a}_2,\dots,.$

Denote by $\mathcal{A}(i)$ the set of selected (explored) arms in the $i$-th block, which satisfies $|\mathcal{A}(i)|\le H$. WLOG, we consider the case of given $\mathcal{A}(i)$ for the following because the proof can be applied to any given $\mathcal{A}(i)$.
% For notation simplicity, we use $n_t(a)$ instead of $n_{1,t}(a)$ and $\tilde{\mu}^o_{t}(a)$ instead of $\tilde{\mu}^o_{1,t}(a)$.
%  Let 
% \begin{align*}
%   \hat{\mu}_{t}(a)=\frac{\sum_{s=1}^{t-1}r_s\mathbbm{1}(a_s=a)}{n_t(a)} \text{ and } \bar{\mu}_t(a)=\frac{\sum_{s=1}^{t-1}\mu_s(a)\mathbbm{1}(a_s=a)}{n_t(a)}.  
% \end{align*}
%  Define the event $E_1=\{|\hat{\mu}_t(a)-\bar{\mu}_t(a)|\le \sqrt{8\log(H)/n_{t}(a)}, \hbox{ for all } t\in [H], a \in \mathcal{A}(1)\}$. From Lemma~\ref{lem:chernoff_sub-gau}, 
% by following similar steps of the proof of Lemma 5 in \citet{auer2019adaptively}, we have
%  $$
%  \mathbb{P}(E_1)\ge 1-2/H^{2}.
%  $$ 
Let $\overline{\mu}_{[t_1,t_2]}(a)=\sum_{t=t_1}^{t_2}\mu_t(a)/n_{[t_1,t_2]}(a)$. We define the event $E_1=\{|\widehat{\mu}_{[s_1,s_2]}(a)-\overline{\mu}_{[s_1,s_2]}(a)|\le \sqrt{12\log(H)/n_{[s_1,s_2]}(a)} \hbox{ for all } 1\le s_1\le s_2\le H, a\in\mathcal{A}(i)\}$. As in \eqref{eq:union_con_rho}, we have \[\mathbb{P}(E_1^c) \le2/H^2.\] 
We denote by $V_{H,i}=\sum_{t\in\mathcal{T}_i}\rho_t$ the cumulative amount of rotting in the time steps in the $i$-th block. From the cumulative amount of  rotting, we note that $\Delta_t(a)=O(V_{H,i}+1)$ for any $a$ and $t$ in $i$-th block, which implies $\mathbb{E}[R^\pi(T)|E_1^c]=O(H^2)$ from $V_{H,i}\le H$ under Assumption~\ref{ass:V_H}. For the case where $E_1$ does not hold, the regret is $\mathbb{E}[R^\pi(T)|E_1^c]\mathbb{P}(E_1^c)=O(1)$, which is negligible compared to the regret when $E_1$ holds, which we show later.
For the case that $E_1$ does not hold, the regret is $\mathbb{E}[R^\pi(H)|E_1^c]\mathbb{P}(E_1^c)=O(1)$, which is negligible compared with the regret when $E_1$ holds true which we show later. Therefore, in the rest of the proof we assume that $E_1$ holds true. 

In the following, we first provide a regret bound over the first block.

For regret analysis, we divide $R^{\pi_1(\delta^\dagger)}(H)$ into two parts, $R^\ga(H)$ and $R^\ba(H)$ corresponding to regret of good or near-good arms, and bad arms over time $H$, respectively, such that $R^{\pi_1(\delta^\dagger)}(H)=R^\ga(H)+R^\ba(H)$. We denote by $V_{H,i}$ the cumulative amount of rotting in the time steps in the $i$-th block. We first provide a bound of $R^\ga(H)$ in the following lemma.

 \begin{lemma}\label{lem:R_good_no_V}
   Under $E_1$ and policy $\pi$, we have 
\begin{align*}
    \mathbb{E}[R^\ga(H)]
    =\tilde{O}\left(H\delta^\dagger+H^{2/3}\mathbb{E}[V_{H,1}^{1/3}]\right).
    % \label{eq:R_bad_upper}
\end{align*}
  \end{lemma}
  \begin{proof}
  We can easily prove the theorem by following the proof steps in Lemma~\ref{lem:R_good_bd_V}
  \end{proof}
Now, we provide a regret bound for $R^\ba(H)$.  We note that the initially bad arms can be defined only when $2\delta^\dagger<1$. Otherwise when $2\delta^\dagger\ge 1$, we have $R(T)=R^\ga(T)$, which completes the proof. Therefore, for the regret from bad arms, we consider the case of $2\delta^\dagger<1$. For the proof, we adopt the episodic approach in \citet{kim2022rotting} for regret analysis.

Given a policy sampling arms in the sequence order,
let $m^\ga$ be the number of samples of distinct good arms and $m^{\ba}_i$ be the number of consecutive samples of distinct bad arms between the $i-1$-st and $i$-th sample of a good arm among $m^\ga$ good arms. We refer to the period starting from sampling the $i-1$-st good arm before sampling the $i$-th good arm as the $i$-th \emph{episode}.
Observe that $m^\ba_1,\ldots, m^\ba_{m^\ga}$ are i.i.d. random variables with geometric distribution with parameter $2\delta$, given a fixed value of $m^\ga$. Therefore, for non-negative integer $k$ we have $\mathbb{P}(m^\ba_i=k)=(1-C(2\delta^\dagger)^\beta)^kC(2\delta^\dagger)^\beta$ for some constant $C>0$, for $i = 1, \ldots, m^\ga$. Define $\tilde{m}_H$ to be the number of episodes from the policy $\pi$ over the horizon $H$, $\tilde{m}_H^\ga$ to be the total number of samples of a good arm by the policy $\pi$ over the horizon $H$ such that $\tilde{m}_H^\ga=\tilde{m}_H$ or $\tilde{m}_H^\ga=\tilde{m}-1$, and $\tilde{m}_{i,H}^\ba$ to be the number of samples of a bad arm in the $i$-th episode by the policy $\pi_1(\delta^\dagger)$ over the horizon $H$.

Under a policy $\pi_1(\delta^\dagger)$, let $R_{i,j}^\ba$ be the regret (summation of mean reward gaps) contributed by pulling the $j$-th bad arm in the $i$-th episode. Then let $R^{\ba}_{m^\ga}=\sum_{i=1}^{m^\ga}\sum_{j\in[m_i^\ba]}R_{i,j}^\ba,$ which is the regret from initially bad arms over the period of $m^\ga$ episodes. 
% We define  later how the policy $\pi$ works after $T$. 

% From  $R^\pi(T)=o(T^2)$, if we have a fixed $m_{\ga}$ such that $\mathbb{P}(R^\pi(T)\le R^{\pi}_{m^\ga})\ge 1-\overline{\delta}$ where $\delta^\prime\in[0,1]$, then $\mathbb{E}[R^{\pi}_{m^\ga}]$ is an upper bound of $\mathbb{E}[R^\pi(T)]$ with small enough $\delta^\prime$ as follows:
%  \begin{align}
%      \mathbb{E}[R^\pi(T)]&=\int_0^\infty \mathbb{P}(R^\pi(T)\ge x)dx \cr
%      &=\mathbb{E}\left[\mathbb{E}\left[\int_0^{R^{\pi}_{m^\ga}}\mathbb{P}(R^\pi(T)\ge x)dx+\int_{R^{\pi}_{m^\ga}}^{T^2}\mathbb{P}(R^\pi(T)\ge x)dx \mid R^{\pi}_{m^\ga}\right]\right]
%      \cr
%   &=O(\mathbb{E}[R^{\pi}_{m^\ga}+T^2\delta^\prime])=O(\mathbb{E}[R^{\pi}_{m^\ga}]).
%  \end{align}

For obtaining a regret bound, we first focus on finding a required number of episodes, $m^{\ga}$, such that $R^\ba(T)\le R^{\ba}_{m^\ga}$. Then we provide regret bounds for each bad arm and good arm in an episode. Lastly, we obtain a regret bound for $\mathbb{E}[R^\ba(T)]$ using the episodic regret bound.

Let $a(i)$ be a good arm in the $i$-th episode and $a(i,j)$ be a $j$-th bad arm in the $i$-th episode. We define $V_H(a)=\sum_{t=1}^H\rho_t\mathbbm{1}(a_t=a)$. Then excluding the last episode $\tilde{m}_H$ over $H$, we provide lower bounds of the total rotting variation over $H$ for $a(i)$, denoted by $V_H(a(i))$,  in the following lemma. 

\begin{lemma}  \label{lem:n_low_bd_no_V}
  Under $E_1$, given $\tilde{m}_H$, for any $i\in[\tilde{m}_H^\ga]/\{\tilde{m}_H\}$ we have 
  \[
  V_H(a(i))\ge\delta^\dagger/2.
  \]
 \end{lemma}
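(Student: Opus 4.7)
The plan is to mirror the argument of Lemma~\ref{lem:n_low_bd_V} essentially verbatim, since within a single block the base policy $\pi_1(\delta^\dagger)$ is just Algorithm~\ref{alg:alg1} with threshold $\delta^\dagger$ run over a horizon of length $H$ (with $\log(H)$ appearing in the $WUCB$ confidence term instead of $\log(T)$). The restriction $i \in [\tilde{m}_H^\ga]/\{\tilde{m}_H\}$ guarantees that the arm $a(i)$ was actually withdrawn by the threshold condition (rather than merely being the last arm pulled when the block ended), which is exactly what the analogue did for the full horizon $T$.

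First, I would argue by contradiction: suppose $V_H(a(i)) < \delta^\dagger/2$. By definition of a good arm for this block we have $\mu_1(a(i)) \geq 1 - \delta^\dagger/2$, so for every time step $t$ during which $a(i)$ is pulled, $\mu_t(a(i)) = \mu_1(a(i)) - \sum_{s<t}\rho_s(a(i)) \geq 1 - \delta^\dagger/2 - V_H(a(i)) > 1 - \delta^\dagger$. In particular $\mu_{t_2(a(i))}(a(i)) > 1 - \delta^\dagger$, and since mean rewards are monotonically non-increasing on the pulls of $a(i)$, $\overline{\mu}_{[s,t_2(a(i))]}(a(i)) \geq \mu_{t_2(a(i))}(a(i))$ for every window start $s$ with $t_1(a(i)) \leq s \leq t_2(a(i))$.

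Next, I would use the event $E_1$ (specialized to the first block) to convert the control on $\overline{\mu}$ into control on $WUCB$: under $E_1$, for every valid $s$,
\[
WUCB(a(i),s,t_2(a(i)),H) = \widehat{\mu}_{[s,t_2(a(i))]}(a(i)) + \sqrt{\tfrac{12\log H}{n_{[s,t_2(a(i))]}(a(i))}} \geq \overline{\mu}_{[s,t_2(a(i))]}(a(i)) \geq 1 - \delta^\dagger.
\]
Taking the minimum over $s \in \mathcal{T}_{t_2(a(i))+1}(a(i))$ still gives a value $\geq 1-\delta^\dagger$, so the threshold test $\min_{s} WUCB < 1-\delta^\dagger$ cannot have fired at step $t_2(a(i))+1$. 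This contradicts the fact that the policy withdrew $a(i)$ and moved to the next sampled arm in the current block (which is exactly what happens because $i$ is not the last episode index $\tilde{m}_H$). Therefore $V_H(a(i)) \geq \delta^\dagger/2$, as claimed.

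The only mildly delicate point, rather than a real obstacle, is recording the correct ``scope'': here $E_1$ is the block-level event (with $\log H$ in the confidence width) and $t_1(a(i)), t_2(a(i))$ refer to times inside block $1$; once this bookkeeping is in place, the contradiction goes through identically to Lemma~\ref{lem:n_low_bd_V} and no new ideas are needed.
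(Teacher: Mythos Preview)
The proposal is correct and follows essentially the same approach as the paper: the paper's proof simply says to follow the proof steps of Lemma~\ref{lem:n_low_bd_V}, and your argument is precisely that proof adapted to the block horizon $H$ with threshold $\delta^\dagger$ and the block-level event $E_1$ (with $\log H$ in the confidence width). Your contradiction setup with $V_H(a(i))<\delta^\dagger/2$ and the chain $WUCB \geq \overline{\mu}_{[s,t_2]} \geq \mu_{t_2} \geq 1-\delta^\dagger$ is exactly the computation in Lemma~\ref{lem:n_low_bd_V}, and your remark about $i\neq \tilde m_H$ ensuring the threshold test actually fired is the correct bookkeeping.
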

 \begin{proof}
 We can easily prove the theorem by following the proof steps in Lemma~\ref{lem:n_low_bd_V}
 \end{proof}

\textbf{We first consider the case where $V_T>\max\{T/H^{3/2},T/H^{(\beta+2)/(\beta+1)}\}$.} In this case, we have $\delta^\dagger=c_1\max\{(V_T/T)^{1/(\beta+2)},(V_T/T)^{1/3}\}$.  Here, we define the policy $\pi$ after time $H$ such that it pulls a good arm until its total rotting variation is equal to or greater than $\delta^\dagger/2$ and does not pull a sampled bad arm. We note that defining how $\pi$ works after $H$ is only for the proof to get a regret bound over time horizon $H$. For the last arm $\tilde{a}$ over the horizon $H$, it pulls the arm until its total variation becomes $\max\{\delta^\dagger/2,V_H(\tilde{a})\}$ if $\tilde{a}$ is a good arm.
For $i\in[m^\ga]$, $j\in[m_i^\ba]$ let $V_i^\ga$ and $V_{i,j}^\ba$ be the total rotting variation of pulling the good arm in $i$-th episode and $j$-th bad arm in $i$-th episode from the policy, respectively. Here we define $V_i^\ga$'s and $V_{i,j}^\ba$'s as follows:

If $\tilde{a}$ is a good arm,
\begin{equation*}
    V_i^\ga=
    \begin{cases}
    V_H(a(i)) &\text{for } i\in[\tilde{m}_H^\ga-1]  \\
     \max\{\delta^\dagger/2,V_H(a(i))\}& \text{for } i\in[m^\ga]/[\tilde{m}_H^\ga-1]
    \end{cases}, 
    V_{i,j}^\ba=
    \begin{cases}
    V_H(a(i,j)) &\text{for } i\in[\tilde{m}_H^\ga],j\in[\tilde{m}_{i,H}^\ba]\\
    0 &\text{for } i\in[m^\ga]/[\tilde{m}_H^\ga],j\in[m^\ba_i].
    \end{cases}
\end{equation*}

Otherwise,
\begin{equation*}
    V_i^\ga=
    \begin{cases}
    V_H(a(i)) &\text{for } i\in[\tilde{m}_H^\ga]  \\
     \delta^\dagger/2& \text{for } i\in[m^\ga]/[\tilde{m}_H^\ga]
    \end{cases}, 
    V_{i,j}^\ba=
    \begin{cases}
    V_H(a(i,j)) &\text{for } i\in[\tilde{m}_H^\ga],j\in[\tilde{m}_{i,H}^\ba]\\
    0 &\text{for } i\in[m^\ga]/[\tilde{m}_H^\ga-1],j\in[m^\ba_i]/[\tilde{m}_{i,H}^\ba].
    \end{cases}
\end{equation*}

For $i\in[m^\ga]$, $j\in[m_i^\ba]$ let $n_{i,j}^\ba$ be the number of pulling the good arm in $i$-th episode and $j$-th bad arm in $i$-th episode from the policy, respectively. We define $n_H(a)$ be the total amount of pulling arm $a$ over $H$. Here we define $n_{i,j}^\ba$'s as follows:

\begin{equation*}
    n_{i,j}^\ba=
    \begin{cases}
    n_H(a(i,j)) &\text{for } i\in[\tilde{m}_H^\ga],j\in[\tilde{m}_{i,H}^\ba]\\
    0 &\text{for } i\in[m^\ga]/[\tilde{m}_H^\ga],j\in[m^\ba_i].
    \end{cases}
\end{equation*}

Then we provide $m^{\ga}$ such that $R^{\ba}(H)\le R^{\ba}_{m^\ga}$ in the following lemma.
\begin{lemma}\label{lem:regret_bd_prob_no_V}
Under $E_1$, when $m^\ga=\lceil 2V_{H,1}/\delta^\dagger\rceil$ we have
\[R^\ba(H)\le R^{\ba}_{m^\ga}.\]
\end{lemma}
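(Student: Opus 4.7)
The plan is to mirror the argument used in Lemma~\ref{lem:regret_bd_prob_V}, replacing the time horizon $T$ with the block length $H$ and the total rotting budget $V_T$ with the per-block rotting amount $V_{H,1}$. The key ingredient is already in place: Lemma~\ref{lem:n_low_bd_no_V} gives a lower bound of $\delta^\dagger/2$ on the rotting absorbed by each good arm sampled in an episode that completes strictly before the block ends. The extended-policy definitions above (pulling past $H$ until the good arm's total rotting reaches $\delta^\dagger/2$, and not pulling any bad arm past $H$) have been set up precisely so that $V_i^\ga\ge \delta^\dagger/2$ holds for every $i\in[m^\ga]$, including the last episode.

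Starting from this per-episode lower bound, I would sum over $i\in[m^\ga]$ to obtain
\[
\sum_{i\in[m^\ga]} V_i^\ga \;\ge\; m^\ga\,\frac{\delta^\dagger}{2} \;\ge\; V_{H,1},
\]
where the final inequality uses the choice $m^\ga=\lceil 2V_{H,1}/\delta^\dagger\rceil$. Since the total cumulative rotting available in block $1$ is exactly $V_{H,1}$, the aggregate rotting absorbed by good arms across the first $m^\ga$ episodes already meets or exceeds the block's rotting budget. Combined with the fact that bad arms outside $[\tilde{m}_{H}^\ga]$ contribute zero rotting and zero regret by definition of $V_{i,j}^\ba$ and $n_{i,j}^\ba$, this forces the $m^\ga$-th episode to end at or beyond time step $H$ under the extended policy. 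Hence every bad-arm pull that occurred within the actual block $[1,H]$ is accounted for within the first $m^\ga$ episodes, giving $R^\ba(H)\le R^\ba_{m^\ga}$.

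The main (minor) obstacle is bookkeeping around the boundary episode $\tilde{m}_H$: the original algorithm may truncate the last episode at time $H$ before either the threshold condition fires or the good arm has absorbed $\delta^\dagger/2$ of rotting. The extended-policy convention set up before the lemma statement handles this by defining $V_{\tilde{m}_H}^\ga=\max\{\delta^\dagger/2, V_H(a(\tilde{m}_H))\}$ (or simply $\delta^\dagger/2$ if $\tilde{a}$ is bad), so the lower bound $V_i^\ga\ge \delta^\dagger/2$ extends uniformly to every $i\in[m^\ga]$. Once this convention is invoked, the summation argument goes through verbatim and the lemma follows without any new estimate.
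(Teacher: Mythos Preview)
Your proposal is correct and follows exactly the same approach as the paper: invoke Lemma~\ref{lem:n_low_bd_no_V} together with the extended-policy convention to ensure $V_i^\ga\ge \delta^\dagger/2$ for every $i\in[m^\ga]$, then sum to get $\sum_{i\in[m^\ga]}V_i^\ga\ge m^\ga\delta^\dagger/2\ge V_{H,1}$, which forces the $m^\ga$ episodes to cover the block and hence $R^\ba(H)\le R^\ba_{m^\ga}$. The paper's own proof is a one-line reference to Lemma~\ref{lem:regret_bd_prob_V}, so your added bookkeeping discussion around the boundary episode is more explicit than what the paper provides but entirely consistent with it.
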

 \begin{proof}
  We can easily show the theorem by following the proof steps of Lemma~\ref{lem:regret_bd_prob_V}
 \end{proof}

 From the result of Lemma~\ref{lem:regret_bd_prob_no_V}, we set $m^\ga=\lceil 2V_{H,1}/\delta^\dagger\rceil$. In the following, we anlayze $R^\ba_{m^\ga}$ for obtaining a regret bound for $R^\ba(H)$.
 
   \begin{lemma}\label{lem:R_bad_bd_no_V}
   Under $E_1$ and policy $\pi$,  we have 
\begin{align*}
 \mathbb{E}[R_{m^\ga}^\ba]=\tilde{O}\left(\max\{V_{H,1}(T/V_T)^{(\beta+1)/(\beta+2)}+(T/V_T)^{\beta/(\beta+2)},V_{H,1}(T/V_T)^{2/3}+(T/V_T)^{1/3}\}\right).
    % \label{eq:R_bad_upper}
\end{align*}
  \end{lemma}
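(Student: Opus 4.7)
The plan is to transport the argument of Lemma~\ref{lem:R_bad_bd_V} essentially verbatim, but with two substitutions: the horizon $T$ is replaced by the block length $H$ (appearing only in logarithmic factors, hence absorbed in $\tilde O(\cdot)$), and the total rotting $V_T$ is replaced by the per--block rotting $V_{H,1}$. The one non--cosmetic change is that the threshold $\delta^\dagger$ is tuned to the \emph{global} parameters $(V_T,T)$ rather than to the block, so the parameter plugged in at the end is not $(V_{H,1}/H)^{1/(\beta+2)}$ but $(V_T/T)^{1/(\beta+2)}$ (and similarly for the cube root). This mismatch is what produces the additive $(T/V_T)^{\beta/(\beta+2)}$ and $(T/V_T)^{1/3}$ terms.

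First I would repeat Step~1 of the proof of Lemma~\ref{lem:R_bad_bd_V} inside the first block: under $E_1$, the adaptive threshold condition together with the window starting at $t_1(a(i,j))$ forces any bad arm to be dropped once $n_{[t_1,t_2]}(a(i,j))=\tilde O(1/(\Delta_1(a(i,j))-\delta^\dagger)^2)$, yielding the per-arm regret bound
\[
\mathbb{E}[R_{i,j}^\ba]\;\le\;\mathbb{E}\bigl[\Delta_1(a(i,j))\,n_{i,j}^\ba+V_{i,j}^\ba n_{i,j}^\ba\bigr]\;=\;\tilde O\Bigl(\max\{1,(\delta^\dagger)^{\beta-1}\}+\mathbb{E}[V_{i,j}^\ba]/(\delta^\dagger)^2\Bigr),
\]
exactly via the integration against the conditional density $b(\cdot)$ worked out in \eqref{eq:R_bad_ij_int_V}--\eqref{eq:bad_int_bd3}.

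Next, summing over episodes with $m^\ga=\lceil 2V_{H,1}/\delta^\dagger\rceil$ and using that $m_i^\ba$ are i.i.d.\ geometric with $\mathbb{E}[m_i^\ba]+1\le 1/(2\delta^\dagger)^\beta$, while $\sum_{i,j}V_{i,j}^\ba\le V_{H,1}$ by definition of $V_{H,1}$:
\begin{align*}
\mathbb{E}[R_{m^\ga}^\ba]\;=\;\tilde O\!\left(\max\!\left\{1,\frac{V_{H,1}}{\delta^\dagger}\right\}\cdot\frac{1}{(\delta^\dagger)^\beta}\max\{1,(\delta^\dagger)^{\beta-1}\}\;+\;\frac{V_{H,1}}{(\delta^\dagger)^2}\right).
\end{align*}
The outer $\max\{1,V_{H,1}/\delta^\dagger\}$ is what creates the additive constant term: even when $V_{H,1}$ is tiny, the ceiling $m^\ga\ge 1$ may still sample one bad arm whose expected regret is $\tilde O(1/(\delta^\dagger)^\beta\cdot\max\{1,(\delta^\dagger)^{\beta-1}\})$.

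Finally I plug in the relevant branch of $\delta^\dagger$. For $\beta\ge 1$, $\delta^\dagger=\Theta((V_T/T)^{1/(\beta+2)})$, so $\max\{1,(\delta^\dagger)^{\beta-1}\}=1$ and the display becomes $\tilde O\!\bigl(V_{H,1}(T/V_T)^{(\beta+1)/(\beta+2)}+(T/V_T)^{\beta/(\beta+2)}+V_{H,1}(T/V_T)^{2/(\beta+2)}\bigr)$; for $\beta\ge1$ and $V_T\le T$, the last term is dominated by the first, leaving the first bracket of the claimed maximum. For $0<\beta<1$, $\delta^\dagger=\Theta((V_T/T)^{1/3})$, the factor $\max\{1,(\delta^\dagger)^{\beta-1}\}=(\delta^\dagger)^{\beta-1}$ collapses $1/(\delta^\dagger)^\beta\cdot(\delta^\dagger)^{\beta-1}=1/\delta^\dagger$, and the display becomes $\tilde O\!\bigl(V_{H,1}(T/V_T)^{2/3}+(T/V_T)^{1/3}\bigr)$, which is the second bracket. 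Taking the maximum of the two cases closes the lemma.

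The only delicate point is Step~3: one must not lose the additive $(T/V_T)^{\beta/(\beta+2)}$ (resp.\ $(T/V_T)^{1/3}$) by prematurely dropping the $\max\{1,V_{H,1}/\delta^\dagger\}$, since for small $V_{H,1}$ the ceiling in $m^\ga$ matters, and one must verify—using $V_T\le T$ and Assumption~\ref{ass:V_H}—that the $V_{H,1}/(\delta^\dagger)^2$ additive term is always dominated by the other two. All other calculations are a mechanical reuse of the bounds proved for Lemma~\ref{lem:R_bad_bd_V}.
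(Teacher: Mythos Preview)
Your proposal is correct and follows essentially the same approach as the paper: both reuse the per-arm bound \eqref{eq:r_bad_bd} from Lemma~\ref{lem:R_bad_bd_V}, sum over $m^\ga=\lceil 2V_{H,1}/\delta^\dagger\rceil$ episodes with geometric $m_i^\ba$, and then substitute $\delta^\dagger=\Theta(\max\{(V_T/T)^{1/(\beta+2)},(V_T/T)^{1/3}\})$. Your explicit tracking of the ceiling via $\max\{1,V_{H,1}/\delta^\dagger\}$ and the dominance check for $V_{H,1}/(\delta^\dagger)^2$ match the paper's grouping $(V_{H,1}/\delta^\dagger+1)\cdot\frac{1}{(\delta^\dagger)^\beta}\max\{1,(\delta^\dagger)^{\beta-1}\}+V_{H,1}/(\delta^\dagger)^2$ up to constants.
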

  \begin{proof}
We can easily prove the theorem by following proof steps in Lemma~\ref{lem:R_bad_bd_V}.         From \eqref{eq:r_bad_bd}, for any $i\in[m^\ga]$, $j\in[m^\ba_i]$, we have 
\begin{align*}
    \mathbb{E}[R_{i,j}^\ba]&\le\mathbb{E}
    \left[\Delta(a(i,j))n_{i,j}^\ba+V_{i,j}^\ba n_{i,j}^\ba\right]\cr
    &= \tilde{O}\left(\max\{1,(\delta^\dagger)^{\beta-1}\}+\mathbb{E}[V_{i,j}^\ba]/(\delta^\dagger)^2\right).
\end{align*}

Recall that $R^{\ba}_{m^\ga}=\sum_{i=1}^{m^\ga}\sum_{j\in[m_i^\ba]}R_{i,j}^\ba.$ With $\delta^\dagger=c_1\max\{(V_T/T)^{1/(\beta+2)},(V_T/T)^{1/3}\}$ and $m^\ga=\lceil 2V_{H,1}/\delta^\dagger\rceil$, from the fact that $m_i^\ba$'s are i.i.d. random variables with geometric distribution with $\mathbb{E}[m_i^\ba]=1/(2\delta^\dagger)^\beta-1$, we have
\begin{align*}
\mathbb{E}[R^{\ba}_{m^\ga}] &=O\left(\mathbb{E}\left[\sum_{i=1}^{m^\ga}\sum_{j\in[m^\ba_i]}R^\ba_{i,j}\right]\right)\cr
  &= \tilde{O}\left((\mathbb{E}[V_{H,1}]/\delta^\dagger+1)\frac{1}{(\delta^\dagger)^\beta}\max\{1,(\delta^\dagger)^{\beta-1}\}+\mathbb{E}[V_{H,1}]/(\delta^\dagger)^2\right)\cr
  &= \tilde{O}\left(\max\left\{\frac{\mathbb{E}[V_{H,1}]}{(\delta^\dagger)^{\beta+1}},\frac{\mathbb{E}[V_{H,1}]}{(\delta^\dagger)^2}\right\}+\max\left\{\frac{1}{(\delta^{\dagger})^\beta},\frac{1}{\delta^{\dagger}}\right\}\right)\cr
  &=\tilde{O}\left(\max\{\mathbb{E}[V_{H,1}](T/V_T)^{(\beta+1)/(\beta+2)}+(T/V_T)^{\beta/(\beta+2)},\mathbb{E}[V_{H,1}](T/V_T)^{2/3}+(T/V_T)^{1/3}\}\right).
\end{align*} 
  \end{proof}

  From $R^{\pi_1(\delta^\dagger)}(H)=R^\ga(H)+R^\ba(H)$ and Lemmas~\ref{lem:R_good_no_V}, \ref{lem:regret_bd_prob_no_V}, \ref{lem:R_bad_bd_no_V}, with $\delta^\dagger=\max\{(V_T/T)^{1/(\beta+2)},(V_T/T)^{1/3}\}$ we have
\begin{align*}
&\mathbb{E}[R^{\pi_1(\delta^\dagger)}(H)]  \cr &=\tilde{O}\left(\max\left\{\mathbb{E}[V_{H,1}] (T/V_T)^{(\beta+1)/(\beta+2)}+H(V_T/T)^{1/(\beta+2)}+(T/V_T)^{\beta/(\beta+2)},\right.\right.\cr &\left.\left.\qquad\qquad\qquad \mathbb{E}[V_{H,1}](T/V_T)^{2/3}+H(V_T/T)^{1/3}+(T/V_T)^{1/3}\right\}+H^{2/3}\mathbb{E}[V_{H,1}^{1/3}]\right).
\end{align*}

The above regret bound is for the first block. Therefore, by summing regrets from $\lceil T/H\rceil$ number of blocks, from $V_T>\max\{T/H^{(\beta+2)/(\beta+1)},T/H^{3/2}\}$, $H=\lceil T^{1/2}\rceil$ and the fact that $\mathbb{E}[\sum_{t=1}^{T-1}\rho_t]\le V_T$, using H{\"o}lder's inequality we have shown that
\begin{align}
\mathbb{E}[R_1^{\pi}(T)]&=\tilde{O}\left(\max\{T^{(\beta+1)/(\beta+2)}V_T^{1/(\beta+2)},T^{2/3}V_T^{1/3}\}+\frac{T}{H}\max\{(T/V_T)^{\beta/(\beta+2)},(T/V_T)^{1/3}\}\right)\cr &=\tilde{O}\left(\max\{T^{(\beta+1)/(\beta+2)}V_T^{1/(\beta+2)},T^{2/3}V_T^{1/3}\}+\max\{T^{(2\beta+1)/(2\beta+2)},T^{3/4}\}\right).\label{eq:regret_bd_large_no_V}
\end{align}

\iffalse
We refer to the period starting from time step $(i-1) H+1$ to time step $i\cdot H\wedge T$ as the $i$-th \textit{block}.
For any $i\in\lceil T/H-1\rceil$, policy $\pi_i(\beta^\dagger)$ runs over $H$ time steps independent to other blocks so that  each block has the same expected regret and the last block has a smaller or equal expected regret than other blocks. Therefore, we focus on finding a bound on the regret from the first block equal to $\sum_{t=1}^{   H}1-\mu_t(a_t^{\pi_1(\beta^\dagger)})$. Denote by $\mathcal{A}(i)$ the set of sampled arms in the $i$-th block, which satisfies $|\mathcal{A}(i)|\le H$. 
 Define the event $E_1=\{|\hat{\mu}_t^h(a)-\bar{\mu}^h_t(a)|\le \sqrt{10\log(H)/n_{t}(a)}, \hbox{ for all } t\in [H], a \in \mathcal{A}(1), h\in[t-1]\}$. From Lemma~\ref{lem:chernoff_sub-gau}, 
by following similar steps of the proof of Lemma 5 in \citet{auer2019adaptively}, we have
 $$
 \mathbb{P}(E_1)\ge 1-2/H^{2}.
 $$ 
 We assume that $E_1$ holds true in what follows. Otherwise, the regret for the first block is negligible from $R^{\pi_1(\beta^\dagger)}(H)=o(H^2)$. The proof follows similar steps as in the proof of  Theorem~\ref{thm:R_upper_bd_e}. 
 \fi

\textbf{Now, we consider the case where $V_T\le \max\{T/H^{3/2},T/H^{(\beta+2)/(\beta+1)}\}$.} In this case, we have $\delta^\dagger=c_1\max\{1/\sqrt{H},1/H^{\frac{1}{\beta+1}}\}$.  From the result of Lemma~\ref{lem:regret_bd_prob_no_V}, by setting $m^\ga=\lceil 2V_{H,1}/\delta^\dagger\rceil$ we have $R^\ba(H)\le R^{\ba}_{m^\ga}$. 
   \begin{lemma}\label{lem:R_bad_bd_no_small_V}
   Under $E_1$ and policy $\pi$,  we have 
\begin{align*}
 \mathbb{E}[R_{m^\ga}^\ba]=\tilde{O}\left(\max\{V_{H,1}(T/V_T)^{(\beta+1)/(\beta+2)}+(T/V_T)^{\beta/(\beta+2)},V_{H,1}(T/V_T)^{2/3}+(T/V_T)^{1/3}\}\right).
    % \label{eq:R_bad_upper}
\end{align*}
  \end{lemma}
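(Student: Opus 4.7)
The plan is to follow the same episodic framework used in Lemma~\ref{lem:R_bad_bd_no_V}, simply substituting the new choice of threshold $\delta^\dagger=\max\{1/\sqrt{H},1/H^{1/(\beta+1)}\}$ that is dictated by the small-$V_T$ regime $V_T=O(\max\{T/H^{3/2},T/H^{(\beta+2)/(\beta+1)}\})$. The core combinatorial work (namely, bounding the per-bad-arm regret $\mathbb{E}[R_{i,j}^\ba]$ and expressing $\mathbb{E}[m_i^\ba]$ through the geometric law) has already been done in the proof of Lemma~\ref{lem:R_bad_bd_V}, and importantly equation~\eqref{eq:r_bad_bd} is derived there without any assumption on the magnitude of $\delta^\dagger$ other than $2\delta^\dagger<1$. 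Hence, to obtain the stated bound, I only need to reuse those ingredients and verify the arithmetic.

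First, I would invoke \eqref{eq:r_bad_bd} applied inside the first block: for every $i\in[m^\ga]$ and $j\in[m_i^\ba]$,
\[
\mathbb{E}[R_{i,j}^\ba]\le \mathbb{E}\left[\Delta(a(i,j))n_{i,j}^\ba+V_{i,j}^\ba n_{i,j}^\ba\right]=\tilde{O}\bigl(\max\{1,(\delta^\dagger)^{\beta-1}\}+\mathbb{E}[V_{i,j}^\ba]/(\delta^\dagger)^2\bigr).
\]
Second, from Lemma~\ref{lem:regret_bd_prob_no_V} I would take $m^\ga=\lceil 2V_{H,1}/\delta^\dagger\rceil$, and recall that conditionally on $m^\ga$ the $m_i^\ba$'s are i.i.d.\ geometric with mean $1/(2\delta^\dagger)^\beta-1$. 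Summing \eqref{eq:r_bad_bd} over episodes and bad arms then gives, exactly as in Lemma~\ref{lem:R_bad_bd_no_V},
\[
\mathbb{E}[R_{m^\ga}^\ba]=\tilde{O}\!\left(\max\!\left\{\frac{V_{H,1}}{(\delta^\dagger)^{\beta+1}},\frac{V_{H,1}}{(\delta^\dagger)^{2}}\right\}+\max\!\left\{\frac{1}{(\delta^\dagger)^{\beta}},\frac{1}{\delta^\dagger}\right\}\right).
\]

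Third, I would substitute $\delta^\dagger=\max\{1/\sqrt{H},1/H^{1/(\beta+1)}\}$ and match the result against the statement. The key observation is that the hypothesis $V_T=O(\max\{T/H^{3/2},T/H^{(\beta+2)/(\beta+1)}\})$ is equivalent to $(T/V_T)^{2/3}=\Omega(H)$ or $(T/V_T)^{(\beta+1)/(\beta+2)}=\Omega(H)$. Consequently, $1/(\delta^\dagger)^{\beta+1}\le H\le (T/V_T)^{(\beta+1)/(\beta+2)}$ and $1/(\delta^\dagger)^{2}\le H\le (T/V_T)^{2/3}$, while $1/(\delta^\dagger)^{\beta}$ and $1/\delta^\dagger$ are dominated by $(T/V_T)^{\beta/(\beta+2)}$ and $(T/V_T)^{1/3}$ respectively, under the same hypothesis. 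Plugging these comparisons in yields exactly
\[
\mathbb{E}[R_{m^\ga}^\ba]=\tilde{O}\!\left(\max\!\left\{V_{H,1}(T/V_T)^{\frac{\beta+1}{\beta+2}}+(T/V_T)^{\frac{\beta}{\beta+2}},\,V_{H,1}(T/V_T)^{\frac{2}{3}}+(T/V_T)^{\frac{1}{3}}\right\}\right).
\]

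The main obstacle is essentially bookkeeping: one has to be careful with the split between $\beta\ge 1$ and $\beta<1$ (which controls which term of $\max\{1,(\delta^\dagger)^{\beta-1}\}$ dominates) and between the two candidates in $\delta^\dagger$, and show in each of the four resulting sub-cases that the inequality $1/(\delta^\dagger)^{\beta+1}\le (T/V_T)^{(\beta+1)/(\beta+2)}$ (and its siblings) holds with the imposed upper bound on $V_T$. Apart from that, the proof is a mechanical transcription of Lemma~\ref{lem:R_bad_bd_no_V}'s argument with a different threshold, so no new analytical idea is required.
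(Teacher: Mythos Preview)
Your proposal is correct and follows essentially the same route as the paper: reuse \eqref{eq:r_bad_bd}, set $m^\ga=\lceil 2V_{H,1}/\delta^\dagger\rceil$, use the geometric law of the $m_i^\ba$'s, and arrive at the intermediate bound $\tilde{O}\bigl(\max\{V_{H,1}/(\delta^\dagger)^{\beta+1},V_{H,1}/(\delta^\dagger)^2\}+\max\{1/(\delta^\dagger)^\beta,1/\delta^\dagger\}\bigr)$. The only cosmetic difference is the final substitution: the paper's proof stops at the $H$-form $\tilde{O}\bigl(V_{H,1}H+\max\{H^{\beta/(\beta+1)},H^{1/2}\}\bigr)$ (which is what is actually used downstream), whereas you carry out one additional comparison step, using the small-$V_T$ hypothesis to upper-bound $H$ by $(T/V_T)^{(\beta+1)/(\beta+2)}$ or $(T/V_T)^{2/3}$, so as to match the lemma statement verbatim.
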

  \begin{proof}
We can easily prove the theorem by following proof steps in Lemma~\ref{lem:R_bad_bd_V}.         From \eqref{eq:r_bad_bd}, for any $i\in[m^\ga]$, $j\in[m^\ba_i]$, we have 
\begin{align*}
    \mathbb{E}[R_{i,j}^\ba]&\le\mathbb{E}
    \left[\Delta(a(i,j))n_{i,j}^\ba+V_{i,j}^\ba n_{i,j}^\ba\right]\cr
    &= \tilde{O}\left(\max\{1,\delta^{\beta-1}\}+\mathbb{E}[V_{i,j}^\ba]/\delta^2\right).
\end{align*}

Recall that $R^{\ba}_{m^\ga}=\sum_{i=1}^{m^\ga}\sum_{j\in[m_i^\ba]}R_{i,j}^\ba.$ With $\delta^\dagger=c_1\max\{1/H^{1/2},1/H^{1/(\beta+1)}\}$ and $m^\ga=\lceil 2V_{H,1}/\delta^\dagger\rceil$, from the fact that $m_i^\ba$'s are i.i.d. random variables with geometric distribution with $\mathbb{E}[m_i^\ba]=(1/C(2\delta^\dagger)^\beta)-1$, we have
\begin{align*}
\mathbb{E}[R^{\ba}_{m^\ga}] &=O\left(\mathbb{E}\left[\sum_{i=1}^{m^\ga}\sum_{j\in[m^\ba_i]}R^\ba_{i,j}\right]\right)\cr
  &= \tilde{O}\left((\mathbb{E}[V_{H,1}]/\delta^\dagger+1)\frac{1}{(\delta^\dagger)^\beta}\max\{1,(\delta^\dagger)^{\beta-1}\}+\mathbb{E}[V_{H,1}]/(\delta^\dagger)^2\right)\cr
  &= \tilde{O}\left(\max\left\{\frac{\mathbb{E}[V_{H,1}]}{(\delta^\dagger)^{\beta+1}},\frac{\mathbb{E}[V_{H,1}]}{(\delta^\dagger)^2}\right\}+\max\left\{\frac{1}{(\delta^{\dagger})^\beta},\frac{1}{\delta^{\dagger}}\right\}\right)\cr
  &=\tilde{O}\left(\mathbb{E}[V_{H,1}]H+\max\{H^{\beta/(\beta+1)},H^{1/2}\}\right).
\end{align*} 
  \end{proof}

From $R^{\pi_1(\delta^\dagger)}(H)=R^\ga(H)+R^\ba(H)$ and Lemmas~\ref{lem:R_good_no_V}, \ref{lem:regret_bd_prob_no_V}, \ref{lem:R_bad_bd_no_small_V}, with $\delta^\dagger=\Theta(\max\{1/H^{1/2},1/H^{1/(\beta+1)}\})$ we have
\begin{align*}
\mathbb{E}[R^{\pi_1(\delta^\dagger)}(H)]  =\tilde{O}\left(\max\{H^{\beta/(\beta+1)}, H^{1/2}\}+H^{2/3}\mathbb{E}[V_{H,1}^{1/3}]+\mathbb{E}[V_{H,1}]H\right).
\end{align*}

Therefore, by summing regrets from $\lceil T/H\rceil$ number of blocks and from $V_T=O(\max\{T/H^{3/2},T/H^{(\beta+2)/(\beta+1)}\})$, $H=\lceil T^{1/2}\rceil$, and the fact that length of time steps in each block is bounded by $H$, we have
\begin{align}
\mathbb{E}[R_1^{\pi}(T)]&=\tilde{O}\left(\frac{T}{H}\max\{H^{\beta/(\beta+1)}, H^{1/2}\}+\sum_{i=1}^{\lceil T/H \rceil}H^{2/3}\mathbb{E}[V_{H,i}^{1/3}]+\sum_{i=1}^{\lceil T/H \rceil}\mathbb{E}[V_{H,i}]H\right)\cr &=\tilde{O}\left(\frac{T}{H}\max\{H^{\beta/(\beta+1)}, H^{1/2}\}+T^{2/3}V_T^{1/3}+V_TH\right)\cr &=\tilde{O}\left(\max\{T/H^{1/(\beta+1)}, T/H^{1/2}\}\right)\cr &=\tilde{O}\left(\max\{T^{(2\beta+1)/(2\beta+2)}, T^{3/4}\}\right) ,\label{eq:regret_bd_small_no_V}
\end{align}
where the second equality comes from Hölder's inequality.

From \eqref{eq:regret_bd_large_no_V} and \eqref{eq:regret_bd_small_no_V}, we have
\begin{align}\label{eq:regret_bd_aducb_no_V}
\mathbb{E}[R^{\pi}_1(T)]=\tilde{O}(\max\{T^{(\beta+1)/(\beta+2)}V_T^{1/(\beta+2)}+T^{(2\beta+1)/(2\beta+2)},T^{2/3}V_T^{1/3}+T^{3/4}\}).
\end{align}

\textbf{Upper Bounding $\mathbb{E}[R_2^\pi(T)]$}. We observe that the EXP3 is run for $\lceil T/H \rceil$ decision rounds and the number of policies (i.e. $\pi_i(\delta')$ for $\delta'\in\mathcal{B}$) is $B$. Denote the maximum absolute sum of rewards of any block with length $H$ by a random variable $Q^\prime$. 
We first provide a bound for $Q^\prime$ using concentration inequalities. For any block $i$, we have 
\begin{align}
    \left|\sum_{t=(i-1)H+1}^{i\cdot H\wedge T}\mu_t(a_t^\pi)+\eta_t\right|\le \left|\sum_{t=(i-1)H+1}^{i\cdot H\wedge T}\mu_t(a_t^\pi)\right|+\left|\sum_{t=(i-1)H+1}^{i\cdot H\wedge T}\eta_t\right|.\label{eq:Q_bd_no_e}
\end{align}
Denote by $\mathcal{T}_i$ the set of time steps in the $i$-th block. We define the event $E_2(i)=\{|\widehat{\mu}_{[s_1,s_2]}(a)-\overline{\mu}_{[s_1,s_2]}(a)|\le \sqrt{14\log(H)/n_{[s_1,s_2]}(a)}, \hbox{ for all } s_1,s_2 \in \mathcal{T}_i, s_1\le s_2, a\in\mathcal{A}(i)\}$ and $E_2=\bigcap_{i\in[\lceil T/H \rceil]}E_2(i).$ From Lemma~\ref{lem:chernoff_sub-gau}, with $H=\lceil \sqrt{T}\rceil$ we have 
\[
\mathbb{P}(E_2^c)\le \sum_{i\in[\lceil T/H\rceil]}\frac{2H^3}{H^6}\le \frac{2}{T}.
\]
By assuming that $E_2$ holds true, we can get a lower bound for $\mu_t(a_t^\pi)$, which may be a negative value from rotting, 
for getting an upper bound for $|\sum_{t=(i-1)H+1}^{i\cdot H\wedge T}\mu_t(a_t^\pi)|$. We can observe that  $\sum_{t=(i-1)H+1}^{i\cdot H\wedge T}\mu_t(a_t^\pi)\le H$. Therefore the remaining part is to get a lower bound for $\sum_{t=(i-1)H+1}^{i\cdot H\wedge T}\mu_t(a_t^\pi)$. For the proof simplicity, we consider that when an arm is rotted, then the arm is considered as a different arm after rotting. For instance, when arm $a$ is rotted at time $s$, then arm $a$ is considered as a different arm $a'$ after $s$. Therefore, each arm can be considered to be stationary. The set of arms is denoted by $\mathcal{L}$. We denote by $\mathcal{L}^+$ the set of arms having $\mu_t(a)\ge 0$ for $a\in\mathcal{L}$. We first focus on the arms in $\mathcal{L}/\mathcal{L}^+$. 

Let $\delta_{\max}$ denote the maximum value in $\mathcal{B}$ so that $\delta_{\max}=1/2$. With $E_2$ and $a\in \mathcal{L}/\mathcal{L}^+$, we assume that $\tilde{t}_2(\ge t_2)$ is the smallest time that there exists $t_1\le s\le \tilde{t}_2$ with $s=\tilde{t}_2+1-2^{l-1}$ for $l\in\mathbb{Z}^+$ such that the following condition is met:
\begin{align}
\mu_{t_1}(a)+\sqrt{12\log(H)/n_{[s,\tilde{t}_2]}(a)}+\sqrt{14\log(H)/n_{[s,\tilde{t}_2]}(a)}< 1-\delta_{\max} .\label{eq:bad_stop_con}
\end{align}
From the definition of $\tilde{t}_2$, we observe that for given $\tilde{t}_2$, the time step $s$, which satisfies \eqref{eq:bad_stop_con}, equals to $t_1$. Then, we can observe that  $n_{[t_1,\tilde{t}_2]}(a)= \max\{\lceil C_2\log(H)/(\Delta_{t_1}(a)-\delta_{\max})^2\rceil,1\}$ for some constant $C_2>0$, which satisfies \eqref{eq:bad_stop_con}. From $n_{[t_1,t_2]}(a)\le n_{[t_1,\tilde{t}_2]}(a)  $, we have $n_{[t_1,t_2]}(a)\le \max\{C_3\log(H)/(\Delta_{t_1}(a)-\delta_{\max})^2, 1\}$ for some constant $C_3>0$.  Then the regret from arm $a$, denoted by $R(a)$, is bounded as follows: $R(a)= \Delta_{t_1}(a)n_{[t_1,t_2]}(a)\le  \max\{ C_3\log(H) \Delta_{t_1}(a)/(\Delta_{t_1}(a)-\delta_{\max})^2,\Delta_{t_1}(a)\}.$ Since $x/(x-\delta_{\max})^2< 1/(1-\delta_{\max})^2=4$ for any $x> 1$, we have $\Delta_{t_1}(a)/(\Delta_{t_1}(a)-\delta_{\max})^2\le 4$.
Then we have $R(a)\le \max\{ C_4\log(H),\Delta_{t_1}(a)\}$ for some constant $C_4>0$. Then from $|\mathcal{L}|\le H$, we have $\sum_{a\in\mathcal{L}/\{\mathcal{L}^+\}}R(a)\le \max\{C_4H\log(H), H+V_{H,i}\}$. 

Since  $\sum_{a\in\mathcal{L}^+}R(a)\le H$,  we have $\sum_{a\in\mathcal{L}}R(a)\le H+\max\{C_4H\log(H), H+V_{H,i}\}$. Therefore from $R(a)=\sum_{t=t_1(a)}^{t_2(a)}(1-\mu_t(a))$, we have \[ \sum_{t=(i-1)H+1}^{iH\wedge T}\mu_t(a_t) \ge -\max\{C_4H\log(H),H+V_{H,i}\},\] which implies that from $V_{H,i}\le H$ under Assumption~\ref{ass:V_H}, for some $C_5>0$, we have \[\left|\sum_{t=(i-1)H+1}^{i\cdot H\wedge T}\mu_t(a_t^\pi)\right|\le \max\{C_4H\log(H),H+V_{H,i}\}\le C_5H\log(H).\]

% Let $\delta_{\max}$ denote the maximum value in $\mathcal{B}$.  From the policy $\pi$ with $H=T^{1/2}$, when $\mu_1(a)=0$ for some arm $a$, since 
% \begin{align*}
%     \widehat{\mu}_{[s,t]}(a)+\sqrt{12\log(H)/n_{[s,t]}(a)} &\le \sqrt{8\log(T)/n_{[s,t]}(a)}+\sqrt{6\log(T)/n_{[s,t]}(a)}, 
% \end{align*}
% we need to find a positive integer $n$ such that
%    \begin{align*}
%        \sqrt{8\log(T)/n}+\sqrt{6\log(T)/n}\le 1-\delta_{\max},
%    \end{align*} 
%    in which $n$ is an upper bound for the number of pulls of arm $a$. From $\delta_{\max}=1/2$, we can observe that the condition holds with $n= \lceil 128\log(T)\rceil$. Considering $s=t-2^{i-1}$ for some $i\in \mathbb{Z}^+$ from the algorithm, we have $n_{[t_1,t_2]}(a)\le 2\times\lceil 128\log(T)\rceil$. From this fact, we can find that for any sampled arm $a$ from $\pi$, we have $\mu_t(a)\ge -(256\log(T)+2)\rot  $. Then under $E_2$, with $\mu_t(a)\le 1$, for any $i\in[\lceil T/H\rceil]$ we have $|\sum_{t=(i-1)H+1}^{i\cdot H\wedge T}\mu_t(a_t^\pi)|\le\max\{ (257 \log(T))\rot H,H\}$.
   
Next we provide a bound for $|\sum_{t=(i-1)H+1}^{i\cdot H\wedge T}\eta_t|$. We define the event $E_3(i)=\{|\sum_{t=(i-1)H+1}^{i\cdot H\wedge T}\eta_t| \le 2\sqrt{H\log( T)}\}$ and $E_3=\bigcap_{i\in[\lceil T/H\rceil]}E_3(i)$. From Lemma~\ref{lem:chernoff_sub-gau}, for any $i\in[\lceil T/H\rceil]$, we have
\[
\mathbb{P}\left(E_3(i)^c\right)\le \frac{2}{T^2}.
\]
Then, under $E_2\cap E_3$, with \eqref{eq:Q_bd_no_e}, we have 
\[
Q^\prime\le \max\{ C_5H\log H,H\}+2\sqrt{H\log(T)}\le C_5H\log H+2\sqrt{H\log(T)} ,
\] which implies
 $1/2+\sum_{t=(i-1)H}^{i\cdot H\wedge T}r_t/(C_5H\log H+4\sqrt{H\log T})\in[0,1]$ or some large enough $C>0$. With the rescaling and translation of rewards in Algorithm~\ref{alg:alg2}, from Corollary 3.2. in \citet{auer}, we have    
\begin{align}  
\mathbb{E}[R_2^\pi(T)|E_2\cap E_3]= \tilde{O}\left((C_5H\log H+2\sqrt{H\log T})\sqrt{BT/H}\right)=\tilde{O}\left(\sqrt{HBT}\right).\label{eq:regret_bd_exp3_Q_no_rho}
\end{align}
\begin{remark}
    {Regarding the utilization of the regret analysis of Corollary 3.2 (EXP3) in \citet{auer}, we note that the reward for each base can be defined independently of the actual master's action. One might wonder whether the regret analysis for EXP3 can be utilized, considering the fact that the reward from a selected base may depend on the master's action due to the adaptive rotting rates. However, we highlight that the critical aspect of applying EXP3 analysis is whether the rewards from each base are defined independently of the actual action of the master, rather than whether the received (observed) reward from the selected base depends on the master's action. We can construct rewards for each base $\delta\in\mathcal{B}$ at time $t$ when a block starts, denoted as $x_t(\delta)$, as the reward obtained when the master selects base $\delta$ (even though base $\delta$ is not actually selected from the algorithm). Then, we can define $x_t(\delta)$ for each $\delta$ regardless of the master's actual action. In other words, irrespective of the actually selected base, we define $x_t(\delta)$ for all $\delta\in\mathcal{B}$ as the reward that the master can obtain by selecting $\delta$. In such a case, whatever the selected base by the master is at time $t$, $x_t(\delta)$'s remain the same, respectively. This construction is feasible because it's solely for analytical purposes and not necessary for the algorithm's functioning. With this construction of reward for each base, we can utilize EXP3 analysis to obtain a regret bound regarding the master ($R_2^\pi(T)$).}
\end{remark}

Note that the expected regret from EXP3 is trivially bounded by $o(H^2(T/H))=o(TH)$ and $B=O(\log(T))$. Then, with \eqref{eq:regret_bd_exp3_Q_no_rho}, we have
\begin{align}
\mathbb{E}[R_2^\pi(T)]
&=\mathbb{E}[R_2^\pi(T)|E_2\cap E_3]\mathbb{P}(E_2 \cap E_3)+\mathbb{E}[R_2^\pi(T)|E_2^c\cup E_3^c]\mathbb{P}(E_2^c \cup E_3^c)\cr
    &= \tilde{O}\left(\sqrt{HT}\right)+o\left(TH\right)(4/T^2)\cr
    &= \tilde{O}\left(\sqrt{HT}\right). \label{eq:regret_bd_exp3_no_V}
\end{align}
Finally, from \eqref{eq:regret_up_bd_bob_no_V}, \eqref{eq:regret_bd_aducb_no_V}, and \eqref{eq:regret_bd_exp3_no_V}, with $H=T^{1/2}$, we have
\[
\mathbb{E}[R^\pi(T)]=\tilde{O}\left(\max\left\{V_T^{\frac{1}{\beta+2}}T^{\frac{\beta+1}{\beta+2}}+T^{\frac{2\beta+1}{2\beta+2}},V_T^{\frac{1}{3}}T^{\frac{2}{3}}+T^{\frac{3}{4}}\right\}\right),\] which concludes the proof.

\iffalse
\subsection{Proof of Proposition~\ref{prop:regret_bd_no_V_beta}}\label{app:regret_bd_no_V_beta}
By following the proof steps for Theorem~\ref{thm:R_upper_bd_no_V}, from \eqref{eq:regret_up_bd_bob_no_V}, \eqref{eq:regret_bd_aducb_no_V}, and \eqref{eq:regret_bd_exp3_no_V},  we have 
\begin{align*}
\mathbb{E}[R^\pi(T)]&=\mathbb{E}[R^\pi_1(T)]+\mathbb{E}[R^\pi_2(T)]\cr &=
\tilde{O}\left(\max\left\{T^{(\beta+1)/(\beta+2)}V_T^{1/(\beta+2)},T/H^{1/(\beta+1)},T^{2/3}V_T^{1/3},T/H^{1/2}\right\}+\sqrt{HT}\right).
\end{align*}
Then with $H=\lceil \max\{T^{(\beta+1)/(\beta+3)},T^{1/2}\}\rceil$, we have \[\mathbb{E}[R^\pi(T)]=\tilde{O}(\max\{V_T^{\frac{1}{\beta+2}}T^{\frac{\beta+1}{\beta+2}},V_T^{\frac{1}{3}}T^{\frac{2}{3}},T^{\frac{\beta+2}{\beta+3}},T^{\frac{3}{4}}\}).\]

\fi

% In the following, we provide a regret analysis for the Algorithm~\ref{alg:alg2} with $C_1=10$ and $\kappa=2H+4\sqrt{H\log T}$ (which is defined in Algorithm~\ref{alg:alg2}).

\subsubsection{Case of $\delta_V^\dagger > \delta_S^\dagger$}\label{app:abrupt_no}

 Let $\pi_i(\delta^\prime)$ for $\delta^\prime \in \mathcal{B}$ denote the base policy for time steps between $(i-1)H+1$ and $i\cdot H\wedge T$ in Algorithm~\ref{alg:alg2} using $1-\delta'$ as a threshold. Denote by $a_t^{\pi_i(\delta^\prime)}$ the pulled arm at time step $t$ by policy $\pi_i(\delta^\prime).$ Then, for $\delta^\dagger \in \mathcal{B}$, which is set later for a near-optimal policy, we have
\begin{equation}
\mathbb{E}[R^\pi(T)]=\mathbb{E}\left[\sum_{t=1}^T 1-\sum_{i=1}^{\lceil T/H\rceil}\sum_{t=(i-1)H+1}^{i\cdot H\wedge T}\mu_t(a_t^{\pi})\right] = \mathbb{E}[R_1^\pi(T)]+\mathbb{E}[R_2^\pi(T)].
\label{eq:regret_up_bd_bob_ab}
\end{equation}

where 
\[
R_1^\pi(T) = \sum_{t=1}^T 1-\sum_{i=1}^{\lceil T/H\rceil}\sum_{t=(i-1)H+1}^{i\cdot H\wedge T}\mu_t(a_t^{\pi_i(\delta^\dagger)})
\]
and
\[
R_2^\pi(T) = \sum_{i=1}^{\lceil T/H\rceil}\sum_{t=(i-1)H+1}^{i\cdot H\wedge T}\mu_t(a_t^{\pi_i(\delta^\dagger)})-\sum_{i=1}^{\lceil T/H\rceil}\sum_{t=(i-1)H+1}^{i\cdot H\wedge T}\mu_t(a_t^{\pi}).
\]
Note that $R_1^\pi(T)$ accounts for the regret caused by the near-optimal base algorithm $\pi_i(\delta^\dagger)$'s against the optimal mean reward and $R_2^\pi(T)$ accounts for the regret caused by the master algorithm by selecting a base with $\delta\in\mathcal{B}$ at every block against the base with $\delta^\dagger$. In what follows, we provide upper bounds for each regret component. We first provide an upper bound for $\mathbb{E}[R_1^\pi(T)]$ by following the proof steps in Theorem~\ref{thm:abrupt_upper_bd}. Then we provide an upper bound for $\mathbb{E}[R_2^\pi(T)]$. We set $\delta^\dagger$ to be a smallest value in $\mathcal{B}$ which is larger than $\delta_S^\dagger=c_1\max\{(S_T/T)^{1/(\beta+1)},1/H^{1/(\beta+1)},(S_T/T)^{1/2},1/H^{1/2}\}$ such that we have $\delta^\dagger=\Theta(\max\{(S_T/T)^{1/(\beta+1)},1/H^{1/(\beta+1)},(S_T/T)^{1/2},1/H^{1/2}\})$.

\textbf{Upper Bounding $\mathbb{E}[R_1^\pi(T)]$}. We refer to the period starting from time step $(i-1) H+1$ to time step $i\cdot H\wedge T$ as the $i$-th \textit{block}.
For any $i\in\lceil T/H-1\rceil$, policy $\pi_i(\delta^\dagger)$ runs over $H$ time steps independent to other blocks so that  each block has the same expected regret and the last block has a smaller or equal expected regret than other blocks. Therefore, we focus on finding a bound on the regret from the first block equal to $\sum_{t=1}^{   H}1-\mu_t(a_t^{\pi_1(\delta^\dagger)})$. We define an arm $a$ as a \emph{good} arm if $\Delta_t(a)\le \delta^\dagger/2$, a \emph{near-good} arm if $ \delta^\dagger/2< \Delta_t(a)\le 2\delta^\dagger$, and otherwise, $a$ is a \emph{bad} arm at time $t$. In $\mathcal{A}$, let $\bar{a}_1,\bar{a}_2,\dots,$ be a sequence of arms, which have i.i.d. mean rewards following \eqref{eq:dis}. For analysis, \textit{we consider abrupt change as sampling a new arm.} In other words, if a sudden change occurs to an arm $a$ by pulling the arm $a$, then the arm is considered to be two different arms; before and after the change. The type of abruptly rotted arms (good, near-good, or bad) after the change is determined by the rotted mean reward. Without loss of generality, we assume that the policy samples arms, which are pulled at least once, in the sequence of $\bar{a}_1,\bar{a}_2,\dots,.$

Denote by $\mathcal{A}(i)$ the set of sampled arms, which are pulled at least once, in the $i$-th block, which satisfies $|\mathcal{A}(i)|\le H$.  We also define $\mathcal{A}_S(i)$ as a set of arms that have been rotted and pulled at least once in the $i$-th block, which satisfies $|\mathcal{A}_S(i)|\le S_i$, where $S_i$ is defined as the number of abrupt changes in the $i$-th block.
% For notation simplicity, we use $n_t(a)$ instead of $n_{1,t}(a)$ and $\tilde{\mu}^o_{t}(a)$ instead of $\tilde{\mu}^o_{1,t}(a)$.
%  Let 
% \begin{align*}
%   \hat{\mu}_{t}(a)=\frac{\sum_{s=1}^{t-1}r_s\mathbbm{1}(a_s=a)}{n_t(a)} \text{ and } \bar{\mu}_t(a)=\frac{\sum_{s=1}^{t-1}\mu_s(a)\mathbbm{1}(a_s=a)}{n_t(a)}.  
% \end{align*}
%  Define the event $E_1=\{|\hat{\mu}_t(a)-\bar{\mu}_t(a)|\le \sqrt{8\log(H)/n_{t}(a)}, \hbox{ for all } t\in [H], a \in \mathcal{A}(1)\}$. From Lemma~\ref{lem:chernoff_sub-gau}, 
% by following similar steps of the proof of Lemma 5 in \citet{auer2019adaptively}, we have
%  $$
%  \mathbb{P}(E_1)\ge 1-2/H^{2}.
%  $$ 
Let $\overline{\mu}_{[t_1,t_2]}(a)=\sum_{t=t_1}^{t_2}\mu_t(a)/n_{[t_1,t_2]}(a)$. We define the event $E_1=\{|\widehat{\mu}_{[s_1,s_2]}(a)-\overline{\mu}_{[s_1,s_2]}(a)|\le \sqrt{12\log(H)/n_{[s_1,s_2]}(a)} \hbox{ for all } 1\le s_1\le s_2\le H, a\in\mathcal{A}(i)\}$. From Lemma~\ref{lem:chernoff_sub-gau}, as in \eqref{eq:union_con_rho}, we have \[\mathbb{P}(E_1^c) \le2/H^2.\] For the case that $E_1$ does not hold, the regret is $\mathbb{E}[R^\pi(H)|E_1^c]\mathbb{P}(E_1^c)=O(1)$, which is negligible comparing with the regret when $E_1$ holds true which we show later. Therefore, in the rest of the proof we assume that $E_1$ holds true. 

In the following, we first provide a regret bound over the first block.

For regret analysis, we divide $R^{\pi_1(\delta^\dagger)}_1(H)$ into two parts, $R^\ga(H)$ and $R^\ba(H)$ corresponding to regret of good or near-good arms, and bad arms over time $T$, respectively, such that $R^\pi_1(H)=R^\ga(H)+R^\ba(H)$. We can easily obtain that
\begin{align}
\mathbb{E}[R^\ga(H)]=O(\delta^\dagger H), \label{eq:abrupt_regret_good_no_s}
\end{align}
from $\Delta(a)\le 2\delta^\dagger$ for any good or near-good arms $a$.

Now we analyze $R^\ba(H)$. We divide regret $R^\ba(H)$ into two regret from bad arms in $\mathcal{A}(1)/\mathcal{A}_S(1)$, denoted by $R^{\ba,1}(H)$, and regret from bad arms in $\mathcal{A}_S(1)$, denoted by $R^{\ba,2}(H)$ such that $R^\ba(H)=R^{\ba,1}(H)+R^{\ba,2}(H)$. We first analyze $R^{\ba,1}(H)$ in the following. We consider arms in $\mathcal{A}(1)/\mathcal{A}_S(1)$.  For the proof, we adopt the episodic approach in \citet{kim2022rotting} for regret analysis.  In the following, we introduce some notation. \textit{Here we only consider arms in $\mathcal{A}(1)/\mathcal{A}_S(1)$} so that the following notation is defined without considering (rotted) arms in $\mathcal{A}_S(1)$. Given a policy sampling arms in the sequence order, let $m^\ga$ be the number of samples of distinct good arms and $m^{\ba}_i$ be the number of consecutive samples of distinct bad arms between the $i-1$-st and $i$-th sample of a good arm among $m^\ga$ good arms. We refer to the period starting from sampling the $i-1$-st good arm before sampling the $i$-th good arm as the $i$-th \emph{episode}.
Observe that $m^\ba_1,\ldots, m^\ba_{m^\ga}$'s are i.i.d. random variables with geometric distribution with parameter $C(2\delta^\dagger)^\beta$ for some constant $C>0$, conditional on the value of $m^\ga$. Therefore, $\mathbb{P}(m^\ba_i=k)=(1-C(2\delta^\dagger)^\beta)^kC(2\delta^\dagger)^\beta$, for $i = 1, \ldots, m^\ga$.

Define $\tilde{m}_H^\ga$ to be the total number of samples of a good arm by the policy $\pi_1(\delta^\dagger)$ over the horizon $H$ and $\tilde{m}_{i,H}^\ba$ to be the number of selections of a bad arm in the $i$-th episode by the policy $\pi$ over the horizon $H$. For $i\in [\tilde{m}_H^\ga]$, $j\in [\tilde{m}_{i,H}^\ba]$, let $\tilde{n}_i^\ga$ be the number of pulls of the good arm in the $i$-th episode and $\tilde{n}_{i,j}^\ba$ be the number of pulls of the $j$-th bad arm in the $i$-th episode by the policy $\pi_1(\delta^\dagger)$ over the horizon $H$. Let $\tilde{a}$ be the last sampled arm over time horizon $H$ by $\pi_1(\delta^\dagger)$.

  With a slight abuse of notation, we use $\pi_1(\delta^\dagger)$ for a  modified strategy after $H$. Under a policy $\pi_1(\delta^\dagger)$, let $R_{i,j}^\ba$ be the regret (summation of mean reward gaps) contributed by pulling the $j$-th bad arm in the $i$-th episode. Then let $R^{\ba}_{m^\ga}=\sum_{i=1}^{m^\ga}\sum_{j\in[m_i^\ba]}R_{i,j}^\ba,$ which is the regret from initially bad arms over the period of $m^\ga$ episodes. 
For getting $R^{\ba}_{m^\ga}$, here we define the policy $\pi_1(\delta^\dagger)$ after $H$ such that it pulls $H$ amounts for a good arm and zero for a bad arm. After $H$ we can assume that there are no abrupt changes. For the last arm $\tilde{a}$ over the horizon $H$, it pulls the arm  up to $H$ amounts if $\tilde{a}$ is a good arm and $\tilde{n}_{\tilde{m}_H^\ga}^\ga<H$. 
For $i\in[m^\ga]$, $j\in[m_i^\ba]$ let $n_i^\ga$ and $n_{i,j}^\ba$ be the number of pulling the good arm in $i$-th episode and $j$-th bad arm in $i$-th episode under $\pi$, respectively. Here we define $n_i^\ga$'s and $n_{i,j}^\ba$'s as follows: 

If $\tilde{a}$ is a good arm,
\begin{equation*}
    n_i^\ga=
    \begin{cases}
    \tilde{n}_{i}^\ga &\text{for } i\in[\tilde{m}_H^\ga-1]  \\
     H & \text{for } i=\tilde{m}_H^\ga
      \\
     0 & \text{for } i\in[m^\ga]/[\tilde{m}_H^\ga]
    \end{cases}, 
    n_{i,j}^\ba=
    \begin{cases}
    \tilde{n}_{i,j}^\ba &\text{for } i\in[\tilde{m}_H^\ga],j\in[\tilde{m}_{i,H}^\ba]\\
    0 &\text{for } i\in[m^\ga]/[\tilde{m}_H^\ga],j\in[m^\ba_i]/[\tilde{m}_{i,H}^\ba].
    \end{cases}
\end{equation*}
Otherwise,
\begin{equation*}
    n_i^\ga=
    \begin{cases}
    \tilde{n}_{i}^\ga &\text{for } i\in[\tilde{m}_H^\ga]  \\
     H & \text{for } i=\tilde{m}_H^\ga+1
     \\
     0 & \text{for } i\in[m^\ga]/[\tilde{m}_H^\ga+1]
    \end{cases}, 
    n_{i,j}^\ba=
    \begin{cases}
    \tilde{n}_{i,j}^\ba &\text{for } i\in[\tilde{m}_H^\ga],j\in[\tilde{m}_{i,H}^\ba]\\
    0 &\text{for } i\in[m^\ga]/[\tilde{m}_H^\ga-1],j\in[m^\ba_i]/[\tilde{m}_{i,H}^\ba].
    \end{cases}
\end{equation*}

With a slight abuse of notation, we define $S_i$ to be the number of abrupt changes in $i$-th block. Then, we show that if $m^{\mathcal{G}}=S_1$, then $R^{\ba,1}(H)\le R^{\ba}_{m^\ga}$.
\begin{lemma}\label{lem:regret_bd_prob_abrupt_no_s}
Under $E_1$, when $m^\ga=S_1$ we have 
$$R^{\ba,1}(H)\le R^{\ba}_{m^\ga}.$$
\end{lemma}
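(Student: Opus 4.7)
The plan is to mirror the argument used in the proof of Lemma~\ref{lem:regret_bd_prob_abrupt}, but restricted to a single block of length $H$ and with the block-local count $S_1$ replacing the horizon-level count $S_T$. The goal is to show that once the policy $\pi_1(\delta^\dagger)$ has sampled $S_1$ good arms in the first block, the total number of pulls spent on good arms already exhausts the $H$ time steps of the block, so no further bad-arm regret can be incurred beyond the first $m^\ga=S_1$ episodes. Combined with the convention for $n_i^\ga$ that extends $\pi_1(\delta^\dagger)$ to continue pulling a good arm up to $H$ times after $H$ if necessary, this forces $R^{\ba,1}(H)\le R^\ba_{m^\ga}$.

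The key observation that I would establish first is the one carried over from Lemma~\ref{lem:regret_bd_prob_abrupt}: if the algorithm samples a good arm $a$ at some time $t'$ and no abrupt change occurs between $t'$ and some later time $t\le H$, then for every window start $s\in\mathcal{T}_t(a)$ we have, under $E_1$ and stationarity of $\mu(a)$ throughout this interval,
\[
\widehat{\mu}_{[s,t]}(a)+\sqrt{12\log(H)/n_{[s,t]}(a)}\ge \overline{\mu}_{[s,t]}(a)\ge \mu_{t'}(a)\ge 1-\delta^\dagger/2 > 1-\delta^\dagger,
\]
so the threshold test in Algorithm~\ref{alg:alg1} never fires. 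Hence, in the absence of any abrupt change after $t'$, the policy keeps pulling $a$ until either the block ends or (per the extended policy after $H$) it has accumulated $H$ pulls.

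Next I would run the case analysis on whether the $S_1$-th good arm is sampled before or after all $S_1-1$ abrupt changes of the block have taken place. In the first case, by the displayed monotonicity just proved, once the $S_1$-th good arm is sampled after every abrupt change has already occurred, the algorithm pulls it continuously; together with the extension rule that sets $n_{\tilde{m}_H^\ga}^\ga=H$ when $\tilde{a}$ is a good arm, we obtain $\sum_{i\in[m^\ga]}n_i^\ga\ge H$. In the opposite case, if the $S_1$-th good arm were sampled strictly before the last abrupt change, then among the $S_1$ good arms there must exist two consecutive ones between which no abrupt change occurred, and the earlier of those two would have been pulled up to time $H$ by the same reasoning, contradicting that a later good arm was ever sampled. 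In both cases $\sum_{i\in[m^\ga]}n_i^\ga\ge H$, which together with the definition of $n_{i,j}^\ba$ implies $R^{\ba,1}(H)\le R^\ba_{m^\ga}$.

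The main subtlety, and the one I would need to be careful about, is handling abrupt changes that land exactly on the time step at which a good arm is sampled, since under the convention adopted in the abrupt-rotting analysis an abrupt change is treated as the sampling of a new arm; this is exactly why the count $S_1$ (and not $S_1-1$) suffices. A minor secondary point is the edge case $2\delta^\dagger\ge 1$, under which no bad arms exist in the block and the inequality is vacuous, matching the earlier remark in the proof of Theorem~\ref{thm:abrupt_upper_bd}. With these two points addressed, the argument closes exactly as in Lemma~\ref{lem:regret_bd_prob_abrupt}.
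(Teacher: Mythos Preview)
Your proposal is correct and follows essentially the same approach as the paper: the same two-case split on whether all $S_1-1$ abrupt changes precede the $S_1$-th good arm, the same contradiction via pigeonhole in the second case, and the same conclusion $\sum_{i\in[m^\ga]}n_i^\ga\ge H$. Your treatment is in fact slightly more explicit than the paper's, since you spell out the displayed inequality showing why the threshold never fires for a good arm in a stationary period and you flag the edge case $2\delta^\dagger\ge 1$.
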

\begin{proof}

There are at most $S_1-1$ number of abrupt changes over the first block $H$. We consider two cases; there are $S_1-1$ abrupt changes before sampling $S_1$-th good arm or not. For the first case, if $\pi_1(\delta^\dagger)$ samples the $S_1$-th good arm and there are $S_1-1$ number of abrupt changes before sampling the good arm, then it continues to pull the good arm for $H$  rounds from $E_1$ and the definition of $\pi_1(\delta^\dagger)$ after $H$.

Now we consider the second case. If $\pi_1(\delta^\dagger)$ samples the $S_1$-th good arm before $T$ 
 and there is at least one abrupt change after sampling the arm, then before sampling the $S_1$-th good arm, there must exist two consecutive good arms such that there is no abrupt change between sampling the two good arms. This is a contraction because
  $\pi_1(\delta^\dagger)$ must pull the first good arm up to $H$ under $E_1$ and $S_1-1$-st abrupt change must occur after $H$.
  
 Therefore, considering the first case, when $m^\ga=S_1+1$, we have \[\sum_{i\in[m^\mathcal{G}]}n_i^\mathcal{G}\ge H,\] which implies $R^\ba(H)\le R^\ba_{m^\mathcal{G}}$.
 \end{proof}
  From the above lemma, we set $m^\ga=S_1$ and analyze $R_{m^\ga}^\ba$ to get a bound for $R^{\ba,1}(H)$ in the following lemma.
 
 \begin{lemma}\label{lem:R_bad_bd_abrupt_no_s}
   Under $E_1$ and policy $\pi_1(\delta^\dagger)$, we have
   \[\mathbb{E}[R_{m^\ga}^\ba]=\tilde{O}\left(\mathbb{E}[S_1\max\{1/(\delta^\dagger)^\beta,1/\delta^\dagger\}]\right).\]
    % \label{eq:R_bad_upper}
  \end{lemma}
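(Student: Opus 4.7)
The plan is to mirror the proof of Lemma~\ref{lem:R_bad_bd_abrupt_1}, adapted to the block of length $H$ instead of the whole horizon $T$, with $m^\ga=S_1$ instead of $m^\ga=S_T$. Since we have already restricted attention to arms in $\mathcal{A}(1)/\mathcal{A}_S(1)$, the bad arms considered here are initially bad (not the products of abrupt rotting), so the analysis essentially reduces to the stationary bad-arm analysis combined with the episodic accounting.

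The first step is to control the number of pulls of each bad arm. Fix the $j$-th bad arm $a(i,j)$ in the $i$-th episode, sampled at time $t_1$ with stopping time $t_2$. Using the threshold condition together with $E_1$ (so that the empirical mean is close to the true mean and $a(i,j)$'s true mean is essentially $\mu_{t_1}(a(i,j))$ since it is initially bad and stationary up to an abrupt change, which we treat as a fresh arm), I would argue that there is a smallest $\tilde t_2\ge t_2$ and an $s\in\{\tilde t_2+1-2^{l-1}\}$ with $s=t_1$ such that
\[
\mu_{t_1}(a(i,j))+2\sqrt{12\log(H)/n_{[s,\tilde t_2]}(a(i,j))}<1-\delta^\dagger.
\]
Solving gives $n_{[t_1,\tilde t_2]}(a(i,j))=\tilde O(1/(\Delta_{t_1}(a(i,j))-\delta^\dagger)^2)$, and since $n_{i,j}^\ba\le n_{[t_1,\tilde t_2]}(a(i,j))$, one obtains $n_{i,j}^\ba=\tilde O(1/(\Delta_1(a(i,j))-\delta^\dagger)^2)$. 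This bound is also valid when the arm's pulls are truncated by an abrupt change (which we treat as sampling a new arm) or by the block boundary.

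Next, I would translate this pull bound into a regret bound per bad arm. Conditioning on $a$ being bad gives the density $b(x)=\mathbb{P}(\Delta_1(a)=x)/(1-C(2\delta^\dagger)^\beta)=O(\mathbb{P}(\Delta_1(a)=x))$ for $2\delta^\dagger<x\le 1$. Then, reusing the integration steps leading to equations~\eqref{eq:R_bad_ij_int}, \eqref{eq:int_ap}, \eqref{eq:ap_sum} and the case analysis \eqref{eq:bad_int_bd1}--\eqref{eq:bad_int_bd3} verbatim (these estimates are purely about the tail distribution \eqref{eq:dis} and do not depend on whether the horizon is $H$ or $T$), I obtain
\[
\mathbb{E}[R_{i,j}^\ba]=\tilde O\!\left(\max\{1,(\delta^\dagger)^{\beta-1}\}\right).
\]

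The final step is the episodic summation. Conditional on $m^\ga$, the $m_i^\ba$ are i.i.d.\ geometric with mean $1/(2\delta^\dagger)^\beta-1=O(1/(\delta^\dagger)^\beta)$. Combined with $m^\ga=S_1$ and Wald-type aggregation,
\[
\mathbb{E}[R_{m^\ga}^\ba]=\tilde O\!\left(S_1\cdot\frac{1}{(\delta^\dagger)^\beta}\cdot\max\{1,(\delta^\dagger)^{\beta-1}\}\right)=\tilde O\!\left(S_1\max\{1/(\delta^\dagger)^\beta,1/\delta^\dagger\}\right),
\]
which is exactly the claimed bound. The main obstacle is bookkeeping rather than any new idea: one has to verify that the pull bound remains valid when the arm's run is truncated by an abrupt change (since we treat post-change arms as members of $\mathcal{A}_S(1)$, not of the current episode), and that the geometric distribution of $m_i^\ba$ is unaffected by conditioning on the block cutoff given $m^\ga$; both follow from the same arguments used in Lemma~\ref{lem:R_bad_bd_abrupt_1}.
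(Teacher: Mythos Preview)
Your proposal is correct and follows essentially the same approach as the paper: the paper's proof of this lemma simply says ``We can show this theorem by following the proof steps in Lemma~\ref{lem:R_bad_bd_abrupt_1},'' and your write-up carries out exactly that adaptation, bounding $n_{i,j}^\ba=\tilde O(1/(\Delta_{t_1}(a(i,j))-\delta^\dagger)^2)$ via the threshold condition under $E_1$, invoking the same integration estimates \eqref{eq:int_ap}--\eqref{eq:bad_int_bd3} to get $\mathbb{E}[R_{i,j}^\ba]=\tilde O(\max\{1,(\delta^\dagger)^{\beta-1}\})$, and then summing over $m^\ga=S_1$ episodes with $\mathbb{E}[m_i^\ba]=O(1/(\delta^\dagger)^\beta)$.
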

  \begin{proof}
 We can show this theorem by following the proof steps in  Lemma~\ref{lem:R_bad_bd_abrupt_1}.
%  If $\tilde{a}$ is a good arm, then
% for $i\in [\tilde{m}_H^\ga-1]$, we have $\mathbb{E}[R_i^\ga]=\Theta(\mathbb{E}[\tilde{n}_i^\ga]2\delta)$. We also have $\mathbb{E}[R_{\tilde{m}_H^\ga}^\ga]=2H\delta$ and for $i\in[m^\ga]/[\tilde{m}_H^\ga]$ we have $R_i^\ga=0$. Then using $\sum_{i=1}^{m^\ga}\tilde{n}_i^\ga\le 2H $ we have $\sum_{i=1}^{m^\ga}\mathbb{E}[R_i^\ga]\le 4H\delta$. Similarly, when  $\tilde{a}$ is a bad arm, we have  $\sum_{i=1}^{m^\ga}\mathbb{E}[R_i^\ga]\le 2H\delta$. 

%  Let $a(i,j)$ be a sampled arm for $j$-th bad arm in the $i$-th episode at time $t_{i,j}$. The algorithm stops pulling an arm $a$ at time $t$ if $\widehat{\mu}_{[s,t]}(a)+\sqrt{10\log(H)/n_{[s,t]}(a)}\le 1-\delta$ for some $s$ such that $t_{i,j}\le s< t$.
% %  where $t^\prime$ is $t_{l_a}$ when there is no abrupt change between $t_{l_a}$ and $t$; otherwise $t^\prime$ is the time when abrupt change happens after $t_{l_a}$.
%  Therefore, from $E_1$ we can get $n_{i,j}^\ba\le n+1$ for $i\in[\tilde{m}_T], j\in[\tilde{m}_{i,H}^\ba]$ where $n$ satisfies 
%  $$\mu_{t_{i,j}}(a(i,j))+2\sqrt{10\log(T)/n}\le 1-\delta,$$
%  which implies $n_{i,j}^\ba=\tilde{O}(1/(\Delta_{t_{i,j}}(a(i,j))-\delta)^2)$. Then for any $i\in[m^\ga]$, $j\in[m^\ba_i]$, we have 
% \begin{align*}
% \mathbb{E}[R_{i,j}^\ba]&\le\mathbb{E}
%     \left[\Delta(a(i,j))n_{i,j}^\ba\right]\cr 
%     &= \tilde{O}\left(\max\{1,\delta^{\beta-1}\}\right).
% \end{align*}  
  \end{proof}

Now we analyze $R^{\ba,2}(H)$ in the following lemma. We denote by $V_H$ a cumulative amount of rotting rates in the first block.  

\begin{lemma}\label{lem:R_bad_bd_abrupt_no_s_2}
   Under $E_1$ and policy $\pi$, we have
   $$\mathbb{E}\left[R^{\ba,2}(H)\right]=\tilde{O}\left(\mathbb{E}\left[\max\{S_1/\delta^\dagger,\sum_{s=1}^{S_1}\rho_{t(s)}\}\right]\right).$$
    % \label{eq:R_bad_upper}
  \end{lemma}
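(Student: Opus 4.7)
The plan is to mirror the argument of Lemma~\ref{lem:R_bad_bd_abrupt_2} at the block level, replacing the horizon $T$ by the block length $H$ and the threshold $\delta$ by $\delta^\dagger$, and restricting attention to the first block (by symmetry across blocks, the other blocks are handled by the outer summation). For each arm $a \in \mathcal{A}_S(1)$ classified as bad (so $\Delta_{t_1}(a) > 2\delta^\dagger$), let $t_1 := t_1(a)$ and $t_2 := t_2(a)$ be its first and last pull in the block under $\pi_1(\delta^\dagger)$. Arms in $\mathcal{A}_S(1)$ are the post-rotting versions of previously sampled arms (treating abrupt rotting as sampling a new arm), so each is associated with some index $s \in [S_1]$ and the jump magnitude is $\rho_{t_1-1} = \rho_{t(s)}$.

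The first step is to reuse the one-window UCB ejection bound. Under $E_1$, for any $s$ with $s = \tilde{t}_2+1-2^{l-1}$ one has $\widehat{\mu}_{[s,\tilde{t}_2]}(a) + \sqrt{12\log(H)/n_{[s,\tilde{t}_2]}(a)} \le \mu_{t_1}(a) + 2\sqrt{12\log(H)/n_{[s,\tilde{t}_2]}(a)}$ (since no further rotting happens to arm $a$ inside the block by our relabelling). Taking $\tilde{t}_2 \ge t_2$ to be the smallest time at which this upper bound drops below $1-\delta^\dagger$, the minimizing $s$ equals $t_1$, and one obtains $n_{[t_1,\tilde{t}_2]}(a) = \max\{\lceil C_2 \log(H)/(\Delta_{t_1}(a)-\delta^\dagger)^2 \rceil,\ 1\}$ for some constant $C_2>0$. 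Consequently $n_{[t_1,t_2]}(a) \le n_{[t_1,\tilde{t}_2]}(a)$ is $\tilde{O}(\max\{1/(\Delta_{t_1}(a)-\delta^\dagger)^2,\ 1\})$, regardless of whether the arm is evicted by the threshold, by a subsequent abrupt change, or by reaching the block's end.

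The key structural observation then follows exactly as in Lemma~\ref{lem:R_bad_bd_abrupt_2}: if the pre-rotting version $a'$ had satisfied $\Delta_{t_1(a')}(a') \ge \sqrt{C_2 \log(H)} + \delta^\dagger$, it would have been eliminated after a single pull, so $a$ could not belong to $\mathcal{A}_S(1)$. Hence for every $a \in \mathcal{A}_S^\ba(1)$ arising from the $s$-th abrupt event,
\begin{align*}
\Delta_{t_1}(a) \;\le\; \sqrt{C_2 \log(H)} + \delta^\dagger + \rho_{t(s)}.
\end{align*}
Combining this with the $n_{[t_1,t_2]}(a)$ bound and using that $x/(x-\delta^\dagger)^2 \le 2/\delta^\dagger$ for $x \ge 2\delta^\dagger$, the per-arm regret satisfies $R(a) = \Delta_{t_1}(a) n_{[t_1,t_2]}(a) = \tilde{O}(\max\{1/\delta^\dagger,\ \Delta_{t_1}(a)\})$.

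Summing over the at most $S_1$ arms in $\mathcal{A}_S^\ba(1)$ and plugging in the bound on $\Delta_{t_1}(a)$ yields
\begin{align*}
\mathbb{E}[R^{\ba,2}(H)] \;=\; \tilde{O}\!\left( \max\!\Big\{ S_1/\delta^\dagger,\ S_1 + \sum_{s=1}^{S_1}\rho_{t(s)} \Big\} \right) \;=\; \tilde{O}\!\left( \max\!\Big\{ S_1/\delta^\dagger,\ \textstyle\sum_{s=1}^{S_1}\rho_{t(s)} \Big\} \right),
\end{align*}
where the simplification uses $S_1 \le S_1/\delta^\dagger$ since $\delta^\dagger \le 1$. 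There is no genuinely new difficulty here; the only minor bookkeeping point is to check that the $n_{[t_1,t_2]}(a) \le n_{[t_1,\tilde{t}_2]}(a)$ inequality is valid even if the arm is terminated by a later abrupt change (which follows because such termination only decreases the number of pulls) or by reaching the end of the block at time $H$.
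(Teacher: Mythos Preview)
Your proposal is correct and follows exactly the approach the paper takes: the paper's own proof of this lemma simply says ``We can show this theorem by following the proof steps in Lemma~\ref{lem:R_bad_bd_abrupt_2},'' and your write-up is precisely that adaptation, replacing $T$ by $H$, $\delta$ by $\delta^\dagger$, and $S_T$ by $S_1$. The per-arm ejection bound, the structural observation that the pre-rotting gap is at most $\sqrt{C_2\log(H)}+\delta^\dagger$, and the final summation all match the paper's argument line by line.
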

  \begin{proof}
   We can show this theorem by following the proof steps in Lemma~\ref{lem:R_bad_bd_abrupt_2}.
 \end{proof}
From Lemmas~\ref{lem:regret_bd_prob_abrupt_no_s}, \ref{lem:R_bad_bd_abrupt_no_s},  \ref{lem:R_bad_bd_abrupt_no_s_2}, we have
\begin{align}
\mathbb{E}[R^\ba(H)]=\mathbb{E}[R^{\ba,1}(H)]+\mathbb{E}[R^{\ba,2}(H)]=\tilde{O}\left(\mathbb{E}\left[S_1\max\{1/(\delta^\dagger)^\beta,1/\delta^\dagger\}+\sum_{s=1}^{S_1}\rho_{t(s)}\right]\right)\label{eq:R_bad_bd_no_s}
\end{align}

From $R_1^\pi(H)=R^\ga(H)+R^\ba(H)$, \eqref{eq:abrupt_regret_good_no_s}, and \eqref{eq:R_bad_bd_no_s},  we have 
\begin{align*}
    \mathbb{E}[R^{\pi_1(\delta^\dagger)}_{m^\ga}]
&= \tilde{O}\left(H\delta^\dagger+\mathbb{E}\left[S_1\max\{1/(\delta^\dagger)^\beta,1/\delta^\dagger\}+\sum_{s=1}^{S_1}\rho_{t(s)}\right]\right).
\end{align*}
The above regret is for the first block.
Therefore, by summing regrets over $\lceil T/H\rceil$ number of blocks, we have shown that
\begin{align}
\mathbb{E}[R_1^\pi(T)]&=\tilde{O}(T\delta^\dagger+(T/H+S_T)\max\{1/(\delta^\dagger)^\beta,1/\delta^\dagger\}+\sum_{s=1}^{S_T}\mathbb{E}[\rho_{t(s)}]).\label{eq:regret_bd_aducb_abrupt}
\end{align}

\textbf{Upper bounding $\mathbb{E}[R_2^\pi(T)]$}. By following the proof steps in Theorem~\ref{thm:R_upper_bd_no_V}, we have 

\begin{align}
\mathbb{E}[R_2^\pi(T)]
&=\tilde{O}\left(\sqrt{HT}\right). \label{eq:regret_bd_exp3_abrupt}
\end{align}

Finally, from \eqref{eq:regret_up_bd_bob_ab}, \eqref{eq:regret_bd_aducb_abrupt}, and \eqref{eq:regret_bd_exp3_abrupt}, with the fact that $\sum_{s=1}^{S_T}\mathbb{E}[\rho_{t(s)}]\le V_T$,  $H=T^{1/2}$, and $\delta^\dagger=\Theta(\max\{(S_T/T)^{1/(\beta+1)},1/H^{1/(\beta+1)},(S_T/T)^{1/2},1/H^{1/2}\})$, we have
\begin{align*}
    \mathbb{E}[R^\pi(T)]&=
    \tilde{O}\left(T\delta^\dagger +(T/H+S_T)\max\{1/(\delta^\dagger)^\beta,1/\delta^\dagger\}+\sqrt{HT}+\sum_{s=1}^{S_T}\mathbb{E}[\rho_{t(s)}]\right) \cr 
    &= \tilde{O}\left(T\delta^\dagger +\max\{T/H,S_T\}\max\{1/(\delta^\dagger)^\beta,1/\delta^\dagger\}+\sqrt{HT}+\sum_{s=1}^{S_T}\mathbb{E}[\rho_{t(s)}]\right) \cr 
    &= \tilde{O}\left(2T\delta^\dagger+\sqrt{HT}+\sum_{s=1}^{S_T}\mathbb{E}[\rho_{t(s)}]\right)
    \cr &=
    \tilde{O}\left(\max\{S_T^{1/(\beta+1)}T^{\beta/(\beta+1)}+T^{(2\beta+1)/(2\beta+2)},\sqrt{S_TT}+T^{3/4},V_T\}\right),
\end{align*}
 which concludes the proof.
\subsection{Proof of 
Theorem~\ref{thm:lower_bd_rotting}: Regret Lower Bound for Slowly Rotting Rewards}\label{app:lower_bound_rotting}

 We first consider the case when $V_T=\Theta(T)$. 
 Recall that $\Delta_1(a)=1-\mu_1(a)$. Then for any randomly sampled $a\in\mathcal{A}$, we have $\mathbb{E}[\mu_1(a)]\ge y\mathbb{P}(\mu_1(a)\ge y)=y \mathbb{P}(\Delta_1(a)<1-y)$ for $y\in [0,1]$. Then with $y=1/2$, we have $\mathbb{E}[\mu_1(a)]\ge (1/2)\mathbb{P}(\Delta_1(a)<(1/2))=\Theta(1)$ from constant $\beta>0$ and \eqref{eq:dis}. Then with $\mathbb{E}[\mu_1(a)]\le 1$, we have $\mathbb{E}[\mu_1(a)]=\Theta(1)$.
% \begin{align*}
%     \mathbb{E}[\Delta_1(a)]&=\int_0^1 P(\Delta_1(a)\ge y)dy\cr &=1-\int_0^1 P(\Delta_1(a)< y)dy\cr &=1-\Theta\left(\int_0^1y^\beta dy\right)\cr &=1-\Theta\left(\frac{1}{\beta+1}\right),
% \end{align*}
 We then think of a policy $\pi'$ that randomly samples a new arm and pulls it once every round. Since $\mathbb{E}[\mu_1(a)]=\Theta(1)$ for any randomly sampled $a$, we have $\mathbb{E}[R^{\pi'}(T)]=\Theta(T).$ Next, we think of any policy $\pi''$ except $\pi'$. Then any policy $\pi''$  must pull an arm $a$ at least twice. Let $t'$ and $t''$ be the rounds when the policy pulls arm $a$. If we consider $\rho_{t'}=V_T$ then such policy has $\Omega(V_T)$ regret bound. Since $V_T=\Theta(T)$, any algorithm has $\Omega(T)$ in the worst case. Therefore we can conclude that any algorithm including $\pi'$ has a regret bound of $\Omega(T)$ in the worst case, which concludes the proof for $V_T=\Theta(T)$.

% For the proof, we adopt the proof methodology of Theorem~1 in \citet{kim2022rotting}, making necessary adjustments to accommodate the cumulative amount of rotting $V_T$ and a generalized mean reward distribution characterized by $\beta>0$.

Now we think of the case where $V_T=o(T)$. For the lower bound, we adopt the proof methodology of Theorem 1 in \citet{kim2022rotting} by making necessary adjustments to accommodate $V_T$ and $\beta$. We note that since higher $\beta$ implies a reduced chance of sampling a near-optimal arm, the criteria for defining the mean rewards of near-optimal arms becomes less stringent for higher $\beta$, which does not appear in the previous work. We first categorize arms as either bad or good according to their initial mean reward values. For the categorization, we utilize two thresholds in the proof as follows.
Consider $0<\gamma<c<1$ for $\gamma$, which will be specified, and a constant $c$. Then the value of $1-\gamma$ represents a threshold value for identifying good arms, while $1-c$ serves as the threshold for identifying bad arms. We refer to arms $a$ satisfying $\mu_1(a)\le 1-c$  as `bad' arms and arms $a$ satisfying $\mu_1(a)> 1-\gamma$ as `good' arms. We also consider a sequence of arms in $\mathcal{A}$ denoted by $\bar{a}_1, \bar{a}_2, \dots$. Given a policy $\pi$, without loss of generality, we can assume that $\pi$ selects arms according to the order of $\bar{a}_1,\bar{a}_2,\dots$. For the rotting rates,  we define $\varrho=V_T/(T-1)$. Then we consider $\rho_t=\varrho$ for all $t\in[T-1]$ so that $\sum_{t=1}^{T-1}\rho_t=V_T$.

\textbf{Case of $V_T=O( 1/T^{1/(\beta+1)})$:} When $V_T=O( 1/T^{1/(\beta+1)})$, the lower bound of order $T^{\frac{\beta}{\beta+1}}$ for the stationary case, from Theorem 3 in \citet{wang}, is tight enough for the non-stationary case. From Theorem 3 in \citet{wang}, we have
\begin{align}
    \mathbb{E}[R^\pi(T)]=\Omega(T^{\frac{\beta}{\beta+1}}).\label{eq:lowbd_small_e}
\end{align}
We note that even though the mean rewards are rotting in our setting, Theorem 3 in \citet{wang} remains applicable without requiring any alterations in the proofs providing a tight regret bound for the near-stationary case. For the sake of completeness, we provide the proof of the theorem in the following.  Let $K_1$ denote the number of bad arms $a$ that satisfy $\mu_1(a)\le1-c$  before sampling the first good arm, which satisfies $\mu_1(a)> 1-\gamma$, in the sequence of arms $\bar{a}_1,\bar{a}_2,\ldots.$  Let $\overline{\mu}$ be the initial mean reward of the best arm among the sampled arms by $\pi$ over time horizon $T$. Then for some $\kappa>0$, we have 
\begin{align}
    R^\pi(T)&=R^\pi(T)\mathbbm{1}(\overline{\mu}\le 1-\gamma)+R^\pi(T)\mathbbm{1}(\overline{\mu} > 1-\gamma)\cr 
    &\ge T\gamma \mathbbm{1}(\overline{\mu}\le 1-\gamma)+K_1c\mathbbm{1}(\overline{\mu}> 1-\gamma)\cr
    &\ge T\gamma \mathbbm{1}(\overline{\mu}\le 1-\gamma)+\kappa c\mathbbm{1}(\overline{\mu}> 1-\gamma, K_1\ge \kappa).\label{eq:regret_lower_decom_small}
\end{align}
By taking expectations on the both sides in \eqref{eq:regret_lower_decom_small} and setting $\kappa=T\gamma/c$, we have
\begin{align*}
    \mathbb{E}[R^\pi(T)]\ge T\gamma \mathbb{P}(\overline{\mu}\le 1-\gamma)+\kappa c(\mathbb{P}(\overline{\mu}> 1-\gamma)-\mathbb{P}(K_1<\kappa))=c\kappa \mathbb{P}(K_1\ge \kappa). 
\end{align*}
 We observe that $K_1$ follows a geometric distribution with success probability $\mathbb{P}(\mu_1(a)> 1-\gamma)/p(\mu_1(a)\notin (1-c,1-\gamma])= \overline{\gamma}\le C_1\gamma^\beta/(1+C_2\gamma^\beta-C_3c^\beta)$ for some constants $C_1,C_2,C_3>0$ from \eqref{eq:dis}, in which the success probability is the probability of sampling a good arm given that the arm is either a good or bad arm. Here we set a constant $0<c<1$ satisfying $1-C_3c^\beta>0$. Then by setting $\gamma=1/T^{\frac{1}{\beta+1}}$ with $\kappa=T^{\frac{\beta}{\beta+1}}/c$, for some constant $C>0$ we have
\begin{align*}
    \mathbb{E}[R^\pi(T)]\ge c\kappa(1-\overline{\gamma})^\kappa=\Omega\left(T^{\frac{\beta}{\beta+1}}(1-C\gamma^\beta)^{T^{\frac{\beta}{\beta+1}}/c} \right)=\Omega(T^{\frac{\beta}{\beta+1}}),
\end{align*} where the last equality is obtained from $\log x\ge 1-1/x$ for all $x>0$.

\textbf{Case of $V_T=\omega( 1/T^{1/(\beta+1)})$ and $V_T=o(T)$:}  When $V_T=\omega( 1/T^{1/(\beta+1)})$, however, the lower bound of the stationary case is not tight enough. Here we provide the proof for the lower bound of $V_T^{1/(\beta+2)}T^{(\beta+1)/(\beta+2)}$ for the case of $V_T=\omega( 1/T^{1/(\beta+1)})$. 
Let $K_m$ denote the number of ``bad" arms $a$ that satisfy $\mu_1(a)\le1-c$  before sampling $m$-th ``good" arm, which satisfies $\mu_1(a)>1-\gamma$, in the sequence of arms $\bar{a}_1,\bar{a}_2,\ldots.$ Let $N_T$ be the number of sampled good arms $a$ such that $\mu_1(a)>1-\gamma$ until $T$. %Then $K$ and $N$ are random variables.

We can decompose $R^\pi(T)$ into two parts as follows: 
\begin{align}
R^\pi(T)=R^\pi(T)\mathbbm{1}(N_T<m)+R^\pi(T)\mathbbm{1}(N_T\ge m).\label{eq:R_decom_lower_e}    
\end{align}

We set $m=\lceil (1/2)T^{1/(\beta+2)}V_T^{(\beta+1)/(\beta+2)}\rceil$ and $\gamma=(V_T/T)^{1/(\beta+2)}$ with $V_T=o(T)$. For the first term in \eqref{eq:R_decom_lower_e}, $R^\pi(T)\mathbbm{1}(N_T<m)$, we consider the fact that the minimal regret is obtained from the situation where there are $m-1$ arms whose mean rewards are $1$. In such a case, the optimal policy must sample the best $m-1$ arms until their mean rewards become below the threshold $1-\gamma$ (step 1) and then samples the best arm at each time for the remaining time steps (step 2). The number of times each arm needs to be pulled for the best $m-1$ arms until their mean reward falls below $1-\gamma$ is bounded from above by $\gamma/\rot+1=\gamma((T-1)/V_T)+1$. Therefore, the regret from step 2 is $R=\Omega((T-m\gamma(T/V_T))\gamma)=\Omega(T^{(\beta+1)/(\beta+2)}V_T^{1/(\beta+2)})$ in which the optimal policy pulls arms which  mean rewards are below $1-\gamma$ for the remaining time after step 1.
Therefore, we have
\begin{align}
    R^\pi(T)\mathbbm{1}(N_T<m)=\Omega(R\mathbbm{1}(N_T<m))=\Omega(T^{(\beta+1)/(\beta+2)}V_T^{1/(\beta+2)}\mathbbm{1}(N_T<m)).\label{eq:lowbd_eq1_e}
\end{align}
For getting a lower bound of the second term in \eqref{eq:R_decom_lower_e}, $R^\pi(T)\mathbbm{1}(N_T\ge m)$, we use the minimum number of sampled arms $a$ that satisfy $\mu_1(a)\le 1-c.$ When $N_T\ge m$ and $K_m\ge \kappa$, the policy samples at least $\kappa$ number of distinct arms $a$ satisfying $\mu_1(a)\le 1-c$ until $T$. Therefore, we have
\begin{align}
    R^\pi(T)\mathbbm{1}(N_T\ge m)\ge c\kappa\mathbbm{1}(N_T\ge m,K_m\ge \kappa).\label{eq:lowbd_eq2_e}
\end{align}
We have $\overline{\gamma}=\Theta(\gamma^\beta)$ from \eqref{eq:dis} with constant $\beta>0$. By setting $\kappa=m/\overline{\gamma}-m-\sqrt{m}/\overline{\gamma}$, with $V_T=o(T)$ and constant $\beta>0$, we have
\begin{align}
\kappa=\Theta(T^{(\beta+1)/(\beta+2)}V_T^{1/(\beta+2)}).\label{eq:kappa}
\end{align}
Then from \eqref{eq:lowbd_eq1_e}, \eqref{eq:lowbd_eq2_e}, and \eqref{eq:kappa}, we have
\begin{align}
  \mathbb{E}[R^\pi(T)]&=\Omega(T^{(\beta+1)/(\beta+2)}V_T^{1/(\beta+2)}\mathbb{P}(N_T<m)+T^{(\beta+1)/(\beta+2)}V_T^{1/(\beta+2)}\mathbb{P}(N_T\ge m,K_m\ge \kappa))\cr &\ge\Omega( T^{(\beta+1)/(\beta+2)}V_T^{1/(\beta+2)}\mathbb{P}(K_m\ge \kappa)). \label{eq:lowbd_eq3_e}
\end{align}
Next we provide a lower bound for $\mathbb{P}(K_m\ge \kappa).$ Observe that $K_m$ follows a negative binomial distribution with $m$ successes and the success
probability $\mathbb{P}(\mu_1(a)> 1-\gamma)/\mathbb{P}(\mu_1(a)\notin(1-c,1-\gamma])=\overline{\gamma}$, in which the success probability is the probability of sampling a good arm given that the arm is either a good or bad arm. In the following lemma, we provide a concentration inequality for $K_m$.

% \begin{restatable}[Lemma A.3 in \citet{kim2022rotting}]{lemma}{sample}\label{lem:concen_geo} For any $1/2+\overline{\gamma}/m<\alpha <1$,
% \begin{align*}
%     \mathbb{P}(K_m\ge\alpha m(1/\overline{\gamma})-m)\ge1-\exp(-(1/3)(1-1/\alpha )^2(\alpha m-\overline{\gamma})).
% \end{align*} 
% \end{restatable}
\begin{lemma}\label{lem:concen_geo}
For any $1/2+\overline{\gamma}/m<\alpha <1$,
\begin{align*}
    \mathbb{P}(K_m\ge\alpha m(1/\overline{\gamma})-m)\ge1-\exp(-(1/3)(1-1/\alpha )^2(\alpha m-\overline{\gamma})).
\end{align*} 
\end{lemma}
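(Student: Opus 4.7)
The plan is to reduce the negative-binomial tail probability for $K_m$ to a Chernoff-type bound on a binomial. I will use the standard duality: since $K_m$ is the number of failures before the $m$-th success in an i.i.d. Bernoulli$(\overline{\gamma})$ sequence, the event $\{K_m \ge N\}$ coincides with the event that fewer than $m$ successes occur in the first $N + m - 1$ trials. Writing $S_n = \sum_{i=1}^n X_i$ with $X_i$ i.i.d.\ Bernoulli$(\overline{\gamma})$ and choosing $N = \alpha m(1/\overline{\gamma}) - m$, this yields (modulo integer-rounding, which only helps) the identification
\[
\mathbb{P}(K_m \ge \alpha m/\overline{\gamma} - m) \;=\; \mathbb{P}(S_n < m), \qquad n \overline{\gamma} = \alpha m - \overline{\gamma}.
\]
Thus $\mu := \mathbb{E}[S_n] = \alpha m - \overline{\gamma}$, and the target reduces to an upper-tail bound on $S_n$ at level $m = (1+\delta)\mu$ with $\delta = m/\mu - 1$.

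Next I will apply the multiplicative Chernoff bound $\mathbb{P}(S_n \ge (1+\delta)\mu) \le \exp(-\delta^2 \mu/3)$ valid for $0 < \delta \le 1$. The lower bound on $\delta$ is immediate from $\mu \le \alpha m$: it gives $\delta = m/\mu - 1 \ge 1/\alpha - 1 > 0$, hence $\delta^2 \ge (1 - 1/\alpha)^2$. Plugging in the value of $\mu$, I obtain
\[
\mathbb{P}(S_n \ge m) \;\le\; \exp\!\bigl(-\tfrac{1}{3}(1-1/\alpha)^2(\alpha m - \overline{\gamma})\bigr),
\]
which rearranges to the stated lower bound on $\mathbb{P}(K_m \ge \alpha m/\overline{\gamma} - m)$.

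The only verification needed is the upper constraint $\delta \le 1$ required by the Chernoff bound; equivalently, $m \le 2\mu = 2(\alpha m - \overline{\gamma})$, which is $(2\alpha - 1)m \ge 2\overline{\gamma}$, i.e.\ $\alpha \ge 1/2 + \overline{\gamma}/m$. This is precisely the lower hypothesis on $\alpha$ in the lemma, so the reduction is legitimate throughout the claimed range. Since everything else is routine, I do not foresee any real obstacle; the main care is simply to track the rounding of $n$ to an integer (which I will absorb by noting $\lfloor \alpha m/\overline{\gamma} - 1 \rfloor \cdot \overline{\gamma} \ge \alpha m - 2\overline{\gamma}$ if needed, and observing that constants in the exponent can be absorbed without affecting the stated form).
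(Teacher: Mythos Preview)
Your proposal is correct and follows essentially the same approach as the paper: both reduce the negative-binomial tail to a binomial upper tail via the standard duality $\{K_m \le n - m\} = \{S_n \ge m\}$, then apply the multiplicative Chernoff bound $\mathbb{P}(S_n \ge (1+\nu)\mu) \le \exp(-\nu^2\mu/3)$ with $\mu = n\overline{\gamma}$ and $\nu = m/\mu - 1$, using $\mu \le \alpha m$ to get $\nu \ge 1/\alpha - 1$ and $\mu \ge \alpha m - \overline{\gamma}$ from the floor. The paper takes $n = \lfloor \alpha m/\overline{\gamma}\rfloor$ directly (citing \cite{brown2011wasted} for the duality), while you take $n = \alpha m/\overline{\gamma} - 1$ and defer the rounding; both routes yield the same exponent, and your identification of the hypothesis $\alpha > 1/2 + \overline{\gamma}/m$ as exactly the condition $\nu \le 1$ is the same verification the paper implicitly needs.
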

\begin{proof}
Let $X_i$ for $i>0$ be i.i.d. Bernoulli random variables with success probability $\overline{\gamma}.$ From Section 2 in \citet{brown2011wasted}, we have 
\begin{align}
    \mathbb{P}\left(K_m\le \left\lfloor\alpha m \frac{1}{\overline{\gamma}}\right\rfloor-m\right)
    =\mathbb{P}\left(\sum_{i=1}^{\left\lfloor\alpha m \frac{1}{\overline{\gamma}}\right\rfloor}X_i\ge m\right).\label{eq:geo_binom}
\end{align}

From \eqref{eq:geo_binom} and Lemma~\ref{lem:chernoff_bino}, for any $1/2+\overline{\gamma}/m<\alpha<1$ we have
\begin{align*}
    \mathbb{P}\left(K_m\le \alpha m \frac{1}{\overline{\gamma}}-m\right)&=\mathbb{P}\left(K_m\le\left\lfloor\alpha m \frac{1}{\overline{\gamma}}\right\rfloor-m\right)\cr \cr &
    =\mathbb{P}\left(\sum_{i=1}^{\left\lfloor\alpha m \frac{1}{\overline{\gamma}}\right\rfloor}X_i\ge m\right)\cr 
    &\le \exp\left(-\frac{(1-1/\alpha)^2}{3}\left\lfloor\alpha m \frac{1}{\overline{\gamma}}\right\rfloor\overline{\gamma}\right)\cr
    &\le \exp\left(-\frac{(1-1/\alpha)^2}{3}(\alpha m-\overline{\gamma})\right),
\end{align*}
in which the first inequality comes from Lemma~\ref{lem:chernoff_bino},
which concludes the proof.
% By following the proof of Theorem 1 in \citet{brown2011wasted} and using the Chernoff's bound for a binomial distribution, we can easily show that
\end{proof}
From Lemma~\ref{lem:concen_geo} with $\alpha=1-1/\sqrt{m}$ and large enough $T$, we have
\begin{align}
    \mathbb{P}(K_m\ge \kappa)&\ge 1-\exp\left(-\frac{1}{3}(m-\sqrt{m}-\overline{\gamma})\left(\frac{1}{\sqrt{m}-1}\right)^2\right)\cr
    &\ge 1-\exp\left(-\frac{1}{6}(m-\sqrt{m})\left(\frac{1}{\sqrt{m}-1}\right)^2\right)\cr
    &=1-\exp\left(-\frac{1}{6}\frac{\sqrt{m}}{\sqrt{m}-1}\right)\cr
    &\ge  1-\exp(-1/6). \label{eq:lowbd_eq4_e}
\end{align}
Therefore, from \eqref{eq:lowbd_eq3_e} and \eqref{eq:lowbd_eq4_e}, we have
\begin{align}
\mathbb{E}[R^\pi(T)]=\Omega(T^{(\beta+1)/(\beta+2)}V_T^{1/(\beta+2)}).    \label{eq:lowbd_large_e}
\end{align}

% \textbf{Case of $V_T=\Theta(T)$:}
% Next, we provide a proof for
% $\mathbb{E}[R^\pi(T)]=\Omega(T)$ when $V_T=\Theta(T)$. Let $\Delta_1(a)=1-\mu_1(a)$. We first think of a policy $\pi'$ that randomly samples a new arm and pulls it once every round. Then for any randomly sampled $a\in\mathcal{A}$,
% \begin{align}
%     \mathbb{E}[\Delta_1(a)]&=\int_0^1 P(\Delta_1(a)\ge y)dy\cr &=1-\int_0^1 P(\Delta_1(a)< y)dy\cr &=1-\Theta\left(\int_0^1y^\beta dy\right)\cr &=1-\Theta\left(\frac{1}{\beta+1}\right),
% \end{align}
% which implies $\mathbb{E}[\mu_1(a)]=\Theta(\frac{1}{\beta+1})$. Then from constant $\beta>0$, $\mathbb{E}[R^{\pi'}(T)]=\Omega(T)$. Next, we think of any policy $\pi''$ excluding $\pi'$. Then the policy must pull an arm $a$ at least twice. Let $t'$ and $t''$ be the rounds when the policy pulls arm $a$. If we consider $\rho_{t'}=V_T$ then such policy has $\Omega(V_T)$ regret bound. Therefore, for any policy $\pi''$, with $\rho_{t'}=V_T$ we have $\mathbb{E}[R^{\pi''}(T)]=\Omega(V_T)=\Omega(T).$ From the above results, for any policy $\pi$ we have
% \begin{align}
% \mathbb{E}\left[R^{\pi}(T)\right]=\Omega(T). \label{eq:lowbd_larger_e}
% \end{align}

Finally, from \eqref{eq:lowbd_small_e} and \eqref{eq:lowbd_large_e}, we conclude that for any policy $\pi$, we have
\begin{align*}\mathbb{E}[R^\pi(T)]=\Omega\left(\max\left\{T^{(\beta+1)/(\beta+2)}V_T^{1/(\beta+2)},T^{\frac{\beta}{\beta+1}}\right\}\right).
\end{align*}

\subsection{Proof of Theorem~\ref{thm:lower_bd_abrupt}: Regret Lower Bound for Abruptly Rotting Rewards} \label{app:abrupt_lower}

 First, we deal with the case when $S_T=1$ or $S_T=\Theta(T)$. When $S_T=1$ (implying $V_T=0$), from the definition, the problem becomes stationary without rotting instances, which implies $\mathbb{E}[R^\pi(T)]=\Omega(\sqrt{T})$ from Theorem 3 in \citet{wang}. When $S_T=\Theta(T)$, we consider that rotting occurs for the first $S_T-1$ rounds with $\rho_{t}=1$ for all $t\in[S_T-1]$. Then it is always beneficial to pull new arms every round until $S_T-1$ rounds because the mean rewards of rotted arms are below $0$ and those of non-rotted arms lie in $[0,1]$. This means that any ideal policy samples a new arm and pulls it every round until $S_T-1$. Then for any randomly sampled $a\in\mathcal{A}$, we have $\mathbb{E}[\mu_1(a)]\ge y\mathbb{P}(\mu_1(a)\ge y)=y \mathbb{P}(\Delta_1(a)<1-y)$ for $y\in [0,1]$. Then with $y=1/2$, we have $\mathbb{E}[\mu_1(a)]\ge (1/2)\mathbb{P}(\Delta_1(a)<(1/2))=\Theta(1)$ from constant $\beta>0$ and \eqref{eq:dis}. Then with $\mathbb{E}[\mu_1(a)]\le 1$, we have $\mathbb{E}[\mu_1(a)]=\Theta(1)$. Since $\mathbb{E}[\mu_1(a)]=\Theta(1)$ for any randomly sampled $a\in\mathcal{A}$, any ideal policy has $\mathbb{E}[R^\pi(T)]\ge \sum_{i=1}^{S_T}\mathbb{E}[\mu_1(a)]=\Omega(S_T)=\Omega(T)$, which concludes the proof for $S_T=\Theta(T)$.

% \textbf{Case of $S_T=\Theta(T)$:}

% We first provide a proof for $
% \mathbb{E}[R^\pi(T)]=\Omega(S_T^{1/(\beta+1)}T^{\beta/(\beta+1)})$. For the proof, we adopt the proof methodology of Theorem~1 in \citet{kim2022rotting}, making necessary adjustments to accommodate abrupt changes with $S_T$ and a generalized mean reward distribution characterized by $\beta>0$.

Now we consider the case of $S_T=o(T)$ and $S_T\ge 2$. We initially provide a regret bound with respect to the cumulative rotting amount of $\bar{V}_T$. We first think of a policy $\pi$ that randomly samples a new arm and pulls it once every round. Then for any randomly sampled $a\in\mathcal{A}$, we have $\mathbb{E}[\mu_1(a)]=\Theta(1)$. Then from constant $\beta>0$, $\mathbb{E}[R^\pi(T)]=\Omega(T)$. Then there always exists $\rho_t$'s satisfying $\sum_{t=1}^{T-1}\rho_t=T$, which implies $\mathbb{E}[R^\pi(T)]=\Omega(T)=\Omega(\bar{V}_T)$. 

Now we think of any nontrivial algorithm which must pull an arm $a$ at least twice. Let $t'$ and $t''$ be the rounds when the policy pulls arm $a$ $ (t'<t'')$. 
If we consider $\rho_{t'}>0$ and $\rho_t=0$ for $t\in[T-1]/\{t'\}$ in which $\rho_{t'}=\sum_{t=1}^{T-1}\rho_t$ and $1+\sum_{t=1}^{T-1}\rho_t\mathbbm{1}(\rho_t\neq 0)\le S_T$, then such policy has $R^\pi(T)=\Omega(\sum_{t=1}^{T-1}\rho_t)$ regret bound because, at time $t''$, it pulls the  arm $a$ rotted by $\rho_{t'}$.
% If we consider $\rho_{t'}=V_T-(\epsilon/(T-2))$ and $\rho_t=\epsilon$ for $t\in[T-1]/\{t'\}$ for some small $\epsilon=o(V_T)$ in which $\sum_{t=1}^{T-1}\rho_t=V_T$, then such policy has $\Omega(V_T)$ regret bound because it pulls the rotted arm $a$ by $\rho_{t'}$ at time $t''$.
Therefore, for any policy $\pi$, there always exist a rotting rate adversary satisfying the following expected regret bound of
\begin{align}
    \mathbb{E}[R^\pi(T)]=\Omega(\bar{V}_T). \label{eq:lower_S_V}
\end{align}

Next, for the regret bound with respect to $S_T$, we follow the proof steps in Theorem~\ref{thm:lower_bd_rotting}. However, the regret bound of $S_T$ does not depend on the magnitude of rotting rates but on the number of rotting instances. To address this, we need to design a new worst-case in which an adversary makes near-optimal arms rotted to be sub-optimal arms \textit{abruptly} rather than gradually. 
We first categorize arms as either bad or good according to their initial mean reward values. For the categorization, we utilize two thresholds in the proof as follows.
Consider $0<\gamma<c<1$ for $\gamma$, which will be specified, and a constant $c$. Then the value of $1-\gamma$ represents a threshold value for identifying good arms, while $1-c$ serves as the threshold for identifying bad arms. We refer to arms $a$ satisfying $\mu_1(a)\le 1-c$  as `bad' arms and arms $a$ satisfying $\mu_1(a)> 1-\gamma$ as `good' arms.  We also consider a sequence of arms in $\mathcal{A}$ denoted by $\bar{a}_1, \bar{a}_2, \dots$. Given a policy $\pi$, without loss of generality, we can assume that $\pi$ selects arms according to the order of $\bar{a}_1,\bar{a}_2,\dots$.

Let $K_m$ denote the number of bad arms $a$ that satisfy $\mu_1(a)\le1-c$  before sampling $m$-th good arm, which satisfies $\mu_1(a)>1-\gamma$, in the sequence of arms $\bar{a}_1,\bar{a}_2,\ldots.$ Let $N_T$ be the number of sampled good arms $a$ such that $\mu_1(a)>1-\gamma$ until $T$. %Then $K$ and $N$ are random variables.

 We can decompose $R^\pi(T)$ into two parts as follows: 
\begin{align}
R^\pi(T)=R^\pi(T)\mathbbm{1}(N_T<m)+R^\pi(T)\mathbbm{1}(N_T\ge m).\label{eq:R_decom_lower_S}  
\end{align}

 We set $m=S_T$ and $\gamma=(S_T/T)^{1/(\beta+1)}$ with $S_T=o(T)$. For getting a lower bound for the first term in \eqref{eq:R_decom_lower_S}, $R^\pi(T)\mathbbm{1}(N_T<m)$, we consider the fact that the minimal regret is obtained from the situation where there are $m-1$ arms whose mean rewards are $1$. In such a case, the optimal policy must sample the best $m-1$ arms until their mean rewards become equal to or below the threshold value of $1-\gamma$ (step 1) and then samples the best arm at each time for the remaining time steps (step 2). In step 1, when the optimal policy pulls an optimal arm, we can think of the case when the mean reward of the arm is abruptly rotted to the value of $1-\gamma$. This implies that the required number of rounds for step 1 is $m-1$. The regret from step 2 is $R=\Omega((T-m+1)\gamma)=\Omega(S_T^{1/(\beta+1)}T^{\beta/(\beta+1)})$, in which the optimal policy pulls arms which  mean rewards are below or equal to $1-\gamma$ for the remaining time after step 1.
Therefore, we have
\begin{align}
    R^\pi(T)\mathbbm{1}(N_T<m)=\Omega(R\mathbbm{1}(N_T<m))=\Omega(S_T^{1/(\beta+1)}T^{\beta/(\beta+1)}\mathbbm{1}(N_T<m)).\label{eq:lowbd_eq1_S}
\end{align}
For getting the above, we note that there always exists $\rho_t$'s satisfying $\sum_{t=1}^{T-1}\rho_t=O(\gamma m)=o(T)$, which implies $\sum_{t=1}^{T-1}\rho_t\le T$. Such $\rho_t$'s can be considered for the below.
For getting a lower bound of the second term in \eqref{eq:R_decom_lower_S}, $R^\pi(T)\mathbbm{1}(N_T\ge m)$, we use the minimum number of sampled arms $a$ that satisfy $\mu_1(a)\le 1-c.$  When $N_T\ge m$ and $K_m\ge \kappa$, the policy samples at least $\kappa$ number of distinct arms $a$ satisfying $\mu_1(a)\le 1-c$ until $T$. Therefore, we have
\begin{align}
    R^\pi(T)\mathbbm{1}(N_T\ge m)\ge c\kappa\mathbbm{1}(N_T\ge m,K_m\ge \kappa).\label{eq:lowbd_eq2_S}
\end{align}
We set $\overline{\gamma}=\mathbb{P}(\mu_1(a)> 1-\gamma)/p(\mu_1(a)\notin (1-c,1-\gamma])$. Then we have $\overline{\gamma}=\Theta(\gamma^\beta)$ from \eqref{eq:dis} with constant $\beta>0$. By setting $\kappa=m/\overline{\gamma}-m-m/(\overline{\gamma}\sqrt{m+3})$, with $S_T=o(T)$ and constant $\beta>0$, we have
\begin{align}
\kappa=\Theta(S_T^{1/(\beta+1)}T^{\beta/(\beta+1)}).\label{eq:kappa_S}
\end{align}
Then from \eqref{eq:lowbd_eq1_S}, \eqref{eq:lowbd_eq2_S}, and \eqref{eq:kappa_S}, we have
\begin{align}
  \mathbb{E}[R^\pi(T)]&=\Omega(S_T^{1/(\beta+1)}T^{\beta/(\beta+1)}\mathbb{P}(N_T<m)+S_T^{1/(\beta+1)}T^{\beta/(\beta+1)}\mathbb{P}(N_T\ge m,K_m\ge \kappa))\cr &\ge\Omega( S_T^{1/(\beta+1)}T^{\beta/(\beta+1)}\mathbb{P}(K_m\ge \kappa)). \label{eq:lowbd_eq3_S}
\end{align}
Next we provide a lower bound for $\mathbb{P}(K_m\ge \kappa).$ Observe that $K_m$ follows a negative binomial distribution with $m$ successes and the success
probability $\mathbb{P}(\mu_1(a)> 1-\gamma)/\mathbb{P}(\mu_1(a)\notin(1-c,1-\gamma])=\overline{\gamma}$, in which the success probability is the probability of sampling a good arm given that the arm is either a good or bad arm. We recall Lemma~\ref{lem:concen_geo} for a concentration inequality for $K_m$ in the following.

\begin{lemma}\label{lem:concen_geo_S}
For any $1/2+\overline{\gamma}/m<\alpha <1$,
\begin{align*}
    \mathbb{P}(K_m\ge\alpha m(1/\overline{\gamma})-m)\ge1-\exp(-(1/3)(1-1/\alpha )^2(\alpha m-\overline{\gamma})).
\end{align*} 
\end{lemma}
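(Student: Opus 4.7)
The plan is to follow exactly the proof template of Lemma~\ref{lem:concen_geo}, since the statement of Lemma~\ref{lem:concen_geo_S} is identical (only the context in which $\overline{\gamma}$ is defined has changed, being the success probability $\mathbb{P}(\mu_1(a)>1-\gamma)/\mathbb{P}(\mu_1(a)\notin(1-c,1-\gamma])$ in the $S_T$-setting). The key idea is to convert the negative-binomial tail event into a binomial tail event and then invoke a multiplicative Chernoff bound.

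First I would set up i.i.d.\ Bernoulli random variables $X_1,X_2,\ldots$ with success probability $\overline{\gamma}$ and use the standard identity relating the negative binomial to the binomial: the event $\{K_m\le k-m\}$ (that fewer than $k-m$ failures precede the $m$-th success) coincides with the event $\{\sum_{i=1}^{k} X_i\ge m\}$ (at least $m$ successes occur within the first $k$ trials). Applying this with $k=\lfloor \alpha m/\overline{\gamma}\rfloor$ yields
\[
\mathbb{P}\!\left(K_m\le \alpha m \tfrac{1}{\overline{\gamma}}-m\right)=\mathbb{P}\!\left(K_m\le \left\lfloor\alpha m \tfrac{1}{\overline{\gamma}}\right\rfloor-m\right)=\mathbb{P}\!\left(\sum_{i=1}^{\lfloor\alpha m/\overline{\gamma}\rfloor}X_i\ge m\right).
\]

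Next I would apply the multiplicative Chernoff bound (Lemma~\ref{lem:chernoff_bino}) to the upper tail of this binomial sum. With $n=\lfloor \alpha m/\overline{\gamma}\rfloor$ and mean $n\overline{\gamma}$, rewrite the threshold $m$ as $(1/\alpha)\cdot \alpha m\ge (1/\alpha)\cdot n\overline{\gamma}$, which is a deviation factor $1/\alpha>1$ relative to the mean; this is precisely the regime in which the Chernoff bound gives a factor of $(1-1/\alpha)^2/3$ in the exponent. After a one-line simplification using $\lfloor \alpha m/\overline{\gamma}\rfloor\,\overline{\gamma}\ge \alpha m-\overline{\gamma}$, we obtain
\[
\mathbb{P}\!\left(K_m\le \alpha m \tfrac{1}{\overline{\gamma}}-m\right)\le \exp\!\left(-\tfrac{(1-1/\alpha)^2}{3}(\alpha m-\overline{\gamma})\right),
\]
and taking complements gives the claimed bound. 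The constraint $\alpha>1/2+\overline{\gamma}/m$ is used to ensure $\alpha m-\overline{\gamma}>0$ so that the exponent is meaningful and that the Chernoff regime is valid.

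There is essentially no new obstacle: this lemma is structurally identical to Lemma~\ref{lem:concen_geo}, and the only subtlety is handling the floor $\lfloor\alpha m/\overline{\gamma}\rfloor$ versus $\alpha m/\overline{\gamma}$, which loses at most $\overline{\gamma}$ in the exponent and is already absorbed into the bound by the $(\alpha m-\overline{\gamma})$ term. Thus the proof is a direct repetition of Lemma~\ref{lem:concen_geo}'s argument with $\overline{\gamma}$ reinterpreted for the abrupt-rotting setting, and no new probabilistic ingredient is needed.
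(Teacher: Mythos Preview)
Your proposal is correct and matches the paper exactly: the paper simply states Lemma~\ref{lem:concen_geo_S} as a recall of Lemma~\ref{lem:concen_geo} (whose proof is precisely the negative-binomial-to-binomial conversion followed by the multiplicative Chernoff bound of Lemma~\ref{lem:chernoff_bino}, with the floor handled via $\lfloor\alpha m/\overline{\gamma}\rfloor\overline{\gamma}\ge \alpha m-\overline{\gamma}$). There is no additional argument in the paper beyond what you have written.
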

From Lemma~\ref{lem:concen_geo_S} with $\alpha=1-1/\sqrt{m+3}$ and large enough $T$, we have
\begin{align}
    \mathbb{P}(K_m\ge \kappa)&\ge 1-\exp\left(-\frac{1}{3}(m-\frac{m}{\sqrt{m+3}}-\overline{\gamma})\left(\frac{1}{\sqrt{m+3}-1}\right)^2\right)\cr
    &\ge 1-\exp\left(-\frac{1}{6}(m-\frac{m}{\sqrt{m+3}})\left(\frac{1}{\sqrt{m+3}-1}\right)^2\right)\cr
    &=1-\exp\left(-\frac{1}{6}\frac{m}{m+3}\frac{\sqrt{m+3}}{\sqrt{m+3}-1}\right)\cr
    &\ge  1-\exp(-1/24), \label{eq:lowbd_eq4_S}
\end{align}
where the last inequality comes from $m/(m+3)=(S_T)/(S_T+3)\ge1/4$ and $\sqrt{m+3}/(\sqrt{m+3}-1)\ge 1$.
Therefore, from \eqref{eq:lowbd_eq3_S} and \eqref{eq:lowbd_eq4_S}, we have
\begin{align}
\mathbb{E}[R^\pi(T)]=\Omega(S_T^{1/(\beta+1)}T^{\beta/(\beta+1)}).  \label{eq:lower_S}
\end{align}

% \textbf{Case of $S_T=\Theta(T)$:} In this case, we can easily show that $\mathbb{E}[R^\pi(T)]=\Omega(T)$. We consider that rotting occurs for the first $S_T-1$ rounds with $\rho_{t}=1$ for all $t\in[S_T-1]$. Then it is always beneficial to pull new arms every round until $S_T-1$ rounds because the mean rewards of rotted arms are below $0$ and those of non-rotted arms lie in $[0,1]$. This means that any ideal policy samples a new arm and pull it every round until $S_T-1$. Let $\Delta_1(a)=1-\mu_1(a)$.  Then for any randomly sampled $a\in\mathcal{A}$,
% \begin{align}
%     \mathbb{E}[\Delta_1(a)]&=\int_0^1 P(\Delta_1(a)\ge y)dy\cr &=1-\int_0^1 P(\Delta_1(a)< y)dy\cr &=1-\Theta\left(\int_0^1y^\beta dy\right)\cr &=1-\Theta\left(\frac{1}{\beta+1}\right),
% \end{align}
% which implies $\mathbb{E}[\mu_1(a)]=\Theta(\frac{1}{\beta+1})$. Therefore, from constant $\beta>0$, any ideal policy has $\mathbb{E}[R^\pi(T)]\ge \sum_{i=1}^{S_T}\mathbb{E}[\mu_1(a)]=\Omega(S_T)=\Omega(T)$. Therefore, any policy $\pi$ has a regret bound of $\Omega(T)$ when $S_T=\Theta(T)$.

Overall from \eqref{eq:lower_S_V} and \eqref{eq:lower_S}, for any $\pi$, there exist $\rho_t$'s such that $\mathbb{E}[R^\pi(T)]=\Omega(\max\{S_T^{1/(\beta+1)}T^{\beta/(\beta+1)},V_T\})$.

% \subsection{Discussion on Optimal Regret}\label{app:optimal}

% As we summarize our results in Table~\ref{tab:com}, Algorithm~\ref{alg:alg1} achieves near-optimal regret only when $\beta\ge 1$.  
% Here, we discuss the discrepancies between lower and upper bounds when $0<\beta<1$. From \eqref{eq:dis}, we can observe that as $\beta$ decreases below $1$, the probability to sample  good arms may increase, which appears to be beneficial with respect to regret. However, the regret upper bounds for $0<\beta<1$ in Theorems~\ref{thm:R_upper_bd_V} and \ref{thm:abrupt_upper_bd}  remain the same as the case when $\beta=1$ while the regret lower bounds in Theorems~\ref{thm:lower_bd_rotting} and ~\ref{thm:lower_bd_abrupt} decrease as $\beta$ decreases, resulting in a gap between the regret upper and lower bounds.  
% The phenomenon that the regret upper bound remains the same when $\beta$ decreases has also been observed in previous literature on infinitely many-armed bandits \citep{Carpentier,Bayati,wang}. As mentioned in \citet{Carpentier}, although there are likely to be many good arms when $\beta$ is small, it is not possible to avoid a certain amount of regret from estimating mean rewards to distinguish arms under sub-Gaussian reward noise. Therefore, we believe that our regret upper bounds are near-optimal across the entire range of $\beta$, and achieving tighter regret lower bounds when $\beta<1$ is left for future research.

\subsection{Additional Experiments} \label{app:futher_exp}

\begin{figure}[h]
\centering     %%% not \center
% \subfigure[$\beta=1$]{\includegraphics[width=50mm]{image/T5000000num6beta1repeat10V.pdf}}
% \hspace{0.1cm}\\
\subfigure[$\beta=0.5$] {\includegraphics[width=52mm]{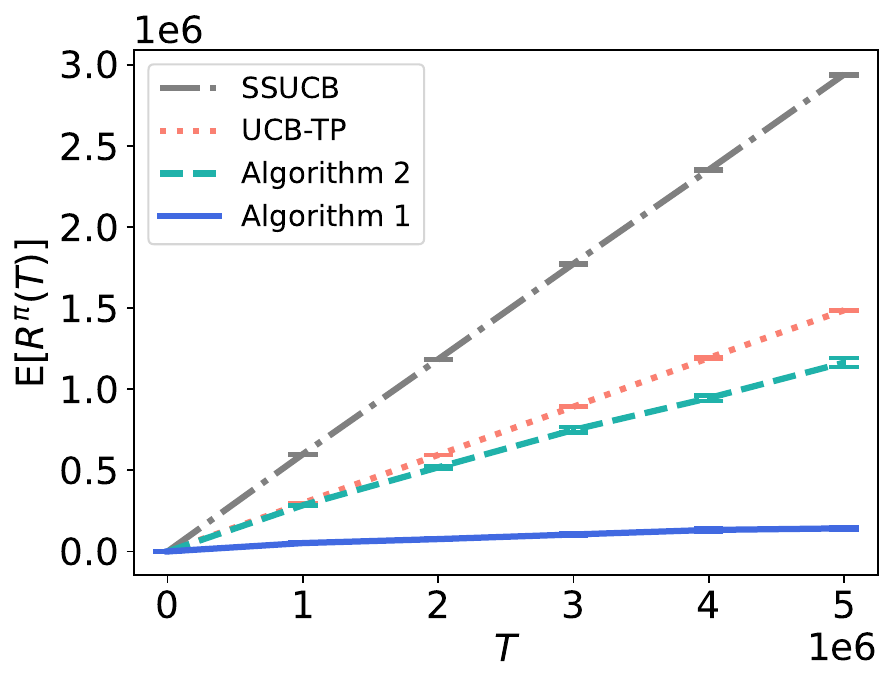}}
\hspace{0.1cm}
\subfigure[$\beta=2$]{\includegraphics[width=50mm]{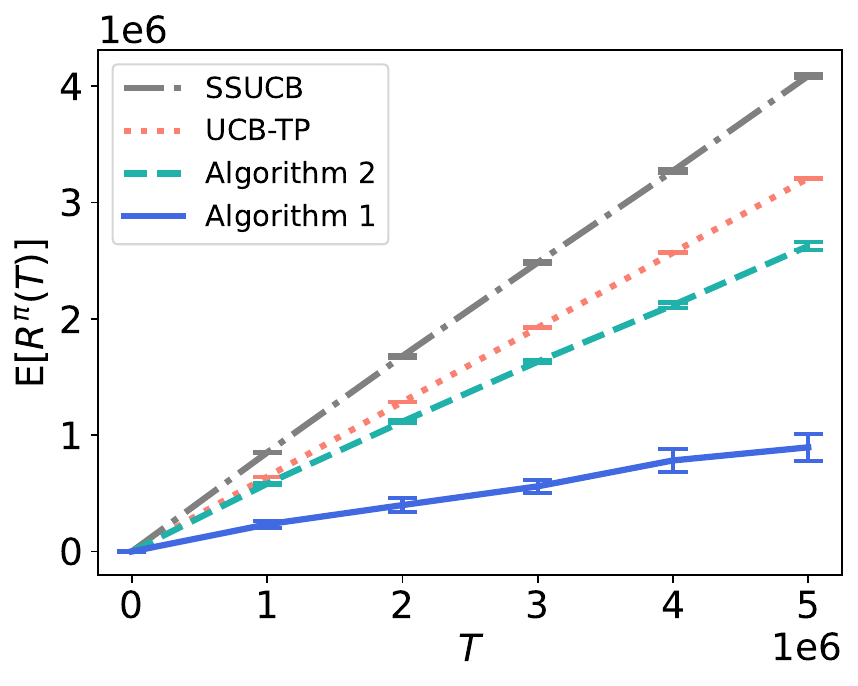}}
\caption{Regret Performance comparison between our algorithms and benchmarks.}\label{fig:2}
\end{figure}

% \begin{figure}[H]%[h!]
% \vspace{-0.2cm}\centering
% \includegraphics[width=0.5\linewidth]{image/T5000000num6beta1ranFalserepeat10V.pdf}
% % \includegraphics[width=\linewidth]{image/T1000000ratio0.3333333333333333repeat10.png}\caption{}\end{subfigure}%\rulesep
% \caption{Performance of our algorithms with theoretical bounds.
% }
% \label{fig:3}
% \end{figure}

We compare the performance of our Algorithms with benchmarks for smaller or larger $\beta$. In Figure~\ref{fig:2} (a,b), we can observe that our algorithms outperform the benchmarks for $\beta=0.5$ and $\beta=2$. 

\subsection{Lemmas for Concentration Inequalities}

% \begin{lemma}[Bernstein's inequality]\label{lem:bern} Let $X_1,\ldots, X_n$ be independent and identically distributed random variables according to Bernoulli distribution with mean value with $p$. For any $\epsilon>0$, we have 
% \[
% \mathbb{P}\left(\left|\sum_{i=1}^nX_i-np\right|>n\epsilon\right)\le 2\exp\left(-\frac{n\epsilon^2}{2+2\epsilon}\right).
% \] 
% \end{lemma}

\begin{lemma}[Theorem 6.2.35 in \citet{Alex}]\label{lem:chernoff_bino} Let $X_1,\dots,X_n$ be identical independent Bernoulli random variables. Then, for $0<\nu<1$, we have 
\begin{align*}
    \mathbb{P}\left(\sum_{i=1}^{n}X_i\ge(1+\nu)\mathbb{E}\left[\sum_{i=1}^{n}X_i\right]\right)\le\exp\left(-\frac{\nu^2\mathbb{E}[\sum_{i=1}^{n}X_i]}{3} \right).
\end{align*} 
\end{lemma}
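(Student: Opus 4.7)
The plan is to derive this Chernoff-type tail bound by the standard exponential Markov (Cram\'er--Chernoff) method, followed by a routine convexity estimate to replace the optimal rate function by the cleaner quadratic $\nu^2/3$. Let $p$ denote the common success probability of the $X_i$'s and write $\mu := \mathbb{E}[\sum_{i=1}^n X_i] = np$. For any $t > 0$, Markov's inequality applied to $e^{t \sum_i X_i}$ gives
\begin{align*}
\mathbb{P}\!\left(\sum_{i=1}^n X_i \ge (1+\nu)\mu\right) \le e^{-t(1+\nu)\mu}\,\mathbb{E}\!\left[e^{t\sum_{i=1}^n X_i}\right] = e^{-t(1+\nu)\mu}\prod_{i=1}^n \mathbb{E}[e^{tX_i}].
\end{align*}
For a single Bernoulli variable, $\mathbb{E}[e^{tX_i}] = 1 + p(e^t - 1) \le \exp(p(e^t - 1))$ using $1+x \le e^x$, so the product over $i$ is bounded by $\exp(\mu(e^t - 1))$.

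Next I would optimize the resulting bound $\exp(\mu(e^t-1) - t(1+\nu)\mu)$ in $t$. Setting the derivative to zero yields $t^\star = \ln(1+\nu) > 0$, and substituting back produces the classical form
\begin{align*}
\mathbb{P}\!\left(\sum_{i=1}^n X_i \ge (1+\nu)\mu\right) \le \left(\frac{e^{\nu}}{(1+\nu)^{1+\nu}}\right)^{\mu} = \exp\!\big(-\mu\,\varphi(\nu)\big),
\end{align*}
where $\varphi(\nu) := (1+\nu)\ln(1+\nu) - \nu$.

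The main (and only nontrivial) obstacle is to relax the awkward rate function $\varphi(\nu)$ to the clean quadratic $\nu^2/3$ on the range $\nu \in (0,1)$. For this I would prove the elementary inequality $\varphi(\nu) \ge \nu^2/3$ for $0 < \nu < 1$ by setting $g(\nu) := \varphi(\nu) - \nu^2/3$, checking that $g(0) = 0$, and showing $g'(\nu) = \ln(1+\nu) - 2\nu/3 \ge 0$ on $(0,1)$; the latter follows because $g'(0) = 0$ and $g''(\nu) = 1/(1+\nu) - 2/3 \ge 1/2 - 2/3$ fails pointwise, so a slightly more careful argument is needed---I would instead bound $\ln(1+\nu) \ge \nu - \nu^2/2 + \nu^3/3 - \cdots$ via the Taylor series with alternating decreasing terms (valid for $|\nu|<1$), and directly verify $\nu - \nu^2/2 \ge 2\nu/3$ is equivalent to $\nu \le 2/3$, then handle $\nu \in [2/3,1)$ by a monotonicity check of $g$ at the endpoint $\nu=1$ where $g(1) = 2\ln 2 - 1 - 1/3 > 0$. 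Combining $\varphi(\nu) \ge \nu^2/3$ with the displayed Chernoff bound yields
\begin{align*}
\mathbb{P}\!\left(\sum_{i=1}^n X_i \ge (1+\nu)\mu\right) \le \exp\!\left(-\frac{\nu^2 \mu}{3}\right),
\end{align*}
which is the desired statement. Since the lemma is a textbook result cited as Theorem 6.2.35 in \citet{Alex}, the proof is included only for completeness and needs no new ideas beyond the Cram\'er--Chernoff framework and the scalar inequality above.
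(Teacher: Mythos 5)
The paper does not supply a proof of this lemma; it invokes it as a standard textbook result (Theorem~6.2.35 of the cited reference) and uses it as a black box in the lower-bound argument. So there is no ``paper proof'' to compare against --- your task is really just to reconstruct the classical multiplicative Chernoff bound, and your Cram\'er--Chernoff derivation of $\mathbb{P}(\sum_i X_i \ge (1+\nu)\mu) \le \exp(-\mu\varphi(\nu))$ with $\varphi(\nu) = (1+\nu)\ln(1+\nu) - \nu$ is correct and is exactly the standard route.

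The one place you should tighten is the scalar inequality $\varphi(\nu) \ge \nu^2/3$ on $\nu \in [2/3,1)$. Knowing $g(1) = 2\ln 2 - 4/3 > 0$, where $g(\nu) := \varphi(\nu) - \nu^2/3$, does not by itself control $g$ on the interior of $[2/3,1)$: you would need $g$ monotone there, and that is precisely what remains to be shown. The clean fix is to argue through $g'$: since $g''(\nu) = 1/(1+\nu) - 2/3$ changes sign once (positive on $[0,1/2)$, negative on $(1/2,1]$), $g'$ is unimodal on $[0,1]$ with $g'(0)=0$ and $g'(1) = \ln 2 - 2/3 > 0$, so $\min_{[0,1]} g' = \min\{g'(0),g'(1)\} = 0$. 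Hence $g' \ge 0$ throughout, $g$ is nondecreasing, and $g(\nu) \ge g(0) = 0$ for all $\nu \in (0,1)$. This replaces both your Taylor-series case $\nu \le 2/3$ and the informal endpoint check with a single uniform argument; otherwise the proposal is a correct and complete proof of the lemma.
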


\begin{lemma}[Corollary 1.7 in \citet{rigollet2015high}]\label{lem:chernoff_sub-gau}
Let $X_1,\dots,X_n$ be independent random variables with $\sigma$-sub-Gaussian distributions. Then, for any $a = (a_1, \ldots, a_n)^\top \in \mathbb{R}^n$ and $t\geq 0$,  we have
\[\mathbb{P}\left(\sum_{i=1}^na_iX_i>t\right)\le \exp\left(-\frac{t^2}{2\sigma^2\|a\|_2^2}\right) \text{ and } 
\mathbb{P}\left(\sum_{i=1}^na_iX_i<-t\right)\le \exp\left(-\frac{t^2}{2\sigma^2\|a\|_2^2}\right).
\]
\end{lemma}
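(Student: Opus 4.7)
The plan is to establish the one-sided bound on $\mathbb{P}(\sum_i a_i X_i > t)$ via the standard Chernoff (exponential Markov) method; the lower-tail bound then follows by applying the upper-tail bound to the variables $-X_i$, since $-X_i$ is $\sigma$-sub-Gaussian whenever $X_i$ is.

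First, I would recall the defining moment generating function inequality for a $\sigma$-sub-Gaussian random variable $X_i$:
\[
\mathbb{E}[e^{\lambda X_i}] \le \exp\!\left(\frac{\lambda^2 \sigma^2}{2}\right), \qquad \text{for all } \lambda \in \mathbb{R}.
\]
For any fixed $\lambda > 0$, Markov's inequality applied to the nonnegative random variable $\exp(\lambda \sum_{i=1}^n a_i X_i)$ gives
\[
\mathbb{P}\!\left(\sum_{i=1}^n a_i X_i > t\right) \le e^{-\lambda t}\, \mathbb{E}\!\left[\exp\!\left(\lambda \sum_{i=1}^n a_i X_i\right)\right].
\]
By independence of the $X_i$'s, the joint MGF factorizes as $\prod_{i=1}^n \mathbb{E}[e^{\lambda a_i X_i}]$. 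Applying the sub-Gaussian MGF bound to each factor with scaling parameter $\lambda a_i$ in place of $\lambda$ yields
\[
\mathbb{E}[e^{\lambda a_i X_i}] \le \exp\!\left(\frac{\lambda^2 a_i^2 \sigma^2}{2}\right),
\]
so that the product is at most $\exp(\lambda^2 \sigma^2 \|a\|_2^2 / 2)$.

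Combining these, we obtain
\[
\mathbb{P}\!\left(\sum_{i=1}^n a_i X_i > t\right) \le \exp\!\left(-\lambda t + \frac{\lambda^2 \sigma^2 \|a\|_2^2}{2}\right).
\]
The right-hand side is a convex quadratic in $\lambda$ that is minimized at $\lambda^* = t/(\sigma^2 \|a\|_2^2)$ (which is nonnegative since $t \ge 0$, so the Markov step is valid). Substituting $\lambda^*$ produces the stated bound $\exp(-t^2/(2\sigma^2 \|a\|_2^2))$. Finally, applying exactly the same argument to the vector $(-a_1,\dots,-a_n)$ (equivalently, replacing each $X_i$ by $-X_i$, which preserves $\sigma$-sub-Gaussianity) gives the matching lower-tail bound. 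No step poses a real obstacle; the only subtlety is ensuring the chosen $\lambda^*$ is nonnegative, which is automatic from $t \ge 0$, and handling the degenerate case $\|a\|_2 = 0$ separately, where $\sum_i a_i X_i = 0$ almost surely and both bounds hold trivially.
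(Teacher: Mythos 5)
Your proof is correct and is the standard Chernoff/exponential-Markov argument: bound the moment generating function using sub-Gaussianity and independence, then optimize the free parameter $\lambda$. The paper does not supply its own proof of this lemma but simply cites it as Corollary~1.7 of \cite{rigollet2015high}, and the argument you give is exactly the one used in that reference, so there is no divergence in approach.
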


\newpage

\section*{NeurIPS Paper Checklist}

\begin{enumerate}

\item {\bf Claims}
    \item[] Question: Do the main claims made in the abstract and introduction accurately reflect the paper's contributions and scope?
    \item[] Answer: \answerYes{} % Replace by \answerYes{}, \answerNo{}, or \answerNA{}.
    \item[] Justification:
    Through the abstract and introduction, we explain our setting with providing motivation examples and summarize our contributions.
    
    \item[] Guidelines:
    \begin{itemize}
        \item The answer NA means that the abstract and introduction do not include the claims made in the paper.
        \item The abstract and/or introduction should clearly state the claims made, including the contributions made in the paper and important assumptions and limitations. A No or NA answer to this question will not be perceived well by the reviewers. 
        \item The claims made should match theoretical and experimental results, and reflect how much the results can be expected to generalize to other settings. 
        \item It is fine to include aspirational goals as motivation as long as it is clear that these goals are not attained by the paper. 
    \end{itemize}

\item {\bf Limitations}
    \item[] Question: Does the paper discuss the limitations of the work performed by the authors?
    \item[] Answer: 
    \answerYes{}
    % \answerTODO{} % Replace by \answerYes{}, \answerNo{}, or \answerNA{}.
    \item[] Justification: In the limitations section (Section~\ref{sec:lim}), we discuss an avenue for future work regarding regret lower bounds. 
    % We also discuss the assumptions we consider and the computational efficiency of our algorithms.
    \item[] Guidelines:
    \begin{itemize}
        \item The answer NA means that the paper has no limitation while the answer No means that the paper has limitations, but those are not discussed in the paper. 
        \item The authors are encouraged to create a separate "Limitations" section in their paper.
        \item The paper should point out any strong assumptions and how robust the results are to violations of these assumptions (e.g., independence assumptions, noiseless settings, model well-specification, asymptotic approximations only holding locally). The authors should reflect on how these assumptions might be violated in practice and what the implications would be.
        \item The authors should reflect on the scope of the claims made, e.g., if the approach was only tested on a few datasets or with a few runs. In general, empirical results often depend on implicit assumptions, which should be articulated.
        \item The authors should reflect on the factors that influence the performance of the approach. For example, a facial recognition algorithm may perform poorly when image resolution is low or images are taken in low lighting. Or a speech-to-text system might not be used reliably to provide closed captions for online lectures because it fails to handle technical jargon.
        \item The authors should discuss the computational efficiency of the proposed algorithms and how they scale with dataset size.
        \item If applicable, the authors should discuss possible limitations of their approach to address problems of privacy and fairness.
        \item While the authors might fear that complete honesty about limitations might be used by reviewers as grounds for rejection, a worse outcome might be that reviewers discover limitations that aren't acknowledged in the paper. The authors should use their best judgment and recognize that individual actions in favor of transparency play an important role in developing norms that preserve the integrity of the community. Reviewers will be specifically instructed to not penalize honesty concerning limitations.
    \end{itemize}

\item {\bf Theory Assumptions and Proofs}
    \item[] Question: For each theoretical result, does the paper provide the full set of assumptions and a complete (and correct) proof?
    \item[] Answer: \answerYes{} % Replace by \answerYes{}, \answerNo{}, or \answerNA{}.
    \item[] Justification: We provide assumptions in Section~\ref{sec:rotting} and, for the case of unknown parameters, in Appendix~\ref{app:alg2}. In addition, proof sketches of some of our main theorems (Theorems~\ref{thm:R_upper_bd_V}, \ref{thm:abrupt_upper_bd}) are included in the main text, with complete proofs provided for all theorems in the appendix.
    \item[] Guidelines:
    \begin{itemize}
        \item The answer NA means that the paper does not include theoretical results. 
        \item All the theorems, formulas, and proofs in the paper should be numbered and cross-referenced.
        \item All assumptions should be clearly stated or referenced in the statement of any theorems.
        \item The proofs can either appear in the main paper or the supplemental material, but if they appear in the supplemental material, the authors are encouraged to provide a short proof sketch to provide intuition. 
        \item Inversely, any informal proof provided in the core of the paper should be complemented by formal proofs provided in appendix or supplemental material.
        \item Theorems and Lemmas that the proof relies upon should be properly referenced. 
    \end{itemize}

    \item {\bf Experimental Result Reproducibility}
    \item[] Question: Does the paper fully disclose all the information needed to reproduce the main experimental results of the paper to the extent that it affects the main claims and/or conclusions of the paper (regardless of whether the code and data are provided or not)?
    \item[] Answer: \answerYes{} % Replace by \answerYes{}, \answerNo{}, or \answerNA{}.
    \item[] Justification: 
    In Section~\ref{sec:exp}, we provide all the information necessary for conducting the synthetic experiments.
    \item[] Guidelines:
    \begin{itemize}
        \item The answer NA means that the paper does not include experiments.
        \item If the paper includes experiments, a No answer to this question will not be perceived well by the reviewers: Making the paper reproducible is important, regardless of whether the code and data are provided or not.
        \item If the contribution is a dataset and/or model, the authors should describe the steps taken to make their results reproducible or verifiable. 
        \item Depending on the contribution, reproducibility can be accomplished in various ways. For example, if the contribution is a novel architecture, describing the architecture fully might suffice, or if the contribution is a specific model and empirical evaluation, it may be necessary to either make it possible for others to replicate the model with the same dataset, or provide access to the model. In general. releasing code and data is often one good way to accomplish this, but reproducibility can also be provided via detailed instructions for how to replicate the results, access to a hosted model (e.g., in the case of a large language model), releasing of a model checkpoint, or other means that are appropriate to the research performed.
        \item While NeurIPS does not require releasing code, the conference does require all submissions to provide some reasonable avenue for reproducibility, which may depend on the nature of the contribution. For example
        \begin{enumerate}
            \item If the contribution is primarily a new algorithm, the paper should make it clear how to reproduce that algorithm.
            \item If the contribution is primarily a new model architecture, the paper should describe the architecture clearly and fully.
            \item If the contribution is a new model (e.g., a large language model), then there should either be a way to access this model for reproducing the results or a way to reproduce the model (e.g., with an open-source dataset or instructions for how to construct the dataset).
            \item We recognize that reproducibility may be tricky in some cases, in which case authors are welcome to describe the particular way they provide for reproducibility. In the case of closed-source models, it may be that access to the model is limited in some way (e.g., to registered users), but it should be possible for other researchers to have some path to reproducing or verifying the results.
        \end{enumerate}
    \end{itemize}

\item {\bf Open access to data and code}
    \item[] Question: Does the paper provide open access to the data and code, with sufficient instructions to faithfully reproduce the main experimental results, as described in supplemental material?
    \item[] Answer: \answerYes{} % Replace by \answerYes{}, \answerNo{}, or \answerNA{}.
    \item[] Justification: There is a link to our code in Section~\ref{sec:exp}.
    \item[] Guidelines:
    \begin{itemize}
        \item The answer NA means that paper does not include experiments requiring code.
        \item Please see the NeurIPS code and data submission guidelines (\url{https://nips.cc/public/guides/CodeSubmissionPolicy}) for more details.
        \item While we encourage the release of code and data, we understand that this might not be possible, so “No” is an acceptable answer. Papers cannot be rejected simply for not including code, unless this is central to the contribution (e.g., for a new open-source benchmark).
        \item The instructions should contain the exact command and environment needed to run to reproduce the results. See the NeurIPS code and data submission guidelines (\url{https://nips.cc/public/guides/CodeSubmissionPolicy}) for more details.
        \item The authors should provide instructions on data access and preparation, including how to access the raw data, preprocessed data, intermediate data, and generated data, etc.
        \item The authors should provide scripts to reproduce all experimental results for the new proposed method and baselines. If only a subset of experiments are reproducible, they should state which ones are omitted from the script and why.
        \item At submission time, to preserve anonymity, the authors should release anonymized versions (if applicable).
        \item Providing as much information as possible in supplemental material (appended to the paper) is recommended, but including URLs to data and code is permitted.
    \end{itemize}

\item {\bf Experimental Setting/Details}
    \item[] Question: Does the paper specify all the training and test details (e.g., data splits, hyperparameters, how they were chosen, type of optimizer, etc.) necessary to understand the results?
    \item[] Answer: \answerYes{} % Replace by \answerYes{}, \answerNo{}, or \answerNA{}.
    \item[] Justification: We use synthetic datasets and provide details on how to generate them in Section~\ref{sec:exp}.
    \item[] Guidelines:
    \begin{itemize}
        \item The answer NA means that the paper does not include experiments.
        \item The experimental setting should be presented in the core of the paper to a level of detail that is necessary to appreciate the results and make sense of them.
        \item The full details can be provided either with the code, in appendix, or as supplemental material.
    \end{itemize}

\item {\bf Experiment Statistical Significance}
    \item[] Question: Does the paper report error bars suitably and correctly defined or other appropriate information about the statistical significance of the experiments?
    \item[] Answer: \answerYes{} % Replace by \answerYes{}, \answerNo{}, or \answerNA{}.
    \item[] Justification: In our experimental results in Section~\ref{sec:exp}, we include error bars of standard deviation along with expectation values. 
    \item[] Guidelines:
    \begin{itemize}
        \item The answer NA means that the paper does not include experiments.
        \item The authors should answer "Yes" if the results are accompanied by error bars, confidence intervals, or statistical significance tests, at least for the experiments that support the main claims of the paper.
        \item The factors of variability that the error bars are capturing should be clearly stated (for example, train/test split, initialization, random drawing of some parameter, or overall run with given experimental conditions).
        \item The method for calculating the error bars should be explained (closed form formula, call to a library function, bootstrap, etc.)
        \item The assumptions made should be given (e.g., Normally distributed errors).
        \item It should be clear whether the error bar is the standard deviation or the standard error of the mean.
        \item It is OK to report 1-sigma error bars, but one should state it. The authors should preferably report a 2-sigma error bar than state that they have a 96\% CI, if the hypothesis of Normality of errors is not verified.
        \item For asymmetric distributions, the authors should be careful not to show in tables or figures symmetric error bars that would yield results that are out of range (e.g. negative error rates).
        \item If error bars are reported in tables or plots, The authors should explain in the text how they were calculated and reference the corresponding figures or tables in the text.
    \end{itemize}

\item {\bf Experiments Compute Resources}
    \item[] Question: For each experiment, does the paper provide sufficient information on the computer resources (type of compute workers, memory, time of execution) needed to reproduce the experiments?
    \item[] Answer: \answerNo{} % Replace by \answerYes{}, \answerNo{}, or \answerNA{}.
    \item[] Justification: The conducted experiments do not require significant computing power.  
    \item[] Guidelines:
    \begin{itemize}
        \item The answer NA means that the paper does not include experiments.
        \item The paper should indicate the type of compute workers CPU or GPU, internal cluster, or cloud provider, including relevant memory and storage.
        \item The paper should provide the amount of compute required for each of the individual experimental runs as well as estimate the total compute. 
        \item The paper should disclose whether the full research project required more compute than the experiments reported in the paper (e.g., preliminary or failed experiments that didn't make it into the paper). 
    \end{itemize}
    
\item {\bf Code Of Ethics}
    \item[] Question: Does the research conducted in the paper conform, in every respect, with the NeurIPS Code of Ethics \url{https://neurips.cc/public/EthicsGuidelines}?
    \item[] Answer: \answerYes{}% Replace by \answerYes{}, \answerNo{}, or \answerNA{}.
    \item[] Justification: We have reviewed the NeurIPS Code of Ethics.
    
    \item[] Guidelines:
    \begin{itemize}
        \item The answer NA means that the authors have not reviewed the NeurIPS Code of Ethics.
        \item If the authors answer No, they should explain the special circumstances that require a deviation from the Code of Ethics.
        \item The authors should make sure to preserve anonymity (e.g., if there is a special consideration due to laws or regulations in their jurisdiction).
    \end{itemize}

\item {\bf Broader Impacts}
    \item[] Question: Does the paper discuss both potential positive societal impacts and negative societal impacts of the work performed?
    \item[] Answer: \answerNA{} % Replace by \answerYes{}, \answerNo{}, or \answerNA{}.
    \item[] Justification: Given that this study primarily focuses on theoretical analysis, we do not foresee any negative social consequences.
    \item[] Guidelines:
    \begin{itemize}
        \item The answer NA means that there is no societal impact of the work performed.
        \item If the authors answer NA or No, they should explain why their work has no societal impact or why the paper does not address societal impact.
        \item Examples of negative societal impacts include potential malicious or unintended uses (e.g., disinformation, generating fake profiles, surveillance), fairness considerations (e.g., deployment of technologies that could make decisions that unfairly impact specific groups), privacy considerations, and security considerations.
        \item The conference expects that many papers will be foundational research and not tied to particular applications, let alone deployments. However, if there is a direct path to any negative applications, the authors should point it out. For example, it is legitimate to point out that an improvement in the quality of generative models could be used to generate deepfakes for disinformation. On the other hand, it is not needed to point out that a generic algorithm for optimizing neural networks could enable people to train models that generate Deepfakes faster.
        \item The authors should consider possible harms that could arise when the technology is being used as intended and functioning correctly, harms that could arise when the technology is being used as intended but gives incorrect results, and harms following from (intentional or unintentional) misuse of the technology.
        \item If there are negative societal impacts, the authors could also discuss possible mitigation strategies (e.g., gated release of models, providing defenses in addition to attacks, mechanisms for monitoring misuse, mechanisms to monitor how a system learns from feedback over time, improving the efficiency and accessibility of ML).
    \end{itemize}
    
\item {\bf Safeguards}
    \item[] Question: Does the paper describe safeguards that have been put in place for responsible release of data or models that have a high risk for misuse (e.g., pretrained language models, image generators, or scraped datasets)?
   \item[] Answer: \answerNA{} % Replace by \answerYes{}, \answerNo{}, or \answerNA{}.
    \item[] Justification: This paper poses no such risks.
    \item[] Guidelines:
    \begin{itemize}
        \item The answer NA means that the paper poses no such risks.
        \item Released models that have a high risk for misuse or dual-use should be released with necessary safeguards to allow for controlled use of the model, for example by requiring that users adhere to usage guidelines or restrictions to access the model or implementing safety filters. 
        \item Datasets that have been scraped from the Internet could pose safety risks. The authors should describe how they avoided releasing unsafe images.
        \item We recognize that providing effective safeguards is challenging, and many papers do not require this, but we encourage authors to take this into account and make a best faith effort.
    \end{itemize}

\item {\bf Licenses for existing assets}
    \item[] Question: Are the creators or original owners of assets (e.g., code, data, models), used in the paper, properly credited and are the license and terms of use explicitly mentioned and properly respected?
    \item[] Answer: \answerNA{} % Replace by \answerYes{}, \answerNo{}, or \answerNA{}.
    \item[] Justification: This paper does not use existing assets.
    \item[] Guidelines:
    \begin{itemize}
        \item The answer NA means that the paper does not use existing assets.
        \item The authors should cite the original paper that produced the code package or dataset.
        \item The authors should state which version of the asset is used and, if possible, include a URL.
        \item The name of the license (e.g., CC-BY 4.0) should be included for each asset.
        \item For scraped data from a particular source (e.g., website), the copyright and terms of service of that source should be provided.
        \item If assets are released, the license, copyright information, and terms of use in the package should be provided. For popular datasets, \url{paperswithcode.com/datasets} has curated licenses for some datasets. Their licensing guide can help determine the license of a dataset.
        \item For existing datasets that are re-packaged, both the original license and the license of the derived asset (if it has changed) should be provided.
        \item If this information is not available online, the authors are encouraged to reach out to the asset's creators.
    \end{itemize}

\item {\bf New Assets}
    \item[] Question: Are new assets introduced in the paper well documented and is the documentation provided alongside the assets?
    \item[] Answer: \answerNA{} % Replace by \answerYes{}, \answerNo{}, or \answerNA{}.
    \item[] Justification:  This paper does not release new assets.
    \item[] Guidelines:
    \begin{itemize}
        \item The answer NA means that the paper does not release new assets.
        \item Researchers should communicate the details of the dataset/code/model as part of their submissions via structured templates. This includes details about training, license, limitations, etc. 
        \item The paper should discuss whether and how consent was obtained from people whose asset is used.
        \item At submission time, remember to anonymize your assets (if applicable). You can either create an anonymized URL or include an anonymized zip file.
    \end{itemize}

\item {\bf Crowdsourcing and Research with Human Subjects}
    \item[] Question: For crowdsourcing experiments and research with human subjects, does the paper include the full text of instructions given to participants and screenshots, if applicable, as well as details about compensation (if any)? 
    \item[] Answer: \answerNA{} % Replace by \answerYes{}, \answerNo{}, or \answerNA{}.
    \item[] Justification:  This paper does not involve crowdsourcing nor research with human subjects.
    \item[] Guidelines:
    \begin{itemize}
        \item The answer NA means that the paper does not involve crowdsourcing nor research with human subjects.
        \item Including this information in the supplemental material is fine, but if the main contribution of the paper involves human subjects, then as much detail as possible should be included in the main paper. 
        \item According to the NeurIPS Code of Ethics, workers involved in data collection, curation, or other labor should be paid at least the minimum wage in the country of the data collector. 
    \end{itemize}

\item {\bf Institutional Review Board (IRB) Approvals or Equivalent for Research with Human Subjects}
    \item[] Question: Does the paper describe potential risks incurred by study participants, whether such risks were disclosed to the subjects, and whether Institutional Review Board (IRB) approvals (or an equivalent approval/review based on the requirements of your country or institution) were obtained?
    \item[] Answer: \answerNA{} % Replace by \answerYes{}, \answerNo{}, or \answerNA{}.
    \item[] Justification: This paper does not involve crowdsourcing nor research with human subjects. 
    \item[] Guidelines:
    \begin{itemize}
        \item The answer NA means that the paper does not involve crowdsourcing nor research with human subjects.
        \item Depending on the country in which research is conducted, IRB approval (or equivalent) may be required for any human subjects research. If you obtained IRB approval, you should clearly state this in the paper. 
        \item We recognize that the procedures for this may vary significantly between institutions and locations, and we expect authors to adhere to the NeurIPS Code of Ethics and the guidelines for their institution. 
        \item For initial submissions, do not include any information that would break anonymity (if applicable), such as the institution conducting the review.
    \end{itemize}

\end{enumerate}

\end{document}